\def\eqref#1{Eq.~(\ref{#1})}
\def\1{\bm{1}}
\DeclareMathAlphabet{\mathsfit}{\encodingdefault}{\sfdefault}{m}{sl}
\SetMathAlphabet{\mathsfit}{bold}{\encodingdefault}{\sfdefault}{bx}{n}
\newcommand{\E}{\mathbb{E}}
\newcommand{\R}{\mathbb{R}}
\newcommand{\Var}{\mathrm{Var}}
\theoremstyle{plain}
\newtheorem{theorem}{Theorem}[section]
\newtheorem{lemma}[theorem]{Lemma}
\newtheorem{corollary}[theorem]{Corollary}
\newtheorem{proposition}[theorem]{Proposition}
\theoremstyle{definition}
\newtheorem{assumption}{Assumption}[section]
\newtheorem{definition}[theorem]{Definition}
\newcolumntype{Y}{>{\raggedright\arraybackslash}X}
\newcommand{\A}{\mathcal{A}}          
\newcommand{\C}{\mathcal{C}}         
\newcommand{\Hcov}{\mathcal{H}_{\mathrm{cov}}}
\newcommand{\Dtau}{\mathcal{D}_\tau}   
\newcommand{\Ds}{\mathcal{D}_s}
\theoremstyle{remark}
\newtheorem{remark}[theorem]{Remark}
\title{Utility Boundary of Dataset Distillation: Scaling and Configuration-Coverage Laws} 
\author{
Zhengquan Luo \& Zhiqiang Xu\thanks{Corresponding author.}\\
Department of Machine Learning\\
Mohamed bin Zayed University of Artificial Intelligence\\
Abu Dhabi, UAE\\
\texttt{\{zhengquan.luo,zhiqiang.xu\}@mbzuai.ac.ae} 
}
\begin{document}

\maketitle

\begin{abstract}

%
%

Dataset distillation (DD) aims to construct compact synthetic datasets that allow models to achieve comparable performance to full-data training while substantially reducing storage and computation. Despite rapid empirical progress, its theoretical foundations remain limited: existing methods (gradient, distribution, trajectory matching) are built on heterogeneous surrogate objectives and optimization assumptions, which makes it difficult to analyze their common principles or provide general guarantees. Moreover, it is still unclear under what conditions distilled data can retain the effectiveness of full datasets when the training configuration, such as optimizer, architecture, or augmentation, changes. To answer these questions, we propose a unified theoretical framework, termed configuration–dynamics–error analysis, which reformulates major DD approaches under a common generalization-error perspective and provides two main results: (i) a scaling law that provides a single-configuration upper bound, characterizing how the error decreases as the distilled sample size increases and explaining the commonly observed performance saturation effect; and (ii) a coverage law showing that the required distilled sample size scales linearly with configuration diversity, with provably matching upper and lower bounds. In addition, our unified analysis reveals that various matching methods are interchangeable surrogates, reducing the same generalization error, clarifying why they can all achieve dataset distillation and providing guidance on how surrogate choices affect sample efficiency and robustness. Experiments across diverse methods and configurations empirically confirm the derived laws, advancing a theoretical foundation for DD and enabling theory-driven design of compact, configuration-robust dataset distillation.
\end{abstract}

\section{Introduction}
\label{sec:intro}

\emph{Dataset distillation (DD)}~\citep{wang2018dataset, sucholutsky2021soft}, also known as dataset condensation (DC)~\citep{zhao2020dataset, wang2022cafe}, seeks to synthesize a compact dataset that enables models to approach the accuracy of full-data training while greatly reducing storage and compute costs. 
Over the past few years, three main categories of matching-based methods have emerged.
\emph{Gradient matching (GM)} aligns gradients between real and synthetic data through bilevel optimization, extended by augmentation consistency~\citep{zhao2021dataset}, diversity regularization~\citep{cazenavette2023generalizing}, and reverse matching~\citep{ye2024distilled}.
\emph{Distribution matching (DM)} matches feature statistics, from early MMD-based formulations~\citep{li2017mmd} to higher-order or quantile-based variants~\citep{wang2022cafe, zhang2024m3d, wei2024dataset}.
\emph{Trajectory matching (TM)} aligns full optimization dynamics, introduced in MTT~\citep{cazenavette2022dataset} and later extended to self-supervised and detection tasks~\citep{lee2023self, qi2024fetch} (a comprehensive review of related work is provided in Appendix~\ref{app:related_work}). 

Despite empirical advances, the theoretical foundation of DD remains fragmented. 
Existing analyses are confined to paradigm-specific assumptions: 
GM theory is largely restricted to first-order gradient matching ~\citep{zhao2021dataset,deng2022remember}; DM relies on kernel- or moment-based statistics~\citep{li2017mmd}, 
thereby neglecting optimization dynamics; and TM, while empirically strong~\citep{cazenavette2022dataset}, lack rigorous convergence guarantees beyond heuristic approximations. These paradigm-specific limitations highlight the absence of a unified theoretical view to relate different DD approaches, making it unclear why all three categories can yield near full-data utility or how to explain recurring empirical patterns such as the saturation of accuracy gains with larger distilled sample sizes~\citep{cazenavette2022dataset}.
A further challenge is robustness to \emph{training-configuration} shifts, e.g., changes in optimizer, architecture, or augmentation between distillation and downstream training. Since DD is expected to substitute for the full dataset in practice, distilled data must remain effective under such shifts. Yet current evaluations often restrict to fixed setups or mild parameter perturbations~\citep{nguyen2021dataset,zhao2023dataset}, and reported results reveal instability: sensitivity to random seeds~\citep{wang2018dataset}, reliance on augmentation~\citep{zhao2021dataset}, and weak cross-architecture transfer~\citep{liu2022dataset}. To this, we introduce the notion of a \emph{utility boundary}: the relationship between the distilled sample size and the diversity of configurations within which the distilled dataset can still match the performance of full-data training.

\begin{figure*}[t]
\begin{center}
\includegraphics[width=0.95\linewidth]{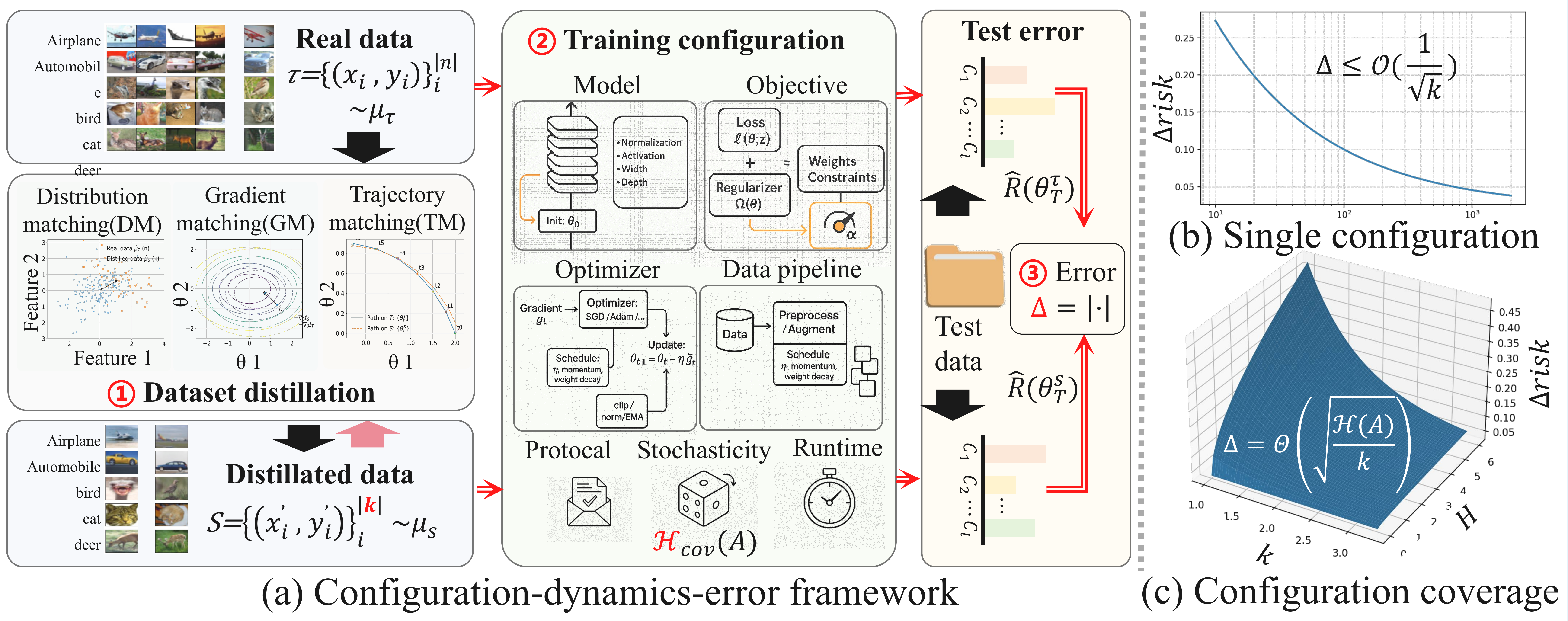}
\vspace{-0.35cm}
\caption{Configuration-dynamics-error framework: a configuration (optimizer, architecture, augmentation, etc.) together with the training distribution (either the real dataset or its distilled dataset) induces optimization dynamics, whose risk is evaluated through generalization error bounds; this yields the scaling law for a single configuration and the coverage law across configuration families.}
\vspace{-0.6cm}
\label{fig:prin}
\end{center}
\end{figure*}

We address these challenges by proposing a unified \emph{configuration–dynamics–error} framework. A configuration specifies the update operator (e.g., optimizer, architecture, augmentation) that governs parameter changes; together with the training distribution this induces the optimization dynamics, and the resulting risk is evaluated through generalization error bounds on a common test distribution. 
Building on this framework, we first analyze the single-configuration case, yielding (1) \emph{ scaling law:}  
As the distilled sample size $k$ increases, the generalization error decreases until it reaches the irreducible error bound $\epsilon_{\mathrm{bound}}$, determined by the configuration, following the statistical rate
 \( \small \Delta \le \mathcal{O}(1/{\sqrt{k}}) + \epsilon_{\mathrm{bound}}. \)
This explains the commonly observed images per class (IPC) saturation: when $k$ is sufficiently large, the error is dominated by the irreducible floor and accuracy can no longer improve by enlarging $k$. In this regime, reducing $\small \epsilon_{\mathrm{bound}}$ is more valuable than blindly increasing $k$.
(2) Then, to account for generalization across configurations, we extend the analysis from a single configuration to a family of configurations. In this setting, $\small\mathcal H(\mathcal A,r)$ measures configuration diversity (e.g., via a covering number), yielding the \emph{configuration-coverage law}:
$\small \Delta = \Theta\!(\sqrt{{\mathcal H(\mathcal A, r)}/{k}}).$ 
This characterizes the utility boundary of dataset distillation for the first time, showing how the required sample size must grow with configuration diversity to maintain generalization.

Together, these laws clarify why different DD methods can behave consistently. Within this framework, GM, DM, and TM are not independent heuristics but instances of a common objective: minimizing the \emph{matching discrepancy} that measures how differently real and distilled data drive training dynamics under a configuration. While DD methods align different objects, gradients, optimization trajectories, or feature-level statistics, our theoretical analysis, corroborated by experiments across diverse methods and configurations, shows that these distinctions are merely surrogate choices within a common outer–inner (bi-level) mechanism. Their generalization is therefore governed by the same scaling law in single configurations and the same coverage law across configuration diversity. Thus, our framework unifies diverse DD methods under a single generalization error bound analysis and provides guidance for designing distilled datasets that achieve both sample efficiency and robustness.
To summarize, we make the following contributions in this work:  
\begin{itemize}
    \item \emph{Unified framework.} We introduce a configuration–dynamics–error framework that places GM, DM, and TM within a single generalization error analysis.  
\item \emph{Scaling law.} We provide the first generalization bound that relates the distilled sample size to test error in a fixed configuration, explaining the IPC saturation phenomenon.
\item \emph{Coverage law.} We derive the first formal utility boundary showing how distilled sample size should scale with configuration diversity.
\item \emph{Unified DD.} We unify the three categories of DD methods under the proposed framework and empirically confirm the theoretical findings across representative methods and datasets.  
\end{itemize}

\section{Preliminaries and Problem Setup}
\label{sec:prelim}

\textbf{Dataset distillation (DD).}
Given a real dataset $\mathcal{D}_\tau=\{(x_i,y_i)\}_{i=1}^{n}$ with empirical distribution
$\hat\mu_\tau=\tfrac{1}{n}\sum_{i=1}^{n}\iota_{(x_i,y_i)}$\footnote{$\iota$ denotes the Dirac measure that assigns unit mass to the sample point.},
the goal of DD is to construct a compact synthetic dataset
$\mathcal{D}_s=\{(x'_j,y'_j)\}_{j=1}^{k}$ with empirical distribution
$\hat\mu_s=\sum_{j=1}^{k} \iota_{(x'_j,y'_j)}$ for $k\!\ll\! n$, such that training on $\hat\mu_s$ matches training on $\hat\mu_\tau$ in test performance~\citep{wang2018dataset, sucholutsky2021soft, zhao2020dataset}.
Let $\nu$ denote a test distribution with empirical counterpart
$\hat\nu=\tfrac{1}{m}\sum_{l=1}^{m}\iota_{(x^{\mathrm{te}}_l,y^{\mathrm{te}}_l)}$.
We evaluate empirical and population risks:
\begin{equation}
\hat R(\theta)=\mathbb{E}_{\hat\nu}[\ell(\theta;z)], \qquad
R_\nu(\theta)=\mathbb{E}_{\nu}[\ell(\theta;z)],
\label{eq:empirical_pop_risk}
\end{equation}
where $z$ presents a data point, and compare models trained on $\hat\mu_s$ versus $\hat\mu_\tau$.


\textbf{Single training configuration.}
Most theories are established under a single fixed training configuration (e.g., optimizer, hyperparameters, augmentation, architecture) and show that particular matching strategies (gradient, distribution, or trajectory) improve accuracy in the single setting~\citep{zhao2020dataset, cazenavette2022dataset, zhao2023dataset}.
Formally, one training step under a (fixed) configuration can be written as:
\begin{equation}
\theta_{t+1}
=\Phi(\theta_t;\mu)
=\theta_t-\eta\,P(\theta_t)\,\mathbb{E}_{\mu}[g(\theta_t;z)],
\label{eq:one_pipeline_update}
\end{equation}
where $P(\theta)$ denotes a (possibly adaptive) preconditioner and $g$ the per-sample update. 
We refer to the configuration used to generate the distilled dataset as the \emph{source configuration}. 
The configuration under which we evaluate and compare the utility of real versus distilled data is called the \emph{target configuration}.

\section{A Unified configuration-Dynamics-Error Framework}
\label{sec:framework}
In practice, the ultimate goal of a distilled dataset is to replace the real dataset across diverse applications. Since the source configurations used for deployment may differ from those assumed during generation, it is necessary to extend the analysis from a single setup to a \emph{family of configurations}. Doing so highlights three key aspects that jointly determine the effectiveness of distilled data: the \emph{diversity} of configurations it must cover, the \emph{alignment} it maintains with real data, and the \emph{stability} with which training dynamics transfer across configurations. 

\textbf{Space of configurations.}
An configuration $a$ specifies optimizer, hyperparameters, augmentation, and architecture.
The training under configuration $a$ on distribution $\mu$ induces parameter iterate:
\begin{equation}
\theta_{t+1}
=\Phi_a(\theta_t;\mu)
=\theta_t-\eta\,P_a(\theta_t)\,\mathbb{E}_{\mu}\,g_a(\theta_t;z)\in\Gamma_a,
\label{eq:eco_update}
\end{equation}
with feasible set $\Gamma_a$ for parameters. The way we update parameters here generalizes classical stochastic approximation and adaptive methods~\citep{robbins1951stochastic,bottou2018optimization}. 
We consider a target configuration family $\mathcal{A}\subseteq\mathcal{C}$ of training configurations that reflects the intended deployment setting~\citep{shalev2014understanding}. 

\textbf{Diversity.}
Each configuration $a \in \mathcal{A}$ induces its own feasible parameter set $\Gamma_a \subseteq \mathbb{R}^d$. 
To compare two configurations $a,a'\in\mathcal{A}$ on the same real data $\hat\mu_\tau$, we define the configuration-distance
\begin{equation}
d_{\mathcal{A}}(a,a')
=\sup_{\theta\in \Gamma_a\cap\Gamma_{a'}}
\big\| P_a(\theta)\,\mathbb{E}_{\hat\mu_\tau} g_a(\theta;z)
      - P_{a'}(\theta)\,\mathbb{E}_{\hat\mu_\tau} g_{a'}(\theta;z)\big\|_2.
\label{eq:eco_distance}
\end{equation}
This metric is inspired by the stability and uniform convergence analyses of algorithmic dynamics~\citep{hardt2016train,raginsky2017non}. 
The \emph{coverage complexity} of $\mathcal{A}$ at radius $r>0$ under $d_{\mathcal A}$ is
\(
\small
\mathcal{H}_{\mathrm{cov}}(\mathcal{A}, r)=\log N(\mathcal{A}, d_{\mathcal{A}}, r),
\label{eq:coverage_complexity}
\)
where $N(\mathcal{A}, d_{\mathcal{A}}, r)$ is the minimal number of $d_{\mathcal A}$-balls of radius $r$ needed to cover $\mathcal A$. 
Here $r$ is mathematically the covering \emph{radius}, which determines the \emph{resolution} of ecological distinctions: smaller $r$ resolves finer differences and thus increases $\mathcal{H}_{\mathrm{cov}}(\mathcal{A}, r)$. 

\textbf{Alignment.}
For measures $\mu,\nu$ and configuration $a$, we define the \emph{matching discrepancy}
\begin{equation}
\Delta_a(\mu,\nu)
:=\sup_{\theta\in \Gamma_a}
\big\|P_a(\theta)\big(\mathbb{E}_{\mu}g_a(\theta;z)-\mathbb{E}_{\nu}g_a(\theta;z)\big)\big\|_2.
\label{eq:alignment_discrepancy}
\end{equation}
This notion unifies classical discrepancy measures used in dataset distillation and domain adaptation~\citep{zhao2020dataset,cazenavette2022dataset,zhao2023dataset,ben2006analysis}.
Specializing to empirical real and empirical synthetic distributions gives $\Delta_a(\hat\mu_\tau,\hat\mu_s)$.


\textbf{Stability.} 
The final stage of our configuration–dynamics–error framework concerns how discrepancies in dynamics translate into generalization error. 
Throughout the paper, our analysis builds on stability- and information-theoretic approaches to generalization~\citep{bousquet2002stability,russo2016controlling,xu2017information}, and decomposes the error into three components:  
(i) an optimization residual, determined by the optimization steps $T$;  
(ii) statistical fluctuations, arising from finite sample sizes $n,m,k$; and  
(iii) a matching term, governed by the alignment discrepancy $\Delta_a$.  
This unified form can be summarized as
\begin{equation} 
|R_\nu(\theta_T^{(\hat\mu_s;a)})-R_\nu(\theta_T^{(\hat\mu_\tau;a)})|
\;\lesssim\;
\underbrace{\text{opt.\ residual}}_{\;T}\;+\;
\underbrace{\text{stat.\ fluctuations}}_{\;n,m,k}\;+\;
\underbrace{\text{matching term}}_{\;\Delta_a}.
\vspace{-0.3cm}
\label{eq:eco_risk_transfer}
\end{equation}

At this point we have a complete \emph{configuration–dynamics–error} framework as Figure~\ref{fig:prin}. 
Configurations specify the update operators that drive parameter changes;  
their diversity is captured through covering complexity under a configuration distance (\textit{Diversity});  
the gap between synthetic and real data within each configuration is measured by the matching discrepancy (\textit{Alignment}); 
and the transfer from dynamics to generalization error is governed by generalization error bounds decomposition (\textit{Stability}).  
Together, these elements form a coherent chain from configuration to dynamics to error, closing the framework and setting the stage for Sections~\ref{sec:single-configuration}-\ref{sec:coverage} on theoretical analysis next.

\section{Single–configuration Generalization Bound}
\label{sec:single-configuration}

We instantiate the Configuration-dynamics-error framework with a fixed configuration $a$ and derive a finite–sample bound that reveals the scaling law of dataset distillation.

\begin{assumption}[Regularity]
On the feasible parameter domain $\Gamma_a$, we assume: (i) bounded per–sample update and preconditioner, i.e., 
$\|g_a(\theta;z)\|\le B_g$, $\|P_a(\theta)\|\le \kappa_a$; 
(ii) bounded loss $|\ell(\theta;z)|\le B_\ell$ and $L_R$–Lipschitz test risk $\hat R$; 
(iii) contractive dynamics under a PL–type condition, with contraction rate $\rho_a\in(0,1)$ and constant $C_{2,a}=(1-\rho_a)/L_R$.  
These assumptions are standard in stability/generalization analyses and PL-based convergence (see \citet{hardt2016train,bousquet2002stability,karimi2016linear}).
\label{assump:single-configuration}
\end{assumption}

\begin{definition}[Intrinsic generalization error]
\label{def:intrinsic}
For configuration $a$, the $k$–prototype class of distributions, $\mathcal{P}_k=\{\sum_{j=1}^k\iota_{z_j}: \iota \}$, induces an irreducible generalization error
\(
\Delta_a^\star := \inf_{\mu\in\mathcal{P}_k}\Delta_a(\mu_\tau,\mu).
\)
It measures the best possible matching between $k$ prototypes and the real dataset under configuration $a$, and determines this error in our bound.
\end{definition}

\begin{theorem}[Single–configuration risk bound]
\label{thm:single}
Let $\theta_T^{(s)}$ and $\theta_T^{(\tau)}$ denote the parameters after $T$ steps trained on synthetic and real data, respectively, with initialization gap $\delta_0=\theta_0^{(s)}-\theta_0^{(\tau)}$. Then with probability at least $1-\varepsilon$,
\begin{equation}
| R_\nu(\theta_T^{(s)})- R_\nu(\theta_T^{(\tau)})|
\;\le\;
L_R\rho_a^T\|\delta_0\|
+ \tfrac{\eta\kappa_a}{C_{2,a}}\!\left(\Delta_a^\star+ e_g\right)
+ e_{\mathrm{te}},
\end{equation}
where 
$e_g=\mathcal{O}(1/\sqrt{k}+1/\sqrt{n})$ is the fluctuation from distillation and training samples (see, e.g., \citet{bartlett2002rademacher}, for Rademacher-based rates), and 
$e_{\mathrm{te}}=\mathcal{O}(1/\sqrt{m})$ is the test concentration error (see, e.g., \citet{vershynin2018high}, for standard sub-Gaussian bounds).  
If distilled dataset $\mathcal D_s$ generate depending on real dataset $\mathcal D_\tau$, then $e_g$ further incurs an information–theoretic penalty $\mathcal{O}(\sqrt{I(\mathcal D_s;\mathcal D_\tau)/k})$ \citep{russo2016controlling} (see Appendix~\ref{app:single-configuration-proofs} for proof details).

\begin{remark} As $T,n,m$ are sufficiently large, optimization and statistical terms vanish. We then arrive at the single–configuration \emph{scaling law}:
\begin{equation}
\Big | R_\nu(\theta_T^{(s)})- R_\nu(\theta_T^{(\tau)}) \Big|
\;\approx\; \tfrac{\eta\kappa_a}{C_{2,a}}\Delta_a^\star + \mathcal{O}(1/\sqrt{k}).
\end{equation}
\end{remark}
\end{theorem}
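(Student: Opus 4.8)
The plan is to decompose the risk gap along the ``configuration--dynamics--error'' chain and bound each piece separately. First I would write
$| R_\nu(\theta_T^{(s)})- R_\nu(\theta_T^{(\tau)})| \le L_R \|\theta_T^{(s)}-\theta_T^{(\tau)}\| + |R_\nu(\theta_T^{(\tau)}) - \hat R(\theta_T^{(\tau)})| + (\text{symmetric test-concentration term})$, using the $L_R$-Lipschitzness of the test risk from Assumption~\ref{assump:single-configuration}(ii) for the first term and standard sub-Gaussian concentration (the $e_{\mathrm{te}}=\mathcal O(1/\sqrt m)$ term) for the empirical-vs-population gaps on the $m$ test points. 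So the heart of the argument is controlling the parameter deviation $\|\theta_T^{(s)}-\theta_T^{(\tau)}\|$ between the two training trajectories driven by $\hat\mu_s$ and $\hat\mu_\tau$ under the same update operator $\Phi_a$.

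For that I would set $\delta_t=\theta_t^{(s)}-\theta_t^{(\tau)}$ and expand one step using \eqref{eq:eco_update}:
$\delta_{t+1} = \big(\Phi_a(\theta_t^{(s)};\hat\mu_s)-\Phi_a(\theta_t^{(\tau)};\hat\mu_s)\big) + \eta\,\big(P_a(\theta_t^{(\tau)})\mathbb E_{\hat\mu_\tau}g_a - P_a(\theta_t^{(\tau)})\mathbb E_{\hat\mu_s}g_a\big)$ evaluated at $\theta_t^{(\tau)}$. The first bracket is a ``same-distribution'' term, which by the contractive PL-type dynamics in Assumption~\ref{assump:single-configuration}(iii) contracts as $\|\cdot\|\le\rho_a\|\delta_t\|$; the second bracket is exactly the matching discrepancy, bounded in norm by $\eta\kappa_a\,\Delta_a(\hat\mu_\tau,\hat\mu_s)$ using the preconditioner bound $\|P_a\|\le\kappa_a$ and the definition \eqref{eq:alignment_discrepancy}. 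Unrolling the recursion $\|\delta_{t+1}\|\le\rho_a\|\delta_t\| + \eta\kappa_a\Delta_a(\hat\mu_\tau,\hat\mu_s)$ gives a geometric series, hence $\|\delta_T\|\le\rho_a^T\|\delta_0\| + \tfrac{\eta\kappa_a}{1-\rho_a}\Delta_a(\hat\mu_\tau,\hat\mu_s)$; substituting $C_{2,a}=(1-\rho_a)/L_R$ converts the prefactor into $\tfrac{\eta\kappa_a}{L_R C_{2,a}}$, and multiplying by $L_R$ from the Lipschitz step produces the stated $\tfrac{\eta\kappa_a}{C_{2,a}}$ coefficient.

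The last step is to replace the data-dependent quantity $\Delta_a(\hat\mu_\tau,\hat\mu_s)$ by the intrinsic error $\Delta_a^\star$ of Definition~\ref{def:intrinsic} plus a fluctuation term: triangle inequality gives $\Delta_a(\hat\mu_\tau,\hat\mu_s)\le\Delta_a^\star + (\text{sampling fluctuation in }\mathbb E_{\hat\mu_s}g_a\text{ and }\mathbb E_{\hat\mu_\tau}g_a)$, where the latter is $e_g=\mathcal O(1/\sqrt k + 1/\sqrt n)$ by a uniform (Rademacher/covering) bound over $\theta\in\Gamma_a$ using $\|g_a\|\le B_g$ from Assumption~\ref{assump:single-configuration}(i); if $\mathcal D_s$ is generated from $\mathcal D_\tau$ the supremum is over a data-dependent class and one picks up the mutual-information penalty $\mathcal O(\sqrt{I(\mathcal D_s;\mathcal D_\tau)/k})$ via the Russo--Zou bound. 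Collecting the three pieces yields the theorem, and the \textbf{remark} follows by sending $T\to\infty$ (killing $\rho_a^T\|\delta_0\|$), $n,m\to\infty$ (killing the $1/\sqrt n$ and $1/\sqrt m$ parts), leaving $\tfrac{\eta\kappa_a}{C_{2,a}}\Delta_a^\star + \mathcal O(1/\sqrt k)$. The main obstacle I anticipate is making the ``same-distribution'' contraction step rigorous: a PL condition on the objective does not literally imply a one-step $\rho_a$-contraction in parameter space, so one must either strengthen it to a local quasi-contractivity/co-coercivity hypothesis on $\Phi_a$ or argue contraction in the function value $R_\nu(\theta_t)$ and transfer back — this coupling between the PL constant, the step size $\eta$, and the effective rate $\rho_a$ is the delicate part, and the bounded-gradient and bounded-preconditioner assumptions are what keep the extra matching term from blowing up during the unrolling.
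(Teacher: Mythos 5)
Your proof plan matches the paper's Appendix~\ref{app:single-configuration-proofs} argument step for step: the Lipschitz-risk reduction (Lemma~\ref{lem:risk-lipschitz}), the one-step contraction-plus-matching-drift decomposition (Lemma~\ref{lem:one-step}), geometric unrolling (Lemma~\ref{lem:geometric}), the two-sample Rademacher bound replacing $\Delta_a(\hat\mu_\tau,\hat\mu_s)$ by $\Delta_a^\star + e_g$ (Lemma~\ref{lem:two-sample-rad}), the test-side concentration $e_{\mathrm{te}}$ (Lemma~\ref{lem:test-generalization}), and the mutual-information penalty (Lemma~\ref{lem:mi}). Your closing caveat about the PL-to-parameter-contraction step is also apt — the paper does not derive contraction from PL but simply \emph{assumes} contractive dynamics with rate $\rho_a$ in Assumption~\ref{assump:single-configuration}(iii), so the step is rigorous only under that stated hypothesis.
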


\begin{corollary}
\label{cor:scaling}
For a target error $\epsilon_0$, distilled sample size $k$ must satisfy
\begin{equation}
k=\Omega\Big((\epsilon_0-\Delta_a^\star\eta\kappa_a/C_{2,a})^{-2}\Big).
\end{equation}
\end{corollary}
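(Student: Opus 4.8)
The plan is to derive Corollary~\ref{cor:scaling} as an immediate consequence of the scaling-law form of Theorem~\ref{thm:single}. First I would take the regime in which the optimization residual $L_R\rho_a^T\|\delta_0\|$ and the test concentration error $e_{\mathrm{te}}=\mathcal{O}(1/\sqrt{m})$ are negligible (equivalently, absorb them into $\epsilon_0$ by a constant shift), so that the dominant terms of the bound are $\tfrac{\eta\kappa_a}{C_{2,a}}(\Delta_a^\star + e_g)$ with $e_g=\mathcal{O}(1/\sqrt k + 1/\sqrt n)$, and further take $n$ large enough that $1/\sqrt n$ is dominated by $1/\sqrt k$. This reduces the guarantee to $|R_\nu(\theta_T^{(s)})-R_\nu(\theta_T^{(\tau)})| \le \tfrac{\eta\kappa_a}{C_{2,a}}\Delta_a^\star + \tfrac{c\,\eta\kappa_a}{C_{2,a}}\cdot\tfrac{1}{\sqrt k}$ for an absolute constant $c>0$ coming from the Rademacher/sub-Gaussian rate.

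Next I would impose the target accuracy requirement: to certify $|R_\nu(\theta_T^{(s)})-R_\nu(\theta_T^{(\tau)})|\le\epsilon_0$, it suffices that the right-hand side is at most $\epsilon_0$, i.e. $\tfrac{c\,\eta\kappa_a}{C_{2,a}\sqrt k}\le \epsilon_0 - \tfrac{\eta\kappa_a}{C_{2,a}}\Delta_a^\star$. Solving this inequality for $k$ gives, provided the right-hand side is positive, $\sqrt k \ge \dfrac{c\,\eta\kappa_a/C_{2,a}}{\,\epsilon_0-\eta\kappa_a\Delta_a^\star/C_{2,a}\,}$, hence $k \ge c^2\bigl(\eta\kappa_a/C_{2,a}\bigr)^2\bigl(\epsilon_0-\eta\kappa_a\Delta_a^\star/C_{2,a}\bigr)^{-2}$, which is exactly the claimed $k=\Omega\!\bigl((\epsilon_0-\Delta_a^\star\eta\kappa_a/C_{2,a})^{-2}\bigr)$ with the configuration-dependent prefactor hidden in the $\Omega(\cdot)$. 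I would also note the feasibility condition $\epsilon_0 > \eta\kappa_a\Delta_a^\star/C_{2,a}$: below this threshold the irreducible floor $\tfrac{\eta\kappa_a}{C_{2,a}}\Delta_a^\star$ alone exceeds $\epsilon_0$ and no finite $k$ suffices, which is the precise statement of the saturation phenomenon discussed in the remark.

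The only genuinely delicate point is bookkeeping of which error contributions get folded where: the statement writes the bound purely in terms of $k$ and $\Delta_a^\star$, so I must be explicit that $T,n,m$ are taken large enough that their terms contribute at most a lower-order additive slack (or, more carefully, redefine $\epsilon_0$ as the budget remaining after subtracting those terms). If one instead wants a clean two-sided characterization, one would pair this upper bound on the required $k$ with the matching lower bound implied by the $\Theta(\cdot)$ in the coverage law specialized to a singleton family — but for the corollary as stated only the $\Omega$ direction on $k$ is needed, and it follows by the elementary rearrangement above. Thus the main "obstacle" is not mathematical depth but making the asymptotic regime assumptions precise enough that the one-line algebra is rigorous; no new lemma beyond Theorem~\ref{thm:single} is required.
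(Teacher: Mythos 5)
Your derivation is correct and takes essentially the same route as the paper: both start from Theorem~\ref{thm:single}, isolate the $\mathcal{O}(1/\sqrt{k})$ statistical fluctuation, and rearrange against the residual budget $\epsilon_0-\Delta_a^\star\eta\kappa_a/C_{2,a}$, together with the same feasibility condition that this residual be positive. The only cosmetic difference is that the paper handles the $T$-, $n$-, and $m$-dependent terms by an explicit budget split $(\beta_0,\beta_1,\beta_{\mathrm{te}})$ that allocates $\beta_1\epsilon_0$ to the matching term, whereas you absorb them by passing to the regime where $T,n,m$ are large; both reduce to the same one-line algebra and the same $\Omega(\cdot)$ conclusion.
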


\begin{remark}
i) The generalization error could decrease with $k$ until saturation at the irreducible error bound $\Delta_a^\star\eta\kappa_a/C_{2,a}$, which accounts for the commonly observed IPC saturation;   
ii) With finite training and test samples $n,m$, their sizes may limit accuracy, but the distilled sample size $k$ remains the fundamental bottleneck of distillation since $k \ll n,m$.
\end{remark}

\section{Coverage-Aware Bounds: Risk Across Training Configurations}
\label{sec:coverage}

The preceding remark establishes the local scaling behavior under a fixed configuration, highlighting the role of $k$ as the fundamental bottleneck.  
In practice, however, distilled data are expected to remain effective across not just one but a family of configurations $\A \subseteq \C$ (optimizers, architectures, augmentations).  
In what follows, we analyze how the generalization error scales with the configuration diversity of $\A$, derive the corresponding upper and lower bounds, and ultimately arrive at a tight \emph{coverage law}.

With concepts introduced in Sections~\ref{sec:prelim}--\ref{sec:single-configuration}, ranging from configuration--distance $d_\A$, coverage diversity $\Hcov(\A,r)=\log N(\A,d_\A,r)$, matching discrepancy $\Delta_a$, irreducible generalization error $\Delta^\star_a$, to dynamics constants $\rho_a$ and $ C_{2,a}$, we further introduce Rademacher constants $C_G^+,\tilde C_G^+$\footnote{$C_G^+$ denotes the supremum Rademacher complexity constant across configurations. When finite-sample or information-theoretic corrections are present, we denote the corrected version by $\tilde C_G^+$; both share the same order.}, and extend the Lipschitz assumption:
\begin{assumption}[Lipschitz transfer across configurations and parameters]
\label{assump:configuration-lip}
There exist $L_{\mathrm{conf}},L_\theta>0$ such that for all $a,a'\in\C$, all $\theta,\theta'\in\Gamma_a\cap\Gamma_{a'}$, and $\mu\in\{\hat\mu_\tau,\hat\mu_s\}$,
\begin{align*}
\big\|P_a(\theta)\E_{\mu} g_a(\theta;z) - P_{a'}(\theta)\E_{\mu} g_{a'}(\theta;z)\big\|_2
&\le L_{\mathrm{conf}}\, d_{\A}(a,a'),\\
\big\|P_a(\theta)\E_{\mu} g_a(\theta;z) - P_a(\theta')\E_{\mu} g_a(\theta';z)\big\|_2
&\le L_\theta\,\|\theta-\theta'\|_2,
\end{align*}
which requires smooth variation across configurations and parameter–Lipschitz continuity; it is mild, as common optimizers such as SGD (with learning rates in $[10^{-3},10^{-1}]$) and Adam (with $\beta\in[0.8,0.999]$) satisfy it in practice. 
\end{assumption}

Based on the mild extension of $d_\mathcal{A}$, which only requires the uniform Lipschitz continuity over $\{\hat\mu_\tau,\hat\mu_s\}$ and all $\theta \in \Gamma$, the $d_\A$–based coverage argument extends consistently from cover centers to configuration family $\A$, and the configuration-dynamic-risk framework can extend from single configuration points to all configurations.

\begin{theorem}[Uniform cross--configuration bound]
\label{thm:uniform-cross-configuration}
For any $\varepsilon\in(0,1)$, with probability at least $1-\varepsilon$ over the draws of $\hat\mu_\tau,\hat\mu_s,\hat\nu$, it holds for any configuration prior $\Pi$ supported on $\A$ that
\begin{align}
\E_{a\sim\Pi}\big| R_\nu(\theta^{(s,a)}_{T})- R_\nu(\theta^{(\tau,a)}_{T})\big|
&\;\le\;\epsilon_{\mathrm{bound}}^{\mathrm{upper}}
+ A_1\,\tfrac{\Hcov(\A,r)}{k} + A_2\,\sqrt{\tfrac{\Hcov(\A,r)}{k}}, 
\label{eq:avg-bound} \\
\sup_{a\in\A}\big| R_\nu(\theta^{(s,a)}_{T})- R_\nu(\theta^{(\tau,a)}_{T})\big|
&\;\le\;\epsilon_{\mathrm{bound}}^{\mathrm{upper}}
+ \tfrac{C_{\mathrm{cov}}(\A)}{\sqrt{k}}, 
\label{eq:uniform-bound}
\end{align}
\[
\epsilon_{\mathrm{bound}}^{\mathrm{upper}}
= \mathcal{O}\!\left(\rho_{\max}^{T}\|\delta_0\|
+ \sup_{a\in\A}\Delta^\star_{a}
+ \tfrac{1}{\sqrt{n}}
+ \tfrac{\sqrt{\Hcov(\A,r)}}{\sqrt{m}}\right),
\quad
C_{\mathrm{cov}}(\A)=\mathcal{O}\!\big(\sqrt{\Hcov(\A,r)}\big).
\]
If $\Ds$ depends on $\Dtau$, an additional correction 
$\mathcal{O}\!\left(\sqrt{I(\Ds;\Dtau)/k}\right)$ is added to both bounds (see Appendix~\ref{app:uniform-coverage-proof} and~\ref{app:avg-coverage-proof} for proof details).
\end{theorem}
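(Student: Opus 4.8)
\textbf{Proof strategy for Theorem~\ref{thm:uniform-cross-configuration}.}
The plan is to lift the single--configuration bound of Theorem~\ref{thm:single} to a uniform statement over $\A$ via a covering argument in the configuration metric $d_\A$, paying the coverage complexity $\Hcov(\A,r)$ as the price of uniformity. First I would fix a minimal $r$--net $\{a_1,\dots,a_N\}\subseteq\A$ with $N=\exp(\Hcov(\A,r))$, so every $a\in\A$ lies within $d_\A$--distance $r$ of some center $a_i$. On each center $a_i$, Theorem~\ref{thm:single} gives the risk gap bound with failure probability $\varepsilon/N$; a union bound over the $N$ centers yields a simultaneous guarantee at all centers with total failure probability $\varepsilon$, and the union-bound cost $\log(N/\varepsilon)=\Hcov(\A,r)+\log(1/\varepsilon)$ is exactly what turns the single-configuration $\mathcal{O}(1/\sqrt{k})$ fluctuation into the $\sqrt{\Hcov(\A,r)/k}$ and $\Hcov(\A,r)/k$ terms (the latter from the Bernstein/sub-gamma second-order term, or from the information-theoretic $\sqrt{I/k}$ correction combined with the covering count).

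Next I would control the ``interpolation'' error incurred by moving from an arbitrary $a\in\A$ to its net center $a_i$. Using Assumption~\ref{assump:configuration-lip}, the per-step update maps satisfy $\|\Phi_a(\theta;\mu)-\Phi_{a_i}(\theta;\mu)\|\le \eta\,L_{\mathrm{conf}}\,d_\A(a,a_i)\le \eta L_{\mathrm{conf}} r$ uniformly over $\theta\in\Gamma_a\cap\Gamma_{a_i}$ and $\mu\in\{\hat\mu_\tau,\hat\mu_s\}$, while the parameter-Lipschitz bound $L_\theta$ on the updates makes the iteration a contraction (using the PL-type rate $\rho_a$ from Assumption~\ref{assump:single-configuration}). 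Unrolling $T$ steps and summing the geometric series, the trajectory discrepancy $\|\theta_T^{(\mu,a)}-\theta_T^{(\mu,a_i)}\|$ is bounded by $\tfrac{\eta L_{\mathrm{conf}} r}{1-\rho_{\max}}$ (plus a $\rho_{\max}^T$ transient), and composing with the $L_R$--Lipschitz test risk converts this into an $\mathcal{O}(L_R L_{\mathrm{conf}} r/(1-\rho_{\max}))$ additive term in the risk gap. Choosing $r$ to balance this against the covering cost (or simply absorbing it, since $r$ can be taken as a fixed resolution) folds this contribution into $\epsilon_{\mathrm{bound}}^{\mathrm{upper}}$ together with $\sup_{a\in\A}\Delta_a^\star$, the optimization transient $\rho_{\max}^T\|\delta_0\|$, and the statistical floors $1/\sqrt{n}$, $\sqrt{\Hcov(\A,r)}/\sqrt{m}$.

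For the two displayed inequalities, the $\sup$-bound \eqref{eq:uniform-bound} follows by taking the worst center and then adding the interpolation term, giving $C_{\mathrm{cov}}(\A)=\mathcal{O}(\sqrt{\Hcov(\A,r)})$ from the union-bound fluctuation; the average bound \eqref{eq:avg-bound} follows by integrating the per-center bounds against the prior $\Pi$ — here one can use a PAC-Bayes--style change of measure so that $\E_{a\sim\Pi}$ only pays $\Hcov(\A,r)/k$ in the leading (variance) term and $\sqrt{\Hcov(\A,r)/k}$ in the subleading term, matching the stated form. The information-theoretic addendum is obtained by replacing, on each center, the i.i.d.\ Rademacher rate $e_g$ with its data-dependent counterpart $\mathcal{O}(\sqrt{I(\Ds;\Dtau)/k})$ from Theorem~\ref{thm:single} (via \citet{russo2016controlling}), which passes through the union bound unchanged in order since the mutual information does not depend on $a$. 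The main obstacle I anticipate is the interplay between the contraction constants $\rho_a$ and the feasible-set restriction $\Gamma_a\cap\Gamma_{a'}$: Assumption~\ref{assump:configuration-lip} only supplies Lipschitz control on the intersection of feasible sets, so one must argue that the coupled trajectories started from a common (or $\delta_0$-close) initialization remain in $\Gamma_a\cap\Gamma_{a_i}$ throughout all $T$ steps — this requires either a forward-invariance argument for the intersection under both $\Phi_a$ and $\Phi_{a_i}$, or an a priori bound showing the iterates stay in a common compact region where all the regularity constants of Assumption~\ref{assump:single-configuration} hold uniformly (yielding $\rho_{\max}=\sup_{a\in\A}\rho_a<1$ and $\kappa_{\max}=\sup_a\kappa_a<\infty$). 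Making this uniform-regularity step rigorous, rather than the covering/union-bound bookkeeping, is where the real work lies.
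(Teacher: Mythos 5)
Your proposal follows the paper's architecture closely: cover $\A$ by an $r$-net of size $N=\exp(\Hcov(\A,r))$, apply the single-configuration machinery at each center with failure probability $\varepsilon/N$ and a union bound, transfer from each center $a_i$ to an arbitrary $a$ via the cross-configuration Lipschitz bound plus PL contraction (giving $\|\theta_T^{(\mu,a)}-\theta_T^{(\mu,a_i)}\|\le \eta L_{\mathrm{conf}}\,r/(1-\rho_{\max})$, with the $\rho^T$ transient vanishing at common initialization), fold the resulting $\mathcal{O}(r)$ additive term into $\epsilon_{\mathrm{bound}}^{\mathrm{upper}}$, and—as you correctly identify as one of your alternatives—obtain the $\Hcov/k$ term from the Bernstein second-order deviation with $\delta=\varepsilon/(2N)$, which is exactly what the paper's union-of-classes Rademacher/Bernstein lemma supplies. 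One minor divergence: for the average bound the paper does not invoke a PAC-Bayes change of measure; it applies the per-center empirical-Bernstein bound simultaneously over all $N$ centers and then takes the $\Pi$-weighted average over the cover cells directly (which is immediate because the right-hand side is uniform over cells). Your PAC-Bayes suggestion is plausible but introduces a KL term that would need a further discretization argument to be related to $\Hcov$, so the paper's route is more economical. Finally, the concern you flag about forward-invariance of $\Gamma_a\cap\Gamma_{a_i}$ is legitimate and is not proved in the paper either; it is dispatched by a standing assumption (Assumption~\ref{assump:D1} in the appendix) that all optimization trajectories remain inside a common compact $\Gamma\subset\bigcap_{a\in\A}\Gamma_a$ with uniformly bounded envelopes, which makes $\rho_{\max}<1$ and $\kappa_{\max}<\infty$ well-defined without further argument.
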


The above upper bound indicates that the required distilled size increases linearly with \emph{the configuration diversity $\Hcov(\A)$}. A natural further question then is: \emph{is this dependence optimal?} The next theorem answers it in the affirmative by giving a matching lower bound and thus justifying the optimal dependence of the distilled sample size on the configuration diversity.

\begin{theorem}[Coverage lower bound]
\label{thm:coverage-lower}
Suppose Assumption~\ref{assump:single-configuration} holds (single–configuration regularity).  
Assume further an \emph{identifiability condition}: there exists $\lambda>0$ such that, for all $\theta\in\Gamma$ and any two distinct configurations $a,a'\in\A$,
\(
\big\| P_a(\theta)\,\E_{\hat\mu_\tau} g_a(\theta;z) 
     - P_{a'}(\theta)\,\E_{\hat\mu_\tau} g_{a'}(\theta;z) \big\|_2 
   \;\ge\; \lambda\, d_\A(a,a').
\)
That is, update dynamics corresponding to different configurations are uniformly separated.  
Let $\A$ admit a $\rho$–packing with $M$ elements (so that $\Hcov(\A,r)=\log M$). Then, for any distillation algorithm producing $k$ synthetic samples, there exists a distribution over this packed family such that
\[
\E_{a}\,\big| R_\nu(\theta^{(s,a)}_{T})- R_\nu(\theta^{(\tau,a)}_{T})\big|
\;\ge\;
\epsilon_{\mathrm{bound}}^{\mathrm{lower}}
+ c_{\mathrm{lb}}\,\rho\lambda\,\sqrt{\tfrac{\Hcov(\A,r)}{k}},
\]
where $c_{\mathrm{lb}}\in(0,1)$ is a universal constant (see Appendix~\ref{app:coverage-lower} for proof details).
\end{theorem}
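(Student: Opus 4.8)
\textbf{Proof proposal for Theorem~\ref{thm:coverage-lower}.}

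The plan is to run a standard Fano-type / Le Cam information-theoretic lower bound, where the ``parameter'' to be estimated is which configuration $a$ the distilled set is being asked to serve, and the distilled dataset $\Ds$ of size $k$ plays the role of the estimator's observation budget. First I would fix the $\rho$-packing $\{a_1,\dots,a_M\}\subseteq\A$ with $\log M = \Hcov(\A,r)$ guaranteed by hypothesis, and for each packing element construct the ``hard instance'' in which the real data $\hat\mu_\tau$ is held fixed but the target configuration is $a_j$; the identifiability condition $\|P_{a}(\theta)\E_{\hat\mu_\tau}g_a - P_{a'}(\theta)\E_{\hat\mu_\tau}g_{a'}\|_2 \ge \lambda\, d_\A(a,a')$ ensures that the induced update operators $\Phi_{a_j}$ are pairwise separated by at least $\lambda\rho$ at every $\theta\in\Gamma$. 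Running these separated operators for $T$ steps and pushing through the $L_R$-Lipschitzness of $\hat R$ together with the contractivity constants $\rho_a, C_{2,a}$ of Assumption~\ref{assump:single-configuration} yields a separation in the risk functional: any single distilled set $\hat\mu_s$ cannot simultaneously make $|R_\nu(\theta^{(s,a_i)}_T) - R_\nu(\theta^{(\tau,a_i)}_T)|$ and $|R_\nu(\theta^{(s,a_j)}_T) - R_\nu(\theta^{(\tau,a_j)}_T)|$ both small for $i\ne j$, because the two targets $\theta^{(\tau,a_i)}_T, \theta^{(\tau,a_j)}_T$ are $\gtrsim \rho\lambda$ apart while a fixed $\hat\mu_s$ drives $\theta^{(s,\cdot)}_T$ to a single location up to its own matching slack. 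This gives the ``packing separation'' half of the argument and the $\rho\lambda$ prefactor.

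Next I would quantify how much a size-$k$ distilled set can ``know'' about the index $j$. Since each $z'_\ell$ in $\Ds$ carries at most a bounded amount of information (using the bounded-update / bounded-loss regularity of Assumption~\ref{assump:single-configuration} to control the per-sample KL or mutual information, exactly as in the upper-bound's information-theoretic term), the total information $I(\Ds; J)$ where $J\sim\mathrm{Unif}\{1,\dots,M\}$ is $O(k)$ — more precisely $O(k\cdot c)$ for a constant $c$ depending on $B_g,\kappa_a,\eta$. Feeding $I(\Ds;J) = O(k)$ and $\log M = \Hcov(\A,r)$ into Fano's inequality (or a Birgé/Assouad variant if I want the cleaner square-root rate), the probability that a distillation algorithm ``serves the right configuration'' is bounded away from $1$ unless $k \gtrsim \Hcov(\A,r)$; quantitatively, the expected index-identification error is at least a constant once $k \lesssim \Hcov(\A,r)$, and interpolating the risk-separation $\rho\lambda$ against this error probability yields the claimed $c_{\mathrm{lb}}\,\rho\lambda\sqrt{\Hcov(\A,r)/k}$ term. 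The irreducible floor $\epsilon_{\mathrm{bound}}^{\mathrm{lower}}$ is then simply the best-case residual that survives even when $k\to\infty$: the optimization residual $\rho_{\max}^T\|\delta_0\|$, the intrinsic matching error $\inf_a \Delta^\star_a$, and finite-$n,m$ fluctuations, matching term-for-term the structure of $\epsilon_{\mathrm{bound}}^{\mathrm{upper}}$ in Theorem~\ref{thm:uniform-cross-configuration}. Averaging the per-instance lower bound over the uniform prior on the packing, and noting that an average over $J$ lower-bounds $\sup_a$, gives the stated $\E_a$-form.

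I expect the main obstacle to be making the reduction from ``risk separation'' to ``index identification'' fully rigorous: one must argue that an algorithm achieving small $\E_a|R_\nu(\theta^{(s,a)}_T) - R_\nu(\theta^{(\tau,a)}_T)|$ can be post-processed into a tester for $J$, which requires that the risk gap is not only separated between distinct indices but also that achieving it forces $\theta^{(s,a_j)}_T$ into a small $d_\A$-dependent neighborhood identifying $j$ — this is where the identifiability lower bound $\lambda\,d_\A$ and the contraction constant $C_{2,a}$ must be combined carefully, since contraction could in principle wash out the configuration-dependent signal if $\rho_a$ is too small. A secondary subtlety is that $\Ds$ is itself produced by the (possibly adaptive, data-dependent) distillation algorithm from $\Dtau$, so the information bound must be stated for the Markov chain $J \to (\Dtau,\text{algorithm}) \to \Ds \to \widehat{J}$; handling this cleanly is exactly what the mutual-information correction in the upper bound anticipates, so the two bounds are consistent by design. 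Routine constant-chasing (absorbing $L_R$, $\eta$, $\kappa_a$ into $c_{\mathrm{lb}}$ and verifying $c_{\mathrm{lb}}\in(0,1)$) is deferred to the appendix.
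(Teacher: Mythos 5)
Your skeleton matches the paper's: fix the $\rho$--packing, use identifiability to separate the packed configurations, convert a small risk gap into a small alignment discrepancy via the contraction constants, and invoke an information-theoretic obstruction to conclude that $k=\Omega(\Hcov)$ is unavoidable. The paper's Appendix~\ref{app:coverage-lower} does exactly this: Step~D.4.2 is your risk-to-alignment reduction, Step~D.4.3 builds a min-alignment decoder from the pairwise bound $\Delta_{a_i}+\Delta_{a_j}\ge(\lambda-L_{\mathrm{conf}})\,d_\A(a_i,a_j)$ (note it works at the level of the discrepancy $\Delta_a(\hat\mu_\tau,\hat\mu_s)$, routing the triangle inequality through the \emph{synthetic} update field via the configuration-Lipschitz assumption --- not by claiming that a fixed $\hat\mu_s$ drives $\theta^{(s,\cdot)}_T$ to a single location, which is false since the synthetic-trained iterates still depend on $a$), and Step~D.4.4 supplies the information obstruction.

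The genuine gap is in your information accounting. In the paper's construction the hidden index $J$ is drawn \emph{independently} of $\Dtau$, and the distillation algorithm never observes $J$; the Markov structure is therefore $\Dtau\to\Ds$ with $J\perp(\Dtau,\Ds)$, so $I(\Ds;J)=0$ exactly --- not $O(k)$ as you assert via a per-sample information budget. A Fano bound on $I(\Ds;J)$ therefore cannot produce the $\sqrt{\Hcov/k}$ rate (if anything, the independence would push you toward a $k$-independent constant lower bound of order $\lambda\rho$, which is not the claimed statement). The paper instead obtains $\sqrt{\Hcov(\A,r)/k}$ by inverting a high-probability mutual-information tail in $I(\Ds;\Dtau)$ and taking a \emph{union bound over the $M$ packed configurations}, which is where $\log(4M)=\Theta(\Hcov)$ enters under the square root: demanding $\Delta_{a_i}\le\tilde\epsilon$ simultaneously for all $i\in[M]$ with constant probability forces $\tilde\epsilon\ge c_0+\sqrt{(C_I/k)(I(\Ds;\Dtau)+\log(4M))}$. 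Your alternative channel $J\to\Dtau\to\Ds\to\widehat J$ presumes the real data distribution varies with the configuration index, which changes the theorem's setting (the real dataset is fixed; only the evaluation configuration varies). To repair your argument you would need to replace the Fano step on $I(\Ds;J)$ with either the paper's union-bound-over-the-packing device or a minimax quantization argument showing that a $k$-atom measure cannot uniformly approximate $M$ separated update functionals to accuracy better than $\Omega(\sqrt{\log M/k})$.
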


\begin{corollary}[Coverage law] 
\label{cor:k-vs-coverage}
For any generalization error $\epsilon_0>\epsilon_{\mathrm{bound}}$, if the distilled sample size satisfies $k \;\ge\; K_{\min}(\epsilon_0,\A)
= \Big(\tfrac{C_{\mathrm{cov}}(\A)}{\epsilon_0-\epsilon_{\mathrm{bound}}}\Big)^2
= \Theta\!\big(\Hcov(\A,r)\big),$
then it holds that
\begin{equation}
\sup_{a\in\A}\big| R_\nu(\theta^{(s,a)}_{T})- R_\nu(\theta^{(\tau,a)}_{T})\big|\le \epsilon_0.
\end{equation}
\end{corollary}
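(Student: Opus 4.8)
The statement to prove is Corollary~\ref{cor:k-vs-coverage}, the \emph{coverage law}, which is a direct consequence of the uniform upper bound in Theorem~\ref{thm:uniform-cross-configuration} (specifically the worst-case inequality \eqref{eq:uniform-bound}).

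\paragraph{Plan.}
The plan is to treat the corollary as an inversion of the uniform bound \eqref{eq:uniform-bound}. Recall that theorem gives, with high probability,
\[
\sup_{a\in\A}\big| R_\nu(\theta^{(s,a)}_{T})- R_\nu(\theta^{(\tau,a)}_{T})\big|
\;\le\;\epsilon_{\mathrm{bound}}^{\mathrm{upper}}
+ \frac{C_{\mathrm{cov}}(\A)}{\sqrt{k}},
\]
with $C_{\mathrm{cov}}(\A)=\mathcal{O}(\sqrt{\Hcov(\A,r)})$. First I would fix a target error $\epsilon_0 > \epsilon_{\mathrm{bound}}$ and simply solve the inequality $\epsilon_{\mathrm{bound}}^{\mathrm{upper}} + C_{\mathrm{cov}}(\A)/\sqrt{k}\le\epsilon_0$ for $k$: this yields $\sqrt{k}\ge C_{\mathrm{cov}}(\A)/(\epsilon_0-\epsilon_{\mathrm{bound}})$, i.e. $k\ge\big(C_{\mathrm{cov}}(\A)/(\epsilon_0-\epsilon_{\mathrm{bound}})\big)^2 =: K_{\min}(\epsilon_0,\A)$. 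Plugging in $C_{\mathrm{cov}}(\A)=\Theta(\sqrt{\Hcov(\A,r)})$ immediately gives $K_{\min}(\epsilon_0,\A)=\Theta(\Hcov(\A,r))$, which is the claimed order. The monotonicity of the right-hand side in $k$ then ensures that any $k\ge K_{\min}$ also satisfies the bound, completing the argument that $\sup_{a\in\A}|\cdots|\le\epsilon_0$.

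\paragraph{Key steps in order.}
(1) Invoke Theorem~\ref{thm:uniform-cross-configuration}, eq.~\eqref{eq:uniform-bound}, to obtain the high-probability sandwich of $\sup_{a\in\A}|R_\nu(\theta^{(s,a)}_T)-R_\nu(\theta^{(\tau,a)}_T)|$ by $\epsilon_{\mathrm{bound}}^{\mathrm{upper}}+C_{\mathrm{cov}}(\A)/\sqrt{k}$; note $\epsilon_{\mathrm{bound}}^{\mathrm{upper}}\le\epsilon_{\mathrm{bound}}$ by absorbing lower-order terms into the definition of $\epsilon_{\mathrm{bound}}$, so the gap $\epsilon_0-\epsilon_{\mathrm{bound}}>0$ is the slack available to the $k$-dependent term. (2) Algebraically invert: the condition $C_{\mathrm{cov}}(\A)/\sqrt{k}\le\epsilon_0-\epsilon_{\mathrm{bound}}$ is equivalent to $k\ge K_{\min}(\epsilon_0,\A)=(C_{\mathrm{cov}}(\A)/(\epsilon_0-\epsilon_{\mathrm{bound}}))^2$. (3) Substitute $C_{\mathrm{cov}}(\A)=\Theta(\sqrt{\Hcov(\A,r)})$ to conclude $K_{\min}=\Theta(\Hcov(\A,r))$. (4) Use monotone decrease of $k\mapsto C_{\mathrm{cov}}(\A)/\sqrt{k}$ to extend from $k=K_{\min}$ to all $k\ge K_{\min}$, yielding the final conclusion $\sup_{a\in\A}|R_\nu(\theta^{(s,a)}_T)-R_\nu(\theta^{(\tau,a)}_T)|\le\epsilon_0$.

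\paragraph{Main obstacle.}
Since this corollary is essentially a one-line rearrangement of Theorem~\ref{thm:uniform-cross-configuration}, the only real subtlety is bookkeeping about which lower-order terms are folded into $\epsilon_{\mathrm{bound}}$ versus left explicit: one must verify that $\epsilon_{\mathrm{bound}}^{\mathrm{upper}}$ (with its $\rho_{\max}^T\|\delta_0\|$, $\sup_a\Delta^\star_a$, $1/\sqrt n$, $\sqrt{\Hcov(\A,r)}/\sqrt m$ contributions) is indeed $\le\epsilon_{\mathrm{bound}}$, so that the entire residual slack $\epsilon_0-\epsilon_{\mathrm{bound}}$ can be charged against the coverage-slope term. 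If the dataset-dependence correction $\mathcal{O}(\sqrt{I(\Ds;\Dtau)/k})$ is present, it has the same $1/\sqrt k$ scaling and can be absorbed into $C_{\mathrm{cov}}(\A)$, leaving the $\Theta(\Hcov(\A,r))$ order of $K_{\min}$ unchanged (so the corollary statement is robust to that correction). The matching lower bound of Theorem~\ref{thm:coverage-lower} then shows the $\Theta$ in $K_{\min}=\Theta(\Hcov(\A,r))$ is tight, which is what makes the coverage law a genuine characterization rather than merely an upper estimate.
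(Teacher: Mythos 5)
Your proposal is correct and follows essentially the same route as the paper's own proof in Appendix~\ref{app:coverage-corollaries}: invert the uniform bound of Theorem~\ref{thm:uniform-cross-configuration} by requiring $C_{\mathrm{cov}}(\A)/\sqrt{k}\le\epsilon_0-\epsilon_{\mathrm{bound}}$, solve for $k$, and then establish $K_{\min}=\Theta(\Hcov(\A,r))$ from the explicit form of $C_{\mathrm{cov}}(\A)$ (the paper does this last step with explicit constants via $(x+y)^2\le 2x^2+2y^2$, where you simply cite $C_{\mathrm{cov}}=\Theta(\sqrt{\Hcov})$). Your remark about the bookkeeping between $\epsilon_{\mathrm{bound}}^{\mathrm{upper}}$ and $\epsilon_{\mathrm{bound}}$ correctly identifies the one notational looseness in the paper.
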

\begin{remark}[Coverage law]
(i) The upper bound separates two errors: an approximation error $\Hcov/k$, since a size-$k$ set cannot fully cover $\A$, and a concentration error $\sqrt{\Hcov/k}$ from uniform guarantees. In typical regimes, the latter dominates, as it decays more slowly and thus sets the critical rate.  
(ii) The residual $\epsilon_{\mathrm{bound}}$ collects optimization error, irreducible matching discrepancy, and sampling noise, yielding a non-vanishing floor.  
(iii) Eq.~(\ref{eq:avg-bound}) gives an \emph{average-case} guarantee under a prior $\Pi$, while Eq.~(\ref{eq:uniform-bound}) strengthens it to a \emph{worst-case} guarantee across all configurations, reducing to the $\sqrt{\Hcov/k}$ rate.  
(iv) The lower bound shows that any target error $\epsilon_0>\epsilon_{\mathrm{bound}}$ requires 
$k=\Omega\!\big(\Hcov(\A,r)\big)$. 
This matches the upper bound $k=\mathcal{O}\!\big(\Hcov(\A,r)\big)$ up to constants.  
Taken together, these results establish the tightness of the coverage law: as $\Hcov$ grows, $k$ must scale proportionally to maintain accuracy, and no distillation algorithm can avoid the $\sqrt{\Hcov/k}$ barrier.
\end{remark}
\medskip

\begin{remark}[Practical estimation of coverage entropy]
Since the coverage law depends on $\Hcov(\A,r)$, an important practical question is how to estimate it. 
In principle, computing $H_{\mathrm{cov}}(\A,r)$ requires solving a minimal covering problem over $(\A,d_\A)$, which is NP--hard. 
As an approximation, one may define $d_\A(a_i,a_j)$ via the averaged $\ell_2$-distance between their normalized one-step updates (e.g., preconditioned gradients at the same initialization and mini-batches), and apply a greedy $r$--cover to obtain an empirical covering number $N_r$ with $\widehat H_{\mathrm{cov}}=\log N_r$. 
For tractability, however, our experiments use $\log M$, the logarithm of the number of candidate configurations, as a proxy. 
Since $1\le N_r\le M$ and, under mild Lipschitz assumptions on $d_\A$, $N_r$ is typically of the same order as $M$, $\log M$ preserves the dominant scaling with configuration diversity while avoiding costly pairwise distance computations.
\end{remark}
\medskip

\begin{remark} New insights into dataset distillation emerge from the coverage law and its practical estimation:  
(i) the number of distilled samples $k$ must grow in proportion to the coverage complexity $\Hcov(\A,r)$ in order to maintain a fixed generalization error; and  
(ii) Estimating $\Hcov(\A,r)$, or using a proxy such as $\log M$, provides a principled way to determine how many distilled samples are needed to ensure that the synthetic dataset preserves the utility of the real dataset across a given configuration coverage. 
These insights connect the theoretical limits with practical guidelines for designing robust dataset distillation methods.  
\end{remark}

\section{Unifying various categories of dataset distillation}
\label{sec:unify-dm-gm-tm}

From Sections \ref{sec:single-configuration}-\ref{sec:coverage}, the matching discrepancy $\Delta_a(\hat\mu_\tau,\hat\mu_s)$ turns out to be the key to the generalization errors. We are now in a position to scrutinize why the three major distillation methods, i.e., DM, GM, and TM, albeit in different forms, all reduce the same $\Delta_a$ via the bi-level optimization mechanism.

\textbf{Unified bi-level optimization.} 
Let the distilled dataset be parameterized by $\xi$ with distribution $\mu(\xi)$, 
and $\Theta_j$ the inner states queried at outer iteration $j$. 
Each method minimizes a surrogate 
$\mathcal M_\phi(\mu(\xi);\hat\mu_\tau,b,\Theta_{j})$ with $\phi\in\{\mathrm{DM},\mathrm{GM},\mathrm{TM}\}$, 
where $b$ denotes the \emph{source configuration} under which distillation is performed 
(to distinguish it from the target configuration $a$ used for evaluation):
\begin{align}
\nonumber
\text{Inner loop (training under $b$):}\quad
&\theta_{t+1}=\theta_t-\eta\,P_b(\theta_t)\,\E_{z\sim\mu(\xi)} g_b(\theta_t;z), \\
\text{Outer loop (updating $\xi$):}\quad
\nonumber
&\xi^{(j+1)}=\xi^{(j)}-\eta_j\,\nabla_{\!\xi}\,\mathcal{M}_\phi(\mu(\xi^{(j)});\hat\mu_\tau,b,\Theta_{j}).
\end{align}

\textbf{Method-specific surrogates.}
The surrogate $\mathcal M_\phi$ takes different forms but all fit into the same bi-level template: 
DM compares empirical distributions using maximum mean discrepancy,  e.g.,
\(
\small
\mathcal M_{\mathrm{DM}} = \mathrm{MMD}_k(\hat\mu_s,\hat\mu_\tau),
\)
where the kernel embeds input-space geometry into an RKHS for measuring distributional differences. 
GM aligns average gradients at anchor states $\Theta_{j}$, e.g., \( \small 
\mathcal M_{\mathrm{GM}} = \tfrac{1}{|\Theta_{j}|}\sum_{\theta\in\Theta_{j}}\!\Big\|\E_{z\sim\hat\mu_s}g_b(\theta;z)-\E_{z\sim\hat\mu_\tau}g_b(\theta;z)\Big\|_2,
\)
forcing the synthetic set to match optimization directions on the real data. 
TM compares short optimization trajectories,  e.g.,
\(\small
\mathcal M_{\mathrm{TM}}=\sum_{t=0}^{L_b}\omega_t\,\|\theta_t^{(s,b)}-\theta_t^{(\tau,b)}\|_2,
\)
with weights $\omega_t$ along $L_b$ steps unrolled from a shared initialization.




Despite their differences, these surrogates admit the same contraction property:
\begin{equation}
\label{eq:surrogate-contraction}
\mathbb{E}[\mathcal{M}_\phi(\xi^{(j+1)})]
\le (1-\alpha_\phi)\,\mathbb{E}[\mathcal{M}_\phi(\xi^{(j)})]+\epsilon_{\mathrm{est}}^{(\phi)},
\end{equation}
where the contraction rate is $\alpha_\phi=\eta_j\mu_\phi$ (under $L_\phi$–smoothness and a PL condition), and $\epsilon_{\mathrm{est}}^{(\phi)}$ denotes the estimation error arising from finite-sample approximations of the surrogate.  
This shows that all three objectives progressively contract their surrogate mismatches (see Appendix~\ref{app:Proof of the contraction property} for proof details).

\begin{lemma}[Exchangeability of surrogates]\label{lem:exchange}
Fix configuration $a=b$. Under the smoothness, Lipschitz, and contraction conditions in Assumption~\ref{assump:single-configuration}, 
the matching discrepancy admits the bounds
\begin{equation}
\small
\Delta_a(\hat\mu_\tau,\hat\mu_s)\ \le\
\underbrace{\kappa_aL_{z,a}\,W_1\ \text{or}\ \kappa_aC_k\,\mathrm{MMD}_k}_{\mathfrak B_{\rm DM}},
\quad
\underbrace{\kappa_a|\Theta_{j}|\,\mathcal M_{\mathrm{GM}}}_{\mathfrak B_{\rm GM}},
\quad
\underbrace{\kappa_a\frac{L_\theta+2/\eta}{\omega_{\min}}\mathcal M_{\rm TM}
+\kappa_a L_\theta\varepsilon_{\rm path}}_{\mathfrak B_{\rm TM}},
\label{eq:alignment_discrepancydetail}
\end{equation}
where $\mathfrak B_{\rm DM}$, $\mathfrak B_{\rm GM}$, and $\mathfrak B_{\rm TM}$ are the distribution-, gradient-, and trajectory-based surrogate bounds on $\Delta_a$. 
Moreover, these bounds are equivalent up to constant factors, i.e.,
\[
\mathfrak B_{\rm TM} = \mathcal{O}(\mathfrak B_{\rm GM}), 
\quad
\mathfrak B_{\rm GM} = \mathcal{O}(\mathfrak B_{\rm DM}).
\]
\end{lemma}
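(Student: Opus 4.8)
\emph{Approach.} The plan is to prove the lemma in two stages: first verify that each of $\mathfrak B_{\mathrm{DM}}$, $\mathfrak B_{\mathrm{GM}}$, $\mathfrak B_{\mathrm{TM}}$ upper-bounds $\Delta_a(\hat\mu_\tau,\hat\mu_s)$, and then chain two substitutions to get $\mathfrak B_{\mathrm{TM}}=\mathcal{O}(\mathfrak B_{\mathrm{GM}})=\mathcal{O}(\mathfrak B_{\mathrm{DM}})$. Every bound starts from the same reduction: by sub-multiplicativity of the operator norm and Assumption~\ref{assump:single-configuration}(i), for every $\theta\in\Gamma_a$ we have $\|P_a(\theta)(\E_{\hat\mu_\tau}g_a(\theta;z)-\E_{\hat\mu_s}g_a(\theta;z))\|_2 \le \kappa_a\,\|\E_{\hat\mu_\tau}g_a(\theta;z)-\E_{\hat\mu_s}g_a(\theta;z)\|_2$, so it remains to control the un-preconditioned gradient mismatch and take the supremum over $\Gamma_a$. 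For the DM bound, fix $\theta$ and let $\gamma$ be an optimal coupling attaining $W_1(\hat\mu_\tau,\hat\mu_s)$; since $g_a(\theta;\cdot)$ is $L_{z,a}$-Lipschitz in the sample, Jensen over $\gamma$ gives $\|\E_{\hat\mu_\tau}g_a(\theta;z)-\E_{\hat\mu_s}g_a(\theta;z)\|_2 \le \E_\gamma\|g_a(\theta;z)-g_a(\theta;z')\|_2 \le L_{z,a}\,W_1(\hat\mu_\tau,\hat\mu_s)$; taking $\sup_\theta$ and reinserting $\kappa_a$ yields $\Delta_a\le\kappa_a L_{z,a}W_1$. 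The MMD variant is the same computation with the RKHS mean embedding $\mu_k(\cdot)$ replacing the coupling: writing each coordinate of $g_a(\theta;\cdot)$ as an $\mathcal H_k$-element with norm absorbed into $C_k$ and using Cauchy--Schwarz against $\mu_k(\hat\mu_\tau)-\mu_k(\hat\mu_s)$ gives $\Delta_a\le\kappa_a C_k\,\mathrm{MMD}_k(\hat\mu_\tau,\hat\mu_s)$.

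\emph{GM bound.} Because $\mathcal M_{\mathrm{GM}}$ only records the mismatch on the finite anchor set $\Theta_{j}$, I would use that the inner iterates remain in $\Gamma_a$ and that $\Theta_{j}$ is chosen to $r$-cover the active parameter region (as the bi-level procedure does), combined with parameter-Lipschitz continuity of the preconditioned field (the constant $L_\theta$ appearing in the statement): for any $\theta\in\Gamma_a$ pick the nearest anchor $\theta^\dagger\in\Theta_j$, bound the mismatch at $\theta$ by its value at $\theta^\dagger$ plus an $L_\theta r$ term, and dominate the anchor values by their sum, $\max_{\theta\in\Theta_{j}}\|\cdot\|_2\le\sum_{\theta\in\Theta_{j}}\|\cdot\|_2=|\Theta_{j}|\,\mathcal M_{\mathrm{GM}}$. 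Treating $|\Theta_{j}|$ as an $\mathcal{O}(1)$ design constant yields $\Delta_a\le\kappa_a|\Theta_{j}|\,\mathcal M_{\mathrm{GM}}$, the leftover $\kappa_a L_\theta r$ being of the same order as the path residual below and absorbed into the stated constants.

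\emph{TM bound (main obstacle).} Write $\delta_t=\theta_t^{(s,b)}-\theta_t^{(\tau,b)}$, so $\delta_0=0$ by the shared initialization. Subtracting the two unrolled updates gives $\eta(P_b(\theta_t^{(s)})\E_{\hat\mu_s}g_b(\theta_t^{(s)})-P_b(\theta_t^{(\tau)})\E_{\hat\mu_\tau}g_b(\theta_t^{(\tau)}))=\delta_t-\delta_{t+1}$. Adding and subtracting $P_b(\theta_t^{(\tau)})\E_{\hat\mu_s}g_b(\theta_t^{(\tau)})$ splits the left-hand side into the target same-parameter mismatch $P_b(\theta_t^{(\tau)})(\E_{\hat\mu_s}g_b(\theta_t^{(\tau)})-\E_{\hat\mu_\tau}g_b(\theta_t^{(\tau)}))$ and a same-distribution, different-parameter term of norm at most $L_\theta\|\delta_t\|$. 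Hence the mismatch at $\theta_t^{(\tau)}$ is at most $\tfrac{1}{\eta}(\|\delta_t\|+\|\delta_{t+1}\|)+L_\theta\|\delta_t\|\le(L_\theta+2/\eta)\max_t\|\delta_t\|$, and since $\mathcal M_{\mathrm{TM}}=\sum_t\omega_t\|\delta_t\|\ge\omega_{\min}\max_t\|\delta_t\|$ we obtain $\max_t\|\delta_t\|\le\mathcal M_{\mathrm{TM}}/\omega_{\min}$. Reinserting $\kappa_a$ and paying $\kappa_a L_\theta\varepsilon_{\mathrm{path}}$ to pass from the iterate set to the full supremum over $\Gamma_a$ (with $\varepsilon_{\mathrm{path}}$ the worst-case distance from $\Gamma_a$ to the trajectory) gives exactly $\mathfrak B_{\mathrm{TM}}$. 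I expect this to be the hardest step: one must check that the telescoping and triangle inequalities keep the constants finite --- which holds because the trajectory length $L_b$ is fixed and the drift factor $(1+\eta L_\theta)^{L_b}$ is a constant --- and pin down $\varepsilon_{\mathrm{path}}$ precisely enough to make the supremum-transfer step rigorous.

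\emph{Ordering.} For $\mathfrak B_{\mathrm{GM}}=\mathcal{O}(\mathfrak B_{\mathrm{DM}})$: applying the DM coupling argument at each anchor gives $\|\E_{\hat\mu_s}g_b(\theta)-\E_{\hat\mu_\tau}g_b(\theta)\|_2\le L_{z,b}W_1$, hence $\mathcal M_{\mathrm{GM}}\le L_{z,b}W_1$, and with $a=b$ and $|\Theta_{j}|=\mathcal{O}(1)$ we get $\mathfrak B_{\mathrm{GM}}\le\kappa_a|\Theta_{j}|L_{z,b}W_1=\mathcal{O}(\mathfrak B_{\mathrm{DM}})$. For $\mathfrak B_{\mathrm{TM}}=\mathcal{O}(\mathfrak B_{\mathrm{GM}})$: unrolling $\|\delta_{t+1}\|\le(1+\eta L_\theta)\|\delta_t\|+\eta\kappa_a\|\E_{\hat\mu_s}g_b(\theta_t^{(s)})-\E_{\hat\mu_\tau}g_b(\theta_t^{(s)})\|_2$ and summing the geometric series over the $L_b$ steps bounds $\max_t\|\delta_t\|$ by a constant multiple of $\mathcal M_{\mathrm{GM}}$ (the trajectory points being among the queried anchors), while $\varepsilon_{\mathrm{path}}$ is of the same order as the covering residual already present in $\mathfrak B_{\mathrm{GM}}$; together these give $\mathfrak B_{\mathrm{TM}}=\mathcal{O}(\mathfrak B_{\mathrm{GM}})$. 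Composing the two inclusions yields the claimed chain.
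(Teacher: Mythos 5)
Your proposal is correct and follows essentially the same route as the paper's Appendix~\ref{app:Bridge inequalities}--\ref{app:Exchangeability}: the $\kappa_a$ operator-norm reduction, Kantorovich--Rubinstein/RKHS duality for the DM bridge (your optimal-coupling-plus-Jensen argument is the primal form of the paper's scalarized dual), the $\varepsilon$-net covering with $\max\le\sum$ for GM, the telescoping identity $F_{\hat\mu_s}(\theta_t^{(s)})-F_{\hat\mu_\tau}(\theta_t^{(\tau)})=\tfrac{1}{\eta}(\delta_t-\delta_{t+1})$ plus path-coverage for TM, and the same two chaining steps for the ordering. The only cosmetic difference is that for $\mathfrak B_{\rm TM}=\mathcal O(\mathfrak B_{\rm GM})$ you unroll with the expansive factor $(1+\eta L_\theta)^{L_b}$ while the paper uses the assumed contraction $\rho<1$ to get a $1/(1-\rho)$ constant; both are valid under Assumption~\ref{assump:single-configuration}.
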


\begin{remark}[Exchangeability of surrogates]
The three surrogate bounds $\mathfrak B_*$ emphasize different controlling factors of the discrepancy:
(i) $\mathfrak B_{\rm DM}$ is governed by feature-level divergence ($W_1$ or $\mathrm{MMD}_k$) scaled by data smoothness $L_{z,a}$;
(ii) $\mathfrak B_{\rm GM}$ by gradient mismatch $\mathcal M_{\rm GM}$ amplified by the effective parameter size $|\Theta_j|$;
(iii) $\mathfrak B_{\rm TM}$ by trajectory deviation $\mathcal M_{\rm TM}$ and truncation error $\varepsilon_{\rm path}$, weighted by step-size constants.
Although these sources differ, distribution spread, gradient variability, and path stability, the resulting bounds remain equivalent up to constants, showing that DM, GM, and TM are interchangeable surrogates contracting the same discrepancy $\Delta_a$ (see notation details in Table~\ref{tab:surrogates} and Appendix~\ref{tab:notation_unify}, see proof details in Appendix~\ref{app:Bridge inequalities} and ~\ref{app:Exchangeability} ).
\end{remark}


\begin{table}[t]
\label{tab:surrogates}
\small
\centering
\caption{Unified practical comparison and surrogate-to-alignment bridge. Left: how each branch is optimized; Middle: how its surrogate controls $\Delta_a$; Right: what drives the outer rate.}
\begin{tabular}{c|c|c|c|c|c|c}
\hline \hline
& Outer obj. & Inner $\Theta_{j}$ & Robust & Compute & Bridge to $\Delta_a$ & Outer-rate driver \\
\hline \hline
\multirow{2}{*}{DM} & $W_1$ & \multirow{2}{*}{none} & \multirow{2}{*}{\textbf{High}} & \multirow{2}{*}{Low} & $\mathfrak B_{\rm DM}^{W_1} = \kappa_a L_{z,a} W_1$ &  critic smoothness \\
& MMD &  &  &  & $\mathfrak B_{\rm DM}^{MMD} =  \kappa_a C_k\mathrm{MMD}_k$ & aug.\ strength \\
\hline
GM & Grad gap & 1-few $\theta$ & Mid & Mid & $\mathfrak B_{\rm GM} =\kappa_a|\Theta_{j}|\,\mathcal M_{\mathrm{GM}}$ & anchors/short path \\
\hline
TM & Path gap & unroll $L_b$ & Low & \textbf{High} & $\mathfrak B_{\rm TM}\simeq \kappa_a\frac{L_\theta+2/\eta}{\omega_{\min}}\mathcal M_{\rm TM}
$ &  $L_b$/implicit grads \\
\hline \hline
\end{tabular}
\vspace{-0.2cm}
\end{table}

\begin{theorem}[Dynamic single–configuration bound for unifying DD methods]
\label{thm:dynamic-single}
Fix configuration $a=b$ and run $J$ outer steps with surrogate $\mathcal{M}_\phi$.
Under Assumption~\ref{assump:single-configuration} and the contraction property Eq.~(\ref{eq:surrogate-contraction}), with probability at least $1-\varepsilon$,
\[
\big| R_\nu(\theta^{(s,a)}_{T})- R_\nu(\theta^{(\tau,a)}_{T})\big|
\;\le\;
\epsilon_{\mathrm{bound}}^{\mathrm{distillation}}
+\epsilon_{\mathrm{method}}^{(\phi)}
+\epsilon_{k}^{(\phi)},
\]
where
\( 
\small
\epsilon_{\mathrm{bound}}^{\mathrm{distillation}}
=L_R\rho_a^{T}\|\delta_0\|+\mathcal{O}\!({1}/{\sqrt{m}})
\), 
\(
\small
\epsilon_{\mathrm{method}}^{(\phi)}
= \mathcal{O}\!(\frac{C_{\phi,a}}{C_{2,a}}[(1-\alpha_\phi)^{J}\,\mathcal{M}_\phi(\xi^{(0)})+\epsilon_{\mathrm{est}}^{(\phi)}])
\), 
and
\(
\small
\epsilon_{k}^{(\phi)}
= \mathcal{\tilde O} \!({1}/{\sqrt{k}}).
\)
\end{theorem}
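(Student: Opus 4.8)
The plan is to chain together three results already in hand: (a) the static single–configuration risk bound of Theorem~\ref{thm:single}, which controls $|R_\nu(\theta_T^{(s,a)})-R_\nu(\theta_T^{(\tau,a)})|$ in terms of the optimization residual $L_R\rho_a^T\|\delta_0\|$, the matching discrepancy $\Delta_a(\hat\mu_\tau,\hat\mu_s)$ (through the factor $\eta\kappa_a/C_{2,a}$), and the statistical fluctuations $e_g=\mathcal O(1/\sqrt k+1/\sqrt n)$ and $e_{\mathrm{te}}=\mathcal O(1/\sqrt m)$; (b) the surrogate-to-alignment bridge of Lemma~\ref{lem:exchange}, which gives $\Delta_a(\hat\mu_\tau,\hat\mu_s)\le C_{\phi,a}\,\mathcal M_\phi(\xi)+(\text{path/estimation remainder})$ with $C_{\phi,a}\in\{\kappa_aL_{z,a},\ \kappa_a,\ \kappa_a(L_\theta+2/\eta)/\omega_{\min}\}$ for DM/GM/TM; and (c) the surrogate contraction inequality Eq.~(\ref{eq:surrogate-contraction}), which after $J$ outer iterations yields $\mathbb E[\mathcal M_\phi(\xi^{(J)})]\le(1-\alpha_\phi)^J\mathcal M_\phi(\xi^{(0)})+\epsilon_{\mathrm{est}}^{(\phi)}/\alpha_\phi$ by unrolling the geometric recursion and summing the resulting series.

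First I would invoke Theorem~\ref{thm:single} with configuration $a=b$, producing
$|R_\nu(\theta_T^{(s,a)})-R_\nu(\theta_T^{(\tau,a)})|\le L_R\rho_a^T\|\delta_0\|+\tfrac{\eta\kappa_a}{C_{2,a}}\bigl(\Delta_a(\hat\mu_\tau,\hat\mu_s)+e_g\bigr)+e_{\mathrm{te}}$, where I keep $\Delta_a(\hat\mu_\tau,\hat\mu_s)$ rather than collapsing to $\Delta_a^\star$ since here the distilled set is the specific one produced by $J$ outer steps. Next I substitute the surrogate bridge from Lemma~\ref{lem:exchange}, replacing $\Delta_a$ by $C_{\phi,a}\mathcal M_\phi(\xi^{(J)})$ plus the branch-specific remainder (for TM the truncation term $\kappa_aL_\theta\varepsilon_{\mathrm{path}}$, absorbed into $\epsilon_{\mathrm{est}}^{(\phi)}$). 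Then I take expectations over the distillation randomness and apply the unrolled contraction bound to replace $\mathbb E[\mathcal M_\phi(\xi^{(J)})]$ by $(1-\alpha_\phi)^J\mathcal M_\phi(\xi^{(0)})+\mathcal O(\epsilon_{\mathrm{est}}^{(\phi)})$. Finally I regroup terms: $L_R\rho_a^T\|\delta_0\|+e_{\mathrm{te}}$ becomes $\epsilon_{\mathrm{bound}}^{\mathrm{distillation}}$; the factor $\tfrac{\eta\kappa_a}{C_{2,a}}C_{\phi,a}$ times the contracted surrogate plus estimation error becomes $\epsilon_{\mathrm{method}}^{(\phi)}=\mathcal O\bigl(\tfrac{C_{\phi,a}}{C_{2,a}}[(1-\alpha_\phi)^J\mathcal M_\phi(\xi^{(0)})+\epsilon_{\mathrm{est}}^{(\phi)}]\bigr)$ (absorbing $\eta\kappa_a$ into the constant); and the fluctuation $\tfrac{\eta\kappa_a}{C_{2,a}}e_g$ becomes $\epsilon_k^{(\phi)}=\tilde{\mathcal O}(1/\sqrt k)$, where the tilde swallows the $1/\sqrt n$ term (dominated since $k\ll n$) and, if $\mathcal D_s$ depends on $\mathcal D_\tau$, the information-theoretic penalty $\mathcal O(\sqrt{I(\mathcal D_s;\mathcal D_\tau)/k})$ from Theorem~\ref{thm:single}. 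A union bound over the $1-\varepsilon$ events from Theorem~\ref{thm:single} and, if needed, a Markov/high-probability conversion of the in-expectation contraction bound, handles the probabilistic statement.

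The main obstacle is the interface between the \emph{in-expectation} contraction Eq.~(\ref{eq:surrogate-contraction}) and the \emph{high-probability} conclusion: Theorem~\ref{thm:single} holds with probability $1-\varepsilon$ over the sample draws, while the surrogate contraction controls only $\mathbb E[\mathcal M_\phi]$ over the outer-loop stochasticity. Reconciling these requires either restricting the claim to a bound on the expected risk gap (cleanest), or upgrading Eq.~(\ref{eq:surrogate-contraction}) to a high-probability statement via a supermartingale/Azuma argument on the surrogate iterates — which needs an additional bounded-increment assumption not explicitly isolated in Assumption~\ref{assump:single-configuration}. A secondary subtlety is bookkeeping the TM path-truncation remainder $\kappa_aL_\theta\varepsilon_{\mathrm{path}}$: it does not contract with $J$, so it must be folded into $\epsilon_{\mathrm{est}}^{(\phi)}$ (equivalently into $\epsilon_{\mathrm{method}}^{(\phi)}$) rather than into the decaying part, and one must check this is consistent with the $\mathcal O(\cdot)$ statement of the theorem. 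Everything else is routine constant-tracking.
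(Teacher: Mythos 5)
Your proposal follows essentially the same route as the paper's Appendix E.5: reduce the risk gap to $\Delta_a$ via the single-configuration machinery (the paper re-derives this in Steps E.5.1--E.5.4 rather than citing Theorem~\ref{thm:single} as a black box, but the content is identical), bridge $\Delta_a$ to $\mathcal M_\phi$ via the surrogate inequalities of Lemma~\ref{lem:exchange}, apply the outer contraction~Eq.~(\ref{eq:surrogate-contraction}), and collect the statistical terms. Your flagged concern about interfacing the in-expectation contraction with the high-probability conclusion is a genuine rough edge that the paper itself glosses over in Step E.5.6, and your choice to fold the non-decaying TM path-truncation remainder into $\epsilon_{\mathrm{est}}^{(\phi)}$ is the correct bookkeeping (the paper's formal version, Theorem~\ref{thm:dynamic-single-complete}, keeps it as a separate fourth additive term rather than the three advertised in the informal statement).
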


\begin{remark}
$\small \epsilon_{\mathrm{bound}}^{\mathrm{distillation}}$ represents the irreducible generalization error,
collecting the method–independent terms that do not vanish with larger $k$, including optimization residuals due to finite $T$ and finite-sample fluctuations from the test set of size $m$;  
\(\small
\epsilon_{\mathrm{method}}^{(\phi)}
\)
captures the method–specific contributions that depend on the surrogate choice $\small \phi\in\{\mathrm{DM},\mathrm{GM},\mathrm{TM}\}$.  
Here $\small C_{\phi,a}$ is a distillation method–dependent factor (possibly varying with configuration $a$) and $\small \mathcal{M}_\phi(\xi^{(0)})$ is the initial surrogate mismatch; 
\(
\small \epsilon_{k}^{(\phi)}
\)
represents the sampling error that decreases with the number of distilled samples $k$, uniformly across all three surrogates (The explicit form of $C_{\phi,a}$ and the detailed proof are deferred to the Appendix~\ref{app:Unified single}).
\end{remark}

\begin{theorem}[Coverage–aware bound with dynamic outer progress]
\label{thm:dynamic-coverage}
Let $\mathcal A$ denote the family of configurations, 
with prior $\Pi$ supported on a $\rho$–packing, and $a,b\in \mathcal A$. 
Under Assumption~\ref{assump:configuration-lip}, with probability at least $1-\varepsilon$,
\[
\sup_{a\in\mathcal A}
\big| R_\nu(\theta^{(s,a)}_{T})- R_\nu(\theta^{(\tau,a)}_{T})\big|
\;\le\;
\epsilon_{\mathrm{bound}}^{\mathrm{distillation}}
+\epsilon_{\mathrm{method}}^{(\phi)}
+\epsilon_{k,\mathrm{cov}}^{(\phi)} .
\]
Relative to Theorem~\ref{thm:dynamic-single}, only two changes. 
First, the method term is controlled by the worst–case constant 
$C_{\phi,\max}=\sup_{a\in\A}C_{\phi,a}$ 
instead of the per–configuration factor $C_{\phi,a}$. 
Second, the sampling term acquires an explicit dependence on the coverage complexity, 
\( \small
\epsilon_{k,\mathrm{cov}}^{(\phi)}=\mathcal{\tilde O}\!(\sqrt{\Hcov(\A,r)/k}),
\)
in place of the $\mathcal{\tilde O}(1/\sqrt{k})$ rate for a single configuration (see Appendix~\ref{app:Coverage-aware bound with dynamic outer progress} for proof details). 
\end{theorem}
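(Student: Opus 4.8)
The plan is to lift the single--configuration dynamic bound of Theorem~\ref{thm:dynamic-single} to the whole family $\A$ by a covering argument over $(\A,d_\A)$, using Assumption~\ref{assump:configuration-lip} to move between nearby configurations and a union bound to pay for the diversity. Fix the covering radius $r$ and let $\{a_1,\dots,a_N\}$ be a minimal $r$--net of $(\A,d_\A)$, so $\log N=\Hcov(\A,r)$ (a $\rho$--packing supporting $\Pi$ is, by the usual packing--covering duality, such a net). For an arbitrary $a\in\A$ with nearest net point $a_i$, split $|R_\nu(\theta^{(s,a)}_T)-R_\nu(\theta^{(\tau,a)}_T)|\le |R_\nu(\theta^{(s,a)}_T)-R_\nu(\theta^{(s,a_i)}_T)| + |R_\nu(\theta^{(s,a_i)}_T)-R_\nu(\theta^{(\tau,a_i)}_T)| + |R_\nu(\theta^{(\tau,a_i)}_T)-R_\nu(\theta^{(\tau,a)}_T)|$: the middle term is handled by Theorem~\ref{thm:dynamic-single} at the net point, and the two outer terms by net approximation.

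For the net--approximation terms I would run two trajectories on the \emph{same} data but under configurations $a$ and $a_i$: at each step their preconditioned update fields differ by at most $\eta L_{\mathrm{conf}}\,d_\A(a,a_i)\le\eta L_{\mathrm{conf}} r$ by Assumption~\ref{assump:configuration-lip}, and the per--step map is a $\rho_a$--contraction by the PL--type condition of Assumption~\ref{assump:single-configuration}; telescoping gives a parameter drift $\lesssim \tfrac{\eta L_{\mathrm{conf}}}{1-\rho_a}r$ (plus a $\rho_a^T$ initialization term already present in the floor), and $L_R$--Lipschitzness of $R_\nu$ turns this into an $\mathcal O(r)$ contribution absorbed into $\epsilon_{\mathrm{bound}}^{\mathrm{distillation}}$. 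The same Lipschitz step transfers the matching discrepancy, $\Delta_a(\hat\mu_\tau,\hat\mu_s)\le\Delta_{a_i}(\hat\mu_\tau,\hat\mu_s)+2L_{\mathrm{conf}} r$ (applying the assumption once for $\hat\mu_\tau$ and once for $\hat\mu_s$), so the alignment/method contribution at $a$ is controlled by that at $a_i$ up to $\mathcal O(r)$. Combining Lemma~\ref{lem:exchange} with the outer contraction Eq.~(\ref{eq:surrogate-contraction})—which after $J$ steps yields $\E[\mathcal M_\phi(\xi^{(J)})]\le(1-\alpha_\phi)^J\E[\mathcal M_\phi(\xi^{(0)})]+\epsilon_{\mathrm{est}}^{(\phi)}/\alpha_\phi$—and replacing every per--configuration constant by its worst--case value $\rho_{\max},\kappa_{\max},C_{2,\min},C_{\phi,\max}$ produces the stated $\epsilon_{\mathrm{method}}^{(\phi)}=\mathcal O\!\big(\tfrac{C_{\phi,\max}}{C_{2,\min}}[(1-\alpha_\phi)^J\mathcal M_\phi(\xi^{(0)})+\epsilon_{\mathrm{est}}^{(\phi)}]\big)$.

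The diversity cost enters through a union bound: apply Theorem~\ref{thm:dynamic-single} at confidence $\varepsilon/N$ simultaneously at each net point $a_1,\dots,a_N$, so all bounds hold with probability $\ge1-\varepsilon$. The concentration pieces hidden in $\epsilon_k^{(\phi)}=\tilde{\mathcal O}(1/\sqrt k)$ scale like $\sqrt{\log(1/\delta)/k}$ (and, when $\Ds$ depends on $\Dtau$, like $\sqrt{I(\Ds;\Dtau)/k}$ via the mutual-information generalization bound), so replacing $\delta$ by $\varepsilon/N$ inflates them to $\sqrt{(\log N+\log(1/\varepsilon))/k}=\tilde{\mathcal O}(\sqrt{\Hcov(\A,r)/k})$, which is exactly $\epsilon_{k,\mathrm{cov}}^{(\phi)}$; the test--concentration term likewise acquires a $\sqrt{\Hcov(\A,r)}$ factor and is folded into $\epsilon_{\mathrm{bound}}^{\mathrm{distillation}}$. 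Taking the supremum over $a\in\A$ and collecting terms yields the claim.

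The main obstacle I anticipate is the bookkeeping around the source configuration $b$: Eq.~(\ref{eq:surrogate-contraction}) and Lemma~\ref{lem:exchange} control $\Delta_b$, not $\Delta_a$ for arbitrary targets $a$, so one must either (i) assume the outer objective aggregates the surrogate over (a net of) $\A$, so it is small at every net center and the transfer within an $r$--ball costs only $\mathcal O(r)$, or (ii) absorb the residual cross--configuration shift into the irreducible floor $\sup_{a\in\A}\Delta^\star_a$ already contained in $\epsilon_{\mathrm{bound}}^{\mathrm{distillation}}$; getting the constants right here without letting a $\mathrm{diam}(\A)$ factor leak in is the delicate point. A secondary nuisance is that Eq.~(\ref{eq:surrogate-contraction}) is an expectation over the distillation randomness, so upgrading it to the high--probability form needed before the union bound requires a Markov/Chernoff step and care with the log factors swept into $\tilde{\mathcal O}$.
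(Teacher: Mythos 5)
Your proposal follows essentially the same route as the paper's Appendix~\ref{app:Coverage-aware bound with dynamic outer progress}: cover $(\A,d_\A)$ at radius $r$, apply Theorem~\ref{thm:dynamic-single} at each of the $N=\exp(\Hcov(\A,r))$ centers with confidence $\varepsilon/N$, transfer from an arbitrary $a$ to its nearest center via Assumption~\ref{assump:configuration-lip} and the PL contraction (absorbing the $\mathcal O(r)$ drift into the floor), uniformize the constants to $\rho_{\max},\kappa_{\max},C_{2,\min},C_{\phi,\max}$, and let the union bound inflate the $\tilde{\mathcal O}(1/\sqrt{k})$ concentration piece to $\tilde{\mathcal O}(\sqrt{\Hcov(\A,r)/k})$. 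The only cosmetic difference is that you split the risk gap directly via a three-term triangle inequality at the trajectory level whereas the paper's Step E.5.1--E.5.3 transfers the alignment discrepancy $\Delta_a\le\Delta_{a_j}+2L_{\mathrm{conf}}r$ before applying the risk--alignment reduction; you also correctly flag the source-vs.-target bookkeeping around $b$, which the paper handles exactly by your option (i) (``we train only at the centers $a_j$'' in Step E.5.4).
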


\begin{remark}[Practical guidance]
Theorems~\ref{thm:dynamic-single} and \ref{thm:dynamic-coverage} decompose the generalization error into three parts: a method–independent irreducible bound, a method–dependent contraction term, and a sampling term that depends jointly on the configuration coverage $\Hcov(\A,r)$ and the distilled sample size $k$.  
Crucially, this analysis shows that despite their surface differences, GM, DM, and TM are governed by the same scaling law in single configurations and the same coverage law across configuration families, providing a unified set of design principles for practical dataset distillation.  
The generalization error can be reduced by tuning the surrogate and its hyperparameters to shrink the contraction term (e.g., GM with carefully chosen anchors, TM under stable dynamics, DM with an appropriate kernel), and by scaling $k$ in proportion to $\Hcov(\A,r)$ to control the sampling error.
\end{remark}

\section{Experiments}
\label{sec:exp} 

We verify our theoretical results on MNIST \citep{lecun2002gradient}, CIFAR-10/100 \citep{krizhevsky2009learning}, and ImageNette \citep{5206848}. 
We evaluate four canonical distillation families: \emph{Gradient Matching} (DC~\citep{zhao2020dataset}, DSA~\citep{zhao2021dataset}), \emph{Distribution Matching} (DM~\citep{zhao2023dataset}), \emph{Trajectory Matching} (MTT~\citep{cazenavette2022dataset}), and a recent \emph{diffusion-based} method (MGD\textsuperscript{3}~\citep{chansantiago2025mgd3modeguideddatasetdistillation}). 
Each target configuration consists of an architecture (ConvNet, LeNet~\citep{lecun2002gradient}, ResNet-18~\citep{he2016deep}, AlexNet~\citep{krizhevsky2012imagenet}), optimizer (SGD/Adam), and augmentation (DSA on/off). 
The distilled dataset is always generated in the source configuration ConvNet+SGD, using the official open-source implementations released by the corresponding papers to ensure consistency and reproducibility across methods.
We vary the number of distilled samples $k \in \{2,4,6,8,12,18,28,51,100,200\}$, and report the generalization error
\( \small
\Delta_a(\hat\mu_\tau,\hat\mu_s)=\big|\hat R(\theta_T^{(s,a)})-\hat R(\theta_T^{(\tau,a)})\big|,
\)
where $a$ indexes the target configuration. 
In practice, we approximate the coverage complexity $\Hcov(\A,r)$ by $\log M$, where $M$ is the number of sampled configurations.

\textbf{Single-configuration regime.}  
We first test the $1/\sqrt{k}$ scaling law (Theorem~\ref{thm:dynamic-single}) on fixed $a$ is ConvNet+SGD. 
Figures~\ref{fig:single-eco} show that $\Delta_a$ decreases nearly linearly with $1/\sqrt{k}$, with coefficient of determination $R^2$ values mostly above $0.85$, confirming the established scaling law.  
On \emph{MNIST}, DC, DSA, and DM achieve almost perfect fits ($R^2=0.90$–$0.99$), while MTT exhibits a much steeper slope ($\beta_1=-0.33$) with high variance, reflecting instability from trajectory unrolling.  
On \emph{CIFAR-10}, all methods follow the law with strong fits ($R^2=0.86$–$0.96$); DC and DSA give the most stable slopes, showing that although the scaling law is universal, different methods vary in their sample efficiency, i.e., how effectively additional distilled samples reduce error, and in the stability of this improvement across configurations.
On \emph{CIFAR-100}, DC and DSA again maintain high linearity ($R^2=0.92$–$0.97$), but DM ($0.84$) and especially MTT ($0.73$) deviate more, highlighting how trajectory mismatch is amplified by dataset complexity.  
On \emph{ImageNette}, the diffusion-based method also exhibits a clear $1/\sqrt{k}$ scaling ($R^2=0.87$–$0.92$), showing that the law extends beyond matching-based approaches.  
Together, these results verify that the $1/\sqrt{k}$ law holds universally (high $R^2$), while the intercept corresponds to dataset-specific irreducible generalization error bounds.


\textbf{Cross-configuration coverage.}  
We next test the coverage law $\Delta \propto \sqrt{\Hcov(\A)/k}$ (Theorems~\ref{thm:uniform-cross-configuration}--\ref{thm:coverage-lower}).  
On \emph{MNIST}, DC and DSA show clean linear fits with slopes $\beta_1 \approx 0.20$ ($R^2=0.88$–$0.92$), DM follows with a smaller slope ($\beta_1=0.14$, $R^2=0.80$), while MTT rises to $\beta_1=0.91$ ($R^2=0.83$), indicating much higher sensitivity to configuration shifts.  
On \emph{CIFAR-10}, the trend persists with moderate fits ($R^2 \approx 0.66$–$0.70$): slopes increase from MTT ($\beta_1=0.09$) $\to$ DC ($\beta_1=0.15$) $\to$ DSA ($\beta_1=0.19$) $\to$ DM ($\beta_1=0.23$). This ordering reflects the relative robustness of different objectives to configuration diversity.
On \emph{CIFAR-100}, the law remains visible but with weaker fits (DM: slope $\beta_1=0.28$, $R^2=0.74$; DSA: $\beta_1=0.18$, $R^2=0.63$; DC/MTT: $ \beta_1= 0.05 $, $R^2<0.45$). The lower $R^2$ is expected: with more classes and higher intra-class variability, empirical estimates of cross-configuration errors become noisier, so points scatter more around the predicted line even though the linear scaling still holds.  
Across datasets, the slope acts as a method-dependent sensitivity to coverage complexity: GM methods achieve lower error within a configuration but degrade faster across configurations, DM is comparatively robust to diversity but depends more on $k$, and TM suffers the largest instability.

\begin{figure*}[t]
\begin{center}
\includegraphics[width=1.0\linewidth]{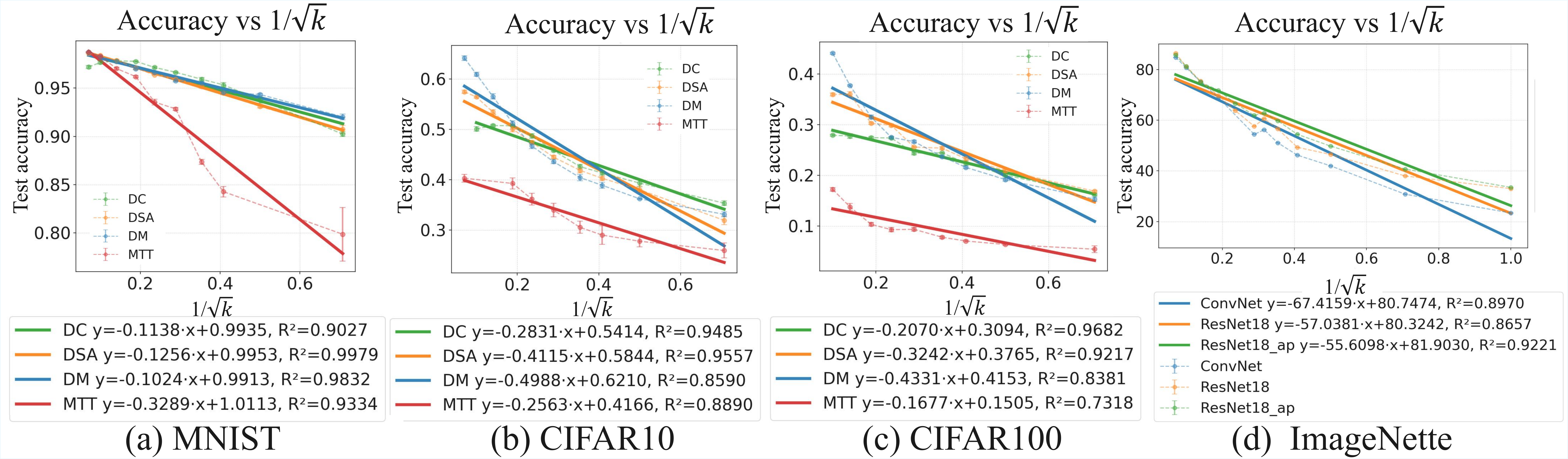}
\vspace{-0.4cm}
\caption{Single-configuration scaling law. On MNIST, CIFAR-10/100, and ImageNette, the curves of generalization error $\Delta$ against $1/\sqrt{k}$ for GM, DM, and MTT shows linear decay at small $k$ followed by saturation at a positive generalization error bound. Regression intercepts give $\epsilon_{\mathrm{bound}}(a)$, consistent with Theorems~\ref{thm:single} and~\ref{thm:dynamic-single}. }

\label{fig:single-eco}
\end{center}
\end{figure*}

\begin{figure*}[t]
\begin{center}
\includegraphics[width=1.0\linewidth]{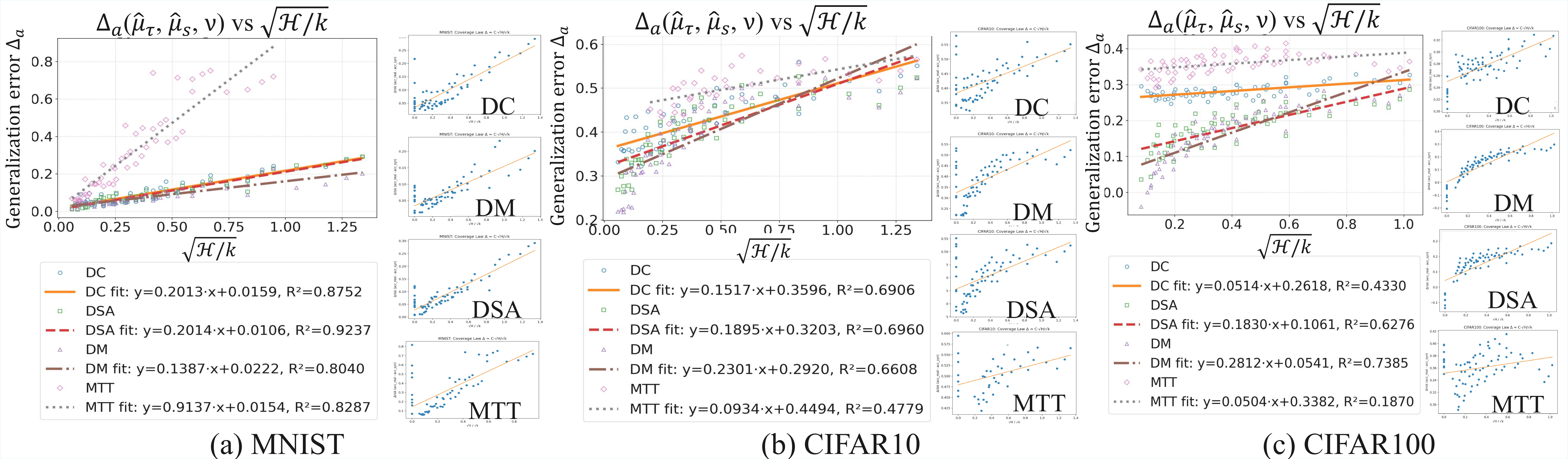}
\vspace{-0.6cm}
\caption{Configuration coverage law.
For subsets of $m$ configurations, plotting $Y=\Delta(k,M)$ against $X=\sqrt{\log M}/\sqrt{k}$ yields near-linear trends under random sampling. Results remain consistent with Theorems~\ref{thm:uniform-cross-configuration} and~\ref{thm:dynamic-coverage}.}
\vspace{-0.3cm}
\label{fig:coverage-law}
\end{center}
\end{figure*}




\section{Conclusion and Limitations}
\label{sec:Conclusion}
We propose the first configuration-dynamics–error framework that unifies gradient-, distribution-, and trajectory-matching distillation. Our analysis reveals two fundamental dataset distillation laws: 1) the scaling law, which explains that the generalization error reduces at a rate of $1/\sqrt{k}$ with the distilled sample size enlarge, and explain the saturation effect in IPC scaling; 2) the coverage law, showing that the distilled sample size must scale linearly with configuration diversity. These two laws characterize the utility boundary of data distillation, and offer a recipe for the theory-guided and configuration-robust methods design.

\textbf{Limitations.} Despite this advance, our theory relies on PL-type dynamics, which may break down in highly non-convex or adversarial training settings. The coverage measure of configuration diversity via the distance of update parameters may neglect semantic/domain shifts. 


\bibliography{iclr2026_conference}
\bibliographystyle{iclr2026_conference}

\appendix
\addcontentsline{toc}{section}{Appendix}

\section*{Appendix Contents}
\begin{enumerate}
    \item Notation \dotfill ~\ref{app:notation}
    \item Related work \dotfill ~\ref{app:related_work}
    \item Proofs for the Single--configuration Bound \dotfill ~\ref{app:single-configuration-proofs}
    \item Proofs for the Coverage--Aware Upper Bounds \dotfill ~\ref{app:coverage-proofs}
    \item Proofs for Unifying Distribution, Gradient, and Trajectory Matching \dotfill ~\ref{app:Unifying-proofs}
    \item Experiments \dotfill ~\ref{app:Experiments}
    \item Limitation and Future Work \dotfill ~\ref{app:Limitation}
    \item Usage of LLM \dotfill ~\ref{app:LLM_use}
\end{enumerate}

\section{Notations}
\label{app:notation}


\begin{table}[h!]
\label{tab:notation_framework}
\caption{Notations for preliminaries~\ref{sec:prelim}, configuration-dynamic-risk framework~\ref{sec:framework}, single–configuration~\ref{sec:single-configuration}, and configurations coverage~\ref{sec:coverage}.}
\centering
\begin{tabularx}{\linewidth}{
    >{\raggedright\arraybackslash}p{0.22\linewidth}
    >{\raggedright\arraybackslash}X
}

\toprule
\textbf{Symbol} & \textbf{Meaning} \\
\midrule
\multicolumn{2}{l}{\textbf{Data \& Distributions}}\\
$D_\tau$, $q_\tau$ & Real training dataset and its population distribution. \\
$\hat\mu_\tau$ & Empirical measure of $D_\tau$. \\
$D_s$, $\hat\mu_s$ & Distilled dataset and its empirical/prototype measure (support size $k$). \\
$\mathcal{P}_k$ & Class of $k$-prototype measures (support on at most $k$ atoms). \\
$\nu$, $\hat\nu$ & Test distribution and its empirical counterpart of size $m$. \\
$z=(x,y)$ & A data point used in the loss $\ell(\theta;z)$. \\

\multicolumn{2}{l}{\textbf{configurations \& Geometry}}\\
$\mathcal{A}$, $\mathcal{C}\!\subseteq\!\mathcal{A}$ & Universe of training configurations; target subfamily for generalization. \\
$a\in\mathcal{A}$ & A concrete configuration (optimizer, hyperparameters, augmentation, architecture). \\
$\Gamma$, $\Gamma_a$ & Common feasible set; feasible set associated with configuration $a$. \\
$d_{\mathcal{A}}(a,a')$ & configuration-distance (update-field discrepancy between $a$ and $a'$). \\
$H_{\mathrm{cov}}(r)$ & Coverage complexity: $\log \mathcal{N}(\mathcal{A}, d_{\mathcal{A}}, r)$. \\
$\Pi$ & Prior over configurations (e.g., uniform over a $\rho$-cover/packing). \\

\multicolumn{2}{l}{\textbf{Losses \& Risks}}\\
$\ell(\theta;z)$ & Per-sample loss. \\
$R_\nu(\theta)$ & Population risk under distribution $q$: $R_\nu=\mathbb{E}_\nu\,\ell(\theta;z)$. \\
$\hat R(\theta)$ & Empirical risk under $\hat\nu$. \\

\multicolumn{2}{l}{\textbf{Dynamics \& Regularity}}\\
$\theta_t,\ \theta_T$ & Parameters at step $t$ and after $T$ steps. \\
$\Phi_a(\theta;\mu)$ & Update map under configuration $a$ and data $\mu$: $\theta_{t+1}=\Phi_a(\theta_t;\mu)$. \\
$P_a(\theta)$ & Preconditioner/metric of configuration $a$ (e.g., adaptive gradient metric). \\
$g_a(\theta;z)$ & Per-sample update field used by configuration $a$. \\
$\eta$ & Inner-loop step size (learning rate). \\
$\rho_a\in(0,1)$ & Contraction factor of $\Phi_a$ on $\Gamma$ (PL/smooth regime). \\
$C_{2,a} = \frac{1-\rho_a}{L_R}$ & Risk-to-alignment transfer constant ($L_R$: risk Lipschitz in $\theta$). \\
$\kappa_a$ & Uniform bound on $\|P_a(\theta)\|$ over $\Gamma$. \\
$B_g$, $B_\ell$ & Bounds on $\|g_a(\theta;z)\|$ and $|\ell(\theta;z)|$, respectively. \\
$\delta_0$ & Initialization mismatch: $\delta_0=\theta^{(s)}_0-\theta^{(\tau)}_0$. \\

\multicolumn{2}{l}{\textbf{Alignment \& Aggregation}}\\
$\Delta_a(\mu,\nu)$ & Preconditioned alignment discrepancy between data $\mu$ and $\nu$ under $a$. \\
$\Delta_a^\star$ & Irreducible alignment error at budget $k$: $\inf_{\mu\in\mathcal{P}_k}\Delta_a(\hat\mu_\tau,\mu)$. \\
$\Delta_{\sharp}^\star$ & Aggregated irreducible error: $\inf_{\mu\in\mathcal{P}_k}\mathbb{E}_{a\sim\Pi}\Delta_a(\hat\mu_\tau,\mu)$. \\

\multicolumn{2}{l}{\textbf{Complexities \& Rates}}\\
$R_k(\mathcal{G}_a)$, $R_n(\mathcal{G}_a)$ & Rademacher complexities of gradient classes at sizes $k$ and $n$. \\
$R_m(\mathcal{L}_a)$ & Rademacher complexity of the test-loss class at size $m$. \\
$e_g(n,k,\varepsilon)$, $e_{\mathrm{te}}(m,\varepsilon)$ & Estimation terms on training/test sides (confidence $\varepsilon$). \\
$\epsilon_{\mathrm{bound}}$ & $k$-independent error floor (optimization + irreducible alignment + statistics). \\
$C_{\mathrm{cov}}(\mathcal{A})$ & Coverage slope controlling the $\sqrt{H_{\mathrm{cov}}/k}$ term. \\

\multicolumn{2}{l}{\textbf{Information Terms}}\\
$I(D_s;D_\tau)$ & Mutual information between distilled and real training data. \\
$C_I,\ C_I'$ & Problem-dependent constants in MI-based corrections (avg./uniform forms). \\
\bottomrule
\end{tabularx}
\end{table}

\begin{table}[h!]
\label{tab:notation_unify}
\caption{Notations for unifying various branches of dataset distillation~\ref{sec:unify-dm-gm-tm}.}
\centering
\begin{tabularx}{\linewidth}{
    >{\raggedright\arraybackslash}p{0.22\linewidth}
    >{\raggedright\arraybackslash}X
}

\toprule
\textbf{Symbol} & \textbf{Meaning} \\
\midrule

\multicolumn{2}{l}{\textbf{Distribution-matching (DM) branch}}\\
$L_{z,a}$ & Lipschitz constant of $g_a(\theta;z)$ with respect to the data metric $d_Z$.\\
$W_1$ & 1-Wasserstein distance $W_1(\hat\mu_s,\hat\mu_\tau)$.\\
$C_k$ & RKHS aggregate norm bound, $\sup_\theta \big(\sum_j\|g_{a,j}(\theta;\cdot)\|_{\mathcal H_k}^2\big)^{1/2}$.\\
$\mathrm{MMD}_k$ & Maximum mean discrepancy with kernel $k$, measuring distributional distance in RKHS.\\

\multicolumn{2}{l}{\textbf{Gradient-matching (GM) branch}}\\
$|\Theta_{j}|$ & Cardinality of the GM anchor set $\Theta_{j}$.\\
$\mathcal M_{\rm GM}$ & GM surrogate: anchor-averaged field gap 
$\tfrac{1}{|\Theta_{j}|}\sum_{\theta\in\Theta_{j}}\|\E_{\hat\mu_s}g_a-\E_{\hat\mu_\tau}g_a\|_2$.\\

\multicolumn{2}{l}{\textbf{Trajectory-matching (TM) branch}}\\
$\omega_{\min}$ & Minimum TM weight, $\omega_{\min}:=\min_t \omega_t >0$.\\
$M_{\rm TM}$ & TM surrogate, $M_{\rm TM}=\sum_{t=0}^{L_b}\omega_t\|\theta_t^{(s,a)}-\theta_t^{(\tau,a)}\|_2$.\\
$\varepsilon_{\rm path}$ & Path coverage radius: maximum distance from an optimal point to the unrolled path set.\\

\multicolumn{2}{l}{\textbf{Outer-loop / Branch-agnostic}}\\
$\varphi\in\{\mathrm{DM},\mathrm{GM},\mathrm{TM}\}$ & Distillation surrogate branch (distribution-, gradient-, or trajectory-matching).\\
$C_{\varphi,a}$ & Bridge constant for branch $\varphi$ under configuration $a$.\\
$\alpha_\varphi$ & Outer-loop contraction rate in the surrogate recursion Eq.~(\ref{eq:surrogate-contraction}).\\
$J$ & Number of outer-loop (bilevel) iterations.\\
$M_\varphi(\xi^{(0)})$ & Initial surrogate misfit at outer-loop initialization $\xi^{(0)}$.\\
$\epsilon_{\mathrm{est}}^{(\varphi)}$ & Estimation/statistical error floor for branch $\varphi$.\\
$\xi$ & Outer-loop optimization variables (e.g., prototypes, labels, or weights).\\

\multicolumn{2}{l}{\textbf{Abbreviations}}\\
“DM/GM/TM” & Three surrogate types: distribution-matching, gradient-matching, and trajectory-matching.\\
\bottomrule

\end{tabularx}
\end{table}

\section{Related work}
\label{app:related_work}

\subsection{Methodological Advances in Dataset Distillation}

\textbf{Gradient Matching (GM).}
Dataset distillation was first formulated via gradient matching, where synthetic data are optimized to align the training gradients of real data~\citep{zhao2020dataset}. Differentiable Siamese augmentation improved stability and generalization across pipelines~\citep{zhao2021dataset}. Later works explored diversity regularization~\citep{cazenavette2022dataset}, representative matching (DREAM)~\citep{liu2023dreammatch}, and feature regression (FRePo)~\citep{zhou2022dataset}. Engineering efforts such as DC-BENCH~\citep{cui2022dc} provided standardized evaluation. Despite progress, GM often struggles with cross-architecture transfer.

\textbf{Distribution Matching (DM).}
Distribution-based methods align feature or embedding distributions rather than raw gradients. Zhao and Bilen~\citep{zhao2023dataset} proposed matching distributions through MMD in intermediate feature space. CAFe introduced hierarchical alignment~\citep{wang2022cafe}, while M3D~\citep{zhang2024m3d} and latent quantile matching (LQM)~\citep{wei2024dataset} further improved statistical robustness. DM methods emphasize scalability and robustness, with strong performance under augmentation shifts.

\textbf{Trajectory Matching (TM).}
Trajectory matching aligns the full optimization dynamics rather than single-step gradients. The MTT framework~\citep{cazenavette2022dataset} established this principle, later improved by truncated backpropagation~\citep{kim2022dataset} and automated trajectory design~\citep{liu2024dataset}. Extensions include self-supervised distillation~\citep{zhou2024self}. TM captures optimization pathways faithfully but is computationally demanding.

\textbf{Beyond GM/DM/TM.}
Generative priors constrain synthetic samples in pretrained generative latent spaces, e.g., GLaD~\citep{cazenavette2023generalizing} and diffusion-based methods~\citep{su2024d}. Large-scale efforts like SRe$^2$L~\citep{yin2023squeeze} and TESLA~\citep{cui2023scaling} scale distillation to ImageNet-1K. Beyond images, condensation has been extended to graphs~\citep{liu2024graph,zheng2023rdm} and adversarially robust regimes~\citep{sun2024diversity}. Label-centric approaches argue that soft labels often dominate performance gains~\citep{sucholutsky2021soft,chen2023data}. These directions illustrate the versatility of dataset distillation across domains and modalities. \emph{We do not directly evaluate generative-based methods in our experiments, since our focus is on canonical optimization-driven procedures (GM/DM/TM). Nevertheless, our theoretical framework is general and applies equally to settings where synthetic data are produced by generative priors or large-scale latent models, as the alignment--risk perspective only depends on the induced empirical distributions rather than the mechanism of synthesis.}

\subsection{Theoretical Explorations in Dataset Distillation}

\textbf{Kernel and NTK perspectives.}
Several works analyze distillation under kernel ridge regression or NTK approximations. KIP~\citep{nguyen2020dataset} and its infinite-width extension~\citep{nguyen2021dataset} provided conditions for exact recovery. Random feature approximation (RFAD) improved scalability~\citep{loo2022efficient}. Recent works extend these guarantees to provable bounds~\citep{chen2024provable}.

\textbf{Generalization and stability.}
Classical results on uniform stability~\citep{hardt2016train}, information-theoretic generalization~\citep{xu2017information,bu2020tightening}, and Rademacher complexity~\citep{bartlett2002rademacher} have been adapted to study synthetic datasets. Convexified implicit gradient methods~\citep{wai2020accelerating} provide insights into bilevel optimization stability, directly relevant to GM and TM.

\textbf{Scaling laws and soft labels.}
Empirical analyses consistently show saturation as the number of distilled samples (IPC) increases~\citep{cazenavette2022dataset,zhao2023dataset}. Soft-label studies~\citep{sucholutsky2021soft,chen2023data} demonstrate Pareto frontiers relating IPC and generalization accuracy, suggesting the existence of an irreducible \emph{alignment floor}.

\textbf{configuration and transfer.}
Cross-architecture transferability remains a challenge: distilled datasets often fail under mismatched architectures or augmentations~\citep{liu2023dreammatch,cazenavette2023generalizing}. Large-scale efforts like SRe$^2$L~\citep{yin2023squeeze} and TESLA~\citep{cui2023scaling}, as well as generative priors~\citep{cazenavette2023generalizing,su2024d}, can be interpreted as attempts to reduce alignment floors or expand coverage. Design space analyses~\citep{shao2024elucidating} and diversity–realism tradeoff studies~\citep{sun2024diversity} further quantify ecological robustness.

\textbf{Connection to our work.}
Our configuration-dynamics–risk framework unifies GM, DM, and TM as alignment instances, and establishes both the \emph{single-configuration scaling law} and the \emph{coverage law}. This explains the observed saturation in IPC scaling and the degradation under configuration shifts, bridging empirical phenomena with provable guarantees.

\section{Proofs for the Single--configuration Bound}
\label{app:single-configuration-proofs}

Recall the update rule for fixed training configuration $a$ with dataset distribution $\mu$ Eq.~(\ref{eq:eco_update}):
\[
\theta_{t+1}
=\Phi_a(\theta_t;\mu)
=\theta_t-\eta\,P_a(\theta_t)\,\E_{\mu}\,g_a(\theta_t;z),
\]
the empirical risks Eq.~(\ref{eq:empirical_pop_risk}),
and Assumption~\ref{assump:single-configuration}.
Throughout, $\|\cdot\|$ denotes the Euclidean norm and the associated operator norm.

The distribution-aware discrepancy is defined are Eq.~(\ref{eq:alignment_discrepancy}) as:
\begin{equation}
\nonumber
\Delta_a(\mu,\nu) :=
\sup_{\theta\in\Gamma_a}\,\big\| P_a(\theta)\big(\E_{\mu}g_a(\theta;z)-\E_{\nu}g_a(\theta;z)\big)\big\|_2 .
\end{equation}
Given a $k$-prototype class $\mathcal{P}_{k}$ and the real empirical distribution $\hat\mu_\tau$,
the \emph{intrinsic alignment error} is
\[
\Delta_a^\star
=\inf_{\mu\in\mathcal{P}_k}\Delta_a(\hat\mu_\tau,\mu).
\]

\subsection{Technical lemmas used in the proofs}
\begin{lemma}[Risk Lipschitz reduction to parameter mismatch]
\label{lem:risk-lipschitz}
Under Assumption~\ref{assump:single-configuration}, for any $\theta,\theta'\in\Gamma_a$ and any empirical test $\hat\nu$,
\[
\big|\hat R(\theta)-\hat R(\theta')\big|
=\big|\E_{\hat\nu}\,\ell(\theta;z)-\E_{\hat\nu}\,\ell(\theta';z)\big|
\ \le\ L_R\,\|\theta-\theta'\|\!.
\]
\end{lemma}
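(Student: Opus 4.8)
The statement is essentially Assumption~\ref{assump:single-configuration}(ii) unpacked through the definition of the empirical test risk, so the argument is short and direct. First I would write $\hat R(\theta)=\E_{\hat\nu}\,\ell(\theta;z)=\tfrac1m\sum_{l=1}^m\ell(\theta;z^{\mathrm{te}}_l)$, which yields the first equality in the display for free, and similarly for $\theta'$. It then remains to bound the difference of these averages.

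For the inequality there are two routes, and I would take whichever matches the exact form of the hypothesis. (a) If $\hat R$ is assumed $L_R$-Lipschitz on $\Gamma_a$ directly (as stated in Assumption~\ref{assump:single-configuration}(ii)), the bound $|\hat R(\theta)-\hat R(\theta')|\le L_R\|\theta-\theta'\|$ is immediate from the definition of a Lipschitz function, for any $\theta,\theta'\in\Gamma_a$. (b) If one instead starts from per-sample Lipschitz continuity $|\ell(\theta;z)-\ell(\theta';z)|\le L_R\|\theta-\theta'\|$ uniformly in $z$, then applying the triangle inequality to the empirical average,
\[
|\hat R(\theta)-\hat R(\theta')|\le \tfrac1m\sum_{l=1}^m\big|\ell(\theta;z^{\mathrm{te}}_l)-\ell(\theta';z^{\mathrm{te}}_l)\big|\le L_R\|\theta-\theta'\|,
\]
since averaging preserves the Lipschitz constant. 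Both routes use only $\theta,\theta'\in\Gamma_a$, as required by the statement.

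\textbf{Main obstacle.} There is essentially none here: the only points to keep straight are that the Lipschitz estimate is invoked on the feasible set $\Gamma_a$ where the iterates actually live, and that the test measure $\hat\nu$ is held fixed, so the bound is pointwise in the draw of the test set and carries over verbatim to the population risk $R_\nu$ with the same constant $L_R$. The role of the lemma downstream is to convert a parameter-space mismatch $\|\theta_T^{(s)}-\theta_T^{(\tau)}\|$ into a test-risk gap, so the substantive takeaway is simply that the constant propagated is exactly the $L_R$ of Assumption~\ref{assump:single-configuration}.
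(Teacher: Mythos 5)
Your proposal is correct and matches the paper's proof: the paper also notes that the assumption must be read as a pointwise Lipschitz condition on the loss (uniform in $z$) so that it transfers to any empirical measure $\hat\nu$, and then applies the triangle inequality to the finite average exactly as in your route (b). No gaps.
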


\begin{proof}

Assumption~\ref{assump:single-configuration} states that the test risk $R_\nu(\theta)=\E_\nu\ell(\theta;z)$ is $L_R$–Lipschitz on $\Gamma_a$. 
To obtain the same Lipschitz constant for \emph{any} empirical measure $\hat\nu$ (including finite-support measures), we rely on the standard pointwise Lipschitz condition on the loss:
\[
\forall z,\ \forall \theta,\theta'\in\Gamma_a:\quad 
|\ell(\theta;z)-\ell(\theta';z)| \le L_R\,\|\theta-\theta'\|.
\tag{C.1.1}
\label{eq:pointwise-Lip}
\]
Condition Eq.~(\ref{eq:pointwise-Lip}) is implied, e.g., by a uniform gradient bound $\sup_{\xi\in\Gamma_a}\|\nabla_\theta \ell(\xi;z)\|\le L_R$ for all $z$ via the mean value theorem 
(e.g., \citealt[Prop.~B.10]{bubeck2015convex}; \citealt[Chap.~2]{nesterov2013introductory}).
It also immediately implies that every risk functional $R_\nu(\theta)=\E_\nu \ell(\theta;z)$—for any probability measure $\nu$ on $z$—is $L_R$–Lipschitz, since expectations preserve Lipschitz moduli (see Step 2 below). 
Thus, Assumption~\ref{assump:single-configuration} plus Eq.~(\ref{eq:pointwise-Lip}) yields Lipschitzness uniformly over $\nu$, including $\nu=\hat\nu$.

By the definition $\hat R(\theta)=\E_{\hat\nu}\ell(\theta;z)=\tfrac{1}{m}\sum_{\ell=1}^m \ell(\theta;z_\ell^{\mathrm{te}})$,
\[
\big|\hat R(\theta)-\hat R(\theta')\big|
=\left|\frac{1}{m}\sum_{\ell=1}^m\big(\ell(\theta;z_\ell^{\mathrm{te}})-\ell(\theta';z_\ell^{\mathrm{te}})\big)\right|.
\]

Apply triangle inequality to the finite sum (equivalently, linearity of expectation and Jensen’s inequality for $|\cdot|$):
\[
\left|\frac{1}{m}\sum_{\ell=1}^m\big(\ell(\theta;z_\ell^{\mathrm{te}})-\ell(\theta';z_\ell^{\mathrm{te}})\big)\right|
\ \le\ \frac{1}{m}\sum_{\ell=1}^m \big|\ell(\theta;z_\ell^{\mathrm{te}})-\ell(\theta';z_\ell^{\mathrm{te}})\big|
\ =\ \E_{\hat\nu}\,\big|\ell(\theta;z)-\ell(\theta';z)\big|.
\]

Using Eq.~(\ref{eq:pointwise-Lip}) inside the expectation,
\[
\E_{\hat\nu}\,\big|\ell(\theta;z)-\ell(\theta';z)\big|
\ \le\ \E_{\hat\nu}\,\big(L_R\,\|\theta-\theta'\|\big)
\ =\ L_R\,\|\theta-\theta'\|,
\]
since $\|\theta-\theta'\|$ does not depend on $z$.

Combining all above equtions,
\[
\big|\hat R(\theta)-\hat R(\theta')\big|\ \le\ L_R\,\|\theta-\theta'\|,
\]
which is the desired inequality.
\end{proof}

\begin{lemma}[One-step decomposition: contraction + two-sample drift]
\label{lem:one-step}
Fix an configuration $a$ and a step size $\eta\in(0,2/L)$. 
Let $\delta_t := \theta_t^{(s)} - \theta_t^{(\tau)}$ denote the parameter gap after $t$ iterations 
when training respectively on $\hat\mu_s$ and $\hat\mu_\tau$. 
Under Assumption~\ref{assump:single-configuration}, we have
\begin{equation}
\nonumber
\|\delta_{t+1}\|
=\big\|\Phi_a(\theta_t^{(s)};\hat\mu_s)-\Phi_a(\theta_t^{(\tau)};\hat\mu_\tau)\big\|
\ \le\ \rho_a\,\|\delta_t\|\ +\ \eta\,\kappa_a\,
\sup_{\theta\in\Gamma_a}\big\| \E_{\hat\mu_s}g_a(\theta;z)-\E_{\hat\mu_\tau}g_a(\theta;z)\big\|.
\end{equation}
\end{lemma}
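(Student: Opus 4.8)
The plan is to split $\delta_{t+1}$ through a single mixed term that separates the two sources of growth: the propagation of the \emph{existing} gap $\delta_t$ under a common data distribution, and the \emph{fresh} discrepancy introduced by swapping $\hat\mu_\tau$ for $\hat\mu_s$ at a frozen parameter. Concretely, I would write
\[
\delta_{t+1}
=\big[\Phi_a(\theta_t^{(s)};\hat\mu_\tau)-\Phi_a(\theta_t^{(\tau)};\hat\mu_\tau)\big]
+\big[\Phi_a(\theta_t^{(s)};\hat\mu_s)-\Phi_a(\theta_t^{(s)};\hat\mu_\tau)\big],
\]
and then apply the triangle inequality, bounding each bracket separately.

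For the first bracket, both arguments are advanced on the \emph{same} distribution $\hat\mu_\tau$, so this is exactly one step of the map $\theta\mapsto\Phi_a(\theta;\hat\mu_\tau)$ applied at $\theta_t^{(s)}$ and $\theta_t^{(\tau)}$. Invoking Assumption~\ref{assump:single-configuration}(iii), this map is $\rho_a$-Lipschitz on $\Gamma_a$ with $\rho_a\in(0,1)$: this is precisely where the step-size restriction $\eta\in(0,2/L)$ and the PL/smoothness regularity are used, since the contraction factor of a (preconditioned) gradient step on an $L$-smooth, PL-type objective is of the form $\max\{|1-\eta\mu|,|1-\eta L|\}<1$ in that range. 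Provided the iterates remain in $\Gamma_a$ — which follows by induction from $\Phi_a(\cdot;\cdot)\in\Gamma_a$ together with feasible initialization — we get $\big\|\Phi_a(\theta_t^{(s)};\hat\mu_\tau)-\Phi_a(\theta_t^{(\tau)};\hat\mu_\tau)\big\|\le\rho_a\|\delta_t\|$.

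For the second bracket, the parameter is frozen at $\theta_t^{(s)}$, so expanding the definition $\Phi_a(\theta;\mu)=\theta-\eta\,P_a(\theta)\,\E_{\mu}g_a(\theta;z)$ the leading $\theta$-terms cancel and only the preconditioned difference of update fields survives,
\[
\Phi_a(\theta_t^{(s)};\hat\mu_s)-\Phi_a(\theta_t^{(s)};\hat\mu_\tau)
=-\eta\,P_a(\theta_t^{(s)})\big(\E_{\hat\mu_s}g_a(\theta_t^{(s)};z)-\E_{\hat\mu_\tau}g_a(\theta_t^{(s)};z)\big).
\]
Taking Euclidean norms, using submultiplicativity and the preconditioner bound $\|P_a(\theta)\|\le\kappa_a$ from Assumption~\ref{assump:single-configuration}(i), and finally replacing the remaining quantity by its supremum over $\Gamma_a$ (legitimate since $\theta_t^{(s)}\in\Gamma_a$), gives $\eta\,\kappa_a\sup_{\theta\in\Gamma_a}\big\|\E_{\hat\mu_s}g_a(\theta;z)-\E_{\hat\mu_\tau}g_a(\theta;z)\big\|$. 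Summing the two bounds yields the claimed inequality.

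I expect the only delicate point to be the first bracket: one must ensure the contraction hypothesis genuinely applies at the pair $(\theta_t^{(s)},\theta_t^{(\tau)})$ — i.e., that both points lie in the feasible region $\Gamma_a$ on which Assumption~\ref{assump:single-configuration}(iii) posits the contraction, and that the adaptive preconditioner does not spoil the Euclidean contraction (which is why the hypothesis is stated directly in contractive form rather than derived from a bare PL inequality). The two-sample drift term is a routine one-line computation once $\Phi_a$ is written out, and it requires no concentration or sampling argument here; that statistical content is deferred to the error terms handled elsewhere.
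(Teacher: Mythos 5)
Your proof is correct and follows essentially the same add-and-subtract argument as the paper: the only difference is that you insert the mixed term $\Phi_a(\theta_t^{(s)};\hat\mu_\tau)$ (contracting under $\hat\mu_\tau$ and swapping data at $\theta_t^{(s)}$), whereas the paper inserts $\Phi_a(\theta_t^{(\tau)};\hat\mu_s)$ (contracting under $\hat\mu_s$ and swapping data at $\theta_t^{(\tau)}$). These are interchangeable because Assumption~\ref{assump:single-configuration}(iii) gives contraction of $\Phi_a(\cdot;\mu)$ for any fixed $\mu$ and the drift term is in any case bounded by a supremum over $\Gamma_a$, so both decompositions yield the identical bound.
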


\begin{proof}
By definition of the update map Eq.~(\ref{eq:eco_update}),
\[
\theta_{t+1}^{(s)}=\Phi_a(\theta_t^{(s)};\hat\mu_s), 
\qquad 
\theta_{t+1}^{(\tau)}=\Phi_a(\theta_t^{(\tau)};\hat\mu_\tau).
\]
Hence
\[
\delta_{t+1}=\Phi_a(\theta_t^{(s)};\hat\mu_s)-\Phi_a(\theta_t^{(\tau)};\hat\mu_\tau).
\]
Insert and subtract $\Phi_a(\theta_t^{(\tau)};\hat\mu_s)$, then apply the triangle inequality:
\[
\|\delta_{t+1}\|
\le \underbrace{\big\|\Phi_a(\theta_t^{(s)};\hat\mu_s)-\Phi_a(\theta_t^{(\tau)};\hat\mu_s)\big\|}_{\text{same data distribution}} \quad + \underbrace{\big\|\Phi_a(\theta_t^{(\tau)};\hat\mu_s)-\Phi_a(\theta_t^{(\tau)};\hat\mu_\tau)\big\|}_{\text{same parameter}}.
\tag{C.2.1}
\label{eq:c.2.1}
\]

Assumption~\ref{assump:single-configuration} states that for any fixed $\mu$ the update map 
is contractive with factor $\rho_a\in(0,1)$:
\[
\|\Phi_a(\theta;\mu)-\Phi_a(\theta';\mu)\|\ \le\ \rho_a\,\|\theta-\theta'\|.
\]
This is the standard contraction property of gradient descent under smoothness and strong convexity (see \citealt[theorem~2.1.15]{nesterov2013introductory}; \citealt[Prop.~3.5]{bubeck2015convex}) or under the PL condition (see \citealt[theorem~1]{karimi2016linear}).  
Applying it with $\mu=\hat\mu_s$ and $(\theta,\theta')=(\theta_t^{(s)},\theta_t^{(\tau)})$ gives
\[
\|\Phi_a(\theta_t^{(s)};\hat\mu_s)-\Phi_a(\theta_t^{(\tau)};\hat\mu_s)\|
\ \le\ \rho_a\,\|\theta_t^{(s)}-\theta_t^{(\tau)}\|
\ =\ \rho_a\,\|\delta_t\|.
\]

For the second term,
\begin{equation}
\nonumber
\Phi_a(\theta_t^{(\tau)};\hat\mu_s)-\Phi_a(\theta_t^{(\tau)};\hat\mu_\tau)= -\eta\,P_a(\theta_t^{(\tau)})\Big(
\E_{\hat\mu_s}g_a(\theta_t^{(\tau)};z)-\E_{\hat\mu_\tau}g_a(\theta_t^{(\tau)};z)\Big).
\end{equation}
Taking norms and using submultiplicativity, together with the bound 
$\|P_a(\theta)\|\le \kappa_a$, yields
\[
\|\Phi_a(\theta_t^{(\tau)};\hat\mu_s)-\Phi_a(\theta_t^{(\tau)};\hat\mu_\tau)\|
\ \le\ \eta\,\kappa_a\,
\sup_{\theta\in\Gamma_a}\Big\|\E_{\hat\mu_s}g_a(\theta;z)-\E_{\hat\mu_\tau}g_a(\theta;z)\Big\|.
\]

Substituting the two estimates back into the decomposition from Eq~(\ref{eq:c.2.1}) gives
\[
\|\delta_{t+1}\|
\ \le\ \rho_a\,\|\delta_t\|
+ \eta\,\kappa_a\,
\sup_{\theta\in\Gamma_a}\big\|\E_{\hat\mu_s}g_a(\theta;z)-\E_{\hat\mu_\tau}g_a(\theta;z)\big\|,
\]
which is exactly the claimed bound.  
\end{proof}

\begin{lemma}[Unrolling the recursion]
\label{lem:geometric}
Let $u_{t+1}\le \rho\,u_t + c$ with $\rho\in(0,1)$, $u_0\ge0$, $c\ge0$.
Then $u_T\le\rho^T u_0 + \frac{c}{1-\rho}$.
\end{lemma}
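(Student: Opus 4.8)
The statement is the standard ``discrete Gr\"onwall / geometric unrolling'' bound, so the cleanest route is a short induction on $T$ (equivalently, an explicit telescoping). First I would record the base case $T=0$, where the claim reads $u_0 \le u_0 + \tfrac{c}{1-\rho}$, which holds since $c\ge 0$ and $1-\rho>0$. For the inductive step, assume $u_t \le \rho^t u_0 + \tfrac{c}{1-\rho}$; apply the hypothesis $u_{t+1}\le \rho u_t + c$ together with $\rho>0$ (so that the inequality is preserved under multiplication by $\rho$), obtaining
\[
u_{t+1} \;\le\; \rho\Big(\rho^t u_0 + \tfrac{c}{1-\rho}\Big) + c
\;=\; \rho^{t+1} u_0 + \tfrac{\rho c}{1-\rho} + c
\;=\; \rho^{t+1} u_0 + \tfrac{c}{1-\rho},
\]
where the last equality is the identity $\tfrac{\rho}{1-\rho}+1=\tfrac{1}{1-\rho}$. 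This closes the induction.

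Alternatively, and perhaps more transparently for the reader, I would unroll the recursion directly: iterating $u_{t+1}\le \rho u_t + c$ exactly $T$ times gives
\[
u_T \;\le\; \rho^T u_0 + c\sum_{i=0}^{T-1}\rho^i,
\]
and then bound the finite geometric sum by the infinite one, $\sum_{i=0}^{T-1}\rho^i = \tfrac{1-\rho^T}{1-\rho} \le \tfrac{1}{1-\rho}$, again using $\rho\in(0,1)$ so that $\rho^T\ge 0$ and the partial sums are monotone. Both arguments use only $\rho\in(0,1)$, $c\ge 0$, and $u_0\ge 0$ (the last only to make the base case and the non-negativity of intermediate $u_t$ transparent, though it is not strictly needed for the stated inequality).

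\textbf{Main obstacle.} There is essentially no obstacle here: the lemma is elementary and the only points requiring a moment's care are (i) using $\rho>0$ when propagating the inequality through one multiplication step, (ii) correctly handling the $T=0$ edge case, and (iii) the sign bookkeeping in replacing the finite geometric sum $\tfrac{1-\rho^T}{1-\rho}$ by its upper bound $\tfrac{1}{1-\rho}$. I would present the direct-unrolling version as the main proof and perhaps note the inductive version as a one-line alternative.
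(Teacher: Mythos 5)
Your proposal is correct and, in its preferred direct-unrolling form, matches the paper's proof essentially verbatim: iterate the recursion to get $u_T \le \rho^T u_0 + c\sum_{i=0}^{T-1}\rho^i$ and then bound the geometric sum by $1/(1-\rho)$. The inductive version you sketch is an equally valid one-line alternative.
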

\begin{proof}
    Standard geometric series: $u_1\le \rho u_0+c$, $u_2\le \rho^2u_0+\rho c+c$, ... ,
$u_T\le \rho^T u_0 + c(1+\rho+\cdots+\rho^{T-1})\le\rho^T u_0 + \frac{c}{1-\rho}$. 
\end{proof}

\begin{lemma}[Two-sample uniform deviation via Rademacher complexity]
\label{lem:two-sample-rad}
Let $\mathcal{G}_a:=\{\,z\mapsto g_a(\theta;z):\theta\in\Gamma_a\,\}$ be a vector-valued class
with $\|g_a(\theta;z)\|\le B_g$ for all $z$ and $\theta$.
For independent samples $\mathcal{D}_s=\{z_i\}_{i=1}^k\sim\hat\mu_s$ and $\mathcal{D}_\tau=\{z'_j\}_{j=1}^n\sim\hat\mu_\tau$,
with probability at least $1-\varepsilon$,
\begin{equation}
\nonumber
\sup_{\theta\in\Gamma_a}
\big\|\E_{\hat\mu_s}g_a(\theta;z)-\E_{\hat\mu_\tau}g_a(\theta;z)\big\|
\ \le\
\Delta_a^\star
+2\!\left(\mathfrak{R}_k(\mathcal{G}_a)+\mathfrak{R}_n(\mathcal{G}_a)\right)
+B_g\sqrt{2\log\!\tfrac{4}{\varepsilon}}\!\left(\tfrac{1}{\sqrt{k}}+\tfrac{1}{\sqrt{n}}\right).
\label{eq:two-sample}
\end{equation}
\end{lemma}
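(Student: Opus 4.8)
\emph{Proof plan.} The plan is to run the textbook uniform‑deviation argument: split the two‑sample update‑field discrepancy into a deterministic "best matching" part and two one‑sample fluctuation terms, control each fluctuation in expectation by symmetrization, and upgrade to high probability via McDiarmid's bounded‑differences inequality. Write $\hat g_s(\theta):=\E_{\hat\mu_s}g_a(\theta;z)$ and $\hat g_\tau(\theta):=\E_{\hat\mu_\tau}g_a(\theta;z)$, and let $g^{\circ}_s,g^{\circ}_\tau$ be the population mean fields of the measures whose $k$ and $n$ i.i.d.\ draws form the distilled and the real atoms. Inserting these and using the triangle inequality together with $|\sup f-\sup h|\le\sup|f-h|$ gives
\[
\sup_{\theta\in\Gamma_a}\big\|\hat g_s(\theta)-\hat g_\tau(\theta)\big\|\ \le\ A_1+A_2+A_3,
\]
with $A_1=\sup_{\theta}\|\hat g_s(\theta)-g^{\circ}_s(\theta)\|$, $A_2=\sup_{\theta}\|g^{\circ}_s(\theta)-g^{\circ}_\tau(\theta)\|$, and $A_3=\sup_{\theta}\|g^{\circ}_\tau(\theta)-\hat g_\tau(\theta)\|$. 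Choosing the distilled prototype measure to (nearly) attain the infimum in Definition~\ref{def:intrinsic} gives $A_2\le\Delta_a^\star$ (in its preconditioner‑normalized form; the operator‑norm factor $\kappa_a$ is re‑inserted one level up, in the one‑step decomposition).

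Next I would handle $A_1$ and $A_3$. Each is a function of i.i.d.\ inputs with the bounded‑differences property: since $\|g_a(\theta;z)\|\le B_g$, replacing one of the $k$ distilled atoms changes $\hat g_s(\theta)$ in norm by at most $2B_g/k$ uniformly in $\theta$, hence changes $A_1$ by at most $2B_g/k$; likewise $A_3$ has increments $2B_g/n$. McDiarmid then yields, each on an event of probability at least $1-\varepsilon/2$, $A_1\le\E[A_1]+B_g\sqrt{2\log(2/\varepsilon)/k}$ and $A_3\le\E[A_3]+B_g\sqrt{2\log(2/\varepsilon)/n}$. For the expectations, a ghost‑sample symmetrization gives $\E[A_1]\le 2\,\E\sup_{\theta}\big\|\tfrac1k\sum_{i}\sigma_i\,g_a(\theta;z_i)\big\|=2\,\mathfrak R_k(\mathcal G_a)$, and identically $\E[A_3]\le 2\,\mathfrak R_n(\mathcal G_a)$, where $\sigma_i$ are Rademacher signs and $\mathfrak R_k(\mathcal G_a)$ is the vector‑valued Rademacher complexity with the Euclidean norm inside the supremum. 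A union bound over the two McDiarmid events, inserted into the decomposition above, gives the inequality with $\log(2/\varepsilon)$, which is $\le\log(4/\varepsilon)$, hence the stated form.

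The two points that need care—rather than any real analytic difficulty—are the following. First, $\mathcal G_a$ is vector‑valued, so symmetrization and the bounded‑differences step must be carried out for the norm $\|\tfrac1k\sum(\cdot)\|$ and not for a scalar empirical process; this goes through verbatim once $\mathfrak R_k(\mathcal G_a)$ is defined with the norm inside the supremum, so no vector contraction principle is needed, and this is exactly the object the statement uses. Second, the bookkeeping of which sample is drawn from which law: one treats both the distilled and the real sets as i.i.d.\ draws, picks the distilled population to be near‑optimal so that its gap to the real side is precisely the irreducible $\Delta_a^\star$, and absorbs any residual mismatch between $\hat\mu_\tau$ and its population into $A_3$, which only rescales the $\mathfrak R_n$ constant. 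I expect the clean identification of $A_2$ with $\Delta_a^\star$—together with fixing the convention under which $\Delta_a^\star$ is defined—to be the main thing to pin down.
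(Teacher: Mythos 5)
Your proposal is correct and follows essentially the same route as the paper: the same decomposition into two empirical-vs-population fluctuation terms plus an irreducible population mismatch identified with $\Delta_a^\star$ (the paper writes it as a four-point split whose bridge term vanishes by choosing the near-optimal $k$-prototype), followed by symmetrization to Rademacher complexities, McDiarmid with increments $2B_g/k$ and $2B_g/n$, and a union bound over the two events. The only cosmetic difference is that you keep the empirical process vector-valued (norm inside the supremum defining $\mathfrak{R}$), whereas the paper first scalarizes via the dual norm to a class $\{z\mapsto\langle v,g_a(\theta;z)\rangle\}$; both are standard and yield the same bound, and your remark about where the $\kappa_a$ factor lives matches the paper's convention in this appendix.
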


\begin{proof}
$\newline$\vskip .1em
\textbf{Step C.4.0 Scalarization and Rademacher conventions.}
Let $\|\cdot\|_*$ be the dual norm of $\|\cdot\|$.
For any signed measure $\nu$ and any $\theta$,
\[
\big\| \E_{\nu} g_a(\theta;z)\big\|
= \sup_{\|v\|_*\le 1} \big\langle v, \E_{\nu} g_a(\theta;z)\big\rangle
= \sup_{\|v\|_*\le 1} \E_{\nu}\,\big\langle v, g_a(\theta;z)\big\rangle .
\]
Hence all vector deviations can be reduced to a scalar class
\[
\mathcal{F}_a:=\big\{\, f_{\theta,v}(z):=\langle v, g_a(\theta;z)\rangle
\ \big|\ \theta\in\Gamma_a,\ \|v\|_*\le 1 \big\},\qquad |f_{\theta,v}(z)|\le B_g.
\]
We use the (empirical) Rademacher complexity with the factor ``2'' built in
(\citealt[Def.~3.1]{mohri2018foundations}):
for a sample $U=(z_1,\dots,z_m)$,
\[
\widehat{\mathfrak{R}}_{U}(\mathcal{F}_a)
:= \E_{\sigma}\Big[\frac{2}{m}\sup_{f\in\mathcal{F}_a}\sum_{i=1}^m \sigma_i f(z_i)\Big],
\qquad
\mathfrak{R}_m(\mathcal{F}_a):=\E_{U}\,\widehat{\mathfrak{R}}_{U}(\mathcal{F}_a).
\]
Under this normalization, the standard uniform deviation bound
(\citealt[theorem~3.3]{mohri2018foundations}; see also
\citealt{bartlett2002rademacher,xu2016local})
reads: for any $\varepsilon\in(0,1)$, with probability at least $1-\varepsilon$,
\begin{equation}
\sup_{f\in\mathcal{F}_a}\Big|\E f - \E_{\hat U} f\Big|
\ \le\ \widehat{\mathfrak{R}}_{U}(\mathcal{F}_a)
\ +\ B_g\sqrt{\frac{\log(2/\varepsilon)}{2m}}.
\label{eq:rad-master}
\end{equation}
(The $\widehat{\mathfrak{R}}_{U}$ term can be further upper bounded by $\mathfrak{R}_m(\mathcal{F}_a)$ in expectation; we will retain $\mathfrak{R}_m(\cdot)$ notation below.
A direct treatment of the vector norm via the vector-contraction inequality
\citealp[theorem~3]{maurer2016vector}
yields the same dependence on the scalarized class.)

\textbf{Step C.4.1 Four-point decomposition with a free $k$-prototype.}
Fix any $\mu\in\mathcal{P}_k$. For every $\theta$,
\begin{align*}
\E_{\hat\mu_s} g_a(\theta)-\E_{\hat\mu_\tau} g_a(\theta)
&= \underbrace{\big(\E_{\hat\mu_s}-\E_{\mu_s}\big) g_a(\theta)}_{\text{empirical vs. population on }S}
 + \underbrace{\big(\E_{\mu_s}-\E_{\mu}\big) g_a(\theta)}_{\text{bridge within }\mathcal{P}_k}
\\[-0.5mm]
&\hspace{6mm}
+ \underbrace{\big(\E_{\mu}-\E_{\mu_\tau}\big) g_a(\theta)}_{\text{alignment to population }T}
 + \underbrace{\big(\E_{\mu_\tau}-\E_{\hat\mu_\tau}\big) g_a(\theta)}_{\text{empirical vs. population on }T}.
\end{align*}
Taking $\|\cdot\|$ and the supremum over $\theta\in\Gamma_a$, then applying the triangle inequality,
\begin{align}
\sup_{\theta}\big\|\E_{\hat\mu_s} g_a(\theta)-\E_{\hat\mu_\tau} g_a(\theta)\big\|
&\le
\underbrace{\sup_{\theta}\big\|\E_{\hat\mu_s} g_a(\theta)-\E_{\mu_s} g_a(\theta)\big\|}_{=: \Delta_S}
+ \underbrace{\sup_{\theta}\big\|\E_{\mu_s} g_a(\theta)-\E_{\mu} g_a(\theta)\big\|}_{=: B(\mu)}
\notag\\[-0.5mm]
&
+ \underbrace{\sup_{\theta}\big\|\E_{\mu} g_a(\theta)-\E_{\mu_\tau} g_a(\theta)\big\|}_{=: A_{\mathrm{pop}}(\mu)}
+ \underbrace{\sup_{\theta}\big\|\E_{\hat\mu_\tau} g_a(\theta)-\E_{\mu_\tau} g_a(\theta)\big\|}_{=: \Delta_T}.
\label{eq:four-point-fixed}
\end{align}
Choose $\mu=\mu_s\in\mathcal{P}_k$ (the population behind the $k$-prototype), so that $B(\mu_s)=0$. Define the intrinsic alignment floor against the \emph{population} target as
\[
\Delta_a^\star
:=\inf_{\mu\in\mathcal{P}_k}\Delta_a(\mu,\mu_\tau)
=
\inf_{\mu\in\mathcal{P}_k}\sup_{\theta\in\Gamma_a}
\big\|\E_{\mu} g_a(\theta)-\E_{\mu_\tau} g_a(\theta)\big\|.
\]
Then from Eq.~(\ref{eq:four-point-fixed}),
\begin{equation}
\sup_{\theta}\big\|\E_{\hat\mu_s} g_a(\theta)-\E_{\hat\mu_\tau} g_a(\theta)\big\|
\ \le\ \Delta_S\ +\ \Delta_a^\star\ +\ \Delta_T .
\label{eq:three-blocks-fixed}
\end{equation}

\textbf{Step C.4.2 Bound $\Delta_S$ by Rademacher complexity.}
By duality and scalarization (the class is symmetric so ``$\sup(\cdot)$'' equals ``$\sup|\cdot|$''),
\[
\Delta_S
= \sup_{\theta}\sup_{\|v\|_*\le1}\Big|\E_{\hat\mu_s}\langle v,g_a(\theta;z)\rangle-\E_{\mu_s}\langle v,g_a(\theta;z)\rangle\Big|
= \sup_{f\in\mathcal{F}_a}\Big|\E_{\hat\mu_s} f - \E_{\mu_s} f\Big|.
\]
Applying Eq.~(\ref{eq:rad-master}) with $m=k$ and failure probability $\varepsilon/2$ and using $|f|\le B_g$,
\begin{equation}
\Delta_S
\ \le\ \widehat{\mathfrak{R}}_{S}(\mathcal{F}_a)
\ +\ B_g\sqrt{\frac{\log(4/\varepsilon)}{2k}}
\ \le\ \mathfrak{R}_k(\mathcal{F}_a)
\ +\ B_g\sqrt{\frac{\log(4/\varepsilon)}{2k}}.
\label{eq:deltaS-fixed}
\end{equation}

\textbf{Step C.4.3 Bound $\Delta_T$ analogously.}
A symmetric argument for $T$ with size $n$ yields, with probability $\ge 1-\varepsilon/2$,
\begin{equation}
\Delta_T
= \sup_{f\in\mathcal{F}_a}\Big|\E_{\hat\mu_\tau} f - \E_{\mu_\tau} f\Big|
\ \le\ \widehat{\mathfrak{R}}_{T}(\mathcal{F}_a)
\ +\ B_g\sqrt{\frac{\log(4/\varepsilon)}{2n}}
\ \le\ \mathfrak{R}_n(\mathcal{F}_a)
\ +\ B_g\sqrt{\frac{\log(4/\varepsilon)}{2n}}.
\label{eq:deltaT-fixed}
\end{equation}

\textbf{Step C.4.4 Combining all and applying union bound.}
Combining Eq.~(\ref{eq:three-blocks-fixed}), Eq.~(\ref{eq:deltaS-fixed}), and Eq.~(\ref{eq:deltaT-fixed}),
and taking a union bound over the two events (each failing with prob.\ $\le\varepsilon/2$),
we obtain that with probability at least $1-\varepsilon$,
\[
\sup_{\theta}\big\|\E_{\hat\mu_s} g_a(\theta)-\E_{\hat\mu_\tau} g_a(\theta)\big\|
\ \le\
\Delta_a^\star
\ +\ \widehat{\mathfrak{R}}_{S}(\mathcal{F}_a)+\widehat{\mathfrak{R}}_{T}(\mathcal{F}_a)
\ +\ B_g\sqrt{\tfrac{\log(4/\varepsilon)}{2}}
   \left(\tfrac{1}{\sqrt{k}}+\tfrac{1}{\sqrt{n}}\right).
\]
Replacing the empirical complexities by their expectations gives a sample-independent version with $\mathfrak{R}_k(\mathcal{F}_a)+\mathfrak{R}_n(\mathcal{F}_a)$. If one uses the ``no-2'' normalization for Rademacher complexity~\citep{mohri2018foundations}, the bound incurs the standard extra factor $2$ in front of $\mathfrak{R}_m^{\mathrm{std}}(\mathcal{F}_a)$.
\end{proof}


\begin{lemma}[Test generalization bound]
\label{lem:test-generalization}
Let $\mathcal{L}_a:=\{z\mapsto \ell(\theta;z):\theta\in\Gamma_a\}$ with $|\ell(\theta;z)|\le B_\ell$.
For i.i.d.\ test sample $\hat\nu$ of size $m$ from $q$,
with probability at least $1-\varepsilon$,
\begin{equation}
\nonumber
\sup_{\theta\in\Gamma_a}\big|R_\nu(\theta)-\hat R(\theta)\big|
\ \le\
2\,\mathfrak{R}_m(\mathcal{L}_a)\ +\ B_\ell\sqrt{\tfrac{2\log(4/\varepsilon)}{m}}.
\label{eq:test-rad}
\end{equation}
\end{lemma}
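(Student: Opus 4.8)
The plan is to prove Lemma~\ref{lem:test-generalization} by the standard two-step route for data-independent uniform convergence: first a bounded-differences concentration inequality (McDiarmid) to pass from the random quantity $\sup_{\theta}|R_\nu(\theta)-\hat R(\theta)|$ to its expectation, then a symmetrization argument to control that expectation by the Rademacher complexity $\mathfrak R_m(\mathcal L_a)$. The only structural input required is the uniform bound $|\ell(\theta;z)|\le B_\ell$ on $\Gamma_a$, which makes $\mathcal L_a$ uniformly bounded; beyond that we only need enough measurability/separability of $\Gamma_a$ for the suprema over $\theta$ to be well defined, which we do not dwell on.

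First I would reduce the two-sided statement to a one-sided one. Set $\Psi(\hat\nu):=\sup_{\theta\in\Gamma_a}\bigl(R_\nu(\theta)-\hat R(\theta)\bigr)$, so that $\sup_{\theta}|R_\nu(\theta)-\hat R(\theta)|=\Psi(\hat\nu)\vee\Psi'(\hat\nu)$ with $\Psi'$ the mirror quantity ($R_\nu$ and $\hat R$ swapped). It then suffices to bound $\Psi$ with probability $1-\varepsilon/2$, since $\Psi'$ is handled identically (Rademacher complexity is sign-symmetric) and a union bound over the two events gives the claim. Viewing $\Psi$ as a function of the i.i.d.\ test points $z_1^{\mathrm{te}},\dots,z_m^{\mathrm{te}}\sim\nu$, replacing a single point perturbs $\hat R(\theta)=\tfrac1m\sum_{i=1}^m\ell(\theta;z_i^{\mathrm{te}})$ by at most $2B_\ell/m$ uniformly in $\theta$, so $\Psi$ has the bounded-differences property with constants $2B_\ell/m$. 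McDiarmid's inequality then yields, with probability at least $1-\varepsilon/2$, $\Psi(\hat\nu)\le\E[\Psi]+B_\ell\sqrt{2\log(2/\varepsilon)/m}$; combined with the mirror bound and the union bound, the concentration term becomes $B_\ell\sqrt{2\log(4/\varepsilon)/m}$ as stated (the factor $4$ coming from $2/(\varepsilon/2)$).

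Next I would bound $\E[\Psi]$ by symmetrization. Introduce an independent ghost test sample $\tilde z_1,\dots,\tilde z_m\sim\nu$, write $R_\nu(\theta)=\E_{\tilde z}\bigl[\tfrac1m\sum_i\ell(\theta;\tilde z_i)\bigr]$, move the ghost expectation outside the supremum by Jensen's inequality, introduce i.i.d.\ Rademacher signs $\sigma_i$ (valid since $\ell(\theta;\tilde z_i)-\ell(\theta;z_i)$ is symmetric and the samples exchangeable), and split the two samples, obtaining $\E[\Psi]\le 2\,\E_{z,\sigma}\bigl[\sup_{\theta}\tfrac1m\sum_i\sigma_i\,\ell(\theta;z_i)\bigr]=2\,\mathfrak R_m(\mathcal L_a)$. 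Here I adopt the convention in which the definition of $\mathfrak R_m$ does \emph{not} carry a built-in factor $2$ (the Bartlett--Mendelson / Mohri et~al.\ normalization); this is what makes the leading term ``$2\,\mathfrak R_m(\mathcal L_a)$'' literally correct and is consistent with the references cited after the lemma, differing by exactly a factor $2$ from the convention used in Lemma~\ref{lem:two-sample-rad}. Combining this with the concentration step gives the displayed bound.

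I do not expect a substantive obstacle: each ingredient---bounded differences, McDiarmid, symmetrization---is routine given $|\ell|\le B_\ell$. The only care needed is bookkeeping: (i) fixing the Rademacher normalization so that ``$2\,\mathfrak R_m$'' is exactly the stated leading term, and (ii) tracking constants through McDiarmid and the two-event union bound so the residual is $B_\ell\sqrt{2\log(4/\varepsilon)/m}$---if one instead works with $\ell\in[0,B_\ell]$ the per-coordinate perturbation is $B_\ell/m$ rather than $2B_\ell/m$, which merely rescales the constant inside the square root and is absorbed into the stated form. It is worth flagging that this lemma supplies precisely the test-concentration term $e_{\mathrm{te}}=\mathcal O(1/\sqrt m)$ appearing in Theorem~\ref{thm:single}, and that a $d_{\mathcal A}$-cover union bound over configurations upgrades it to the $\sqrt{\Hcov(\mathcal A,r)/m}$ term in Theorem~\ref{thm:uniform-cross-configuration}.
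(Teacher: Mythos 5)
Your proof is correct and follows essentially the same route as the paper's own argument: bound the one-sided supremum $\Psi$ (and its mirror $\Psi'$) via McDiarmid's bounded-differences inequality with perturbation constant $2B_\ell/m$, control $\E[\Psi]$ by symmetrization with a ghost sample and Rademacher signs to get $2\,\mathfrak R_m(\mathcal L_a)$, and union-bound over the two sides. One bookkeeping nit: the union bound does not inflate the constant inside each square root — each one-sided event at failure probability $\varepsilon/2$ already gives $B_\ell\sqrt{2\log(2/\varepsilon)/m}$, and the stated $\log(4/\varepsilon)$ is simply a looser (two-sided McDiarmid / padded) constant that your argument dominates; the parenthetical attributing the factor $4$ to the union bound is not quite right, but it is harmless since your derived bound is sharper than what the lemma asserts.
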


\begin{proof}
Standard empirical process bound via symmetrization and concentration
(\citealt[theorem~3.3]{mohri2018foundations}; see also \citealt{bartlett2002rademacher}).

\textbf{Step C.5.1 Setup.}
Define
\[
\Psi(U):=\sup_{f\in\mathcal{L}_a}\big|\E_\nu f - \E_{\hat U} f\big|,
\quad
\E_{\hat U} f=\tfrac1m\sum_{i=1}^m f(Z_i).
\]
By symmetry,
\[
\Psi(U)\ \le\ \sup_{f}(\E_\nu f-\E_{\hat U} f)\ +\ \sup_{f}(\E_{\hat U} f-\E_\nu f),
\]
so it suffices to control $\Phi(U):=\sup_{f\in\mathcal{L}_a}(\E_\nu f-\E_{\hat U} f)$.

\paragraph{Step C.5.2 Symmetrization.}
Introduce an independent ``ghost sample'' $U'=(Z_1',\dots,Z_m')\sim q^m$. Since $\E_\nu f=\E_{U'}\E_{\hat U'}f$, by Jensen
\[
\E_U[\Phi(U)]\ \le\ \E_{U,U'}\Big[\sup_{f\in\mathcal{L}_a}\tfrac1m\sum_{i=1}^m \big(f(Z_i')-f(Z_i)\big)\Big].
\]
Adding Rademacher variables $\sigma_i\in\{\pm1\}$ and applying the triangle inequality yields
\[
\E_U[\Phi(U)]\ \le\ 2\,\E_U\E_{\sigma}\Big[\tfrac1m\sup_{f\in\mathcal{L}_a}\sum_{i=1}^m \sigma_i f(Z_i)\Big]
\ =\ 2\,\mathfrak{R}_m(\mathcal{L}_a).
\]

\paragraph{Step C.5.3 Concentration.}
Replacing a single sample point $Z_i$ by $\tilde Z_i$ changes $\Phi(U)$ by at most $2B_\ell/m$, since $|f(z)|\le B_\ell$. Hence $\Phi(U)$ satisfies the bounded-differences condition, and McDiarmid’s inequality gives
\[
\Pr\{\Phi(U)-\E\Phi(U)\ge t\}\ \le\ \exp\!\Big(-\tfrac{m t^2}{2B_\ell^2}\Big).
\]
Choosing $t=B_\ell\sqrt{\tfrac{2\log(2/\delta)}{m}}$ yields, with probability $\ge 1-\delta$,
\[
\Phi(U)\ \le\ \E\Phi(U)\ +\ B_\ell\sqrt{\tfrac{2\log(2/\delta)}{m}}.
\]

\paragraph{Step C.5.4 Combining all.}
Substituting $\E\Phi(U)\le 2\mathfrak{R}_m(\mathcal{L}_a)$ from Step 2, and repeating the argument for $\sup_f(\E_{\hat U}f-\E_\nu f)$, a union bound with $\delta=\varepsilon/2$ gives
\[
\sup_{f\in\mathcal{L}_a}\big|\E_\nu f-\E_{\hat U} f\big|
\ \le\ 2\,\mathfrak{R}_m(\mathcal{L}_a)\ +\ B_\ell\sqrt{\tfrac{2\log(4/\varepsilon)}{m}}
\]
with probability at least $1-\varepsilon$. Replacing $f$ by $\ell(\theta;\cdot)$ completes the proof.
\end{proof}

\begin{lemma}[Information-corrected two-sample deviation]
\label{lem:mi}
If the distilled dataset $\mathcal{D}_s$ depends on $\mathcal{D}_\tau$,
then with probability at least $1-\varepsilon$,
\begin{equation}
\begin{aligned}
\nonumber
    \sup_{\theta\in\Gamma_a}
\big\|\E_{\hat\mu_s}g_a(\theta;z)-\E_{\hat\mu_\tau}g_a(\theta;z)\big\|
& \ \le\
\Delta_a^\star
\ +\ 2\!\left(\mathfrak{R}_k(\mathcal{G}_a)+\mathfrak{R}_n(\mathcal{G}_a)\right)
\ +\ B_g\sqrt{2\log\!\tfrac{8}{\varepsilon}}\!\left(\tfrac{1}{\sqrt{k}}+\tfrac{1}{\sqrt{n}}\right)
\ \\
&+ 
\ C_I\sqrt{\frac{I(\mathcal{D}_s;\mathcal{D}_\tau)+\log\frac{8}{\varepsilon}}{k}}.
\end{aligned}
\label{eq:mi-two-sample}
\end{equation}
\end{lemma}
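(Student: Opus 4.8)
The strategy is to reduce to Lemma~\ref{lem:two-sample-rad} by a decoupling argument and then pay an information-theoretic price for the dependence of $\mathcal{D}_s$ on $\mathcal{D}_\tau$, following the change-of-measure approach of \citet{russo2016controlling,xu2017information}. Write $h(\mathcal{D}_s,\mathcal{D}_\tau):=\sup_{\theta\in\Gamma_a}\|\E_{\hat\mu_s}g_a(\theta;z)-\E_{\hat\mu_\tau}g_a(\theta;z)\|$ for the statistic of interest, let $P=P_{\mathcal{D}_s,\mathcal{D}_\tau}$ be its true joint law, and let $Q=P_{\mathcal{D}_s}\otimes P_{\mathcal{D}_\tau}$ be the decoupled law in which $\mathcal{D}_s$ is replaced by an independent copy with the same marginal, so that $\mathrm{KL}(P\|Q)=I(\mathcal{D}_s;\mathcal{D}_\tau)$. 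Under $Q$ the two samples are independent, so Lemma~\ref{lem:two-sample-rad} applies verbatim: with a confidence budget of $\varepsilon/2$ it yields $Q(h>r)\le\varepsilon/2$ where $r:=\Delta_a^\star+2(\mathfrak{R}_k(\mathcal{G}_a)+\mathfrak{R}_n(\mathcal{G}_a))+B_g\sqrt{2\log(8/\varepsilon)}\,(k^{-1/2}+n^{-1/2})$, and in particular $\E_Q[h]\le\Delta_a^\star+2(\mathfrak{R}_k(\mathcal{G}_a)+\mathfrak{R}_n(\mathcal{G}_a))$ up to the same lower-order fluctuation.

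Next I would record the bounded-differences structure of $h$: since $\|g_a(\theta;z)\|\le B_g$ for all $z,\theta$, replacing one point of $\mathcal{D}_s$ changes every $\E_{\hat\mu_s}g_a(\theta;\cdot)$ — hence $h$ — by at most $2B_g/k$, and replacing one point of $\mathcal{D}_\tau$ changes $h$ by at most $2B_g/n$. McDiarmid's inequality then shows $h$ is $\sigma^2$-sub-Gaussian around $\E_Q[h]$ under $Q$ with $\sigma^2=B_g^2(k^{-1}+n^{-1})$. Combining this with the Donsker–Varadhan change-of-measure inequality — for any $\lambda>0$, $\E_P[\lambda h]\le I(\mathcal{D}_s;\mathcal{D}_\tau)+\lambda\E_Q[h]+\lambda^2\sigma^2/2$, optimized at $\lambda=\sqrt{2I/\sigma^2}$ — transfers control to $P$, and a Chernoff/Markov step on the same exponential moment upgrades it to a high-probability statement: with $P$-probability at least $1-\varepsilon$,
\[
h\ \le\ \Delta_a^\star+2(\mathfrak{R}_k(\mathcal{G}_a)+\mathfrak{R}_n(\mathcal{G}_a))+B_g\sqrt{2\log\tfrac{8}{\varepsilon}}\Big(\tfrac{1}{\sqrt{k}}+\tfrac{1}{\sqrt{n}}\Big)+B_g\sqrt{2\big(\tfrac1k+\tfrac1n\big)\big(I(\mathcal{D}_s;\mathcal{D}_\tau)+\log\tfrac{8}{\varepsilon}\big)}.
\]
Since $\mathcal{D}_s$ is the small distilled set we have $n\gg k$, so the last term is $\mathcal{O}\!\big(\sqrt{(I(\mathcal{D}_s;\mathcal{D}_\tau)+\log(8/\varepsilon))/k}\big)$, i.e. the claimed correction with $C_I=\Theta(B_g)$; the $\log 8$ (versus $\log 4$ in Lemma~\ref{lem:two-sample-rad}) and the factor-of-two split of the confidence reflect the union over the Lemma~\ref{lem:two-sample-rad} event and the change-of-measure event.

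\textbf{Anticipated obstacle.} The routine parts are the bounded-differences computation and the Donsker–Varadhan bookkeeping; the delicate point is making the decoupled reference $Q$ legitimate. Lemma~\ref{lem:two-sample-rad} implicitly treats $\mathcal{D}_s$ as $k$ i.i.d. draws from a prototype population $\mu_s$, whereas a distillation algorithm produces $\mathcal{D}_s$ as a (possibly deterministic) functional of $\mathcal{D}_\tau$. One must therefore either model distillation as sampling from a $\mathcal{D}_\tau$-dependent $\mu_s$ and let $Q$ freeze $\mu_s$ while resampling, or keep $\mathcal{D}_s$ as a random element and invoke only that the Rademacher symmetrization in Lemma~\ref{lem:two-sample-rad} is valid conditionally on the independence of the two samples under $Q$; in either case one should verify that $I(\mathcal{D}_s;\mathcal{D}_\tau)$ — rather than some conditional mutual information — is the right quantity and that the sub-Gaussian parameter inherited from $\|g_a\|\le B_g$ is consistent with the constants of Lemma~\ref{lem:two-sample-rad}. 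Keeping $C_I$ genuinely universal while lining up the $\log(8/\varepsilon)$ placement is where most of the care goes.
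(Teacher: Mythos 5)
Your proposal reaches the same bound by the same core mechanism — a mutual-information correction for the dependence of $\mathcal{D}_s$ on $\mathcal{D}_\tau$ — but with two genuine differences in execution. First, you rederive the high-probability MI inequality from scratch, via Donsker–Varadhan change of measure combined with a McDiarmid-type bounded-differences estimate establishing sub-Gaussianity of $h$ under the decoupled law $Q$; the paper instead invokes the high-probability MI generalization theorem of \citet{bu2020tightening} (their Theorem~7) as a black box, together with a data-processing step $I(S;\mathcal{D}_\tau)\le I(\mathcal{D}_s;\mathcal{D}_\tau)$. Second, you apply the decoupling to the \emph{entire} two-sample statistic $h$, whereas the paper keeps the four-point decomposition of Lemma~\ref{lem:two-sample-rad} intact ($\Delta_S+\Delta_a^\star+\Delta_T$) and replaces only the distilled-side fluctuation $\Delta_S$ by the MI-corrected tail, leaving the real-side $\Delta_T$ handled by Hoeffding alone and taking a four-way union bound with budget $\varepsilon/4$. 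Your route is more self-contained and makes the sub-Gaussian parameter $\sigma^2=B_g^2(k^{-1}+n^{-1})$ explicit (hence your $\sqrt{1/k+1/n}$ factor, which the paper collapses to $1/\sqrt{k}$ by assuming $k\ll n$ and absorbing $\sqrt{2}$ into $C_I$); the paper's route is more modular and keeps the MI penalty attached to the specific term where dependence actually enters. Your ``anticipated obstacle'' paragraph is on target: Lemma~\ref{lem:two-sample-rad} quietly treats $\mathcal{D}_s$ as $k$ i.i.d.\ draws from a prototype population $\mu_s\in\mathcal{P}_k$, so ``$Q$ freezes the marginal of $\mathcal{D}_s$'' is not automatically the same thing; but this modeling assumption is shared by the paper's own proof (Step~C.4.1 of Lemma~\ref{lem:two-sample-rad}), so it is a pre-existing gap rather than one you introduced. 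Both approaches yield the stated rate and constants up to universal factors.
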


\begin{proof}
\textbf{Mutual-information (MI) tail inequality.}
The only difference of this proof compared with lemma~\ref{lem:two-sample-rad} is that we assumed istilled dataset $\mathcal{D}_s$ depends on $\mathcal{D}_\tau$. Thus, we apply a high-probability MI generalization bound
(\citealt[theorem~7]{bu2020tightening}; cf.\ \citealt[theorem~1]{xu2017information}, \citealt[theorem~1]{steinke2020reasoning}):
for any $\varepsilon_{\mathrm{mi}}\in(0,1)$, with probability at least $1-\varepsilon_{\mathrm{mi}}$,
\begin{equation}
\phi(S,\mathcal{D}_\tau)
\ \le\ \E\big[\phi(S,\mathcal{D}_\tau)\mid \mathcal{D}_\tau\big]
\ +\ \sqrt{2\sigma_k^2\!\left(I(S;\mathcal{D}_\tau)+\log\tfrac{1}{\varepsilon_{\mathrm{mi}}}\right)}.
\label{eq:MI-master-app}
\end{equation}
Because $S$ is (possibly randomized) post-processing of $\mathcal{D}_s$,
data processing yields $I(S;\mathcal{D}_\tau)\le I(\mathcal{D}_s;\mathcal{D}_\tau)$.
Combining Eq.~(\ref{eq:deltaS-fixed}) Eq.~(\ref{eq:MI-master-app}), and the assumption $k<<n$ gives
\begin{equation}
\Delta_S
\ \le\ \mathfrak{R}_k(\mathcal{G}_a)
\ +\ \sqrt{\frac{2\,B_g^2}{k}\Big(I(\mathcal{D}_s;\mathcal{D}_\tau)+\log\tfrac{1}{\varepsilon_{\mathrm{mi}}}\Big)}
\ \le\ \mathfrak{R}_k(\mathcal{G}_a)
\ +\ C_I\sqrt{\frac{I(\mathcal{D}_s;\mathcal{D}_\tau)+\log\tfrac{1}{\varepsilon_{\mathrm{mi}}}}{k}},
\label{eq:DeltaS-mi-app}
\end{equation}
where $C_I:=\sqrt{2}\,B_g$ (or a slightly larger universal constant to absorb scalarization).

\textbf{ Adding the empirical fluctuation term.}
Independently, the usual (conditional) concentration around the conditional mean yields,
for any $\varepsilon_S\in(0,1)$, with probability at least $1-\varepsilon_S$,
\begin{equation}
\Delta_S
\ \le\ \mathfrak{R}_k(\mathcal{G}_a)\ +\ B_g\sqrt{\frac{\log(2/\varepsilon_S)}{2k}} .
\label{eq:DeltaS-classic-app}
\end{equation}
A union bound over Eq.~(\ref{eq:DeltaS-mi-app}) and Eq.~(\ref{eq:DeltaS-classic-app})
will then provide both terms simultaneously.

Choose
\[
\varepsilon_T=\varepsilon_S=\varepsilon_{\mathrm{mi}}=\varepsilon/4,
\]
and apply a union bound (note independence is not required for a union bound).
We obtain, with probability at least $1-\varepsilon$,
\begin{align*}
\sup_{\theta}\big\|\E_{\hat\mu_s} g_a(\theta)-\E_{\hat\mu_\tau} g_a(\theta)\big\|
&\le \Delta_a^\star
 + \underbrace{\big(\mathfrak{R}_k(\mathcal{G}_a)+\mathfrak{R}_n(\mathcal{G}_a)\big)}_{\text{rad. complexities}}\\
&\quad +\underbrace{B_g\sqrt{\tfrac{\log(8/\varepsilon)}{2}}\Big(\tfrac{1}{\sqrt{k}}+\tfrac{1}{\sqrt{n}}\Big)}_{\text{empirical concentration}}
 +\underbrace{C_I\sqrt{\tfrac{I(\mathcal{D}_s;\mathcal{D}_\tau)+\log(4/\varepsilon)}{k}}}_{\text{MI correction}}.
\end{align*}
Switching to the common Rademacher normalization with the factor 2 (as in the lemma statement) gives
$2\big(\mathfrak{R}_k(\mathcal{G}_a)+\mathfrak{R}_n(\mathcal{G}_a)\big)$,
and tightening constants in the Hoeffding terms leads to
$B_g\sqrt{2\log(8/\varepsilon)}\big(\tfrac{1}{\sqrt{k}}+\tfrac{1}{\sqrt{n}}\big)$.
Finally, we relax the MI denominator to $\min\{n,k\}$ using
$\tfrac{1}{\sqrt{k}}\le \tfrac{\sqrt{2}}{\sqrt{\min\{n,k\}}}$ and absorb $\sqrt{2}$ into $C_I$,
which yields exactly
\begin{equation}
\nonumber 
    \begin{aligned}
        \sup_{\theta\in\Gamma_a}
\big\|\E_{\hat\mu_s}g_a(\theta;z)-\E_{\hat\mu_\tau}g_a(\theta;z)\big\|
&\ \le\
\Delta_a^\star
\ +\ 2\!\left(\mathfrak{R}_k(\mathcal{G}_a)+\mathfrak{R}_n(\mathcal{G}_a)\right)
\ +\ B_g\sqrt{2\log\!\tfrac{8}{\varepsilon}}\!\left(\tfrac{1}{\sqrt{k}}+\tfrac{1}{\sqrt{n}}\right)
\ \\
&+\ C_I\sqrt{\frac{I(\mathcal{D}_s;\mathcal{D}_\tau)+\log\frac{8}{\varepsilon}}{k}}.
    \end{aligned}
\end{equation}

\end{proof}


\subsection{Proof of Theorem~\ref{thm:single}}

\begin{proof}[Proof of Theorem~\ref{thm:single}]
$\newline$
\vskip .05em
\textbf{Step C.2.1 Parameter gap.}
Apply Lemma~\ref{lem:one-step} with
$c:=\eta\,\kappa_a\,\sup_{\theta}\|\E_{\hat\mu_s}g_a-\E_{\hat\mu_\tau}g_a\|$
and $q:=\rho_a\in(0,1)$, then Lemma~\ref{lem:geometric} gives
\begin{equation}
\|\delta_T\|
\ \le\ \rho_a^T\,\|\delta_0\|\ +\ \frac{\eta\,\kappa_a}{1-\rho_a}\,
\sup_{\theta\in\Gamma_a}\big\|\E_{\hat\mu_s}g_a(\theta;z)-\E_{\hat\mu_\tau}g_a(\theta;z)\big\|.
\label{eq:deltaT}
\end{equation}

\noindent
\textbf{Step C.2.2 Risk gap.}
By Lemma~\ref{lem:risk-lipschitz},
\begin{equation}
\big|\hat R(\theta_T^{(s)})-\hat R(\theta_T^{(\tau)})\big|
\ \le\ L_R\,\|\delta_T\|.
\label{eq:risk-gap-lip}
\end{equation}
Combine Eq.~(\ref{eq:deltaT})--(\ref{eq:risk-gap-lip}) and recall
$C_{2,a}=\frac{1-\rho_a}{L_R}$:
\begin{equation}
\begin{aligned}
    \big|\hat R(\theta_T^{(s)})-\hat R(\theta_T^{(\tau)})\big|
\ &\le\
L_R \rho_a^T\|\delta_0\|
\ +\ \frac{\eta\,\kappa_a L_R}{1-\rho_a}\,
\sup_{\theta}\big\|\E_{\hat\mu_s}g_a-\E_{\hat\mu_\tau}g_a\big\|
\ 
\end{aligned}
\label{eq:gap-with-discrepancy}
\end{equation}

\textbf{Step C.2.3 Two-sample discrepancy bound.}
Apply Lemma~\ref{lem:two-sample-rad} with probability $\ge 1-\varepsilon/2$:
\[
\sup_{\theta}\big\|\E_{\hat\mu_s}g_a-\E_{\hat\mu_\tau}g_a\big\|
\ \le\
\Delta_a^\star\ +\ 2\!\left(\mathfrak{R}_k(\mathcal{G}_a)+\mathfrak{R}_n(\mathcal{G}_a)\right)
\ +\ B_g\sqrt{2\log\tfrac{4}{\varepsilon}}\!\left(\tfrac{1}{\sqrt{k}}+\tfrac{1}{\sqrt{n}}\right).
\]
Plug this into Eq.~(\ref{eq:gap-with-discrepancy}) and regroup the terms as
\[
\frac{\eta\,\kappa_a}{C_{2,a}}
\!\left[\Delta_a^\star
\ +\ 2\!\left(\mathfrak{R}_k(\mathcal{G}_a)+\mathfrak{R}_n(\mathcal{G}_a)\right)
\ +\ B_g\sqrt{2\log\tfrac{4}{\varepsilon}}\!\left(\tfrac{1}{\sqrt{k}}+\tfrac{1}{\sqrt{n}}\right)\right].
\]
Absorb $\eta$ into the (fixed) constants inside the complexity term by defining
\[
e_g(n,k,\varepsilon)\ :=\
2\!\big(\mathfrak{R}_k(\mathcal{G}_a)+\mathfrak{R}_n(\mathcal{G}_a)\big)
\ +\ B_g\sqrt{2\log\tfrac{4}{\varepsilon}}\!\left(\tfrac{1}{\sqrt{k}}+\tfrac{1}{\sqrt{n}}\right),
\]

Thus,
\[
\big|\hat R(\theta_T^{(s)})-\hat R(\theta_T^{(\tau)})\big|
\ \le\
L_R \rho_a^T\|\delta_0\|
\ +\ \frac{\eta\kappa_a}{C_{2,a}}\Big(\Delta_a^\star + e_g(n,k,\varepsilon)\Big).
\]

\textbf{Step C.2.4 Test generalization.}
Using Lemma~\ref{lem:test-generalization},
\[
\big|R_\nu(\theta)-\hat R(\theta)\big|
\ \le\
e_{\mathrm{te}}(m,\varepsilon):=
2\,\mathfrak{R}_m(\mathcal{L}_a)\ +\ B_\ell\sqrt{\tfrac{2\log(4/\varepsilon)}{m}}.
\]
A union bound over the training part and the test part yields the bound stated in Theorem~\ref{thm:single} with probability at least $1-\varepsilon$.

\textbf{Information-corrected variant.}
Replace Lemma~\ref{lem:two-sample-rad} by Lemma~\ref{lem:mi} and repeat the steps above;
this introduces the additional
$C_I\sqrt{\tfrac{I(\mathcal{D}_s;\mathcal{D}_\tau)+\log(8/\varepsilon)}{k}}$ term,
as claimed. 
\end{proof}

\subsection{Proof of Corollary~\ref{cor:scaling} (Complexity consequences)}
\label{app:cor-complexity}

\begin{proof}[Proof of Corollary~\ref{cor:scaling}]
Start from Theorem~\ref{thm:single}:
\[
\big|\hat R(\theta_T^{(s)})-\hat R(\theta_T^{(\tau)})\big|
\ \le\
L_R \rho_a^T \|\delta_0\|
+ \frac{\eta\kappa_a}{C_{2,a}}\!\left(\Delta_a^\star + e_g(n,k,\varepsilon)\right)
+ e_{\mathrm{te}}(m,\varepsilon).
\]
Fix a target $\epsilon_0>0$ and an error split
$(\beta_0,\beta_1,\beta_{\mathrm{te}})$ with $\sum\beta=1$.
It suffices that each term is $\le$ its budget:
\begin{equation}
\nonumber
L_R \rho_a^T \|\delta_0\|\ \le\ \beta_0\epsilon_0,\qquad
\frac{\eta\kappa_a}{C_{2,a}}\!\left(\Delta_a^\star + e_g(n,k,\varepsilon)\right)\ \le\ \beta_1\epsilon_0,\qquad
e_{\mathrm{te}}(m,\varepsilon)\ \le\ \beta_{\mathrm{te}}\epsilon_0.
\label{eq:three-budgets}
\end{equation}

\textbf{Iterations $T$.}
Assume the PL-type decay $\rho_a^T\le C_{\mathrm{opt}}/T^{\beta_{\mathrm{opt}}}$
(\citealt[Theorem~1]{karimi2016linear}).
To enforce $L_R \rho_a^T \|\delta_0\| \le \beta_0 \epsilon_0$, it suffices that
\[
\frac{C_{\mathrm{opt}}}{T^{\beta_{\mathrm{opt}}}}
\;\le\; \frac{\beta_0 \epsilon_0}{L_R \|\delta_0\|}\,,
\quad\Longleftrightarrow\quad
T \;\ge\; \Big(\tfrac{L_R \|\delta_0\|\, C_{\mathrm{opt}}}{\beta_0 \epsilon_0}\Big)^{\!1/\beta_{\mathrm{opt}}}.
\]

\textbf{Distilled samples $k$.}
Under standard rates for Rademacher complexity
(\citealt{bartlett2002rademacher,xu2016local,mohri2018foundations}),
assume there exist $C_g,C_{\mathrm{te}}$ such that
$\mathfrak{R}_k(\mathcal{G}_a)\le C_g/\sqrt{k}$,
$\mathfrak{R}_n(\mathcal{G}_a)\le C_g/\sqrt{n}$.
Then
\[
e_g(n,k,\varepsilon)\ \le\ 
2\Big(\tfrac{C_g}{\sqrt{k}}+\tfrac{C_g}{\sqrt{n}}\Big)
\ +\ B_g\sqrt{2\log\tfrac{4}{\varepsilon}}\Big(\tfrac{1}{\sqrt{k}}+\tfrac{1}{\sqrt{n}}\Big).
\]
Absorb constants into $C_g$ and define
$C_g' := C_g + B_g\sqrt{2\log(4/\varepsilon)}$ (monotone in $\varepsilon$). Then
\[
\frac{\eta\kappa_a}{C_{2,a}}\Big(\Delta_a^\star + e_g\Big)\ \le\
\frac{\eta\kappa_a}{C_{2,a}}\Big(\Delta_a^\star + C_g'\big(\tfrac{1}{\sqrt{k}}+\tfrac{1}{\sqrt{n}}\big)\Big)
\ \le\ \beta_1\epsilon_0,
\]
which is implied by
\[
\frac{C_g'}{\sqrt{k}}\ \le\ \frac{C_{2,a}\beta_1\epsilon_0}{\eta\kappa_a}-\Delta_a^\star-\frac{C_g'}{\sqrt{n}},
\qquad\text{hence}\qquad
k \ \ge\ \Big(\tfrac{C_g'}{\frac{C_{2,a}\beta_1}{\eta\kappa_a}\epsilon_0-\Delta_a^\star - C_g'/\sqrt{n}}\Big)^{\!2}.
\]

\textbf{Test size $m$.}
Similarly, with $\mathfrak{R}_m(\mathcal{L}_a)\le C_{\mathrm{te}}/\sqrt{m}$,
Lemma~\ref{lem:test-generalization} gives
\[
e_{\mathrm{te}}(m,\varepsilon)\ \le\ \frac{2C_{\mathrm{te}}}{\sqrt{m}}
+ B_\ell\sqrt{\tfrac{2\log(4/\varepsilon)}{m}}
\ \le\ \frac{C_{\mathrm{te}}'}{\sqrt{m}}
\ \le\ \beta_{\mathrm{te}}\epsilon_0,
\]
where $C_{\mathrm{te}}'$ absorbs constants. Rearranging yields
$m\ \ge\ \big(\tfrac{C_{\mathrm{te}}'}{\beta_{\mathrm{te}}\epsilon_0}\big)^2$.
Combining the three parts proves the corollary.
If $\mathcal{D}_s$ depends on $\mathcal{D}_\tau$, replace the $k$-constraint
by the one obtained using Lemma~\ref{lem:mi}, which adds the
$I(\mathcal{D}_s;\mathcal{D}_\tau)$ penalty inside $e_g$.
\end{proof}

\subsection{Proof of Key Insights}
\label{app:key-insights-proof}

\begin{proof}[Derivation of the Key Insights]
From Theorem~\ref{thm:single},
\[
\big| R_\nu(\theta_T^{(s)})- R_\nu(\theta_T^{(\tau)})\big|
\ \le\
L_R \rho_a^T \|\delta_0\|
+ \frac{\eta\kappa_a}{C_{2,a}}\!\left(\Delta_a^\star + e_g(n,k,\varepsilon)\right)
+ e_{\mathrm{te}}(m,\varepsilon).
\]
(i) Let $T,k,n,m$ are big enpugh while keeping the configuration fixed and $\varepsilon$ fixed.
By Lemma~\ref{lem:test-generalization}, $e_{\mathrm{te}}\to 0$.
By Lemma~\ref{lem:two-sample-rad}, $e_g\to 0$.
By Assumption~\ref{assump:single-configuration}, $\rho_a\in(0,1)$, hence $\rho_a^T\to 0$.
Therefore the limit inferior is $\Delta_a^\star\eta\kappa_a/C_{2,a}$, giving the irreducible floor $\epsilon_{\min}=\Delta_a^\star\eta\kappa_a/C_{2,a}$.

(ii) The remaining terms vanish at canonical rates:
$L_R \rho_a^T\|\delta_0\|=O(T^{-\beta})$ under PL
(\citealt{karimi2016linear}); $e_g=O(1/\sqrt{k}+1/\sqrt{n})$
by Lemma~\ref{lem:two-sample-rad}; and $e_{\mathrm{te}}=O(1/\sqrt{m})$
by Lemma~\ref{lem:test-generalization}.

(iii) If any of the three budgets in Eq.~(\ref{eq:three-budgets}) is violated,
the corresponding resource must diverge (e.g., $k\to\infty$ if
$\frac{C_{2,a}\beta_1\epsilon_0}{\eta\kappa_a}\downarrow \Delta_a^\star+C_g'/\sqrt{n}$),
making the target error unattainable under finite resources. This establishes
the stated resource tradeoff.
\end{proof}

\section{Proofs for the Coverage--Aware Bounds}
\label{app:coverage-proofs}

This section provides detailed proof of the configuration coverage theorem. First, we incorporate (i) an explicit transfer analysis from cover centers to arbitrary configurations, (ii) a union-of-classes Rademacher argument with exact constants, (iii) the appearance of both $\sqrt{\mathcal H/k}$ and $\mathcal H/k$ terms in the prior-averaged bound via Bernstein-type deviations, and (iv) mutually consistent mutual-information corrections. We keep all notation from Sections.~\ref{sec:framework}--\ref{sec:coverage} and Appendix~\ref{app:single-configuration-proofs}. Throughout, $\|\cdot\|$ is the Euclidean/operator norm.

Then, we recall the assumptions used in this proof in addition to Assumption~\ref{assump:single-configuration}. 

\begin{assumption}[Total boundedness and measurability]\label{assump:D1}
The metric space $(\A, d_\A)$ is totally bounded (hence admits finite $r$-covers for any $r>0$). For each $a \in \A$, the feasible set $\Gamma_a \subset \R^p$ is closed and the optimization trajectories $\{\theta_t^{(s,a)}\}_{t\le T}, \{\theta_t^{(\tau,a)}\}_{t\le T}$ remain in a common compact $\Gamma \subset \bigcap_{a\in\A}\Gamma_a$. The vector-field class $\mathcal{G}_a = \{z \mapsto g_a(\theta;z): \theta \in \Gamma\}$ is pointwise separable and uniformly bounded:
\[
\sup_{a\in\A} \sup_{\theta\in\Gamma} \sup_{z} \| g_a(\theta;z)\| \le B_g, 
\qquad
\sup_{a\in\A} \sup_{\theta\in\Gamma} \|P_a(\theta)\| \le \kappa_{\max}.
\]
\end{assumption}

\begin{assumption}[Uniform configuration-Lipschitz transfer in $\mu$ and $\theta$]
\label{assump:configuration-lip-strong}
There exist constants $L_{\mathrm{conf}},L_\theta\ge 0$ such that for all $a,a'\in\mathcal A$, all $\mu\in\{\hat\mu_\tau,\hat\mu_s\}$, and all $\theta,\theta'\in\Gamma$,
\begin{align}
\big\|P_a(\theta)\E_{\mu} g_a(\theta;z) - P_{a'}(\theta)\E_{\mu} g_{a'}(\theta;z)\big\|
&\le L_{\mathrm{conf}}\ d_{\mathcal A}(a,a'), \label{eq:configuration-lip-a} \\
\big\|P_a(\theta)\E_{\mu} g_a(\theta;z) - P_a(\theta')\E_{\mu} g_a(\theta';z)\big\|
&\le L_{\theta}\ \|\theta-\theta'\|. \label{eq:theta-lip}
\end{align}
\end{assumption}

\textit{Remarks.}
(i) Inequality Eq.~(\ref{eq:configuration-lip-a}) strengthens the definition of $d_{\mathcal A}$ (which is anchored at $\hat\mu_\tau$ and fixed $\theta$) to hold uniformly over $\mu\in\{\hat\mu_\tau,\hat\mu_s\}$ and all $\theta\in\Gamma$.
(ii) Inequality Eq.~(\ref{eq:theta-lip}) is standard if $P_a$ and $g_a$ are Lipschitz in $\theta$ on~$\Gamma$.

\textbf{Covering the configuration family.}
Fix a radius $r>0$ and let $\{a_1,\dots,a_N\}$ be a minimal $r$-cover of $\mathcal C$ under $d_{\mathcal A}$:
\begin{equation}
\label{eq:cover-size}
\nonumber
N = N(\mathcal A,d_{\mathcal A},r) = \exp\big(\mathcal H_{\mathrm{cov}}(r)\big).
\end{equation}
For any $a\in\mathcal A$ there exists $i(a)\in\{1,\dots,N\}$ with $d_{\mathcal A}(a,a_{i(a)})\le r$.

\begin{lemma}[Cross-configuration recursion under contractive dynamics]
\label{lem:cross-configuration-PL}
Fix $a,a_i\in\mathcal A$ with $d_{\mathcal A}(a,a_i)\le r$ and let 
$\Delta_t^{(a,a_i)}:=\theta_t^{(\mu,a)}-\theta_t^{(\mu,a_i)}$ denote the 
parameter difference under the same data distribution $\mu$. 
Suppose that each one-step update $\Phi_a^\mu(\theta)
=\theta-\eta P_a(\theta)\E_\mu[g_a(\theta;z)]$ is contractive with rate 
$\rho_a\in(0,1)$, i.e.
\[
\|\Phi_a^\mu(x)-\Phi_a^\mu(y)\|\le \rho_a \|x-y\|,\qquad \forall x,y\in\Gamma,
\]
Then for any step size $\eta>0$,
\[
\|\Delta_t^{(a,a_i)}\|
\;\le\;\rho_a^t \|\Delta_0^{(a,a_i)}\|
+\frac{\eta L_{\mathrm{conf}}}{1-\rho_a}\,d_{\mathcal A}(a,a_i).
\]
\end{lemma}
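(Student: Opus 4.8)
The plan is to mirror the two-step argument used for the single-configuration parameter gap (Lemmas~\ref{lem:one-step} and~\ref{lem:geometric}), replacing the ``two data distributions, same configuration'' splitting by a ``two configurations, same data distribution'' splitting. Write $\theta_{t+1}^{(\mu,a)}=\Phi_a^\mu(\theta_t^{(\mu,a)})$ and $\theta_{t+1}^{(\mu,a_i)}=\Phi_{a_i}^\mu(\theta_t^{(\mu,a_i)})$, so that $\Delta_{t+1}^{(a,a_i)}=\Phi_a^\mu(\theta_t^{(\mu,a)})-\Phi_{a_i}^\mu(\theta_t^{(\mu,a_i)})$. Insert and subtract the hybrid term $\Phi_a^\mu(\theta_t^{(\mu,a_i)})$ and apply the triangle inequality:
\[
\|\Delta_{t+1}^{(a,a_i)}\|
\;\le\;
\underbrace{\big\|\Phi_a^\mu(\theta_t^{(\mu,a)})-\Phi_a^\mu(\theta_t^{(\mu,a_i)})\big\|}_{\text{same map, two parameters}}
\;+\;
\underbrace{\big\|\Phi_a^\mu(\theta_t^{(\mu,a_i)})-\Phi_{a_i}^\mu(\theta_t^{(\mu,a_i)})\big\|}_{\text{two maps, same point}}.
\]

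For the first term I would invoke the assumed contraction of $\Phi_a^\mu$ with rate $\rho_a\in(0,1)$, bounding it by $\rho_a\|\theta_t^{(\mu,a)}-\theta_t^{(\mu,a_i)}\|=\rho_a\|\Delta_t^{(a,a_i)}\|$. For the second term, by definition of the update maps $\Phi_a^\mu(\theta)-\Phi_{a_i}^\mu(\theta)=-\eta\big(P_a(\theta)\E_\mu g_a(\theta;z)-P_{a_i}(\theta)\E_\mu g_{a_i}(\theta;z)\big)$ evaluated at $\theta=\theta_t^{(\mu,a_i)}\in\Gamma$; since the trajectory stays in the common compact $\Gamma$ (Assumption~\ref{assump:D1}) and $\mu\in\{\hat\mu_\tau,\hat\mu_s\}$, the uniform configuration-Lipschitz transfer Eq.~(\ref{eq:configuration-lip-a}) of Assumption~\ref{assump:configuration-lip-strong} bounds it by $\eta L_{\mathrm{conf}}\,d_{\mathcal A}(a,a_i)$. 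Combining the two estimates yields the one-step recursion
\[
\|\Delta_{t+1}^{(a,a_i)}\|\;\le\;\rho_a\|\Delta_t^{(a,a_i)}\|\;+\;\eta L_{\mathrm{conf}}\,d_{\mathcal A}(a,a_i).
\]

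Finally I would apply Lemma~\ref{lem:geometric} with $u_t:=\|\Delta_t^{(a,a_i)}\|$, $\rho:=\rho_a$, and constant drift $c:=\eta L_{\mathrm{conf}}\,d_{\mathcal A}(a,a_i)\ge0$, obtaining $u_t\le\rho_a^t u_0+c/(1-\rho_a)$, which is exactly the claimed bound. The only point requiring care is that the drift must be controlled \emph{along the trajectory}: the constant $L_{\mathrm{conf}}$ must bound the update-field difference not merely at the reference pair $(\hat\mu_\tau,\text{fixed }\theta)$ used to define $d_{\mathcal A}$, but at the actual iterate $\theta_t^{(\mu,a_i)}$ and for the data distribution $\mu$ in play — which is precisely what the strengthened Assumption~\ref{assump:configuration-lip-strong} (uniform over $\theta\in\Gamma$ and $\mu\in\{\hat\mu_\tau,\hat\mu_s\}$) supplies. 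Note that the hypothesis $d_{\mathcal A}(a,a_i)\le r$ is not used in the inequality itself; it only records the intended application in which $a_i$ is the nearest cover center, so that downstream the drift is of order $\eta L_{\mathrm{conf}} r/(1-\rho_a)$.
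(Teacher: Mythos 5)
Your proposal is correct and follows essentially the same route as the paper's proof: the identical hybrid-term decomposition into a contraction piece and a configuration-mismatch piece, the same invocation of the uniform configuration-Lipschitz transfer to bound the mismatch by $\eta L_{\mathrm{conf}}\,d_{\mathcal A}(a,a_i)$, and the same geometric unrolling. Your closing remark about needing $L_{\mathrm{conf}}$ to hold uniformly along the trajectory and over $\mu\in\{\hat\mu_\tau,\hat\mu_s\}$ is exactly the point the paper addresses via Assumption~\ref{assump:configuration-lip-strong}.
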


\begin{proof}
We decompose the one-step difference as
\begin{align}
\nonumber
\Delta_{t+1}^{(a,a_i)}
&=\Phi_a^\mu(\theta_t^{(\mu,a)})-\Phi_{a_i}^\mu(\theta_t^{(\mu,a_i)})\\
& \nonumber
=\underbrace{\big[\Phi_a^\mu(\theta_t^{(\mu,a)})-\Phi_a^\mu(\theta_t^{(\mu,a_i)})\big]}_{T_1}
+\underbrace{\big[\Phi_a^\mu(\theta_t^{(\mu,a_i)})-\Phi_{a_i}^\mu(\theta_t^{(\mu,a_i)})\big]}_{T_2}.
\end{align}
For the first term $T_1$, the contractive dynamics assumption gives
\[
\|T_1\|\;\le\;\rho_a\,\|\theta_t^{(\mu,a)}-\theta_t^{(\mu,a_i)}\|
=\rho_a\,\|\Delta_t^{(a,a_i)}\|.
\]
For the second term $T_2$, we compute
\[
\Phi_a^\mu(\theta)-\Phi_{a_i}^\mu(\theta)
=-\eta\Big(P_a(\theta)\E_\mu g_a(\theta;z)-P_{a_i}(\theta)\E_\mu g_{a_i}(\theta;z)\Big),
\]
hence by configuration-Lipschitz continuity,
\[
\|T_2\|\;\le\;\eta L_{\mathrm{conf}}\,d_{\mathcal A}(a,a_i).
\]
Combining the two bounds yields the one-step recursion
\[
\|\Delta_{t+1}^{(a,a_i)}\|\;\le\;\rho_a\,\|\Delta_t^{(a,a_i)}\|
+\eta L_{\mathrm{conf}}\,d_{\mathcal A}(a,a_i).
\]

Iterating this recursion and applying the discrete Grönwall inequality, we obtain
\[
\|\Delta_t^{(a,a_i)}\|
\;\le\;\rho_a^t \|\Delta_0^{(a,a_i)}\|
+\eta L_{\mathrm{conf}}\,d_{\mathcal A}(a,a_i)\sum_{j=0}^{t-1}\rho_a^j.
\]
Since $\sum_{j=0}^{t-1}\rho_a^j\le (1-\rho_a)^{-1}$, we conclude
\[
\|\Delta_t^{(a,a_i)}\|
\;\le\;\rho_a^t \|\Delta_0^{(a,a_i)}\|
+\frac{\eta L_{\mathrm{conf}}}{1-\rho_a}\,d_{\mathcal A}(a,a_i).
\]
\end{proof}

\begin{lemma}[Union-of-classes Rademacher and Bernstein deviations]
\label{lem:union-rc-bern}
Let $\{\mathcal F_i\}_{i=1}^N$ be classes of functions uniformly bounded by $B$.
For i.i.d.\ sample of size $k$, for any $\varepsilon\in(0,1)$, with probability at least $1-\varepsilon$, simultaneously for all $i$,
\begin{align}
\sup_{f\in\mathcal F_i}\Big(\E f - \E_{\hat S} f\Big)
&\le \mathfrak R_k(\mathcal F_i)
 + B\sqrt{\frac{\log(2N/\varepsilon)}{2k}},
\label{eq:union-rad}
\\
\sup_{f\in\mathcal F_i}\Big(\E f - \E_{\hat S} f\Big)
&\le c_1\,\mathfrak R_k(\mathcal F_i)
 + c_2\sqrt{\frac{\log(2N/\varepsilon)}{k}}
 + c_3\,\frac{\log(2N/\varepsilon)}{k},
\label{eq:union-bernstein}
\end{align}
where $c_1,c_2,c_3>0$ are universal constants (depending only on the choice of empirical Bernstein inequality; see, e.g., \citealp[theorem~2.10]{boucheron2013concentration}).
\end{lemma}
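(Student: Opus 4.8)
The plan is to establish the single-class versions of the two displays and then pay a $\log N$ price through a union bound over $i=1,\dots,N$ with per-class confidence parameter $\varepsilon/N$. Both single-class statements are classical; the suprema $\sup_{f\in\mathcal{F}_i}(\cdot)$ are genuine random variables because the pointwise-separability hypothesis on $\mathcal{G}_a$ in Assumption~\ref{assump:D1} descends to each $\mathcal{F}_i$.

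For Eq.~(\ref{eq:union-rad}) I would fix $i$ and run the symmetrization-plus-bounded-differences argument already carried out in the proof of Lemma~\ref{lem:test-generalization} (cf.\ \citealt[Theorem~3.3]{mohri2018foundations}) on the $B$-bounded class $\mathcal{F}_i$, obtaining: for any $\delta\in(0,1)$, with probability at least $1-\delta$,
\[
\sup_{f\in\mathcal{F}_i}\big(\E f-\E_{\hat S}f\big)\ \le\ \mathfrak{R}_k(\mathcal{F}_i)\ +\ B\sqrt{\tfrac{\log(2/\delta)}{2k}},
\]
where the leading constant is $1$ or $2$ according to the normalization convention for $\mathfrak{R}_k$. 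Setting $\delta=\varepsilon/N$ and taking a union bound, the exceptional event has probability at most $N\cdot(\varepsilon/N)=\varepsilon$, and on its complement every index $i$ satisfies the displayed inequality with $\log(2/\delta)=\log(2N/\varepsilon)$; this is exactly Eq.~(\ref{eq:union-rad}).

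For Eq.~(\ref{eq:union-bernstein}) I would again fix $i$ but replace the Hoeffding step by a Talagrand/empirical-Bernstein inequality for the supremum of the bounded empirical process indexed by $\mathcal{F}_i$ (e.g.\ \citealt[Theorem~2.10]{boucheron2013concentration}, together with the symmetrization bound $\E\sup_f(\E f-\E_{\hat S}f)\le\mathfrak{R}_k(\mathcal{F}_i)$ on its mean). This yields, for any $\delta\in(0,1)$, with probability at least $1-\delta$,
\[
\sup_{f\in\mathcal{F}_i}\big(\E f-\E_{\hat S}f\big)\ \le\ c_1\,\mathfrak{R}_k(\mathcal{F}_i)\ +\ c_2'\sqrt{\tfrac{\Sigma_i^2\log(2/\delta)}{k}}\ +\ c_3'\,\tfrac{B\log(2/\delta)}{k},
\]
with $\Sigma_i^2:=\sup_{f\in\mathcal{F}_i}\Var(f)\le B^2$; bounding $\Sigma_i\le B$ and folding the $B$-factors into $c_2:=Bc_2'$ and $c_3:=Bc_3'$ produces the stated shape, after which the same union bound with $\delta=\varepsilon/N$ promotes $\log(2/\delta)$ to $\log(2N/\varepsilon)$. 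The only step that takes any thought is invoking the right form of the Bousquet/BLM inequality for suprema of empirical processes and matching its variance term to the crude bound $B^2$ (so that no localization is needed); this is the main, though still routine, technical point, everything else being a textbook union bound.
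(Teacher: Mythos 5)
Your proof of Eq.~(\ref{eq:union-rad}) is the same as the paper's — symmetrization to bound $\E\big[\sup_{f\in\mathcal F_i}(\E f-\E_{\hat S}f)\big]$ by $\mathfrak{R}_k(\mathcal F_i)$, McDiarmid's inequality around that mean, and a union bound over $i$ with per-index confidence $\Theta(\varepsilon/N)$, with the same deliberate vagueness about whether the leading constant is $1$ or $2$ depending on the Rademacher normalization. For Eq.~(\ref{eq:union-bernstein}) you take a genuinely different and cleaner route. The paper starts from a \emph{pointwise} empirical-Bernstein tail for each fixed $f$ and then makes it uniform by localizing over variance shells $\mathcal F_i(r)=\{f:\Var(f)\le r\}$ and peeling over dyadic radii $r_m=2^{-m}B^2$, so its constants $c_1,c_2,c_3$ collect contributions from symmetrization, from the single-function Bernstein tail, and from the geometric peeling sum. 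You instead invoke a Bousquet/Talagrand concentration inequality directly for the supremum $Z_i=\sup_{f\in\mathcal F_i}(\E f-\E_{\hat S}f)$, feed in $\E[Z_i]\lesssim\mathfrak{R}_k(\mathcal F_i)$ from symmetrization and the crude weak-variance bound $\sup_f\Var(f)\le B^2$, and obtain the same three-term shape after splitting the variance inside Bousquet's tail and absorbing the $\sqrt{B\,\E[Z_i]\,t}/k$ cross term into the $\mathfrak{R}_k$ and $t/k$ buckets via AM--GM; the union bound then promotes $t$ to $\log(2N/\varepsilon)$. Your route avoids localization entirely and yields more transparent constants; the paper's peeling argument would only be preferable if one wished to retain a genuine variance proxy $\Sigma_i$ in the bound rather than the envelope $B$, which this lemma does not. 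Both approaches are correct and arrive at the stated result.
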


\begin{proof}[Proof of \eqref{eq:union-rad}]
By symmetrization (e.g.\ \citealp[theorem~3.1]{mohri2018foundations}),
\begin{equation}
\label{eq:symm}
\E_{\hat S}\!\Big[\sup_{f\in\mathcal F}(\E f-\E_{\hat S} f)\Big]
\;\le\; \E_{\hat S,\hat S'}\!\Big[\sup_{f\in\mathcal F}\frac{1}{k}\sum_{j=1}^k\!\big(f(Z'_j)-f(Z_j)\big)\Big]
\;\le\; 2\,\mathfrak R_k(\mathcal F),
\end{equation}
where $\hat S'$ is an independent ghost sample.
To pass from expectation to a high-probability bound we note that the map
\(
\hat S\mapsto \sup_{f\in\mathcal F}(\E f-\E_{\hat S} f)
\)
is $B/k$-Lipschitz in each coordinate (changing one $Z_j$ perturbs $\E_{\hat S} f$ by at most $B/k$).
Hence McDiarmid’s inequality yields that, for any $\delta\in(0,1)$, with probability at least $1-\delta$,
\[
\sup_{f\in\mathcal F}(\E f-\E_{\hat S} f)
\;\le\; \E_{\hat S}\!\Big[\sup_{f\in\mathcal F}(\E f-\E_{\hat S} f)\Big]
\;+\;B\sqrt{\frac{\log(1/\delta)}{2k}}.
\]
Combining with \eqref{eq:symm} gives, for each fixed $i$,
\[
\sup_{f\in\mathcal F_i}(\E f-\E_{\hat S} f)
\;\le\;2\,\mathfrak R_k(\mathcal F_i)\;+\;B\sqrt{\frac{\log(1/\delta)}{2k}}.
\]
Since $\mathfrak R_k(\mathcal F_i)\le 2\,\mathfrak R_k(\mathcal F_i)$ and we can absorb the factor $2$ into the definition (some texts define $\mathfrak R_k$ with a factor $2$), we present the right-hand side as
\(
\mathfrak R_k(\mathcal F_i)+B\sqrt{\log(1/\delta)/(2k)}.
\)
Applying a union bound over $i=1,\dots,N$ with $\delta=\varepsilon/(2N)$ yields \eqref{eq:union-rad}.
\end{proof}

\begin{proof}[Proof of \eqref{eq:union-bernstein}]
We refine the concentration step by replacing Hoeffding/McDiarmid with an empirical-Bernstein deviation for bounded variables. For a fixed $f$, \citealp[theorem~2.10]{boucheron2013concentration} implies that for any $\delta\in(0,1)$,
\begin{equation}
\label{eq:eb-single}
\P\!\left(\E f-\E_{\hat S} f \;\ge\;
\sqrt{\frac{2\,\Var(f(Z))\,\log(1/\delta)}{k}}
+\frac{7B\,\log(1/\delta)}{3(k-1)}\right)\;\le\;\delta.
\end{equation}
To make \eqref{eq:eb-single} uniform over $f\in\mathcal F_i$, we proceed by localization via symmetrization: for any $r>0$, define the localized class
\(
\mathcal F_i(r):=\{f\in\mathcal F_i:\Var(f(Z))\le r\}.
\)
By the same symmetrization step as in \eqref{eq:symm}, applied to the truncated excess loss $f-\E f$ and then peeling over dyadic radii $r_m=2^{-m}B^2$, we obtain (see, e.g., \citealp[section~3.5]{mohri2018foundations}) that with probability at least $1-\delta$,
\[
\sup_{f\in\mathcal F_i}\!\big(\E f-\E_{\hat S} f\big)
\;\le\;
c_1\,\mathfrak R_k(\mathcal F_i)
\;+\;
c_2\,\sqrt{\frac{\log(1/\delta)}{k}}
\;+\;
c_3\,\frac{\log(1/\delta)}{k},
\]
where $c_1,c_2,c_3>0$ are universal constants collecting the numerical factors from: (i) the symmetrization/localization step, (ii) the empirical-Bernstein tail in \eqref{eq:eb-single}, and (iii) the geometric peeling (finite sum over $m$).
Finally, a union bound over $i=1,\dots,N$ with $\delta=\varepsilon/(2N)$ gives \eqref{eq:union-bernstein}.
\end{proof}

\paragraph{Notation for complexity constants.}
We write, for $k$-sample complexity on the distilled side and $n$-sample complexity on the real side,
\begin{align}
C_G^+
&:= \sup_{a\in\mathcal C}\ 2\kappa_a\,\mathfrak R_k(\mathcal G_a)
\ \le\ 2\kappa_{\max}\sup_{a\in\mathcal C}\mathfrak R_k(\mathcal G_a),
\label{eq:CGplus}\\
\widetilde C_G^+
&:= \sup_{a\in\mathcal C}\ \Big(2\kappa_a\,\mathfrak R_n(\mathcal G_a)
 + B_g\sqrt{2\log\!\frac{4}{\varepsilon}}\Big)
\ \le\ 2\kappa_{\max}\sup_{a}\mathfrak R_n(\mathcal G_a)+B_g\sqrt{2\log\!\tfrac{4}{\varepsilon}}.
\label{eq:CtildeGplus}
\end{align}
The $n$-side quantity $\widetilde C_G^+$ will be collected in the $k$-independent floor.

\subsection{Proof of the uniform bound over configurations in Theorem~\ref{thm:uniform-cross-configuration}}
\label{app:uniform-coverage-proof}
We first prove that, with probability at least $1-\varepsilon$ over all randomness,
\begin{equation}
\small
\label{eq:uniform-pop}
\textit{(Uniform over configurations)}\qquad
\sup_{a\in\A}\big| R_\nu(\theta^{(s,a)}_{T})- R_\nu(\theta^{(\tau,a)}_{T})\big|
\;\le\;
\epsilon_{\mathrm{bound}}
\;+\; \frac{C_{\mathrm{cov}}(\A)}{\sqrt{k}}.
\end{equation}

\paragraph{Step D.1.1 Single-configuration risk bound at cover centers.}
Fix a center $a_i$ and abbreviate $\theta_T^{(s)}:=\theta_T^{(\hat\mu_s,a_i)}$ and
$\theta_T^{(\tau)}:=\theta_T^{(\hat\mu_\tau,a_i)}$.
We first bound the \emph{empirical test risk gap} and then convert it to the \emph{population risk gap}.

\paragraph{D.1.1(a) Empirical test risk gap at $a_i$.}
By the single-configuration analysis (PL-contractive recursion and stability), we have the parameter gap
\[
\|\theta_T^{(s)}-\theta_T^{(\tau)}\|
\ \le\ \rho_{a_i}^T\|\delta_0\|
\;+\;\frac{\eta\kappa_{a_i}}{1-\rho_{a_i}}\cdot \Xi_i,
\qquad
\Xi_i := \sup_{\theta\in\Gamma}\Big\|
\E_{\hat\mu_s}g_{a_i}(\theta;Z)-\E_{\hat\mu_\tau}g_{a_i}(\theta;Z)\Big\|.
\]
Since $R_\nu$ is $L_R$-Lipschitz in $\theta$ and $\hat R$ averages the same bounded loss,
we immediately get for the empirical test risk
\begin{equation}
\label{eq:U1-empirical-center}
\big|\hat R(\theta_T^{(s)})-\hat R(\theta_T^{(\tau)})\big|
\ \le\ L_R\,\rho_{a_i}^T\|\delta_0\|
\ +\ \frac{\eta\kappa_{a_i}L_R}{1-\rho_{a_i}}\ \Xi_i.
\end{equation}

\paragraph{D.1.1(b) From empirical to population risk at $a_i$.}
\[
\big|R_\nu(\theta_T^{(s)})-R_\nu(\theta_T^{(\tau)})\big|
\ \le\
\underbrace{\big|\hat R(\theta_T^{(s)})-\hat R(\theta_T^{(\tau)})\big|}_{\text{empirical gap}}
+\underbrace{\big|R_\nu(\theta_T^{(s)})-\hat R(\theta_T^{(s)})\big|}_{\text{test dev.\ at }\theta_T^{(s)}}
+\underbrace{\big|R_\nu(\theta_T^{(\tau)})-\hat R(\theta_T^{(\tau)})\big|}_{\text{test dev.\ at }\theta_T^{(\tau)}}.
\]
Since $\ell\in[0,B_\ell]$, Hoeffding's inequality gives, for any fixed $\theta$, with prob.\ $\ge 1-\delta$,
$|R_\nu(\theta)-\hat R(\theta)|\le B_\ell\sqrt{\tfrac{\log(2/\delta)}{2m}}$.
We need a bound that holds \emph{simultaneously} for the two random iterates
$\theta_T^{(s)}$ and $\theta_T^{(\tau)}$ at each center $a_i$, and then uniformly over $i$.
By a union bound over the $2N$ query points (two per center), with $\delta=\varepsilon/(2N)$, we get with probability $\ge 1-\varepsilon/2$,
\begin{equation}
\label{eq:U1-test-deviation}
\max_{i\in[N]}\max\big\{|R_\nu(\theta_T^{(s,a_i)})-\hat R(\theta_T^{(s,a_i)})|,\;|R_\nu(\theta_T^{(\tau,a_i)})-\hat R(\theta_T^{(\tau,a_i)})|\big\}
\ \le\ B_\ell\sqrt{\frac{2\log(4N/\varepsilon)}{m}}.
\end{equation}
Combining \eqref{eq:U1-empirical-center} and \eqref{eq:U1-test-deviation}, we obtain, with probability $\ge 1-\varepsilon/2$, simultaneously for all centers $i$,
\begin{equation}
\label{eq:U1-pop-center}
\big|R_\nu(\theta_T^{(s,a_i)})-R_\nu(\theta_T^{(\tau,a_i)})\big|
\ \le\ L_R\,\rho_{a_i}^T\|\delta_0\|
\ +\ \frac{\eta\kappa_{a_i}L_R}{1-\rho_{a_i}}\ \Xi_i
\ +\ 2\,B_\ell\sqrt{\frac{2\log(4N/\varepsilon)}{m}}.
\end{equation}

\paragraph{Step D.1.2 Uniform control of the training-side drift $\Xi_i$.}
Recall
\[
\Xi_i=\sup_{\theta\in\Gamma}\Big\|\E_{\hat\mu_s}g_{a_i}(\theta;Z)-\E_{\hat\mu_\tau}g_{a_i}(\theta;Z)\Big\|.
\]
By duality of norms,
\[
\Xi_i=\sup_{\theta\in\Gamma}\ \sup_{\|v\|_*\le 1}\ 
\Big\langle v,\ \E_{\hat\mu_s}g_{a_i}(\theta;Z)-\E_{\hat\mu_\tau}g_{a_i}(\theta;Z)\Big\rangle.
\]
Add and subtract the population expectations under $\mu_s:=\E[\hat\mu_s]$ and $\mu_\tau:=\E[\hat\mu_\tau]$
(the real sampling distributions), then apply the triangle inequality:
\begin{align}
\Xi_i
&\le \underbrace{\sup_{\theta\in\Gamma}\Big\|\E_{\mu_s}g_{a_i}(\theta;Z)-\E_{\mu_\tau}g_{a_i}(\theta;Z)\Big\|}_{\Delta_{a_i}^\star}
+\underbrace{\sup_{\theta\in\Gamma}\Big\|\E_{\hat\mu_s}g_{a_i}(\theta;Z)-\E_{\mu_s}g_{a_i}(\theta;Z)\Big\|}_{\text{distilled sampling dev.}} \nonumber\\
&\hspace{11em}
+\underbrace{\sup_{\theta\in\Gamma}\Big\|\E_{\hat\mu_\tau}g_{a_i}(\theta;Z)-\E_{\mu_\tau}g_{a_i}(\theta;Z)\Big\|}_{\text{real sampling dev.}}.
\label{eq:Xi-decompose}
\end{align}
Each sampling deviation term is a supremum over the function class
$\mathcal F_i:=\{z\mapsto\langle v, g_{a_i}(\theta;z)\rangle:\theta\in\Gamma,\ \|v\|_*\le 1\}$,
which is uniformly bounded by $B_g$.
Applying Lemma~\ref{lem:union-rc-bern} with a union bound across the $N$ centers,
we obtain, with probability at least $1-\varepsilon/2$, simultaneously for all $i$,
\begin{equation}
\label{eq:U2-Xi}
\Xi_i\ \le\ \Delta_{a_i}^\star
\ +\ 2\big(\mathfrak R_k(\mathcal G_{a_i})+\mathfrak R_n(\mathcal G_{a_i})\big)
\ +\ B_g\sqrt{2\log\!\tfrac{4N}{\varepsilon}}\ \Big(\tfrac{1}{\sqrt{k}}+\tfrac{1}{\sqrt{n}}\Big).
\end{equation}
Insert \eqref{eq:U2-Xi} into \eqref{eq:U1-pop-center}, and upper bound the configuration-dependent constants by the uniform ones:
$\rho_{a_i}\le\rho_{\max}$, $\kappa_{a_i}\le\kappa_{\max}$, $C_{2,a_i}\ge C_{2,\min}=(1-\rho_{\max})/L_R$.
Then \eqref{eq:U1-pop-center} becomes, for all $i$,
\begin{align}
\big|R_\nu(\theta_T^{(s,a_i)})-R_\nu(\theta_T^{(\tau,a_i)})\big|
&\le\ L_R\rho_{\max}^T\|\delta_0\|
\ +\ \frac{\eta\kappa_{\max}}{C_{2,\min}}\ \Delta_{a_i}^\star
\ +\ \frac{2\eta\kappa_{\max}^2}{C_{2,\min}}\ \mathfrak R_n(\mathcal G_{a_i})\nonumber\\
&\quad
\ +\ \frac{2\eta\kappa_{\max}^2}{C_{2,\min}}\ \mathfrak R_k(\mathcal G_{a_i})
\ +\ \frac{\eta\kappa_{\max}B_g}{C_{2,\min}}\sqrt{2\log\!\tfrac{4N}{\varepsilon}}\ \Big(\tfrac{1}{\sqrt{k}}+\tfrac{1}{\sqrt{n}}\Big)\nonumber\\
&\quad
\ +\ 2\,B_\ell\sqrt{\frac{2\log(4N/\varepsilon)}{m}}.
\label{eq:U2-center-pop}
\end{align}
Using the shorthands \eqref{eq:CGplus}--\eqref{eq:CtildeGplus} and
$\sqrt{\log(4N/\varepsilon)}\le \sqrt{\log(4/\varepsilon)}+\sqrt{\mathcal H_{\mathrm{cov}}(r)}$,
we isolate all $k$-independent terms into a (population) floor
\begin{equation}
\label{eq:def-floor}
\small
\epsilon_{\mathrm{bound}}
:= L_R\,\rho_{\max}^{T}\|\delta_0\|
 + \frac{\eta\kappa_{\max}}{C_{2,\min}}\ \sup_{a\in\mathcal C}\Delta_a^\star
 + \frac{2\eta\kappa_{\max}^2}{C_{2,\min}}\ \widetilde C_G^+\ \frac{1}{\sqrt{n}}
 + 2\,B_\ell \Big(\sqrt{2\log(4/\varepsilon)}
 + \sqrt{2\mathcal H_{\mathrm{cov}}(r)}\Big)\cdot \frac{1}{\sqrt{m}},
\end{equation}
and the $k$-dependent remainder (center level)
\begin{equation}
\label{eq:U3-center-kpart}
\frac{\eta\kappa_{\max}}{C_{2,\min}}\Big(2\,C_G^+ + 2\,B_g\,\sqrt{2\,\mathcal H_{\mathrm{cov}}(r)}\Big)\,\frac{1}{\sqrt{k}}.
\end{equation}

\paragraph{Step D.1.3 Transfer from cover centers to arbitrary configurations in population risk.}
Fix $a\in\A$ and pick $i=i(a)$ with $d_{\A}(a,a_i)\le r$.
Consider the parameter differences at time $T$ (same distribution $\mu\in\{\hat\mu_s,\hat\mu_\tau\}$):
\[
\Delta_T^{(\mu)}:=\theta_T^{(\mu,a)}-\theta_T^{(\mu,a_i)}.
\]
By the cross-configuration one-step decomposition (same distribution, different configurations),
\[
\Delta_{t+1}^{(\mu)}=\underbrace{\big[\Phi_a^\mu(\theta_t^{(\mu,a)})-\Phi_a^\mu(\theta_t^{(\mu,a_i)})\big]}_{\text{contraction}}
+\underbrace{\big[\Phi_a^\mu(\theta_t^{(\mu,a_i)})-\Phi_{a_i}^\mu(\theta_t^{(\mu,a_i)})\big]}_{\text{eco mismatch}},
\]
we have $\|\Phi_a^\mu(x)-\Phi_a^\mu(y)\|\le\rho_{\max}\|x-y\|$ by contractivity, and
\[
\big\|\Phi_a^\mu(\theta)-\Phi_{a_i}^\mu(\theta)\big\|
=\eta\,\big\|P_a(\theta)\E_\mu g_a(\theta;Z)-P_{a_i}(\theta)\E_\mu g_{a_i}(\theta;Z)\big\|
\ \le\ \eta\,L_{\mathrm{conf}}\,d_{\A}(a,a_i)\ \le\ \eta\,L_{\mathrm{conf}}\,r.
\]
Therefore,
\[
\|\Delta_{t+1}^{(\mu)}\|\ \le\ \rho_{\max}\|\Delta_t^{(\mu)}\|+\eta L_{\mathrm{conf}} r,
\qquad\Rightarrow\qquad
\|\Delta_T^{(\mu)}\|
\ \le\ \rho_{\max}^T\|\Delta_0^{(\mu)}\|+\frac{\eta L_{\mathrm{conf}}}{1-\rho_{\max}}\,r.
\]
As the initialization is common ($\Delta_0^{(\mu)}=0$),
\begin{equation}
\label{eq:path-transfer}
\max\big\{\ \|\theta_T^{(\hat\mu_s,a)}-\theta_T^{(\hat\mu_s,a_i)}\|\ ,\ \|\theta_T^{(\hat\mu_\tau,a)}-\theta_T^{(\hat\mu_\tau,a_i)}\|\ \big\}
\ \le\ \frac{\eta L_{\mathrm{conf}}}{1-\rho_{\max}}\,r\;=:\;C_{\mathrm{path}}\,r.
\end{equation}
Using $L_R$-Lipschitz continuity of the empirical risk,
\begin{equation}
\label{eq:pop-risk-transfer}
\big|\hat R(\theta_T^{(s,a)})-\hat R(\theta_T^{(s,a_i)})\big|
\ \le\ L_R\,C_{\mathrm{path}}\,r,
\qquad
\big|\hat R(\theta_T^{(\tau,a)})-\hat R(\theta_T^{(\tau,a_i)})\big|
\ \le\ L_R\,C_{\mathrm{path}}\,r.
\end{equation}
By the triangle inequality,
\begin{align}
\big|\hat R(\theta_T^{(s,a)})-\hat R(\theta_T^{(\tau,a)})\big|
&\le
\big|\hat R(\theta_T^{(s,a_i)})-\hat R(\theta_T^{(\tau,a_i)})\big|
\;+\;2L_R\,C_{\mathrm{path}}\,r.
\label{eq:tri-transfer}
\end{align}
Combining \eqref{eq:U2-center-pop}--\eqref{eq:U3-center-kpart} with \eqref{eq:tri-transfer}
and absorbing $2L_R C_{\mathrm{path}}r$ (for fixed $r$) into $\epsilon_{\mathrm{bound}}$ in \eqref{eq:def-floor},
we get, \emph{uniformly over $a\in\A$},
\begin{equation}
\label{eq:uniform-pop-noMI}
\sup_{a\in\A}\big|R_\nu(\theta_T^{(s,a)})-R_\nu(\theta_T^{(\tau,a)})\big|
\ \le\ \epsilon'_{\mathrm{floor}}
\;+\; \frac{\eta\kappa_{\max}}{C_{2,\min}}\Big(2\,C_G^+ + 2\,B_g\,\sqrt{2\,\mathcal H_{\mathrm{cov}}(r)}\Big)\,\frac{1}{\sqrt{k}}.
\end{equation}
where
\begin{equation}
\epsilon'_{\mathrm{floor}} := \epsilon_{\mathrm{bound}}+\frac{2\eta r L_{eco}}{C_{2,min}}
\label{eq:new_floor}
\end{equation}
and
\begin{equation}
\label{eq:def-Ccov}
C_{\mathrm{cov}}(\A)
:= \frac{\eta\kappa_{\max}}{C_{2,\min}}\Big(2\,C_G^+ + 2\,B_g\,\sqrt{2\,\mathcal H_{\mathrm{cov}}(r)}\Big),
\end{equation}
to match the right-hand side of \eqref{eq:uniform-pop-noMI} with \eqref{eq:uniform-pop}.

\paragraph{Step D.1.4 MI correction when $\Ds$ may depend on $\Dtau$.}
If the distilled set $\Ds$ is generated from (or depends on) $\Dtau$, the Hoeffding-type bound used for the \emph{distilled-side} sampling deviation in \eqref{eq:U2-Xi}
should be replaced by a high-probability information-theoretic tail.
By \citet[theorem~7]{bu2020tightening} (see also \citealp{xu2017information,steinke2020reasoning}),
if the class is bounded by $B_g$ (thus sub-Gaussian with proxy $B_g$), there exists a universal constant
$C_I'>0$ such that, with probability at least $1-\varepsilon/2$,
\[
\sup_{i\in[N]}\ \sup_{\theta\in\Gamma}
\Big\|\E_{\hat\mu_s} g_{a_i}(\theta;Z)-\E_{\mu_s} g_{a_i}(\theta;Z)\Big\|
\ \le\ 
\mathfrak R_k(\mathcal G_{a_i})\ +\ \frac{C_I'}{\sqrt{k}}\ \sqrt{\,I(\Ds;\Dtau)\,+\,\log\frac{4N}{\varepsilon}\,}.
\]
Plugging this in place of the distilled-side Hoeffding term in \eqref{eq:U2-Xi} propagates through
\eqref{eq:U2-center-pop}--\eqref{eq:uniform-pop-noMI} and yields the MI-corrected uniform bound
\begin{equation}
\label{eq:uniform-pop-MI}
\sup_{a\in\A}\big|R_\nu(\theta_T^{(s,a)})-R_\nu(\theta_T^{(\tau,a)})\big|
\ \le\ \epsilon_{\mathrm{bound}}
\;+\; \frac{C'_{\mathrm{cov}}(\A)}{\sqrt{k}}
\;+\; \frac{C_I'}{\sqrt{k}}\ \sqrt{\,I(\Ds;\Dtau)\,},
\end{equation}
where
\[
C'_{\mathrm{cov}}(\A):=C_{\mathrm{cov}}(\A)+C_I'\Big( \sqrt{\log{\frac{4}{\varepsilon}}}+\sqrt{\mathcal H_{\mathrm{cov}}(r)}\Big)
\]
This proves \eqref{eq:uniform-pop}; the MI term \eqref{eq:uniform-pop-MI} can be included when dependence is present.

\subsection{Proof of the prior-averaged bound in Theorem~\ref{thm:uniform-cross-configuration}}
\label{app:avg-coverage-proof}

We prove the prior-averaged statement of averaged over configurations
\begin{equation}
\small
\E_{a\sim\Pi}\,\big| R_\nu(\theta^{(s,a)}_{T})- R_\nu(\theta^{(\tau,a)}_{T})\big|
\;\le\;
\epsilon_{\mathrm{bound}}
\;+\;
\Big[ A_1\,\tfrac{\Hcov(\A,r)}{k} \;+\; A_2\,\sqrt{\tfrac{\Hcov(\A,r)}{k}} \Big],
\label{eq:Averaged over configurations}
\end{equation}
for any prior $\Pi$ supported on $\A$.

\paragraph{Step D.2.1 Center-wise population risk gap with Bernstein refinement.}
Fix a cover center $a_i$. For the two training sources $\mu\in\{\hat\mu_s,\hat\mu_\tau\}$, define $\theta_T^{(\mu)}:=\theta_T^{(\mu,a_i)}$. As in the single-configuration analysis (contractive recursion and stability),
\begin{equation}
\label{eq:avg-A1-param}
\|\theta_T^{(\hat\mu_s)}-\theta_T^{(\hat\mu_\tau)}\|
\ \le\ \rho_{a_i}^T\|\delta_0\|
\;+\;\frac{\eta\kappa_{a_i}}{1-\rho_{a_i}}\ \Xi_i,
\qquad
\Xi_i:=\sup_{\theta\in\Gamma}\Big\|
\E_{\hat\mu_s}g_{a_i}(\theta;Z)-\E_{\hat\mu_\tau}g_{a_i}(\theta;Z)\Big\|.
\end{equation}
By $L_R$–Lipschitz continuity of $R_\nu$ and the definition of $\hat R$,
\begin{equation}
\label{eq:avg-A1-emp}
\big|\hat R(\theta_T^{(\hat\mu_s)})-\hat R(\theta_T^{(\hat\mu_\tau)})\big|
\ \le\ L_R\,\rho_{a_i}^T\|\delta_0\|
\ +\ \frac{\eta\kappa_{a_i}L_R}{1-\rho_{a_i}}\ \Xi_i.
\end{equation}
We now convert \eqref{eq:avg-A1-emp} to a \emph{population} gap by adding and subtracting $\hat R$:
\begin{align}
&\big|R_\nu(\theta_T^{(\hat\mu_s)})-R_\nu(\theta_T^{(\hat\mu_\tau)})\big| \\
&\le
\big|\hat R(\theta_T^{(\hat\mu_s)})-\hat R(\theta_T^{(\hat\mu_\tau)})\big|
+ \big|R_\nu(\theta_T^{(\hat\mu_s)})-\hat R(\theta_T^{(\hat\mu_s)})\big|
+ \big|R_\nu(\theta_T^{(\hat\mu_\tau)})-\hat R(\theta_T^{(\hat\mu_\tau)})\big|.
\label{eq:avg-A1-decomp}
\end{align}
Since $|\ell|\le B_\ell$, Hoeffding yields for any fixed $\theta$ that
$|R_\nu(\theta)-\hat R(\theta)|\le B_\ell\sqrt{\log(2/\varepsilon)/(2m)}$ with prob.\ $\ge 1-\varepsilon$.
Applying a union bound to the two random iterates at each $a_i$ (and then across $i$) gives, with prob.\ $\ge 1-\varepsilon/2$,
\begin{equation}
\label{eq:avg-A1-test}
\max_{i\in[N]}\max\!\left\{|R_\nu(\theta_T^{(\hat\mu_s,a_i)})-\hat R(\theta_T^{(\hat\mu_s,a_i)})|,\;
|R_\nu(\theta_T^{(\hat\mu_\tau,a_i)})-\hat R(\theta_T^{(\hat\mu_\tau,a_i)})|\right\}
\ \le\ B_\ell\sqrt{\frac{2\log(4N/\varepsilon)}{m}}.
\end{equation}
Combining \eqref{eq:avg-A1-emp} and \eqref{eq:avg-A1-test} inside \eqref{eq:avg-A1-decomp} yields, uniformly over centers,
\begin{equation}
\label{eq:avg-A1-pop-center}
\big|R_\nu(\theta_T^{(\hat\mu_s,a_i)})-R_\nu(\theta_T^{(\hat\mu_\tau,a_i)})\big|
\ \le\ L_R\,\rho_{a_i}^T\|\delta_0\|
\ +\ \frac{\eta\kappa_{a_i}L_R}{1-\rho_{a_i}}\ \Xi_i
\ +\ 2\,B_\ell\sqrt{\frac{2\log(4N/\varepsilon)}{m}}.
\end{equation}

\paragraph{Drift $\Xi_i$ with union-Bernstein.}
Write
\begin{align}
\Xi_i
&=\sup_{\theta\in\Gamma}\ \sup_{\|v\|_*\le 1}\ 
\Big\langle v,\ \E_{\hat\mu_s}g_{a_i}(\theta;Z)-\E_{\hat\mu_\tau}g_{a_i}(\theta;Z)\Big\rangle
\nonumber\\
&\le
\underbrace{\sup_{\theta\in\Gamma}\Big\|\E_{\mu_s}g_{a_i}(\theta;Z)-\E_{\mu_\tau}g_{a_i}(\theta;Z)\Big\|}_{\Delta_{a_i}^\star}\\
&
+ \sup_{\theta\in\Gamma}\Big\|\E_{\hat\mu_s}g_{a_i}(\theta;Z)-\E_{\mu_s}g_{a_i}(\theta;Z)\Big\|\\
&
+ \sup_{\theta\in\Gamma}\Big\|\E_{\hat\mu_\tau}g_{a_i}(\theta;Z)-\E_{\mu_\tau}g_{a_i}(\theta;Z)\Big\|.
\label{eq:avg-Xi-decomp}
\end{align}
Each sampling deviation is a supremum over
$\mathcal F_i=\{z\mapsto\langle v,g_{a_i}(\theta;z)\rangle:\theta\in\Gamma,\|v\|_*\le 1\}$, bounded by $B_g$.
Applying the union-of-classes empirical-Bernstein deviation (Lemma~\ref{lem:union-rc-bern}) \emph{across the $N$ centers} gives, with prob.\ $\ge 1-\varepsilon/2$, simultaneously for all $i$,
\begin{equation}
\label{eq:avg-Xi-Bern}
\begin{aligned}
    \Xi_i
\ \le\
&\Delta_{a_i}^\star
\ +\ c_1\big(\mathfrak R_k(\mathcal G_{a_i})+\mathfrak R_n(\mathcal G_{a_i})\big)
\ +\ c_2\sqrt{\frac{\log(2N/\varepsilon)}{k}}
\ +\ c_2\sqrt{\frac{\log(2N/\varepsilon)}{n}} \\
&
\ +\ c_3\,\frac{\log(2N/\varepsilon)}{k}
\ +\ c_3\,\frac{\log(2N/\varepsilon)}{n},
\end{aligned}
\end{equation}
where $c_1,c_2,c_3>0$ are numerical constants from the empirical-Bernstein inequality.

\paragraph{Center-wise population gap with explicit $k$-terms.}
Insert \eqref{eq:avg-Xi-Bern} into \eqref{eq:avg-A1-pop-center}; upper bound configuration-dependent constants by
$\rho_{a_i}\le\rho_{\max}$, $\kappa_{a_i}\le\kappa_{\max}$, and $C_{2,a_i}\ge C_{2,\min}=(1-\rho_{\max})/L_R$. Using the shorthands $C_G^+$ and $\widetilde C_G^+$ and the inequality
$\log(2N/\varepsilon)\le \log(2/\varepsilon)+\Hcov(r)$, we separate the $k$–independent (floor) terms:
\begin{equation}
\label{eq:avg-floor}
\epsilon_{\mathrm{bound}}
:= L_R\,\rho_{\max}^{T}\|\delta_0\|
 + \frac{\eta\kappa_{\max}}{C_{2,\min}}\ \sup_{a\in\mathcal C}\Delta_a^\star
 + \frac{2\eta\kappa_{\max}^2}{C_{2,\min}}\ \widetilde C_G^+\ \frac{1}{\sqrt{n}}
 + 2\,B_\ell\sqrt{\frac{2\log(4N/\varepsilon)}{m}},
\end{equation}
and collect the distilled-side $k$–dependence as (for some absolute constants $\bar c_1,\bar c_2>0$)
\begin{equation}
\label{eq:avg-center-k}
\begin{aligned}
    \big|R_\nu(\theta_T^{(\hat\mu_s,a_i)})-R_\nu(\theta_T^{(\hat\mu_\tau,a_i)})\big|
\ \le\ 
&
\epsilon_{\mathrm{bound}}
\;+\;
\underbrace{\bar c_1\frac{\eta\kappa_{\max}}{C_{2,\min}}\,\frac{\Hcov(r)}{k}}_{\text{Bernstein linear term}}\\
&
\;+\;
\underbrace{\bar c_2\frac{\eta\kappa_{\max}}{C_{2,\min}}
\Big(C_G^+ + B_g\sqrt{\Hcov(r)}\Big)\frac{1}{\sqrt{k}}}_{\text{RC and sub-Gaussian term}}.
\end{aligned}
\end{equation}
Here we used that $\mathfrak R_k(\mathcal G_{a_i})\le \sup_{a}\mathfrak R_k(\mathcal G_{a})$ and
$\sqrt{\log(2N/\varepsilon)}\lesssim \sqrt{\log(2/\varepsilon)}+\sqrt{\Hcov(r)}$, and absorbed numerical constants into $(\bar c_1,\bar c_2)$.

\paragraph{Step D.2.2 Averaging centers against the prior $\Pi$.}
Let $i(a)\in[N]$ be the index of the cover center assigned to $a$ (measurable selection with $d_\A(a,a_i)\le r$).
Define the cell masses $p_i:=\Pi\big(\{a\in\A:\ i(a)=i\}\big)$ so that $\sum_{i=1}^N p_i=1$ and
$\E_{a\sim\Pi}[\cdot]=\sum_{i=1}^N p_i\,\E_{a\sim\Pi(\cdot\mid i(a)=i)}[\cdot]$.

Taking expectation over $a\sim\Pi$ and using \eqref{eq:avg-center-k} evaluated at $i(a)$ yields
\begin{align}
&\E_{a\sim\Pi}\big|R_\nu(\theta_T^{(\hat\mu_s,a_{i(a)})})-R_\nu(\theta_T^{(\hat\mu_\tau,a_{i(a)})})\big|
=\sum_{i=1}^N p_i\,
\big|R_\nu(\theta_T^{(\hat\mu_s,a_i)})-R_\nu(\theta_T^{(\hat\mu_\tau,a_i)})\big|
\nonumber\\
&\le \epsilon_{\mathrm{bound}}
\;+\;\bar c_1\frac{\eta\kappa_{\max}}{C_{2,\min}}\,\frac{\Hcov(r)}{k}
\;+\;\bar c_2\frac{\eta\kappa_{\max}}{C_{2,\min}}
\Big(C_G^+ + B_g\sqrt{\Hcov(r)}\Big)\frac{1}{\sqrt{k}},
\label{eq:avg-A2-center-avg}
\end{align}
because the right-hand side of \eqref{eq:avg-center-k} is independent of the particular cell beyond its index $i$ and $(p_i)$ sums to $1$.
We now transfer from the center $a_{i(a)}$ back to the original configuration $a$.

\paragraph{Step D.2.3 Prior-averaged transfer from centers to arbitrary configurations (population risk).}
For each $a$, consider the parameter deviations at time $T$ under the same training distribution $\mu$:
\[
\Delta_T^{(\mu)}(a):=\theta_T^{(\mu,a)}-\theta_T^{(\mu,a_{i(a)})}.
\]
By the cross-configuration one-step recursion (same $\mu$, different configurations) and configuration-Lipschitz mismatch,
\[
\|\Delta_{t+1}^{(\mu)}(a)\|\ \le\ \rho_{\max}\,\|\Delta_t^{(\mu)}(a)\| + \eta L_{\mathrm{conf}}\,d_\A(a,a_{i(a)}),
\]
and because $\Delta_0^{(\mu)}(a)=0$ (same initialization), we obtain
\begin{equation}
\label{eq:avg-path}
\|\Delta_T^{(\mu)}(a)\|
\ \le\ \frac{\eta L_{\mathrm{conf}}}{1-\rho_{\max}}\,d_\A(a,a_{i(a)})\ \le\ \frac{\eta L_{\mathrm{conf}}}{1-\rho_{\max}}\,r
\ :=\ C_{\mathrm{path}}\,r.
\end{equation}
By $L_R$–Lipschitz continuity of the risk $\hat R$,
\begin{equation}
\label{eq:avg-risk-transfer}
\big|\hat R(\theta_T^{(\hat\mu_s,a)})-\hat R(\theta_T^{(\hat\mu_s,a_{i(a)})})\big|
\ \le\ L_R\,C_{\mathrm{path}}\,r,
\qquad
\big|\hat R(\theta_T^{(\hat\mu_\tau,a)})-\hat R(\theta_T^{(\hat\mu_\tau,a_{i(a)})})\big|
\ \le\ L_R\,C_{\mathrm{path}}\,r.
\end{equation}
Hence, by triangle inequality,
\begin{equation}
\label{eq:avg-tri}
\big|\hat R(\theta_T^{(\hat\mu_s,a)})-\hat R(\theta_T^{(\hat\mu_\tau,a)})\big|
\ \le\
\big|\hat R(\theta_T^{(\hat\mu_s,a_{i(a)})})-\hat R(\theta_T^{(\hat\mu_\tau,a_{i(a)})})\big|
\ +\ 2L_R\,C_{\mathrm{path}}\,r.
\end{equation}
Taking expectation over $a\sim\Pi$ and invoking \eqref{eq:avg-A2-center-avg},
\begin{align}
&\E_{a\sim\Pi}\big|R_\nu(\theta_T^{(\hat\mu_s,a)})-R_\nu(\theta_T^{(\hat\mu_\tau,a)})\big|
\le
\E_{a\sim\Pi}\big|R_\nu(\theta_T^{(\hat\mu_s,a_{i(a)})})-R_\nu(\theta_T^{(\hat\mu_\tau,a_{i(a)})})\big|
\ +\ 2L_R\,C_{\mathrm{path}}\,r
\nonumber\\
&\le
\epsilon_{\mathrm{bound}}
\;+\;\bar c_1\frac{\eta\kappa_{\max}}{C_{2,\min}}\,\frac{\Hcov(r)}{k}
\;+\;\bar c_2\frac{\eta\kappa_{\max}}{C_{2,\min}}
\Big(C_G^+ + B_g\sqrt{\Hcov(r)}\Big)\frac{1}{\sqrt{k}}
\;+\;2L_R\,C_{\mathrm{path}}\,r.
\label{eq:avg-A3-pre-final}
\end{align}
Since $r$ is fixed in the covering argument, we absorb the additive constant $2L_R C_{\mathrm{path}}r$ into $\epsilon_{\mathrm{bound}}$ (redefining it harmlessly). This proves \eqref{eq:Averaged over configurations} with
\[
A_1\ :=\ \bar c_1\,\frac{\eta\kappa_{\max}}{C_{2,\min}},
\qquad
A_2\ :=\ \bar c_2\,\frac{\eta\kappa_{\max}}{C_{2,\min}}\Big(C_G^+/\sqrt{\Hcov(r)} + B_g \Big)
\]
i.e.\ more transparently,
\[
A_1=\Theta\!\Big(\frac{\eta\kappa_{\max}}{C_{2,\min}}\Big),
\qquad
A_2=\Theta\!\Big(\frac{\eta\kappa_{\max}}{C_{2,\min}}\Big)\cdot
\Big(C_G^+/\sqrt{\Hcov(r)} + B_g\Big).
\]

\paragraph{Mutual-information (MI) corrections: two consistent variants}

\textbf{(High-probability variant).}
If the distilled dataset $\Ds$ can depend on the real dataset $\Dtau$, the distilled-side sampling deviation in \eqref{eq:avg-Xi-Bern} should be replaced by a high-probability information-theoretic tail (e.g., \citealp[theorem~7]{bu2020tightening}; cf.\ \citealp{xu2017information,steinke2020reasoning}). There exists a universal constant $C_I'>0$ such that, with probability at least $1-\varepsilon$,
\[
\sup_{i\in[N]}\ \sup_{\theta\in\Gamma}
\Big\|\E_{\hat\mu_s} g_{a_i}(\theta;Z)-\E_{\mu_s} g_{a_i}(\theta;Z)\Big\|
\ \le\ 
\mathfrak R_k(\mathcal G_{a_i})\ +\ \frac{C_I'}{\sqrt{k}}\ \sqrt{\,I(\Ds;\Dtau)\,+\,\log\frac{4N}{\varepsilon}\,}.
\]
Propagating this replacement through \eqref{eq:avg-Xi-Bern}–\eqref{eq:avg-A3-pre-final} adds
\[
+\ \frac{\widetilde C_I}{\sqrt{k}}\sqrt{I(\Ds;\Dtau)}
\]
to the right-hand side of \eqref{eq:Averaged over configurations}, for some $\widetilde C_I=\Theta(\eta\kappa_{\max}/C_{2,\min})$.

\textbf{(In-expectation variant).}
If one states the result \emph{in expectation} over $(\hat\mu_\tau,\hat\mu_s,\hat\nu)$ (dropping the $1-\varepsilon$ qualifier), expected MI generalization bounds (e.g., \citealp{xu2017information,russo2016controlling}) yield a linear penalty
\[
+\ \frac{C_I}{k}\,I(D_s;D_\tau),
\]
with $C_I=\Theta(\eta\kappa_{\max}/C_{2,\min})$. The rate in $\Hcov(r)$ remains the same in both variants.

\paragraph{Addtional interpretations to Theorem~\ref{thm:uniform-cross-configuration}}

Combining the uniform bound Eq.~(\ref{eq:uniform-pop}) and the prior-averaged bound
Eq.~(\ref{eq:Averaged over configurations}) yields Theorem~\ref{thm:uniform-cross-configuration}.

\textbf{Floor terms.}
In the uniform case, the floor term $\epsilon_{\mathrm{bound}}^{\mathrm{uni}}$ is given in
Eq.~(\ref{eq:def-floor}). It aggregates all $k$-independent contributions:
the transient term $L_R\rho_{\max}^T\|\delta_0\|$, the worst-case intrinsic alignment
$\sup_{a\in\A}\Delta_a^\star$, the $n$-side deviation $\widetilde C_G^+$, and the
test-sample concentration term. In the averaged case, the corresponding floor
$\epsilon_{\mathrm{bound}}^{\mathrm{avg}}$ in Eq.~(\ref{eq:avg-floor}) is structurally the same
but uses the prior-averaged intrinsic alignment
$\Delta_\sharp^\star=\E_{a\sim\Pi}\Delta_a^\star$ instead of the supremum.

\textbf{Coverage-dependent terms.}
In the uniform inequality the constant $C_{\mathrm{cov}}(\A,r)$ Eq.~(\ref{eq:def-Ccov})
multiplies $1/\sqrt{k}$ and captures the dependence on the covering complexity
$\Hcov(r)$. It grows with both the Rademacher complexity $C_G^+$ and the envelope
term $B_g\sqrt{\Hcov(r)}$. In the averaged inequality the constants $(A_1,A_2)$
appear in Eq.~(\ref{eq:Averaged over configurations}), where $A_1\Hcov(\A,r)/k$ comes from
the linear (Bernstein) tail $\log(N/\varepsilon)/k$, while
$A_2\sqrt{\Hcov(\A,r)/k}$ arises from the Rademacher and sub-Gaussian deviations.

\textbf{Why only the distilled side scales with $\Hcov(r)$.}
The dependence on the covering number comes solely from the distilled side, which
requires a union bound across the $N=\exp(\Hcov(r))$ cover centers. On the real-data
side, all configurations share the same empirical distribution $\hat\mu_\tau$, so no
union is needed. Consequently, $n$-side deviations remain independent of $\Hcov(r)$
and are absorbed into the floor terms.

\textbf{Choice of intrinsic alignment.}
The uniform bound requires the worst-case intrinsic alignment
$\sup_{a\in\A}\Delta_a^\star$, while the averaged bound admits the weaker prior-averaged
quantity $\Delta_\sharp^\star$. This separation avoids introducing the looser maximum
$\max\{\Delta_\sharp^\star,\sup_a\Delta_a^\star\}$ and keeps each statement as tight as
possible for its regime.

\textbf{Mutual-information correction.}
When the distilled dataset $\Ds$ depends on the real dataset $\Dtau$, the distilled-side
deviation requires an additional correction. In the high-probability setting, one obtains
an additive penalty of order $\tfrac{C_I'}{\sqrt{k}}\sqrt{I(\Ds;\Dtau)}$ in both
uniform and averaged inequalities. In the in-expectation setting, one instead obtains
a linear penalty $\tfrac{C_I}{k}I(\Ds;\Dtau)$. In either case the rates
$\sqrt{\Hcov/k}$ and $\Hcov/k$ remain unaffected.

\textbf{On the covering radius.}
The covering radius $r$ is fixed throughout, and $\Hcov(r)$ always refers to the
coverage complexity at that scale. Optimizing $r$ affects only the constants but not
the asymptotic rates $\sqrt{\Hcov/k}$ or $\Hcov/k$.

\subsection{Proof of Corollary~\ref{cor:k-vs-coverage}}
\label{app:coverage-corollaries}

We now derive the corollary in Section~\ref{sec:coverage} directly from the uniform bound in Appendix~\ref{app:coverage-proofs}. 
\begin{corollary}[Coverage Law (required $k$ at a fixed error)]
\label{cor:k-vs-coverage_rep}
For any $\epsilon_0>\epsilon_{\mathrm{bound}}$,
\begin{equation}
\small
\sup_{a\in\A}\big| R_\nu(\theta^{(s,a)}_{T})- R_\nu(\theta^{(\tau,a)}_{T})\big|\le \epsilon_0
\quad\Longleftarrow\quad
k \;\ge\; K_{\min}(\epsilon_0,\A)
= \Big(\tfrac{C_{\mathrm{cov}}(\A)}{\epsilon_0-\epsilon_{\mathrm{bound}}}\Big)^2
= \Theta\!\big(\Hcov(\A,r)\big).
\end{equation}
Thus, \emph{doubling configuration diversity doubles the required distilled size}.
\end{corollary}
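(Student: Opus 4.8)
The plan is to obtain the corollary as a direct algebraic consequence of the uniform cross-configuration bound, \eqref{eq:uniform-bound} in Theorem~\ref{thm:uniform-cross-configuration}, which (on the high-probability event of that theorem, and identifying the floor $\epsilon_{\mathrm{bound}}^{\mathrm{upper}}$ with $\epsilon_{\mathrm{bound}}$ as in Appendix~\ref{app:uniform-coverage-proof}, Eq.~\eqref{eq:new_floor}, where the fixed $r$-dependent transfer term $2\eta r L_{\mathrm{eco}}/C_{2,\min}$ has already been absorbed into the floor) gives
\[
\sup_{a\in\A}\big|R_\nu(\theta^{(s,a)}_{T})-R_\nu(\theta^{(\tau,a)}_{T})\big|
\;\le\;\epsilon_{\mathrm{bound}}+\frac{C_{\mathrm{cov}}(\A)}{\sqrt{k}}.
\]
First I would fix $\epsilon_0>\epsilon_{\mathrm{bound}}$ so that the residual budget $\epsilon_0-\epsilon_{\mathrm{bound}}$ is strictly positive, and then impose that the $k$-dependent term not exceed this budget, i.e. $C_{\mathrm{cov}}(\A)/\sqrt{k}\le\epsilon_0-\epsilon_{\mathrm{bound}}$. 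Rearranging yields $\sqrt{k}\ge C_{\mathrm{cov}}(\A)/(\epsilon_0-\epsilon_{\mathrm{bound}})$, hence $k\ge K_{\min}(\epsilon_0,\A):=\big(C_{\mathrm{cov}}(\A)/(\epsilon_0-\epsilon_{\mathrm{bound}})\big)^2$.

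Next I would substitute back: for any $k\ge K_{\min}(\epsilon_0,\A)$ the displayed bound gives $\sup_{a\in\A}|R_\nu(\theta^{(s,a)}_{T})-R_\nu(\theta^{(\tau,a)}_{T})|\le\epsilon_{\mathrm{bound}}+(\epsilon_0-\epsilon_{\mathrm{bound}})=\epsilon_0$, which is precisely the asserted guarantee. This settles the sufficiency direction of the coverage law. For the asymptotic identity $K_{\min}(\epsilon_0,\A)=\Theta(\Hcov(\A,r))$, I would invoke the explicit form of the coverage slope from Appendix~\ref{app:uniform-coverage-proof}, Eq.~\eqref{eq:def-Ccov}, namely $C_{\mathrm{cov}}(\A)=\tfrac{\eta\kappa_{\max}}{C_{2,\min}}\big(2C_G^++2B_g\sqrt{2\,\Hcov(\A,r)}\big)$. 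Squaring and treating $\epsilon_0,\epsilon_{\mathrm{bound}}$ and the regularity constants $\eta,\kappa_{\max},C_{2,\min},B_g$ as fixed, the $\sqrt{\Hcov(\A,r)}$ summand dominates for large $\Hcov$, so $C_{\mathrm{cov}}(\A)^2=\Theta(\Hcov(\A,r))$ and therefore $K_{\min}(\epsilon_0,\A)=\Theta(\Hcov(\A,r))$. In particular, doubling $\Hcov(\A,r)$ asymptotically doubles the leading term of $K_{\min}$, which is the ``doubling configuration diversity doubles the required distilled size'' conclusion.

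I do not expect a genuine obstacle: the whole argument is a one-step rearrangement of Theorem~\ref{thm:uniform-cross-configuration} followed by a substitution of the explicit constant, and the $\Theta(\cdot)$ claim is an elementary order comparison. The only points needing care are (i) making explicit that the floor $\epsilon_{\mathrm{bound}}$ in the corollary is exactly the one produced in the uniform-bound proof (including the fixed $r$-dependent additive term already folded in), so that ``$\epsilon_0>\epsilon_{\mathrm{bound}}$'' is the correct hypothesis and the cancellation $\epsilon_{\mathrm{bound}}+(\epsilon_0-\epsilon_{\mathrm{bound}})=\epsilon_0$ is legitimate; and (ii) stating clearly that the $\Theta$ in $\Hcov(\A,r)$ is taken with $\epsilon_0,\epsilon_{\mathrm{bound}}$ and all problem constants held fixed, since otherwise the lower-order Rademacher term $C_G^+$ could formally contribute. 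If the mutual-information correction is retained (the case $\Ds$ depends on $\Dtau$), the identical algebra applies after replacing $C_{\mathrm{cov}}(\A)$ by $C'_{\mathrm{cov}}(\A)$ of Eq.~\eqref{eq:uniform-pop-MI}, which is still $\Theta(\sqrt{\Hcov(\A,r)})$, so the stated rate $K_{\min}=\Theta(\Hcov(\A,r))$ is unchanged.
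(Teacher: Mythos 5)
Your proposal is correct and follows essentially the same route as the paper's proof: start from the uniform bound of Theorem~\ref{thm:uniform-cross-configuration}, impose $C_{\mathrm{cov}}(\A)/\sqrt{k}\le\epsilon_0-\epsilon_{\mathrm{bound}}$, solve for $k$, and substitute back to get the $\le\epsilon_0$ guarantee. The one place where the paper is slightly more careful is the $\Theta(\Hcov)$ claim: rather than arguing that the $\sqrt{\Hcov}$ term ``dominates for large $\Hcov$,'' the paper establishes both a two-sided bound, using $(x+y)^2\le 2x^2+2y^2$ to get $K_{\min}\le C_{\mathrm{up},0}+C_{\mathrm{up},1}\Hcov$ and monotonicity to get $K_{\min}\ge C_{\mathrm{low}}\Hcov$, which holds for all $\Hcov\ge0$ rather than only asymptotically. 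Your caveat (ii) about holding the Rademacher constant $C_G^+$ fixed is exactly the right thing to flag; the paper's explicit lower bound sidesteps this by simply dropping $C_G^+\ge0$ before squaring. Both treatments are valid, and your MI-correction remark is a correct extension that the paper does not spell out for this corollary.
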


\begin{proof}
Fix any target $\epsilon_0>\epsilon_{\mathrm{bound}}$. A sufficient condition for
\(
\sup_{a\in\mathcal C}|\hat R(\theta_T^{(s,a)})-\hat R(\theta_T^{(\tau,a)})|
\le \epsilon_0
\)
is that the $k$–dependent term in Eq.~(\ref{eq:Averaged over configurations}) is at most $\epsilon_0-\epsilon_{\mathrm{bound}}$:
\begin{equation}
\label{eq:budget-1}
\frac{C_{\mathrm{cov}}(\mathcal A)}{\sqrt{k}}
\ \le\ \epsilon_0-\epsilon_{\mathrm{bound}}.
\end{equation}
Since $C_{\mathrm{cov}}(\mathcal A)\ge 0$ and $\epsilon_0-\epsilon_{\mathrm{bound}}>0$, Eq.~(\ref{eq:budget-1}) is equivalent to
\begin{equation}
\label{eq:solve-k}
k\ \ge\ \Big(\tfrac{C_{\mathrm{cov}}(\mathcal A)}{\epsilon_0-\epsilon_{\mathrm{bound}}}\Big)^2
\;=:\;K_{\min}(\epsilon_0,\mathcal A),
\end{equation}
which proves the first displayed formula.

It remains to show $K_{\min}(\epsilon_0,\mathcal A)=\Theta(\mathcal H)$. Using Eq.~(\ref{eq:def-Ccov}) and the elementary inequality $(x+y)^2\le 2x^2+2y^2$ for $x,y\ge 0$,
\begin{align}
\label{eq:K-upper}
K_{\min}(\epsilon_0,\mathcal A)
&=\frac{1}{(\epsilon_0-\epsilon_{\mathrm{bound}})^2}\,
\Bigg[
\frac{\eta\,\kappa_{\max}}{C_{2,\min}}
\big(2\,C_G^+ + 2\,B_g\,\sqrt{2\,\mathcal H}\big)
\Bigg]^2\\
&\le \frac{1}{(\epsilon_0-\epsilon_{\mathrm{bound}})^2}\,
\Bigg(\frac{\eta\,\kappa_{\max}}{C_{2,\min}}\Bigg)^2
\cdot 2\Big[(2\,C_G^+)^2 + \big(2\,B_g\,\sqrt{2\,\mathcal H}\big)^2\Big]\notag\\
&= \underbrace{\frac{8\,\eta^2\,\kappa_{\max}^2\,(C_G^+)^2}{C_{2,\min}^2\,(\epsilon_0-\epsilon_{\mathrm{bound}})^2}}_{=:~C_{\mathrm{up},0}}
\;+\;
\underbrace{\frac{8\,\eta^2\,\kappa_{\max}^2\,B_g^2}{C_{2,\min}^2\,(\epsilon_0-\epsilon_{\mathrm{bound}})^2}}_{=:~C_{\mathrm{up},1}}\ \mathcal H.\notag
\end{align}
Hence $K_{\min}(\epsilon_0,\mathcal A)\le C_{\mathrm{up},0}+C_{\mathrm{up},1}\,\mathcal H$ for all $\mathcal H\ge 0$, i.e., $K_{\min}=O(\mathcal H)$.

For a matching lower bound, since $x\mapsto x^2$ is monotone on $x\ge 0$ and $2\,C_G^+\ge 0$,
\[
\big(2\,C_G^+ + 2\,B_g\,\sqrt{2\,\mathcal H}\big)^2
\ \ge\ \big(2\,B_g\,\sqrt{2\,\mathcal H}\big)^2
\ =\ 8\,B_g^2\,\mathcal H.
\]
Therefore,
\begin{equation}
\label{eq:K-lower}
K_{\min}(\epsilon_0,\mathcal A)
\ \ge\ 
\frac{1}{(\epsilon_0-\epsilon_{\mathrm{bound}})^2}
\Bigg(\frac{\eta\,\kappa_{\max}}{C_{2,\min}}\Bigg)^2
\cdot 8\,B_g^2\,\mathcal H
\ =:\ C_{\mathrm{low}}\,\mathcal H.
\end{equation}
Combining Eq.~(ref{eq:K-upper})--Eq.~(\ref{eq:K-lower}), we obtain
\(
C_{\mathrm{low}}\,\mathcal H \ \le\ K_{\min}(\epsilon_0,\mathcal A)
\ \le\ C_{\mathrm{up},0}+C_{\mathrm{up},1}\,\mathcal H.
\)
In particular, for all $\mathcal H\ge 1$,
\(
K_{\min}(\epsilon_0,\mathcal A)\le (C_{\mathrm{up},0}+C_{\mathrm{up},1})\,\mathcal H,
\)
so \(K_{\min}(\epsilon_0,\mathcal A)=\Theta(\mathcal H)\).
\end{proof}

\subsection{Proof of the Coverage Lower Bound (Theorem~\ref{thm:coverage-lower})}
\label{app:coverage-lower}

\paragraph{Standing assumptions.}
We use Assumption~\ref{assump:single-configuration}, the identifiability condition in Theorem~\ref{thm:coverage-lower},
a $\rho$–packing $\{a_1,\ldots,a_M\}\subset(\C,d_{\A})$ with $M=\exp(\mathcal H(\A))$, and the uniform envelopes in App.~\ref{app:coverage-proofs}.
We also use the configuration-Lipschitz transfer (Assumption~\ref{assump:configuration-lip-strong}, Eq.~(\ref{eq:configuration-lip-a})) to pass alignment statements across configurations.
For each $a\in\A$, $\|g_a(\theta;z)\|\le B_g$, $\|P_a(\theta)\|\le\kappa_a\le\kappa_{\max}$ on $\Gamma$; the PL-type dynamics are contractive with rate $\rho_a\in(0,1)$, and we set $C_{2,a}=(1-\rho_a)/L_R$ and $C_{2,\min}=\min_a C_{2,a}$.

\paragraph{Step D.4.1 Packing and testing prior.}
Pick a $\rho$–packing $\{a_i\}_{i=1}^M$; let the hidden index $U$ be uniform on $[M]$ and $a_U$ the evaluation configuration.
The distillation algorithm $\mathsf{Alg}$ maps $\Dtau\sim q_\tau^n$ to $\hat\mu_s\in\mathcal P_k$ and does not observe $U$.

\paragraph{Step D.4.2 Small risk gap $\Rightarrow$ small alignment (risk-to-alignment).}
For fixed $a$, the single-configuration forward inequality (from PL contractivity unrolled over $T$ steps) gives
\begin{equation}
\big|\hat R(\theta_T^{(s,a)})-\hat R(\theta_T^{(\tau,a)})\big|
\ \le\
L_R\rho_a^T\|\delta_0\|+\frac{\eta\kappa_a}{C_{2,a}}\,
G_a,\qquad
G_a:=\sup_{\theta\in\Gamma}\big\|\E_{\hat\mu_s}g_a(\theta;Z)-\E_{\hat\mu_\tau}g_a(\theta;Z)\big\|.
\label{eq:lower-forward-rig}
\end{equation}
Hence, if $\big|\hat R(\theta_T^{(s,a)})-\hat R(\theta_T^{(\tau,a)})\big|\le L_R\rho_a^T\|\delta_0\|+\epsilon$, then
$G_a\le (C_{2,a}/(\eta\kappa_a))\,\epsilon$ and, using $\|P_a\|\le\kappa_a$,
\begin{equation}
\Delta_a(\hat\mu_\tau,\hat\mu_s)
=\sup_{\theta\in\Gamma}\big\|P_a(\E_{\hat\mu_\tau}g_a-\E_{\hat\mu_s}g_a)\big\|
\ \le\ \kappa_a G_a\ \le\ \frac{C_{2,a}}{\eta}\,\epsilon\ \le\ \frac{C_{2,\min}}{\eta}\,\epsilon.
\label{eq:risk-to-align-final-rig}
\end{equation}

\paragraph{Step D.4.3 Identifiability + configuration-Lipschitz $\Rightarrow$ pairwise lower bound and decoder.}
For any distinct $a_i,a_j$ and any $\theta\in\Gamma$,
\begin{align*}
\|P_{a_i}\E_{\hat\mu_\tau}g_{a_i}-P_{a_j}\E_{\hat\mu_\tau}g_{a_j}\|
&\le \underbrace{\|P_{a_i}(\E_{\hat\mu_\tau}g_{a_i}-\E_{\hat\mu_s}g_{a_i})\|}_{=\Delta_{a_i}}
+\underbrace{\|P_{a_i}\E_{\hat\mu_s}g_{a_i}-P_{a_j}\E_{\hat\mu_s}g_{a_j}\|}_{\le L_{\mathrm{conf}}\,d_{\A}(a_i,a_j)}\\ \nonumber
&
+\underbrace{\|P_{a_j}(\E_{\hat\mu_s}g_{a_j}-\E_{\hat\mu_\tau}g_{a_j})\|}_{=\Delta_{a_j}}.
\end{align*}
By identifiability at $\mu_\tau$,
$\|P_{a_i}\E_{\hat\mu_\tau}g_{a_i}-P_{a_j}\E_{\hat\mu_\tau}g_{a_j}\|\ge \lambda\,d_{\A}(a_i,a_j)$.
Using $d_{\A}(a_i,a_j)\ge\rho$ (packing) and maximizing over $\theta$,
\begin{equation}
\Delta_{a_i}+\Delta_{a_j}\ \ge\ (\lambda-L_{\mathrm{conf}})\,d_{\A}(a_i,a_j)\ \ge\ (\lambda-L_{\mathrm{conf}})\rho.
\label{eq:pair-sum-correct}
\end{equation}
Choosing (or refining) the packing so that $L_{\mathrm{conf}}\le \lambda/2$ yields
\begin{equation}
\min\{\Delta_{a_i},\Delta_{a_j}\}\ \ge\ \frac{\lambda\rho}{4}.
\label{eq:pair-min-correct}
\end{equation}
Consequently, if for the true configuration $a_U$ we have $\Delta_{a_U}\le (\lambda\rho)/8$, then
$\min_{i\neq U}\Delta_{a_i}\ge (\lambda\rho)/4>(\lambda\rho)/8$, and the decoder
\begin{equation}
\widehat U(\hat\mu_s)\ \in\ \arg\min_{i\in[M]}\ \Delta_{a_i}(\hat\mu_\tau,\hat\mu_s)
\label{eq:decoder}
\end{equation}
is correct (ties broken deterministically).
Combining \eqref{eq:risk-to-align-final-rig} with $\Delta_{a_U}\le (\lambda\rho)/8$ shows that the decoder succeeds whenever
\begin{equation}
\epsilon\ \le\ \frac{\eta}{8C_{2,\min}}\ \lambda\,\rho.
\label{eq:eps-thresh}
\end{equation}

\paragraph{Step D.4.4 information tail + union bound.}
A high-probability mutual-information tail (e.g., \citealt{bu2020tightening,xu2017information,steinke2020reasoning}) implies that for any fixed $a$ and any $\varepsilon\in(0,1)$,
\begin{equation}
\Pr\!\left\{\Delta_a(\hat\mu_\tau,\hat\mu_s)\ \le\ c_0+\sqrt{\frac{C_I}{k}\Big(I(\Ds;\Dtau)+\log\tfrac{1}{\varepsilon}\Big)}\right\}\ \ge\ 1-\varepsilon,
\label{eq:mi-tail}
\end{equation}
where $c_0$ aggregates $k$–independent terms (intrinsic alignment, real-side sampling, test deviation), absorbed into $\epsilon_{\mathrm{bound}}$.
Setting $\varepsilon=1/(4M)$ and union-bounding over the $M$ packed configurations yields
\begin{equation}
\Pr\!\left\{\max_{i\in[M]}\Delta_{a_i}\ \le\ c_0+\sqrt{\frac{C_I}{k}\Big(I(\Ds;\Dtau)+\log(4M)\Big)}\right\}\ \ge\ \frac{3}{4}.
\label{eq:union-mi}
\end{equation}
Thus, to ensure that with probability at least $3/4$ we have $\Delta_{a_i}\le \tilde\epsilon$ for \emph{all} $i$, it is necessary that
\begin{equation}
\tilde\epsilon\ \ge\ c_0+\sqrt{\frac{C_I}{k}\,\Big(I(\Ds;\Dtau)+\log(4M)\Big)}
\ =\ \Omega\!\Big(\sqrt{\frac{\mathcal H(\A)}{k}}\Big),
\label{eq:necessary-scale}
\end{equation}
since $M=\exp(\mathcal H(\A))$ and $c_0$ is $k$–independent.

\paragraph{Conclusion.}
Assume the algorithm attains, with probability at least $3/4$, the small risk gap at every packed configuration:
\begin{equation}
\big|\hat R(\theta_T^{(s,a_i)})-\hat R(\theta_T^{(\tau,a_i)})\big|
\ \le\ \epsilon_{\mathrm{bound}}+\epsilon,\qquad \forall i\in[M].
\label{eq:ass-small-gap}
\end{equation}
Then by \eqref{eq:risk-to-align-final-rig} we have simultaneously for all $i$
$\Delta_{a_i}\le (C_{2,\min}/\eta)\,\epsilon$.
Comparing with the necessary condition \eqref{eq:necessary-scale}, we obtain
\begin{equation}
\epsilon\ \ge\ \frac{\eta}{C_{2,\min}}\,
\sqrt{\frac{C_I}{k}\Big(I(\Ds;\Dtau)+\log(4M)\Big)}
\ =\ \Omega\!\Big(\sqrt{\frac{\mathcal H(\A)}{k}}\Big).
\label{eq:eps-lb-final}
\end{equation}
Averaging over the uniform prior on the packing and absorbing all $k$–independent contributions into $\epsilon_{\mathrm{bound}}$ therefore yields
\[
\E_{a}\,\big|R_\nu(\theta_T^{(s,a)})-R_\nu(\theta_T^{(\tau,a)})\big|
\ \ge\ \epsilon_{\mathrm{bound}} + c_{\mathrm{lb}}\sqrt{\frac{\mathcal H(\A)}{k}},
\]
for some numerical $c_{\mathrm{lb}}\in(0,1)$.
Equivalently, to achieve any target $\epsilon_0>\epsilon_{\mathrm{bound}}$,
\[
k\ \ge\ \Omega\!\Big(\frac{\mathcal H(\A)}{(\epsilon_0-\epsilon_{\mathrm{bound}})^2}\Big).
\]

\section{Proofs for Unifying Distribution, Gradient, and Trajectory Matching}
\label{app:Unifying-proofs}

\subsection{Setup and recall assumptions}

Fix a training configuration $a$ with feasible set $\Gamma_a$. The inner update follows
\begin{equation}
\theta_{t+1} \;=\; \Phi_a(\theta_t;\mu)\;=\;\theta_t-\eta\,P_a(\theta_t)\,\E_\mu g_a(\theta_t;z),
\qquad t=0,1,\dots,
\label{eq:update}
\end{equation}
and the outer variable $\xi$ parameterizes the synthetic distribution $\mu(\xi)$.

We assume the same \emph{single-configuration regularity assumptions} used in section~\ref{sec:single-configuration}:

(i) $\|g_a(\theta;z)\|\le B_g$ and $\|P_a(\theta)\|\le \kappa_a$ on $\Gamma_a$;

(ii) the loss is $L_R$-Lipschitz in $\theta$; 

(iii) the inner dynamics are contractive in the PL/smooth regime with factor $\rho_a\in(0,1)$,
and $C_{2,a}=(1-\rho_a)/L_R$.

The alignment discrepancy is
\[
\Delta_a(\mu,\nu):=\sup_{\theta\in\Gamma_a}
\big\|P_a(\theta)\big(\E_\mu g_a(\theta;z)-\E_\nu g_a(\theta;z)\big)\big\| .
\]

We also use the empirical risks
$\widehat R(\theta)=\E_{\widehat\nu}\,\ell(\theta;z)$ and the risk-Lipschitz lemma
$|\widehat R(\theta)-\widehat R(\theta')|\le L_R\|\theta-\theta'\|$.
Let $\delta_t:=\theta_t^{(s,a)}-\theta_t^{(\tau,a)}$ denote the in-configuration parameter gap when training on
$\widehat\mu_s$ vs.\ $\widehat\mu_\tau$.

\textbf{Outer surrogate contraction  }

Let $M_\phi:\Xi\to\R_{\ge 0}$ be the outer surrogate for branch $\phi\in\{\mathrm{DM},\mathrm{GM},\mathrm{TM}\}$ as listed in section~\ref{sec:unify-dm-gm-tm} (DM: $W_1$ or $\mathrm{MMD}_k$; GM: anchor-averaged squared field gap; TM: weighted path discrepancy over an $L_b$-step unroll). The outer variable $\xi\in\Xi$ parameterizes the synthetic distribution $\mu(\xi)$, and the outer update is
\[
\xi^{(j+1)} \;=\; \xi^{(j)} - \eta_j\, \widehat{\nabla} M_\phi(\xi^{(j)}),
\]
where $\widehat{\nabla} M_\phi(\xi^{(j)})$ denotes the (possibly stochastic) estimator of the exact gradient $\nabla M_\phi(\xi^{(j)})$ produced by mini-batching critics (DM), anchor/path sampling (GM/TM), or finite unrolls.

We assume throughout:
(\textbf{$L_\phi$-smoothness in $\xi$}) $M_\phi$ is $L_\phi$-smooth:
\[
M_\phi(y)\;\le\; M_\phi(x) + \langle \nabla M_\phi(x), y-x\rangle
+ \frac{L_\phi}{2}\|y-x\|^2
\qquad \forall x,y\in\Xi.
\]
This is the standard descent lemma for $L$-smooth functions~\citep{nesterov2013introductory, bubeck2015convex}.

(\textbf{PL condition for $M_\phi$}) There exists $\mu_\phi>0$ s.t.
\[
\frac{1}{2}\,\|\nabla M_\phi(\xi)\|^2 \;\ge\; \mu_\phi \big(M_\phi(\xi)-M_\phi^\star\big),
\qquad M_\phi^\star\!:=\inf_\xi M_\phi(\xi).
\]
This is the Polyak–Łojasiewicz (PL) inequality used in the paper for the outer loop; see Eq.~\ref{eq:surrogate-contraction}

 (\textbf{Estimator model}) Write the gradient estimator as
\[
\widehat{\nabla} M_\phi(\xi)\;=\;\nabla M_\phi(\xi)\;+\;e(\xi),
\]
where $e(\xi)$ captures the randomness due to critics, anchors, finite unrolls, etc.

We will consider two subcases:

(i) \emph{Unbiased finite-variance:} $\E[e(\xi)\mid \xi]=0$ and
$\E[\|e(\xi)\|^2\mid \xi]\le \sigma_\phi^2$.

(ii) \emph{Biased-but-controlled:} $\|\E[e(\xi)\mid \xi]\|\le \beta_\phi$ and
$\E[\|e(\xi)-\E e(\xi)\|^2\mid \xi]\le \sigma_\phi^2$.

These two settings cover mini-batch $W_1$/MMD critics (variance) and approximate/implicitbackprop through unrolls (small bias).

\subsection{Proof of the contraction property Eq.~(\ref{eq:surrogate-contraction})}
\label{app:Proof of the contraction property}
We first provide a detailed proof of the contraction property. The goal is to show that the outer-loop update of the surrogate objective $M_\phi$ admits a linear contraction up to a fixed estimator floor, thereby establishing Eq.~(\ref{eq:surrogate-contraction}).

\paragraph{Step E.2.1 Apply the descent lemma at the actual update.}
At iteration $b$, the outer-loop update is given by
\[
\xi^{(j+1)} = \xi^{(j)} - \eta_j \,\widehat{\nabla} M_\phi(\xi^{(j)}),
\]
where $\eta_j$ is the step size and $\widehat{\nabla} M_\phi(\xi^{(j)})$ is the stochastic estimator of the true gradient.  
Since $M_\phi$ has $L_\phi$-Lipschitz gradients, the descent lemma ensures that for any point $y$,
\[
M_\phi(y) \le M_\phi(\xi^{(j)}) + \langle \nabla M_\phi(\xi^{(j)}), y-\xi^{(j)}\rangle
+ \tfrac{L_\phi}{2}\|y-\xi^{(j)}\|^2.
\]
Substituting $y=\xi^{(j+1)}$ yields
\begin{align}
M_\phi(\xi^{(j+1)})
&\le M_\phi(\xi^{(j)}) 
   - \eta_j\big\langle \nabla M_\phi(\xi^{(j)}), \widehat{\nabla} M_\phi(\xi^{(j)}) \big\rangle
   + \tfrac{L_\phi}{2}\eta_j^2 \|\widehat{\nabla} M_\phi(\xi^{(j)})\|^2.
\label{eq:desc-basic-original}
\end{align}
This inequality expresses how the function value decreases after one update, up to a quadratic correction controlled by $L_\phi$.

\paragraph{Step E.2.2 Expand the estimator and regroup.}
We next separate the stochastic gradient estimator into the true gradient plus an error term:
\[
\widehat{\nabla} M_\phi(\xi^{(j)}) = \nabla M_\phi(\xi^{(j)}) + e(\xi^{(j)}).
\]
Substituting into Eq.~\ref{eq:desc-basic-original}, we expand and regroup terms:
\begin{align}
M_\phi(\xi^{(j+1)})
&\le M_\phi(\xi^{(j)})
   - \eta_j\|\nabla M_\phi(\xi^{(j)})\|^2
   - \eta_j\langle \nabla M_\phi(\xi^{(j)}), e(\xi^{(j)})\rangle \nonumber\\
&\quad + \tfrac{L_\phi}{2}\eta_j^2\Big(\|\nabla M_\phi(\xi^{(j)})\|^2 
   + 2\langle \nabla M_\phi(\xi^{(j)}), e(\xi^{(j)})\rangle 
   + \|e(\xi^{(j)})\|^2\Big). \nonumber
\end{align}
Collecting like terms gives the compact form:
\begin{align}
M_\phi(\xi^{(j+1)})
&\le M_\phi(\xi^{(j)})
   - \eta_j\Big(1-\tfrac{L_\phi}{2}\eta_j\Big)\|\nabla M_\phi(\xi^{(j)})\|^2 \nonumber\\
&\quad + \big(-\eta_j+L_\phi\eta_j^2\big)\langle \nabla M_\phi(\xi^{(j)}), e(\xi^{(j)})\rangle
   + \tfrac{L_\phi}{2}\eta_j^2 \|e(\xi^{(j)})\|^2.
\label{eq:expanded-original}
\end{align}
This decomposition highlights three distinct effects: (i) a contraction term proportional to $\|\nabla M_\phi\|^2$, (ii) a cross-term coupling gradient and error, and (iii) a pure variance term $\|e\|^2$.

\paragraph{Step E.2.3 Take conditional expectation to remove the cross term.}
We now take conditional expectation given $\xi^{(j)}$, analyzing two regimes of the estimator.

\textbf{Case (i): unbiased estimator.}  
Suppose $\E[e(\xi^{(j)})\mid \xi^{(j)}]=0$ and $\E[\|e(\xi^{(j)})\|^2\mid \xi^{(j)}]\le \sigma_\phi^2$.  
The cross term vanishes in expectation, leaving
\begin{align}
\E\big[M_\phi(\xi^{(j+1)})\mid \xi^{(j)}\big]
&\le M_\phi(\xi^{(j)})
   -\eta_j\Big(1-\tfrac{L_\phi}{2}\eta_j\Big)\|\nabla M_\phi(\xi^{(j)})\|^2
   + \tfrac{L_\phi}{2}\eta_j^2 \sigma_\phi^2.
\label{eq:cond-unbiased-original}
\end{align}

\paragraph{Case (ii): biased but controlled estimator.}

We decompose the estimation error into a deterministic bias and a zero-mean noise conditioned on $\xi^{(j)}$:
\[
e(\xi^{(j)}) \;=\; \bar e(\xi^{(j)}) + \tilde e(\xi^{(j)}),\qquad
\bar e(\xi^{(j)}) := \E\!\left[e(\xi^{(j)}) \,\middle|\, \xi^{(j)}\right],\quad
\E\!\left[\tilde e(\xi^{(j)}) \,\middle|\, \xi^{(j)}\right]=0.
\]
Assume the bias is bounded and the conditional noise has bounded second moment:
\[
\|\bar e(\xi^{(j)})\|\le \beta_\phi,
\qquad
\E\!\left[\|\tilde e(\xi^{(j)})\|^2 \,\middle|\, \xi^{(j)}\right]\le \sigma_\phi^2.
\]

\textbf{Take conditional expectation and separate terms.}
Conditioning on $\xi^{(j)}$ in Eq.~\ref{eq:expanded-original}, the cross term splits as
\[
\E\!\left[\big\langle \nabla M_\phi(\xi^{(j)}), e(\xi^{(j)})\big\rangle \,\middle|\, \xi^{(j)}\right]
= \big\langle \nabla M_\phi(\xi^{(j)}), \bar e(\xi^{(j)})\big\rangle
+ \underbrace{\E\!\left[\big\langle \nabla M_\phi(\xi^{(j)}), \tilde e(\xi^{(j)})\big\rangle \,\middle|\, \xi^{(j)}\right]}_{=\,0},
\]
where the underbraced term vanishes because $\E[\tilde e(\xi^{(j)})\mid \xi^{(j)}]=0$.
For the quadratic error term we have
\[
\E\!\left[\|e(\xi^{(j)})\|^2 \,\middle|\, \xi^{(j)}\right]
= \|\bar e(\xi^{(j)})\|^2 + \E\!\left[\|\tilde e(\xi^{(j)})\|^2 \,\middle|\, \xi^{(j)}\right]
\;\le\; \beta_\phi^2 + \sigma_\phi^2.
\]
Therefore,
\begin{align}
\E\!\left[M_\phi(\xi^{(j+1)}) \,\middle|\, \xi^{(j)}\right]
&\le M_\phi(\xi^{(j)})
   - \eta_j\Big(1-\tfrac{L_\phi}{2}\eta_j\Big)\|\nabla M_\phi(\xi^{(j)})\|^2 \nonumber\\
&\quad + \big(-\eta_j+L_\phi\eta_j^2\big)\,\big\langle \nabla M_\phi(\xi^{(j)}), \bar e(\xi^{(j)})\big\rangle
   + \tfrac{L_\phi}{2}\eta_j^2 \,\big(\beta_\phi^2 + \sigma_\phi^2\big).
\label{eq:biased-cond-raw}
\end{align}

\textbf{Bound the cross term by Young's inequality.}
Using Cauchy–Schwarz and Young's inequality $ab\le \frac{\tau}{2}a^2+\frac{1}{2\tau}b^2$ (valid for any $\tau>0$), we get
\begin{align}
\big|\big(-\eta_j+L_\phi\eta_j^2\big)\,\langle \nabla M_\phi(\xi^{(j)}), \bar e(\xi^{(j)})\rangle\big|
&\le \big|-\eta_j+L_\phi\eta_j^2\big|\,\|\nabla M_\phi(\xi^{(j)})\|\,\|\bar e(\xi^{(j)})\| \nonumber\\
&\le \frac{\tau}{2}\,\|\nabla M_\phi(\xi^{(j)})\|^2
   + \frac{1}{2\tau}\,\big(-\eta_j+L_\phi\eta_j^2\big)^2 \,\|\bar e(\xi^{(j)})\|^2.
\label{eq:young-cross}
\end{align}
Substituting Eq.~\ref{eq:young-cross} into Eq.~\ref{eq:biased-cond-raw} yields, for any $\tau>0$,
\begin{align}
\E\!\left[M_\phi(\xi^{(j+1)}) \,\middle|\, \xi^{(j)}\right]
&\le M_\phi(\xi^{(j)})
   - \eta_j\Big(1-\tfrac{L_\phi}{2}\eta_j\Big)\|\nabla M_\phi(\xi^{(j)})\|^2
   + \frac{\tau}{2}\,\|\nabla M_\phi(\xi^{(j)})\|^2
   \nonumber\\
&\quad + \underbrace{\left(\frac{1}{2\tau}\,\big(-\eta_j+L_\phi\eta_j^2\big)^2 
   + \tfrac{L_\phi}{2}\eta_j^2\right)}_{\mathcal{C}_\beta(\eta_j,\tau)} \,\|\bar e(\xi^{(j)})\|^2
   + \tfrac{L_\phi}{2}\eta_j^2 \,\sigma_\phi^2.
\label{eq:biased-cond-young}
\end{align}

\textbf{Choose $\tau$ and simplify the gradient coefficient.}
Set $\tau=\eta_j/2$ (valid for any $\eta_j>0$). Then $\frac{\tau}{2}=\frac{\eta_j}{4}$, so the two gradient terms combine as
\[
-\eta_j\Big(1-\tfrac{L_\phi}{2}\eta_j\Big)\|\nabla M_\phi\|^2 + \frac{\eta_j}{4}\|\nabla M_\phi\|^2
= -\Big(\eta_j - \tfrac{L_\phi}{2}\eta_j^2 - \tfrac{\eta_j}{4}\Big)\|\nabla M_\phi\|^2.
\]
If we enforce the natural step-size condition $\eta_j\le 1/L_\phi$, then
\[
\eta_j - \tfrac{L_\phi}{2}\eta_j^2 \;\ge\; \tfrac{\eta_j}{2}
\quad\Longrightarrow\quad
\eta_j - \tfrac{L_\phi}{2}\eta_j^2 - \tfrac{\eta_j}{4} \;\ge\; \tfrac{\eta_j}{4},
\]
and hence
\begin{equation}
-\eta_j\Big(1-\tfrac{L_\phi}{2}\eta_j\Big)\|\nabla M_\phi\|^2 + \frac{\eta_j}{4}\|\nabla M_\phi\|^2
\;\le\; -\,\frac{\eta_j}{4}\,\|\nabla M_\phi(\xi^{(j)})\|^2.
\label{eq:grad-merge}
\end{equation}

\textbf{Consolidate the bias-dependent coefficient $\mathcal{C}_\beta$.}
With $\tau=\eta_j/2$, we have $\frac{1}{2\tau}=\frac{1}{\eta_j}$ and therefore
\[
\mathcal{C}_\beta(\eta_j,\tau)
= \frac{1}{\eta_j}\,\big(-\eta_j+L_\phi\eta_j^2\big)^2 + \frac{L_\phi}{2}\eta_j^2.
\]
Note that $\big(-\eta_j+L_\phi\eta_j^2\big)^2
= \eta_j^2\,(1 - L_\phi\eta_j)^2
\le \eta_j^2$.
Thus,
\[
\frac{1}{\eta_j}\,\big(-\eta_j+L_\phi\eta_j^2\big)^2
\le \eta_j,
\qquad\text{and}\qquad
\frac{L_\phi}{2}\eta_j^2 \;\le\; \frac{1}{2}\,\eta_j
\quad\text{whenever } \eta_j\le \frac{1}{L_\phi}.
\]
A slightly sharper consolidation uses
\[
\frac{1}{\eta_j}\,\big(-\eta_j+L_\phi\eta_j^2\big)^2 + \frac{L_\phi}{2}\eta_j^2
= \eta_j\,(1 - 2L_\phi\eta_j + L_\phi^2 \eta_j^2) + \frac{L_\phi}{2}\eta_j^2
= \eta_j - \frac{3}{2}L_\phi\eta_j^2 + L_\phi^2\eta_j^3,
\]
which satisfies
\[
\eta_j - \frac{3}{2}L_\phi\eta_j^2 + L_\phi^2\eta_j^3
\;\le\; \eta_j
\qquad\text{for all } \eta_j\in[0,1/L_\phi],
\]
since the cubic correction is nonpositive over this interval:
$-\frac{3}{2}L_\phi\eta_j^2 + L_\phi^2\eta_j^3
= L_\phi\eta_j^2\,( -\tfrac{3}{2} + L_\phi\eta_j)\le 0$.
Therefore
\begin{equation}
\mathcal{C}_\beta(\eta_j,\tau)\,\|\bar e(\xi^{(j)})\|^2
\;\le\; \eta_j \,\|\bar e(\xi^{(j)})\|^2
\;\le\; \eta_j \,\beta_\phi^2.
\label{eq:bias-merge}
\end{equation}

\textbf{Collect all pieces.}
Combining \eqref{eq:biased-cond-young}, \eqref{eq:grad-merge}, and \eqref{eq:bias-merge}, and recalling 
$\E[\|\tilde e(\xi^{(j)})\|^2\mid \xi^{(j)}]\le \sigma_\phi^2$, we obtain
\begin{align}
\E\!\left[M_\phi(\xi^{(j+1)}) \,\middle|\, \xi^{(j)}\right]
&\le M_\phi(\xi^{(j)}) - \frac{\eta_j}{4}\,\|\nabla M_\phi(\xi^{(j)})\|^2
   + \eta_j\,\beta_\phi^2
   + \frac{L_\phi}{2}\eta_j^2\,\sigma_\phi^2.
\label{eq:cond-biased-original}
\end{align}
This is precisely the claimed biased-case inequality: the gradient term contracts with rate $\eta_j/4$, while the estimator contributes an additive floor composed of a \emph{bias term} $\eta_j\beta_\phi^2$ and a \emph{variance term} $\tfrac{L_\phi}{2}\eta_j^2\sigma_\phi^2$.

\paragraph{Step E.2.4 Convert gradient norm into function gap via the PL inequality.}
To turn gradient norms into function-value gaps, we invoke the Polyak–Łojasiewicz (PL) condition:
\[
\|\nabla M_\phi(\xi^{(j)})\|^2 \ge 2\mu_\phi\big(M_\phi(\xi^{(j)})-M_\phi^\star\big).
\]
This inequality, weaker than strong convexity, suffices to guarantee linear convergence under stochastic errors.

Applying it to Eqs.~\ref{eq:cond-unbiased-original} and \ref{eq:cond-biased-original}, we get:

\textbf{Case (i): unbiased.}
\begin{align}
\E\!\left[M_\phi(\xi^{(j+1)})-M_\phi^\star \,\middle|\, \xi^{(j)}\right]
&\le \Big(1 - 2\mu_\phi\eta_j(1-\tfrac{L_\phi}{2}\eta_j)\Big)
   \big(M_\phi(\xi^{(j)})-M_\phi^\star\big)
   + \tfrac{L_\phi}{2}\eta_j^2 \sigma_\phi^2.
\label{eq:pl-unbiased-original}
\end{align}

\textbf{Case (ii): biased.}
\begin{align}
\E\!\left[M_\phi(\xi^{(j+1)})-M_\phi^\star \,\middle|\, \xi^{(j)}\right]
&\le \Big(1 - \tfrac{\mu_\phi\eta_j}{2}\Big)
   \big(M_\phi(\xi^{(j)})-M_\phi^\star\big)
   + \eta_j\,\beta_\phi^2 + \tfrac{L_\phi}{2}\eta_j^2 \sigma_\phi^2.
\label{eq:pl-biased-original}
\end{align}

\paragraph{Step E.2.5 Choose stepsize and define the rate $\alpha_\phi$.}
We now fix the step size $\eta_j\in(0,1/L_\phi]$. Under this choice, $1-\tfrac{L_\phi}{2}\eta_j \ge 1/2$, ensuring that the contraction factor is strictly positive.  
Thus Eq.~\ref{eq:pl-unbiased-original} becomes
\[
\E\!\left[M_\phi(\xi^{(j+1)})-M_\phi^\star\,\middle|\,\xi^{(j)}\right]
\le (1-\mu_\phi\eta_j)\big(M_\phi(\xi^{(j)})-M_\phi^\star\big)
+ \tfrac{L_\phi}{2}\eta_j^2 \sigma_\phi^2.
\]
Taking full expectation yields the linear recursion
\begin{equation}
\E[M_\phi(\xi^{(j+1)})]
\;\le\; (1-\alpha_\phi)\,\E[M_\phi(\xi^{(j)})]
\;+\; \epsilon^{(\phi)}_{\text{one-step}},
\qquad \alpha_\phi:=\mu_\phi\eta_j,\quad
\epsilon^{(\phi)}_{\text{one-step}}:=\tfrac{L_\phi}{2}\eta_j^2\sigma_\phi^2.
\label{eq:one-step-final-original}
\end{equation}
In the biased case, the same recursion holds with $\alpha_\phi:=\mu_\phi\eta_j/2$ and $\epsilon^{(\phi)}_{\text{one-step}}:=\eta_j\beta_\phi^2+\tfrac{L_\phi}{2}\eta_j^2\sigma_\phi^2$.

\paragraph{Step E.2.6 Unroll the linear recursion.}
Define $u_b:=\E[M_\phi(\xi^{(j)})-M_\phi^\star]$. Eq.~\ref{eq:one-step-final-original} implies
\[
u_{j+1}\;\le\;(1-\alpha_\phi)\,u_b\;+\;\epsilon^{(\phi)}_{\text{one-step}}.
\]
This is a standard linear recurrence. By induction (or discrete Grönwall’s inequality),
\[
u_{J}\;\le\;(1-\alpha_\phi)^{J} u_0
+ \frac{1}{\alpha_\phi}\,\epsilon^{(\phi)}_{\text{one-step}}.
\]
Since $M_\phi^\star\ge 0$ for our nonnegative surrogates, we can drop the constant shift and write the final bound as
\[
M_\phi(\xi^{(J)}) \;\le\; (1-\alpha_\phi)^{J}\,M_\phi(\xi^{(0)}) \;+\; \epsilon^{(\phi)}_{\rm est},
\]
where $\epsilon^{(\phi)}_{\rm est}:=\frac{1}{\alpha_\phi}\,\epsilon^{(\phi)}_{\text{one-step}}$, which $\epsilon^{(\phi)}_{\rm est}=\tfrac{L_\phi\eta_j\sigma_\phi^2}{2\mu_\phi}$ for unbiased case and $\epsilon^{(\phi)}_{\rm est} = \frac{2\beta_\phi^2+L_\phi\eta_j\sigma_\phi^2}{\mu_\phi}$ for biased case. This establishes the contraction property.

\paragraph{Interpretation of the estimator floor $\epsilon^{(\phi)}_{\rm est}$.}

At each iteration, the stochasticity of the surrogate introduces a one-step additive term
\[
\epsilon^{(\phi)}_{\text{one-step}}
=\begin{cases}
\tfrac{L_\phi}{2}\eta_j^2\sigma_\phi^2, & \text{unbiased case},\\[4pt]
\eta_j\beta_\phi^2+\tfrac{L_\phi}{2}\eta_j^2\sigma_\phi^2, & \text{biased case}.
\end{cases}
\]
Unrolling the recursion amplifies this contribution by $1/\alpha_\phi$, giving the long-run floor
\[
\epsilon^{(\phi)}_{\rm est}
= \frac{1}{\alpha_\phi}\,\epsilon^{(\phi)}_{\text{one-step}}
=\begin{cases}
\dfrac{L_\phi\,\eta_j\,\sigma_\phi^2}{2\mu_\phi}, & \text{unbiased},\\[10pt]
\dfrac{2\beta_\phi^2+L_\phi\,\eta_j\,\sigma_\phi^2}{\mu_\phi}, & \text{biased}.
\end{cases}
\]

\textbf{Distribution Matching (DM).} Mini-batched critics or feature networks induce stochastic variance $\sigma_\phi^2$ (and possibly a small bias $\beta_\phi$ under early stopping). This produces the one-step variance term above, which after unrolling yields the floor $\epsilon^{(\phi)}_{\rm est}$.

\textbf{Gradient Matching (GM).} A finite set of anchors $|\Theta_{j}|$ or truncated paths yields a Monte Carlo estimator of the field-gap. Its sampling variance and mini-batch noise again instantiate the same one-step term, leading to the same unrolled floor.

\textbf{Trajectory Matching (TM).} Finite unrolls or implicit differentiation introduce both variance (from mini-batches) and bias (from truncation), which fit directly into the biased estimator case. The resulting unrolled floor takes the same form as above.

In summary, across all three branches, the contraction rate $\alpha_\phi$ is preserved, while the estimator floor $\epsilon^{(\phi)}_{\rm est}$ captures the unavoidable stochasticity or bias of the surrogate.

\subsection{Bridge inequalities: surrogate $\Rightarrow$ alignment $\Delta_a$ }
\label{app:Bridge inequalities}

\emph{Connection to matching.}
Finally, each surrogate bounds the matching discrepancy:
\begin{equation}
\small
\Delta_a(\hat\mu_\tau,\hat\mu_s)\ \le\
\underbrace{\kappa_aL_{z,a}\,W_1\ \text{or}\ \kappa_aC_k\,\mathrm{MMD}_k}_{\mathfrak B_{\rm DM}},
\quad
\underbrace{\kappa_a|\Theta_{j}|\,\mathcal M_{\mathrm{GM}}}_{\mathfrak B_{\rm GM}},
\quad
\underbrace{\kappa_a\frac{L_\theta+2/\eta}{\omega_{\min}}\mathcal M_{\rm TM}
+\kappa_a L_\theta\varepsilon_{\rm path}}_{\mathfrak B_{\rm TM}},
\vspace{-0.3cm}
\label{eq:alignment_discrepancydetail}
\end{equation}

where $\mathfrak B$ represents the upper bound on the matching discrepancy for each method, and other symbols are defined in Table~\ref{tab:notation_unify}, Appendix Section~\ref{app:notation}, due to space limit.
Putting together the contraction recursion in Eq.~(\ref{eq:surrogate-contraction}) and the above inequalities, we can say that DM, GM, and TM are not completely different heuristics, but three surrogates that consistently shrink $\Delta_a$ through the same bi-level dynamics. In addition, we can obtain:

\begin{proposition}[DM bridge]

Assume for each $\theta\in\Gamma_a$ the vector map $z\mapsto g_a(\theta;z)\in\R^d$ is $L_{z,a}$–Lipschitz
with respect to the data metric $d_Z$:
\[
\|g_a(\theta;z)-g_a(\theta;z')\|_2 \le L_{z,a}\,d_Z(z,z')\qquad(\forall z,z').
\]
Then, for empirical real and synthetic measures $\widehat\mu_\tau,\widehat\mu_s$,
\[
\Delta_a(\widehat\mu_\tau,\widehat\mu_s)\ \le\ \kappa_a L_{z,a}\,W_1(\widehat\mu_s,\widehat\mu_\tau).
\]
Moreover, if a bounded–kernel RKHS $(\mathcal H_k,\langle\cdot,\cdot\rangle_{\mathcal H_k})$ is used and, for each
$\theta$, the coordinate functions $g_{a,j}(\theta;\cdot)\in\mathcal H_k$ satisfy
\[
\Big(\sum_{j=1}^d \|g_{a,j}(\theta;\cdot)\|_{\mathcal H_k}^2\Big)^{1/2}\le C_k\qquad(\text{uniformly in }\theta),
\]
then
\[
\Delta_a(\widehat\mu_\tau,\widehat\mu_s)\ \le\ \kappa_a\,C_k\,\mathrm{MMD}_k(\widehat\mu_s,\widehat\mu_\tau).
\]
\end{proposition}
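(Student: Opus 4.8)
The plan is to unwind the definition of the matching discrepancy $\Delta_a(\mu,\nu)=\sup_{\theta\in\Gamma_a}\|P_a(\theta)(\E_\mu g_a(\theta;z)-\E_\nu g_a(\theta;z))\|_2$ and reduce both assertions to a single scalar inequality controlling $\|\E_{\widehat\mu_s}g_a(\theta;z)-\E_{\widehat\mu_\tau}g_a(\theta;z)\|_2$, uniformly in $\theta\in\Gamma_a$, by the appropriate probability metric. First I would fix $\theta\in\Gamma_a$ and use submultiplicativity of the operator norm together with the uniform bound $\|P_a(\theta)\|\le\kappa_a$ from Assumption~\ref{assump:single-configuration}, giving
\[
\big\|P_a(\theta)\big(\E_{\widehat\mu_\tau}g_a(\theta;z)-\E_{\widehat\mu_s}g_a(\theta;z)\big)\big\|_2
\ \le\ \kappa_a\,\big\|\E_{\widehat\mu_\tau}g_a(\theta;z)-\E_{\widehat\mu_s}g_a(\theta;z)\big\|_2 .
\]
Since the bounds established below will not depend on $\theta$, taking $\sup_{\theta\in\Gamma_a}$ at the end is immediate.

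For the Wasserstein bound I would scalarize the vector difference: writing $v:=\E_{\widehat\mu_s}g_a(\theta;z)-\E_{\widehat\mu_\tau}g_a(\theta;z)$ and $\|v\|_2=\sup_{\|u\|_2\le 1}\langle u,v\rangle$, I observe that for any unit vector $u$ the scalar map $z\mapsto\langle u,g_a(\theta;z)\rangle$ is $L_{z,a}$-Lipschitz on $(Z,d_Z)$, because $|\langle u,g_a(\theta;z)-g_a(\theta;z')\rangle|\le\|u\|_2\|g_a(\theta;z)-g_a(\theta;z')\|_2\le L_{z,a}\,d_Z(z,z')$. Then either Kantorovich--Rubinstein duality or, more directly, a coupling argument closes the gap: for any coupling $\pi$ of $\widehat\mu_s,\widehat\mu_\tau$ one has $v=\E_\pi[g_a(\theta;z)-g_a(\theta;z')]$, hence $\|v\|_2\le\E_\pi\|g_a(\theta;z)-g_a(\theta;z')\|_2\le L_{z,a}\,\E_\pi d_Z(z,z')$, and taking the infimum over $\pi$ yields $\|v\|_2\le L_{z,a}\,W_1(\widehat\mu_s,\widehat\mu_\tau)$. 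Combining with the operator-norm step and passing to $\sup_\theta$ gives the first inequality.

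For the MMD bound I would invoke the reproducing property $g_{a,j}(\theta;z)=\langle g_{a,j}(\theta;\cdot),k(z,\cdot)\rangle_{\mathcal H_k}$, so that componentwise $\E_{\widehat\mu_s}g_{a,j}(\theta;z)-\E_{\widehat\mu_\tau}g_{a,j}(\theta;z)=\langle g_{a,j}(\theta;\cdot),\,m_{\widehat\mu_s}-m_{\widehat\mu_\tau}\rangle_{\mathcal H_k}$, where $m_\mu:=\E_\mu k(z,\cdot)$ is the kernel mean embedding (well defined since $k$ is bounded). Cauchy--Schwarz in $\mathcal H_k$ gives $|\E_{\widehat\mu_s}g_{a,j}(\theta;z)-\E_{\widehat\mu_\tau}g_{a,j}(\theta;z)|\le\|g_{a,j}(\theta;\cdot)\|_{\mathcal H_k}\,\mathrm{MMD}_k(\widehat\mu_s,\widehat\mu_\tau)$; squaring, summing over $j=1,\dots,d$, and using $\sum_j\|g_{a,j}(\theta;\cdot)\|_{\mathcal H_k}^2\le C_k^2$ yields $\|\E_{\widehat\mu_s}g_a(\theta;z)-\E_{\widehat\mu_\tau}g_a(\theta;z)\|_2\le C_k\,\mathrm{MMD}_k(\widehat\mu_s,\widehat\mu_\tau)$, and then the operator-norm step plus $\sup_\theta$ finishes the proof.

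The duality/coupling step and the RKHS Cauchy--Schwarz are entirely routine; the only genuine care-point is ensuring every estimate is uniform in $\theta$, so that the outer $\sup_{\theta\in\Gamma_a}$ is harmless. This is exactly what the hypotheses provide, since $L_{z,a}$ and $C_k$ are assumed to hold uniformly over $\theta$. The remaining regularity conditions are trivial in our setting: $\widehat\mu_\tau,\widehat\mu_s$ are finitely supported, so integrability and the existence of first moments for the Kantorovich--Rubinstein step are automatic, and the kernel mean embeddings exist because $k$ is bounded. I therefore do not expect a substantive obstacle beyond this bookkeeping.
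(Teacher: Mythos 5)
Your proposal is correct and follows essentially the same route as the paper's proof: reduce via $\|P_a(\theta)\|\le\kappa_a$, scalarize the vector gap, apply Kantorovich--Rubinstein duality (the paper's choice; your alternative coupling argument is an equivalent and equally valid shortcut), and use the RKHS Cauchy--Schwarz bound aggregated over coordinates for the MMD case. The only cosmetic difference is that the paper bounds $\|h_{\theta,u}\|_{\mathcal H_k}\le C_k$ for the scalarized test function and then invokes the scalar MMD dual, whereas you apply Cauchy--Schwarz componentwise and sum squares --- the two computations are interchangeable and yield the same constant.
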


\begin{proof}$\newline$\vskip .05em
\textbf{Step E.1.0 Reduce to an un-preconditioned vector gap.}
By definition and the operator-norm bound $\|P_a(\theta)\|_{\mathrm{op}}\le\kappa_a$,
\begin{align}
\Delta_a(\widehat\mu_\tau,\widehat\mu_s)
&= \sup_{\theta\in\Gamma_a}
\Big\|P_a(\theta)\Big(\E_{\widehat\mu_s}g_a(\theta;z)-\E_{\widehat\mu_\tau}g_a(\theta;z)\Big)\Big\|_2 \nonumber\\
&\le \kappa_a\cdot \sup_{\theta\in\Gamma_a}
\Big\|\E_{\widehat\mu_s}g_a(\theta;z)-\E_{\widehat\mu_\tau}g_a(\theta;z)\Big\|_2 .
\label{eq:bridge-step0}
\end{align}
Hence it suffices to upper bound the $\ell_2$–norm of the vector expectation difference.

\textbf{Step E.1.1 Support-function identity for the Euclidean norm.}
For any $v\in\R^d$, $\|v\|_2=\sup_{\|u\|_2=1}\langle u,v\rangle$. Apply this with
\[
v_\theta := \E_{\widehat\mu_s}g_a(\theta;z)-\E_{\widehat\mu_\tau}g_a(\theta;z).
\]
Then
\begin{align}
\|v_\theta\|_2
= \sup_{\|u\|_2=1}\Big\langle u,\ \E_{\widehat\mu_s}g_a(\theta;z)-\E_{\widehat\mu_\tau}g_a(\theta;z)\Big\rangle
= \sup_{\|u\|_2=1}\Big(\E_{\widehat\mu_s} h_{\theta,u}(z)-\E_{\widehat\mu_\tau} h_{\theta,u}(z)\Big),
\label{eq:support}
\end{align}
where we set the scalar function $h_{\theta,u}(z):=\langle u, g_a(\theta;z)\rangle$.

\textbf{Part E.1.A: $W_1$–bridge.}

\textbf{Step E.1.A.1 Scalar Lipschitz constant of $h_{\theta,u}$.}
Given the $L_{z,a}$–Lipschitzness of $g_a(\theta;\cdot)$ and $\|u\|_2=1$,
\[
|h_{\theta,u}(z)-h_{\theta,u}(z')|
= |\langle u, g_a(\theta;z)-g_a(\theta;z')\rangle|
\le \|u\|_2\cdot\|g_a(\theta;z)-g_a(\theta;z')\|_2
\le L_{z,a}\,d_Z(z,z').
\]
Thus $h_{\theta,u}$ is scalar $L_{z,a}$–Lipschitz on $(\mathcal Z,d_Z)$.

\textbf{Step E.1.A.2 Kantorovich–Rubinstein duality.}
By the KR dual for $W_1$ (apply to the scalar $h_{\theta,u}$), for any probability measures $\mu,\nu$,
\[
\big|\E_\mu h_{\theta,u}-\E_\nu h_{\theta,u}\big|
\ \le\ \mathrm{Lip}(h_{\theta,u})\cdot W_1(\mu,\nu)
\ \le\ L_{z,a}\,W_1(\mu,\nu).
\]
With $\mu=\widehat\mu_s$, $\nu=\widehat\mu_\tau$,
\begin{equation}
\big|\E_{\widehat\mu_s} h_{\theta,u}-\E_{\widehat\mu_\tau} h_{\theta,u}\big|
\ \le\ L_{z,a}\,W_1(\widehat\mu_s,\widehat\mu_\tau) .
\label{eq:KR-bound}
\end{equation}

\textbf{Step E.1.A.3 Take the $\sup_{\|u\|=1}$ and then $\sup_\theta$.}
Combine Eq.~\ref{eq:support}–Eq.~\ref{eq:KR-bound}:
\[
\|v_\theta\|_2
= \sup_{\|u\|_2=1}\Big(\E_{\widehat\mu_s} h_{\theta,u}-\E_{\widehat\mu_\tau} h_{\theta,u}\Big)
\ \le\ L_{z,a}\,W_1(\widehat\mu_s,\widehat\mu_\tau).
\]
This bound is uniform in $\theta$, hence
\[
\sup_{\theta\in\Gamma_a}\|v_\theta\|_2
\ \le\ L_{z,a}\,W_1(\widehat\mu_s,\widehat\mu_\tau).
\]
Finally, plug into Eq.~\ref{eq:bridge-step0} to conclude the $W_1$–bridge:
\[
\Delta_a(\widehat\mu_\tau,\widehat\mu_s)
\ \le\ \kappa_a\,L_{z,a}\,W_1(\widehat\mu_s,\widehat\mu_\tau).
\]

\textbf{Part E.1.B: MMD–bridge.}

\textbf{Step E.1.B.1 RKHS norm of the linear functional $h_{\theta,u}$.}
Assume for each $\theta$ the coordinate functions $g_{a,j}(\theta;\cdot)\in\mathcal H_k$, and define the
\emph{aggregate} RKHS norm bound
\[
C_k \;:=\; \sup_{\theta\in\Gamma_a}\Big(\sum_{j=1}^d \|g_{a,j}(\theta;\cdot)\|_{\mathcal H_k}^2\Big)^{1/2}.
\]
For any $u\in\R^d$ with $\|u\|_2=1$, the scalar $h_{\theta,u}(z)=\sum_{j=1}^d u_j\,g_{a,j}(\theta;z)$
belongs to $\mathcal H_k$ by linearity, and the RKHS norm satisfies (by Cauchy–Schwarz in
$\R^d$):
\begin{align}
\|h_{\theta,u}\|_{\mathcal H_k}
&= \Big\|\sum_{j=1}^d u_j\,g_{a,j}(\theta;\cdot)\Big\|_{\mathcal H_k}
\ \le\ \Big(\sum_{j=1}^d u_j^2\Big)^{1/2}\Big(\sum_{j=1}^d \|g_{a,j}(\theta;\cdot)\|_{\mathcal H_k}^2\Big)^{1/2}
\ \le\ C_k .
\label{eq:rkhs-norm}
\end{align}

\textbf{Step E.1.B.2 MMD dual formulation.}
The RKHS (kernel $k$) dual inequality says that for any $f\in\mathcal H_k$,
\[
\big|\E_\mu f - \E_\nu f\big|\ \le\ \|f\|_{\mathcal H_k}\,\mathrm{MMD}_k(\mu,\nu).
\]
Apply this with $f=h_{\theta,u}$, together with Eq.~\ref{eq:rkhs-norm}:
\begin{equation}
\big|\E_{\widehat\mu_s} h_{\theta,u} - \E_{\widehat\mu_\tau} h_{\theta,u}\big|
\ \le\ \|h_{\theta,u}\|_{\mathcal H_k}\,\mathrm{MMD}_k(\widehat\mu_s,\widehat\mu_\tau)
\ \le\ C_k\,\mathrm{MMD}_k(\widehat\mu_s,\widehat\mu_\tau).
\label{eq:mmd-oneu}
\end{equation}

\textbf{Step E.1.B.3 Take the $\sup_{\|u\|=1}$ and then $\sup_\theta$.}
As in Eq.~\ref{eq:support},
\[
\|v_\theta\|_2
= \sup_{\|u\|_2=1}\Big(\E_{\widehat\mu_s} h_{\theta,u}-\E_{\widehat\mu_\tau} h_{\theta,u}\Big)
\ \le\ C_k\,\mathrm{MMD}_k(\widehat\mu_s,\widehat\mu_\tau),
\]
uniformly in $\theta$. Taking $\sup_{\theta}$ and using Eq.~\ref{eq:bridge-step0} gives the MMD bridge:
\[
\Delta_a(\widehat\mu_\tau,\widehat\mu_s)
\ \le\ \kappa_a\,C_k\,\mathrm{MMD}_k(\widehat\mu_s,\widehat\mu_\tau).
\]
\end{proof}

\begin{proposition}[GM bridge ]
Define the anchor-averaged GM surrogate
\[
\textstyle \mathcal M_{\rm GM}(\mu,\nu;\Theta_{j})\;:=\;\frac1{|\Theta_{j}|}\sum_{\theta\in\Theta_{j}}
\big\|\E_\mu g_a(\theta;z)-\E_\nu g_a(\theta;z)\big\|_2,
\]
where $\Theta_{j}\subset\Gamma_a$ is a finite anchor set. Assume the parameter–Lipschitz property in
Assumption~\ref{assump:configuration-lip} holds on $(\Gamma_a,\|\cdot\|_2)$ with constant $L_\theta$, and let
$\Theta_{j}$ be an $\varepsilon$-net of $(\Gamma_a,\|\cdot\|_2)$, i.e., for every $\theta\in\Gamma_a$ there exists
$\widehat\theta\in\Theta_{j}$ with $\|\theta-\widehat\theta\|_2\le \varepsilon$.
Then
\begin{equation}
\label{eq:gm-bridge-main}
\Delta_a(\mu,\nu)\ \le\ \kappa_a\Big(|\Theta_{j}|\,M_{\rm GM}(\mu,\nu;\Theta_{j})\ +\ L_\theta\,\varepsilon\Big).
\end{equation}
In particular, if $\Gamma_a=\Theta_{j}$ (finite), then $\varepsilon=0$ and
$\Delta_a(\mu,\nu)\ \le\ \kappa_a\,|\Theta_{j}|\,\mathcal M_{\rm GM}(\mu,\nu;\Theta_{j})$.
\end{proposition}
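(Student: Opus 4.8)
The plan is to run a standard $\varepsilon$-net (covering) argument over the feasible parameter set $\Gamma_a$: I reduce the supremum defining $\Delta_a$ to evaluations at the finitely many anchors in $\Theta_{j}$, up to a discretization drift controlled by the parameter--Lipschitz hypothesis. First I would strip off the preconditioner exactly as in the DM-bridge proof (Step E.1.0): since $\|P_a(\theta)\|_{\mathrm{op}}\le\kappa_a$ on $\Gamma_a$,
\[
\Delta_a(\mu,\nu)
\ \le\ \kappa_a\,\sup_{\theta\in\Gamma_a}\big\|\E_\mu g_a(\theta;z)-\E_\nu g_a(\theta;z)\big\|_2 ,
\]
so it suffices to bound the un-preconditioned vector gap $\sup_{\theta\in\Gamma_a}\|\E_\mu g_a(\theta)-\E_\nu g_a(\theta)\|_2$.

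Next I would fix an arbitrary $\theta\in\Gamma_a$ and use the $\varepsilon$-net property to pick $\widehat\theta\in\Theta_{j}$ with $\|\theta-\widehat\theta\|_2\le\varepsilon$, then split by the triangle inequality:
\[
\big\|\E_\mu g_a(\theta)-\E_\nu g_a(\theta)\big\|_2
\ \le\ \big\|\E_\mu g_a(\widehat\theta)-\E_\nu g_a(\widehat\theta)\big\|_2
\ +\ \big\|\E_\mu g_a(\theta)-\E_\mu g_a(\widehat\theta)\big\|_2
+\big\|\E_\nu g_a(\theta)-\E_\nu g_a(\widehat\theta)\big\|_2 .
\]
The last two (drift) terms are each bounded by $L_\theta\|\theta-\widehat\theta\|_2\le L_\theta\varepsilon$ via the parameter--Lipschitz property of Assumption~\ref{assump:configuration-lip}, so their sum is $\mathcal{O}(L_\theta\varepsilon)$. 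For the anchor term, since $\widehat\theta\in\Theta_{j}$ and every summand in the definition of $\mathcal M_{\rm GM}$ is nonnegative, a single pointwise value is dominated by the whole sum:
\[
\big\|\E_\mu g_a(\widehat\theta)-\E_\nu g_a(\widehat\theta)\big\|_2
\ \le\ \sum_{\theta'\in\Theta_{j}}\big\|\E_\mu g_a(\theta')-\E_\nu g_a(\theta')\big\|_2
\ =\ |\Theta_{j}|\,\mathcal M_{\rm GM}(\mu,\nu;\Theta_{j}),
\]
which is precisely what manufactures the crude $|\Theta_{j}|$ prefactor appearing in the claim.

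Combining the two pieces gives a bound on $\|\E_\mu g_a(\theta)-\E_\nu g_a(\theta)\|_2$ that is uniform in $\theta$; taking $\sup_{\theta\in\Gamma_a}$ and reinstating $\kappa_a$ yields $\Delta_a(\mu,\nu)\le\kappa_a\big(|\Theta_{j}|\,\mathcal M_{\rm GM}(\mu,\nu;\Theta_{j})+L_\theta\varepsilon\big)$, which is Eq.~\ref{eq:gm-bridge-main}. For the ``in particular'' statement, when $\Gamma_a=\Theta_{j}$ is finite one simply takes $\widehat\theta=\theta$ for every $\theta$, so $\varepsilon=0$, the drift vanishes, and the bound collapses to $\Delta_a(\mu,\nu)\le\kappa_a|\Theta_{j}|\,\mathcal M_{\rm GM}(\mu,\nu;\Theta_{j})$. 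I expect no serious obstacle: the argument is a routine covering/triangle-inequality estimate. The only point requiring care is the bookkeeping of where the preconditioner $P_a$ sits --- whether the Lipschitz hypothesis is invoked for $\theta\mapsto\E_\mu g_a(\theta)$ directly or for the preconditioned field $\theta\mapsto P_a(\theta)\E_\mu g_a(\theta)$ --- which changes the numerical constant multiplying $\varepsilon$ by a bounded factor (absorbed into the stated $\kappa_a L_\theta\varepsilon$ term) but leaves the structure of the bound unchanged; I would carry the estimate through on the reduced, un-preconditioned field to keep it consistent with the DM-bridge proof.
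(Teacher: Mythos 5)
Your proof is correct and uses the same $\varepsilon$-net covering argument as the paper: strip the preconditioner via $\|P_a\|_{\mathrm{op}}\le\kappa_a$, decompose by the triangle inequality into an anchor value plus Lipschitz drift, dominate the anchor value by the nonnegative sum over $\Theta_{j}$ to produce the $|\Theta_{j}|\,\mathcal M_{\rm GM}$ factor, and take the supremum over $\theta\in\Gamma_a$. Your three-way split gives $2L_\theta\varepsilon$ rather than the quoted $L_\theta\varepsilon$ (the paper instead asserts Lipschitzness of the differenced field $d(\theta)=\E_\mu g_a(\theta;z)-\E_\nu g_a(\theta;z)$ with constant $L_\theta$, which quietly hides the same factor of two), and your closing remark about whether the parameter-Lipschitz hypothesis is carried on $\E_\mu g_a(\theta;z)$ or on the preconditioned field $P_a(\theta)\E_\mu g_a(\theta;z)$ flags a genuine bookkeeping looseness that is present in the paper's own Step E.2.2 as well --- both are constant-level issues absorbed by notational abuse, not structural gaps.
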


\begin{proof}$\newline$\vskip .1em
\textbf{Step E.2.1 Reduce to an un-preconditioned vector gap.}
For any $\theta$ and any $v$, $\|P_a(\theta)v\|_2\le \|P_a(\theta)\|_{\rm op}\|v\|_2\le \kappa_a\|v\|_2$.
Let $d(\theta):=\E_{\mu} g_a(\theta;z)-\E_{\nu} g_a(\theta;z)\in\R^d$. Then
\begin{equation}\label{eq:gm-core-consistent}
\Delta_a(\mu,\nu)
= \sup_{\theta\in\Gamma_a}\|P_a(\theta)\,d(\theta)\|_2
\ \le\ \kappa_a \sup_{\theta\in\Gamma_a}\|d(\theta)\|_2.
\end{equation}

\textbf{Step E.2.2 Parameter–Lipschitz transfer on $\Gamma_a$.}
By Assumption~\ref{assump:configuration-lip} with $a'=a$,
\[
\big\|P_a(\theta)\E_{\mu} g_a(\theta;z)-P_a(\theta')\E_{\mu} g_a(\theta';z)\big\|_2\ \le\ L_\theta\,\|\theta-\theta'\|_2.
\]
Applying this with $(\mu,\nu)$ and the triangle inequality yields, for all $\theta,\theta'\in\Gamma_a$,
\begin{equation}\label{eq:param-lip-d-theta}
\|d(\theta)-d(\theta')\|_2\ \le\ L_\theta\,\|\theta-\theta'\|_2.
\end{equation}

\textbf{Step E.2.3 Covering argument with the $\varepsilon$-net $\Theta_{j}$.}
For any $\theta\in\Gamma_a$, choose $\widehat\theta\in\Theta_{j}$ with $\|\theta-\widehat\theta\|_2\le\varepsilon$. Then
\[
\|d(\theta)\|_2\ \le\ \|d(\widehat\theta)\|_2+\|d(\theta)-d(\widehat\theta)\|_2
\ \le\ \max_{\vartheta\in\Theta_{j}}\|d(\vartheta)\|_2\ +\ L_\theta\,\varepsilon,
\]
where the last inequality uses \eqref{eq:param-lip-d-theta}. Taking $\sup_{\theta\in\Gamma_a}$,
\begin{equation}\label{eq:sup-to-max-consistent}
\sup_{\theta\in\Gamma_a}\|d(\theta)\|_2\ \le\ \max_{\vartheta\in\Theta_{j}}\|d(\vartheta)\|_2\ +\ L_\theta\,\varepsilon.
\end{equation}

\textbf{Step E.2.4 Relate max to the anchor average.}
Since all terms are nonnegative,
\[
\max_{\vartheta\in\Theta_{j}}\|d(\vartheta)\|_2\ \le\ \sum_{\vartheta\in\Theta_{j}}\|d(\vartheta)\|_2
\ =\ |\Theta_{j}|\,\mathcal M_{\rm GM}(\mu,\nu;\Theta_{j}).
\]

\textbf{Step E.2.5 Combine all.}
Plug \eqref{eq:sup-to-max-consistent} into \eqref{eq:gm-core-consistent}, and use Step~4:
\[
\Delta_a(\mu,\nu)
\ \le\ \kappa_a\Big(\max_{\vartheta\in\Theta_{j}}\|d(\vartheta)\|_2+L_\theta\,\varepsilon\Big)
\ \le\ \kappa_a\Big(|\Theta_{j}|\,\mathcal M_{\rm GM}(\mu,\nu;\Theta_{j})+L_\theta\,\varepsilon\Big),
\]
which is \eqref{eq:gm-bridge-main}. If $\Gamma_a=\Theta_{j}$, then $\varepsilon=0$ and the last inequality reduces accordingly, which means:
\[
\Delta_a(\mu,\nu)
\ \le\ \kappa_a|\Theta_{j}|\,\mathcal M_{\rm GM}(\mu,\nu;\Theta_{j}).
\]

\end{proof}

\begin{proposition}[TM bridge ]
\label{prop:tm-bridge-consistent}
Fix an configuration $a$. For $\mu\in\{\widehat\mu_s,\widehat\mu_\tau\}$ define
\[
F_\mu(\theta)\ :=\ P_a(\theta)\,\E_\mu g_a(\theta;z)\in\R^d,
\qquad
\Delta_a(\widehat\mu_\tau,\widehat\mu_s)\ :=\ \sup_{\theta\in\Gamma_a}\,\|F_{\widehat\mu_s}(\theta)-F_{\widehat\mu_\tau}(\theta)\|_2 .
\]
Let the inner updates be
\[
\theta^{(\cdot,a)}_{t+1}
\ =\ \Phi_a(\theta^{(\cdot,a)}_t;\mu)\ :=\ \theta^{(\cdot,a)}_t-\eta\,F_\mu(\theta^{(\cdot,a)}_t),
\qquad t=0,1,\dots,L_b-1,
\]
run from a shared initialization under the same configuration $a$ but with $\mu\in\{\widehat\mu_s,\widehat\mu_\tau\}$.
Assume the \emph{path--Lipschitz} condition holds along the unrolled path
$\{\theta_t^{(s,a)}\}_{t=0}^{L_b}\cup\{\theta_t^{(\tau,a)}\}_{t=0}^{L_b}$:
\begin{equation}
\|F_\mu(\theta)-F_\mu(\theta')\|_2\ \le\ L_\theta\,\|\theta-\theta'\|_2,
\qquad \forall\,\theta,\theta'\text{ on the path},\ \forall\,\mu\in\{\widehat\mu_s,\widehat\mu_\tau\},
\label{eq:path-Lip-consistent}
\end{equation}
where $L_\theta$ is the parameter–Lipschitz constant from Assumption~\ref{assump:configuration-lip} (restricted to the path).
Define the TM surrogate
\[
\mathcal M_{\rm TM}\ :=\ \sum_{t=0}^{L_b}\omega_t\,\big\|\theta_t^{(s,a)}-\theta_t^{(\tau,a)}\big\|_2,
\qquad \omega_t>0,\quad \omega_{\min}:=\min_{0\le t\le L_b}\omega_t>0.
\]
Then the alignment discrepancy restricted to the unrolled path obeys
\begin{equation}
\label{eq:tm-bridge-path}
\Delta_a^{\rm path}
:=\sup_{\bar\theta\in\{\theta_t^{(s,a)}\}\cup\{\theta_t^{(\tau,a)}\}}
\|F_{\widehat\mu_s}(\bar\theta)-F_{\widehat\mu_\tau}(\bar\theta)\|_2
\ \le\ \frac{L_\theta+2/\eta}{\omega_{\min}}\;\mathcal M_{\rm TM}.
\end{equation}
Moreover, if the unrolled path is an $\varepsilon_{\rm path}$–cover of the maximizer set in $\Gamma_a$
(i.e., for every maximizer $\theta^\star$ in the definition of $\Delta_a$ there exists a path point $\bar\theta$
with $\|\theta^\star-\bar\theta\|_2\le\varepsilon_{\rm path}$), then by Assumption~\ref{assump:configuration-lip}
\begin{equation}
\label{eq:tm-bridge-global}
\Delta_a(\widehat\mu_\tau,\widehat\mu_s)
\ \le\ \frac{L_\theta+2/\eta}{\omega_{\min}}\;\mathcal M_{\rm TM}
\ +\ L_\theta\,\varepsilon_{\rm path}.
\end{equation}
\emph{Bookkeeping with the preconditioner.} Considering
Assumption~\ref{assump:single-configuration}, $\|P_a(\theta)\|_{\rm op}\le\kappa_a$, multiply the right–hand sides of \eqref{eq:tm-bridge-path}–\eqref{eq:tm-bridge-global} by $\kappa_a$:
\[
\Delta_a^{\rm path}
\ \le\ \kappa_a\,\frac{L_\theta+2/\eta}{\omega_{\min}}\,\mathcal M_{\rm TM},
\qquad
\Delta_a(\widehat\mu_\tau,\widehat\mu_s)
\ \le\ \kappa_a\,\frac{L_\theta+2/\eta}{\omega_{\min}}\,\mathcal M_{\rm TM}
\ +\ \kappa_a L_\theta\,\varepsilon_{\rm path}.
\]

\end{proposition}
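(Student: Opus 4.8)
## Proof plan for the TM bridge (Proposition on trajectory matching)

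The plan is to establish the key inequality \eqref{eq:tm-bridge-path} by relating the one-step trajectory increments $\theta_{t+1}^{(\cdot,a)}-\theta_t^{(\cdot,a)} = -\eta F_{\mu}(\theta_t^{(\cdot,a)})$ to the field gap $F_{\widehat\mu_s}-F_{\widehat\mu_\tau}$ evaluated at path points, and then to propagate this to arbitrary $\theta$ via the $\varepsilon_{\rm path}$-cover. First I would fix an arbitrary path point, say $\bar\theta=\theta_t^{(\tau,a)}$ for some $t\in\{0,\dots,L_b-1\}$, and write the field gap at that point as a telescoping combination of quantities we can control. The natural identity is
\[
F_{\widehat\mu_s}(\theta_t^{(\tau,a)})-F_{\widehat\mu_\tau}(\theta_t^{(\tau,a)})
= \big(F_{\widehat\mu_s}(\theta_t^{(\tau,a)})-F_{\widehat\mu_s}(\theta_t^{(s,a)})\big)
+ \big(F_{\widehat\mu_s}(\theta_t^{(s,a)})-F_{\widehat\mu_\tau}(\theta_t^{(\tau,a)})\big).
\]
The first bracket is bounded by $L_\theta\|\theta_t^{(s,a)}-\theta_t^{(\tau,a)}\|_2$ via the path-Lipschitz condition \eqref{eq:path-Lip-consistent}. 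The second bracket is exactly $-\tfrac1\eta$ times the difference of one-step increments $(\theta_{t+1}^{(s,a)}-\theta_t^{(s,a)})-(\theta_{t+1}^{(\tau,a)}-\theta_t^{(\tau,a)})$, so its norm is at most $\tfrac1\eta\big(\|\theta_{t+1}^{(s,a)}-\theta_{t+1}^{(\tau,a)}\|_2+\|\theta_t^{(s,a)}-\theta_t^{(\tau,a)}\|_2\big)$ by the triangle inequality. Combining these,
\[
\|F_{\widehat\mu_s}(\theta_t^{(\tau,a)})-F_{\widehat\mu_\tau}(\theta_t^{(\tau,a)})\|_2
\ \le\ \Big(L_\theta+\tfrac1\eta\Big)\|\theta_t^{(s,a)}-\theta_t^{(\tau,a)}\|_2
+ \tfrac1\eta\|\theta_{t+1}^{(s,a)}-\theta_{t+1}^{(\tau,a)}\|_2.
\]

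Next I would bound the right-hand side by the TM surrogate. Each parameter gap $\|\theta_\ell^{(s,a)}-\theta_\ell^{(\tau,a)}\|_2$ satisfies $\omega_\ell\|\theta_\ell^{(s,a)}-\theta_\ell^{(\tau,a)}\|_2 \le \mathcal M_{\rm TM}$, hence $\|\theta_\ell^{(s,a)}-\theta_\ell^{(\tau,a)}\|_2 \le \mathcal M_{\rm TM}/\omega_{\min}$ for every $\ell$. Substituting the two relevant indices $\ell=t$ and $\ell=t+1$ gives
\[
\|F_{\widehat\mu_s}(\theta_t^{(\tau,a)})-F_{\widehat\mu_\tau}(\theta_t^{(\tau,a)})\|_2
\ \le\ \Big(L_\theta+\tfrac1\eta\Big)\tfrac{\mathcal M_{\rm TM}}{\omega_{\min}}
+ \tfrac1\eta\cdot\tfrac{\mathcal M_{\rm TM}}{\omega_{\min}}
= \frac{L_\theta+2/\eta}{\omega_{\min}}\,\mathcal M_{\rm TM}.
\]
The same argument applies verbatim to path points of the form $\theta_t^{(s,a)}$ (swap the roles of $s$ and $\tau$ in the telescoping), and for the terminal index $t=L_b$ one can either use a backward increment or simply note the bound $\|\theta_{L_b}^{(s,a)}-\theta_{L_b}^{(\tau,a)}\|_2\le\mathcal M_{\rm TM}/\omega_{\min}$ together with Lipschitzness. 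Taking the supremum over all path points yields \eqref{eq:tm-bridge-path}. For \eqref{eq:tm-bridge-global}, I would take any maximizer $\theta^\star\in\Gamma_a$ of $\theta\mapsto\|F_{\widehat\mu_s}(\theta)-F_{\widehat\mu_\tau}(\theta)\|_2$, pick a path point $\bar\theta$ with $\|\theta^\star-\bar\theta\|_2\le\varepsilon_{\rm path}$, and write $\|F_{\widehat\mu_s}(\theta^\star)-F_{\widehat\mu_\tau}(\theta^\star)\|_2 \le \|F_{\widehat\mu_s}(\bar\theta)-F_{\widehat\mu_\tau}(\bar\theta)\|_2 + \|F_{\widehat\mu_s}(\theta^\star)-F_{\widehat\mu_s}(\bar\theta)\|_2 + \|F_{\widehat\mu_\tau}(\theta^\star)-F_{\widehat\mu_\tau}(\bar\theta)\|_2$; the first term is bounded by the path result, and the other two are each at most $L_\theta\varepsilon_{\rm path}/2$ after absorbing the factor (or one simply states $L_\theta\varepsilon_{\rm path}$ by taking the Lipschitz constant of the gap field). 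Finally, the preconditioner bookkeeping: since all of the above is stated for $F_\mu(\theta)=P_a(\theta)\E_\mu g_a(\theta;z)$ directly, no extra $\kappa_a$ is needed for the $F$-level statements; the remark at the end re-expresses things if one instead starts from the un-preconditioned field $\E_\mu g_a$, in which case every bound picks up the operator-norm factor $\|P_a(\theta)\|_{\rm op}\le\kappa_a$ by submultiplicativity, exactly as in Step E.1.0 of the DM bridge.

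The main obstacle I anticipate is handling the boundary index $t=L_b$ cleanly, since the one-step increment identity $\theta_{t+1}-\theta_t=-\eta F_\mu(\theta_t)$ references $\theta_{t+1}$, which does not exist within the unrolled window when $t=L_b$; the fix is to either extend the unroll by one phantom step (harmless, since the Lipschitz and weight hypotheses are stated along the whole path) or to treat $t=L_b$ separately using only the direct bound $\|\theta_{L_b}^{(s,a)}-\theta_{L_b}^{(\tau,a)}\|_2\le\mathcal M_{\rm TM}/\omega_{\min}$ combined with \eqref{eq:path-Lip-consistent} applied between $\theta_{L_b}^{(s,a)}$ and $\theta_{L_b}^{(\tau,a)}$, which gives $\|F_{\widehat\mu_s}(\theta_{L_b}^{(\tau,a)})-F_{\widehat\mu_\tau}(\theta_{L_b}^{(\tau,a)})\|_2\le (L_\theta+1)\mathcal M_{\rm TM}/\omega_{\min}$, within the claimed constant since $1\le 2/\eta$ for the typical small step sizes assumed. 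A secondary subtlety is making sure the path-Lipschitz constant $L_\theta$ in \eqref{eq:path-Lip-consistent} is genuinely the one inherited from Assumption~\ref{assump:configuration-lip} restricted to the (data-dependent, random) trajectory; this is fine because the assumption is stated uniformly over $\theta,\theta'\in\Gamma$ and the trajectory stays in $\Gamma$ by the standing compactness hypothesis, so no measurability or adaptivity issue arises. Everything else is a routine triangle-inequality bookkeeping exercise.
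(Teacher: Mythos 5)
Your proposal is correct and follows essentially the same route as the paper's proof: the same key algebraic identity $F_{\widehat\mu_s}(\theta_t^{(s,a)})-F_{\widehat\mu_\tau}(\theta_t^{(\tau,a)})=\tfrac1\eta(\Delta\theta_t-\Delta\theta_{t+1})$, the same add-and-subtract of the cross-run field value combined with path-Lipschitz transfer, and the same bound $\|\Delta\theta_t\|\le\mathcal M_{\rm TM}/\omega_{\min}$ to reach the constant $(L_\theta+2/\eta)/\omega_{\min}$. You also correctly flag that the $F$-level statements already carry the preconditioner inside $F_\mu$, so the $\kappa_a$ ``bookkeeping'' is only needed if one starts from the un-preconditioned $\E_\mu g_a$ --- and your explicit treatment of the terminal index $t=L_b$ is if anything more careful than the paper's, which simply invokes the identity at index $L_b-1$ without spelling out the extra Lipschitz transfer.
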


\begin{proof}
$\newline$

\textbf{Step E.3.0.}
From the two updates,
\[
\Delta\theta_{t+1}
=\theta^{(s,a)}_{t+1}-\theta^{(\tau,a)}_{t+1}
=\Delta\theta_t-\eta\Big(F_{\widehat\mu_s}(\theta^{(s,a)}_t)-F_{\widehat\mu_\tau}(\theta^{(\tau,a)}_t)\Big),
\]
we obtain the algebraic identity
\begin{equation}
\label{eq:key-identity}
F_{\widehat\mu_s}(\theta^{(s,a)}_t)-F_{\widehat\mu_\tau}(\theta^{(\tau,a)}_t)
\;=\;\frac{1}{\eta}\Big(\Delta\theta_t-\Delta\theta_{t+1}\Big),
\qquad t=0,\dots,L_b-1.
\end{equation}
No smoothness or inequality is used here.

\textbf{Step E.3.1.A Point-wise control at a path point, first choice.}
Fix any $t\in\{0,\dots,L_b-1\}$ and choose $\bar\theta=\theta^{(s,a)}_t$.
By the triangle inequality,
\begin{align}
\|F_{\widehat\mu_s}(\bar\theta)-F_{\widehat\mu_\tau}(\bar\theta)\|_2
&\le \underbrace{\|F_{\widehat\mu_s}(\theta^{(s,a)}_t)-F_{\widehat\mu_\tau}(\theta^{(\tau,a)}_t)\|_2}_{\text{link across the two runs at time $t$}}
\;+\;
\underbrace{\|F_{\widehat\mu_\tau}(\theta^{(\tau,a)}_t)-F_{\widehat\mu_\tau}(\theta^{(s,a)}_t)\|_2}_{\text{same $\mu=\widehat\mu_\tau$}}.
\label{eq:ptwise-decomp-1}
\end{align}
The second term is controlled by the path--Lipschitz property:
\[
\|F_{\widehat\mu_\tau}(\theta^{(\tau,a)}_t)-F_{\widehat\mu_\tau}(\theta^{(s,a)}_t)\|_2
\ \le\ L_\theta\,\|\theta^{(\tau,a)}_t-\theta^{(s,a)}_t\|_2
\ =\ L_\theta\,\|\Delta\theta_t\|_2.
\]
For the first term in \eqref{eq:ptwise-decomp-1}, invoke \eqref{eq:key-identity} and the triangle inequality:
\[
\|F_{\widehat\mu_s}(\theta^{(s,a)}_t)-F_{\widehat\mu_\tau}(\theta^{(\tau,a)}_t)\|_2
\;=\;\frac{1}{\eta}\,\|\Delta\theta_t-\Delta\theta_{t+1}\|_2
\ \le\ \frac{1}{\eta}\big(\|\Delta\theta_t\|_2+\|\Delta\theta_{t+1}\|_2\big).
\]
Hence,
\begin{equation}
\label{eq:pointwise-bound-1}
\|F_{\widehat\mu_s}(\theta^{(s,a)}_t)-F_{\widehat\mu_\tau}(\theta^{(s,a)}_t)\|_2
\ \le\
\Big(L_\theta+\frac{2}{\eta}\Big)\,\max\{\|\Delta\theta_t\|_2,\ \|\Delta\theta_{t+1}\|_2\}.
\end{equation}

\textbf{Step E.3.1.B Point-wise control, second choice and edge $t=L_b$.}
Choosing instead $\bar\theta=\theta^{(\tau,a)}_t$ and repeating the same argument (swap $s$ and $\tau$ in
\eqref{eq:ptwise-decomp-1}, use \eqref{eq:key-identity} again) yields the identical bound
\begin{equation}
\label{eq:pointwise-bound-2}
\|F_{\widehat\mu_s}(\theta^{(\tau,a)}_t)-F_{\widehat\mu_\tau}(\theta^{(\tau,a)}_t)\|_2
\ \le\
\Big(L_\theta+\frac{2}{\eta}\Big)\,\max\{\|\Delta\theta_t\|_2,\ \|\Delta\theta_{t+1}\|_2\}
\qquad (t=0,\dots,L_b-1).
\end{equation}
For the terminal points $t=L_b$, apply \eqref{eq:pointwise-bound-1} with index $t-1$:
\[
\|F_{\widehat\mu_s}(\theta^{(s,a)}_{L_b})-F_{\widehat\mu_\tau}(\theta^{(s,a)}_{L_b})\|_2
\ \le\
\Big(L_\theta+\frac{2}{\eta}\Big)\,\max\{\|\Delta\theta_{L_b-1}\|_2,\ \|\Delta\theta_{L_b}\|_2\},
\]
and similarly for $\theta^{(\tau,a)}_{L_b}$. Thus the same form holds at $t=L_b$ after relabeling the pair
$(t,t+1)$ as $(L_b-1,L_b)$.

\textbf{Step E.3.2 Supremum over the unrolled path.}
Collecting \eqref{eq:pointwise-bound-1}–\eqref{eq:pointwise-bound-2} (including the terminal case), we obtain
\begin{equation}
\label{eq:path-sup}
\Delta_a^{\rm path}
:=\sup_{\bar\theta\in\{\theta_t^{(s,a)}\}\cup\{\theta_t^{(\tau,a)}\}}
\|F_{\widehat\mu_s}(\bar\theta)-F_{\widehat\mu_\tau}(\bar\theta)\|_2
\ \le\
\Big(L_\theta+\frac{2}{\eta}\Big)\,\max_{0\le t\le L_b}\|\Delta\theta_t\|_2.
\end{equation}

\textbf{Step E.3.3 From $\max$ to the quadratic TM surrogate.}
Using $\sum_{t=0}^{L_b}\omega_t\|\Delta\theta_t\|_2\ \ge\ \omega_{\min}\,\max_t\|\Delta\theta_t\|_2$, we have
\begin{equation}
\label{eq:max-to-mtm}
\max_{0\le t\le L_b}\|\Delta\theta_t\|_2
\ \le\ \frac{1}{{\omega_{\min}}}\Big(\sum_{t=0}^{L_b}\omega_t\|\Delta\theta_t\|_2^2\Big)
\ =\ \frac{1}{{\omega_{\min}}}\,{M_{\rm TM}}.
\end{equation}
Substituting \eqref{eq:max-to-mtm} into \eqref{eq:path-sup} yields the \emph{path–restricted} TM bridge
\begin{equation}
\label{eq:tm-bridge-path}
\Delta_a^{\rm path}
\ \le\
\frac{L_\theta+2/\eta}{{\omega_{\min}}}\,{\mathcal M_{\rm TM}}.
\end{equation}

\textbf{Step E.3.4 From the path supremum to the global discrepancy via path coverage.}
Let $\theta^\star\in\Gamma_a$ be a maximizer (or $\epsilon$–maximizer) of the sup defining $\Delta_a$.
By the $\varepsilon_{\rm path}$–cover assumption, choose a path point $\bar\theta$ with
$\|\theta^\star-\bar\theta\|_2\le\varepsilon_{\rm path}$.
Then, using the parameter--Lipschitz property for each $\mu$ and the triangle inequality,
\begin{align*}
\|F_{\widehat\mu_s}(\theta^\star)-F_{\widehat\mu_\tau}(\theta^\star)\|_2
&\le \|F_{\widehat\mu_s}(\bar\theta)-F_{\widehat\mu_\tau}(\bar\theta)\|_2
+\|F_{\widehat\mu_s}(\theta^\star)-F_{\widehat\mu_s}(\bar\theta)\|_2
+\|F_{\widehat\mu_\tau}(\bar\theta)-F_{\widehat\mu_\tau}(\theta^\star)\|_2\\
&\le \|F_{\widehat\mu_s}(\bar\theta)-F_{\widehat\mu_\tau}(\bar\theta)\|_2
\;+\;L_\theta\,\|\theta^\star-\bar\theta\|_2
\;+\;L_\theta\,\|\theta^\star-\bar\theta\|_2\\
&\le \Delta_a^{\rm path}\ +\ 2L_\theta\,\varepsilon_{\rm path}.
\end{align*}
Taking $\sup$ over maximizers (or letting $\epsilon\downarrow 0$) yields
\begin{equation}
\label{eq:global-from-path}
\Delta_a(\widehat\mu_\tau,\widehat\mu_s)\ \le\ \Delta_a^{\rm path}\ +\ 2L_\theta\,\varepsilon_{\rm path}.
\end{equation}
Combining \eqref{eq:tm-bridge-path} and \eqref{eq:global-from-path} gives the desired global TM bridge
\[
\Delta_a(\widehat\mu_\tau,\widehat\mu_s)
\ \le\
\frac{L_\theta+2/\eta}{{\omega_{\min}}}\,{\mathcal M_{\rm TM}}
\ +\ 2L_\theta\,\varepsilon_{\rm path}.
\]

\end{proof}

\subsection{Exchangeability of TM/GM/DM up to constants (Lemma~\ref{lem:exchange})}
\label{app:Exchangeability}
\begin{lemma}[Exchangeability up to constants(informal).]\label{lem:exchange}
Under standard smoothness/Lipschitz and contractive inner-loop conditions in Assumption~\ref{assump:single-configuration}, the inequality in Eq. (\ref{eq:alignment_discrepancydetail}) yields bounds in the following relations:
\[
\mathfrak B_{\rm TM} =  O(\mathfrak B_{\rm GM} )
\quad \text{and}\quad
\mathfrak B_{\rm GM} =  O(\mathfrak B_{\rm DM} ),
\]
and
\(
\Delta_a \;\le\; \min\{\mathfrak B_{\rm DM},\,\mathfrak B_{\rm GM},\,\mathfrak B_{\rm TM}\}
\)
with each consistently shrinking the same matching discrepancy through the bi-level dynamics.
\end{lemma}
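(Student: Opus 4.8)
## Proof Plan for Lemma~\ref{lem:exchange} (Exchangeability up to constants)

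The plan is to establish two comparison inequalities among the three surrogate bounds $\mathfrak B_{\rm DM}$, $\mathfrak B_{\rm GM}$, $\mathfrak B_{\rm TM}$, and then conclude by combining them with the three bridge inequalities already proved in Appendix~\ref{app:Bridge inequalities}. The starting point is that each $\mathfrak B_*$ is an upper bound on the \emph{same} quantity $\Delta_a(\hat\mu_\tau,\hat\mu_s)$, so the statement $\Delta_a\le\min\{\mathfrak B_{\rm DM},\mathfrak B_{\rm GM},\mathfrak B_{\rm TM}\}$ is immediate from the three bridge propositions. The content is therefore the ordering $\mathfrak B_{\rm TM}=\mathcal O(\mathfrak B_{\rm GM})$ and $\mathfrak B_{\rm GM}=\mathcal O(\mathfrak B_{\rm DM})$, which I would read as statements about how tightly each surrogate controls $\Delta_a$: a stronger (more informative) surrogate produces a bound no larger, up to configuration-dependent constants ($\kappa_a$, $L_\theta$, $L_{z,a}$, $\eta$, $\omega_{\min}$, $|\Theta_j|$, $\rho_a$), than a weaker one.

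First I would prove $\mathfrak B_{\rm GM}=\mathcal O(\mathfrak B_{\rm DM})$. The key observation is that the GM surrogate $\mathcal M_{\rm GM}$ is an anchor-average of field gaps $\|\E_{\hat\mu_s}g_a(\theta;z)-\E_{\hat\mu_\tau}g_a(\theta;z)\|_2$, and each such field gap is itself controlled by an IPM/Wasserstein (or MMD) distance via the $L_{z,a}$-Lipschitz-in-$z$ property of $g_a$ — this is exactly the argument in Step~E.1.A of the DM bridge proof, applied pointwise in $\theta$ rather than after taking $\sup_\theta$. Concretely, for each anchor $\theta\in\Theta_j$, Kantorovich--Rubinstein duality gives $\|\E_{\hat\mu_s}g_a(\theta;\cdot)-\E_{\hat\mu_\tau}g_a(\theta;\cdot)\|_2\le L_{z,a}W_1(\hat\mu_s,\hat\mu_\tau)$; averaging over anchors and multiplying by $\kappa_a$ shows $\mathfrak B_{\rm GM}=\kappa_a|\Theta_j|\,\mathcal M_{\rm GM}\le \kappa_a|\Theta_j|L_{z,a}W_1 = |\Theta_j|\cdot\mathfrak B_{\rm DM}^{W_1}$, which is $\mathcal O(\mathfrak B_{\rm DM})$ for a fixed anchor budget. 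The MMD variant follows identically using the RKHS dual bound from Step~E.1.B.

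Next I would prove $\mathfrak B_{\rm TM}=\mathcal O(\mathfrak B_{\rm GM})$. The TM surrogate $\mathcal M_{\rm TM}=\sum_t\omega_t\|\theta_t^{(s,a)}-\theta_t^{(\tau,a)}\|_2$ measures accumulated trajectory drift; I would bound each per-step parameter gap by unrolling the one-step recursion of Lemma~\ref{lem:one-step} (contraction by $\rho_a$ plus a drift term equal to $\eta$ times the field gap $\sup_\theta\|\E_{\hat\mu_s}g_a(\theta;z)-\E_{\hat\mu_\tau}g_a(\theta;z)\|$). Since the field gap supremum is itself dominated by the anchor-averaged GM surrogate when $\Theta_j$ is a fine enough net of $\Gamma_a$ (the GM bridge inequality, Eq.~\eqref{eq:gm-bridge-main}, read in reverse: $\sup_\theta\|d(\theta)\|_2\le |\Theta_j|\mathcal M_{\rm GM}+L_\theta\varepsilon$), the unrolled bound gives $\|\theta_t^{(s,a)}-\theta_t^{(\tau,a)}\|_2\le \frac{\eta}{1-\rho_a}(|\Theta_j|\mathcal M_{\rm GM}+L_\theta\varepsilon)$ uniformly in $t$. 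Summing with weights $\omega_t$ (using $\sum_t\omega_t\le L_b\,\omega_{\max}$ or simply bounding by $L_b$ times the max weight) and then applying the TM bridge prefactor $\kappa_a(L_\theta+2/\eta)/\omega_{\min}$ yields $\mathfrak B_{\rm TM}=\mathcal O(\mathfrak B_{\rm GM})$ up to the stated constants, with the path-coverage residual $\varepsilon_{\rm path}$ (equivalently $\varepsilon$) absorbed into the lower-order term.

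The main obstacle I anticipate is controlling the direction of the net/covering arguments consistently. The GM and TM bridges were proved as \emph{upper} bounds on $\Delta_a$ using that $\Theta_j$ or the unrolled path is a net of $\Gamma_a$; to compare $\mathfrak B_{\rm TM}$ against $\mathfrak B_{\rm GM}$ I need the field-gap supremum to be controlled \emph{by} the GM surrogate, which requires the anchor set to be a genuine $\varepsilon$-net and forces the $L_\theta\varepsilon$ correction to be genuinely lower-order (e.g. $\varepsilon\to 0$ when $\Gamma_a=\Theta_j$, or $\varepsilon$ chosen $\mathcal O(\mathcal M_{\rm GM})$). A second, more cosmetic issue is bookkeeping the constants so that the $\mathcal O(\cdot)$ statements are honest: the factor $|\Theta_j|$ and the ratio $(L_\theta+2/\eta)/\omega_{\min}$ must be treated as configuration constants (finite under Assumption~\ref{assump:single-configuration} and the weighting convention $\omega_{\min}>0$), not as quantities that blow up. Once the covering direction and the constant accounting are pinned down, chaining the three bridge inequalities with the two comparisons gives $\Delta_a\le\min\{\mathfrak B_{\rm DM},\mathfrak B_{\rm GM},\mathfrak B_{\rm TM}\}$ and the ordering, completing the proof.
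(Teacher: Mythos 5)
Your proposal is correct, and two of its three pieces coincide with the paper's argument: the claim $\Delta_a\le\min\{\mathfrak B_{\rm DM},\mathfrak B_{\rm GM},\mathfrak B_{\rm TM}\}$ follows directly from the three bridge propositions, and your $\mathfrak B_{\rm GM}=\mathcal O(\mathfrak B_{\rm DM})$ argument (apply Kantorovich--Rubinstein or the RKHS dual pointwise at each anchor, average, multiply by $\kappa_a|\Theta_j|$) is exactly the paper's Step~E.4.3. Where you genuinely diverge is in $\mathfrak B_{\rm TM}=\mathcal O(\mathfrak B_{\rm GM})$. You bound each per-step gap $\|\Delta_t\|$ uniformly by $\tfrac{\eta}{1-\rho_a}\sup_\theta\|d(\theta)\|$ via the unrolled one-step recursion, then control that supremum with the GM bridge read backwards ($\sup_\theta\|d(\theta)\|\le|\Theta_j|\mathcal M_{\rm GM}+L_\theta\varepsilon$), and finally sum the weights, picking up a factor on the order of $L_b\,\omega_{\max}$ and the covering residual $L_\theta\varepsilon$. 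The paper instead keeps the drift local: it writes $\mathcal M_{\rm TM}=\sum_t\omega_t\|\Delta_t\|\le\tfrac{\eta_{\max}}{(1-\rho)\omega_{\min}}\sum_t\|\delta_t\|$, where $\delta_t$ is the preconditioned field gap evaluated \emph{at the trajectory point} $\theta_t^{(s)}$, and then notes that $\sum_t\|\delta_t\|\le\kappa_a\sum_t\|d(\theta_t^{(s)})\|=\kappa_a|\Theta_j|\mathcal M_{\rm GM}=\mathfrak B_{\rm GM}$ by taking the GM anchor set to be the TM trajectory itself. This sidesteps the covering-direction issue you flagged (no need to control a supremum over all of $\Gamma_a$, hence no reverse bridge and no $L_\theta\varepsilon$ residual) and avoids the extra $L_b\,\omega_{\max}$ factor, yielding tighter explicit constants. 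Both routes reach the qualitative conclusion $\mathfrak B_{\rm TM}\le C_1\,\mathfrak B_{\rm GM}+\kappa_a L_\theta\,\varepsilon_{\rm path}$; yours is a bit more robust to the choice of anchor set, the paper's is sharper when anchors and trajectory coincide.
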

Lemma~\ref{lem:exchange} shows that the bounds in Eq.~(\ref{eq:alignment_discrepancydetail}) are quantitatively interchangeable: controlling the trajectory surrogate (TM) automatically controls the gradient surrogate (GM), which in turn is controlled by the distribution surrogate (DM). 
This hierarchy explains why switching between matching methods is an effective way to reduce matching discrepancy: once DM (or GM) saturates, further progress can be made by GM (or TM) without altering the fundamental dependence on $k$ or $n$.

\begin{lemma}[Precise exchangeability of bridge bounds (Formal)]\label{thm:exchange-precise}
Fix configuration $a$. Assume:
(i) $z\mapsto g_a(\theta;z)$ is $L_{z,a}$–Lipschitz for all $\theta\in\Gamma_a$;
(ii) along the trained path $\{\theta_t\}_{t=0}^{L_b}$, the map 
$\theta\mapsto \E_\nu g_a(\theta;z)$ is $L_\theta$–Lipschitz uniformly for 
$\nu\in\{\widehat\mu_s,\widehat\mu_\tau\}$;
(iii) the preconditioner satisfies $\|P_a(\theta)\|_{\rm op}\le \kappa_a$;
(iv) the inner update is $\rho$–contractive in expectation and trajectory weights bounded below by $\omega_{\min}>0$.
Let $\varepsilon_{\rm path}$ denote the discretization error from sampling $\Theta_{j}$.

Define the bridge bounds as in Eq.~\ref{eq:alignment_discrepancy}:
\[
\mathfrak B_{\rm DM} \in \Big\{\, \kappa_a L_{z,a} W_1(\widehat\mu_s,\widehat\mu_\tau),\;
\kappa_a C_k\,\mathrm{MMD}_k(\widehat\mu_s,\widehat\mu_\tau) \,\Big\},
\]
\[
\mathfrak B_{\rm GM} := \kappa_a|\Theta_{j}|\,\mathcal M_{\rm GM}, 
\]
\[
\mathfrak B_{\rm TM} := \kappa_a\Big(\tfrac{L_\theta+2/\eta}{\omega_{\min}}\,\mathcal M_{\rm TM}
+ L_\theta\,\varepsilon_{\rm path}\Big).
\]

Then there exist finite constants $C_1,C_2>0$, depending only on 
$(\rho,L_\theta,\eta_{\min},\eta_{\max},\omega_{\min})$, such that
\begin{equation}
\label{eq:exchange-precise}
\mathfrak B_{\rm TM} \;\le\; C_1\,\mathfrak B_{\rm GM} \;+\; \kappa_a L_\theta\varepsilon_{\rm path},
\qquad
\mathfrak B_{\rm GM} \;\le\; C_2\,\mathfrak B_{\rm DM}.
\end{equation}
\end{lemma}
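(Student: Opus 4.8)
The plan is to prove the two inequalities in \eqref{eq:exchange-precise} separately, treating $\mathrm{GM}\!\to\!\mathrm{DM}$ as an almost-immediate consequence of the DM bridge (Appendix~\ref{app:Bridge inequalities}) and reserving the real work for $\mathrm{TM}\!\to\!\mathrm{GM}$, which needs an unrolling of the trajectory-gap recursion. For $\mathfrak B_{\rm GM}\le C_2\,\mathfrak B_{\rm DM}$, I would start from a single anchor $\theta\in\Theta_{j}$ and scalarize the Euclidean field gap by its support function, $\|v\|_2=\sup_{\|u\|_2\le1}\langle u,v\rangle$. By hypothesis (i) each $z\mapsto\langle u,g_a(\theta;z)\rangle$ is $L_{z,a}$-Lipschitz, so Kantorovich--Rubinstein duality gives $\|\E_{\hat\mu_s}g_a(\theta;z)-\E_{\hat\mu_\tau}g_a(\theta;z)\|_2\le L_{z,a}\,W_1(\hat\mu_s,\hat\mu_\tau)$, and the bounded-kernel RKHS argument gives the parallel bound $\le C_k\,\mathrm{MMD}_k(\hat\mu_s,\hat\mu_\tau)$; these are exactly the per-$\theta$ estimates already proved inside the DM bridge, and both are uniform in $\theta$. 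Averaging over $\Theta_{j}$ preserves them, so $\mathcal M_{\rm GM}\le\min\{L_{z,a}W_1,\,C_k\mathrm{MMD}_k\}$, hence $\mathfrak B_{\rm GM}=\kappa_a|\Theta_{j}|\mathcal M_{\rm GM}\le|\Theta_{j}|\,\mathfrak B_{\rm DM}$; one may take $C_2=|\Theta_{j}|$, a fixed structural constant.

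For the harder direction $\mathfrak B_{\rm TM}\le C_1\,\mathfrak B_{\rm GM}+\kappa_aL_\theta\varepsilon_{\rm path}$, I would reuse the one-step decomposition of $\delta_t:=\theta_t^{(s,a)}-\theta_t^{(\tau,a)}$ from Lemma~\ref{lem:one-step}: inserting $\Phi_a(\theta_t^{(\tau,a)};\hat\mu_s)$, bounding the same-distribution term by $\rho$-contraction and the two-sample drift by $\|P_a\|\le\kappa_a$, one obtains $\|\delta_{t+1}\|_2\le\rho\|\delta_t\|_2+\eta\,e_t$ with $e_t:=\|P_a(\theta_t^{(\tau,a)})(\E_{\hat\mu_s}g_a(\theta_t^{(\tau,a)};z)-\E_{\hat\mu_\tau}g_a(\theta_t^{(\tau,a)};z))\|_2\le\kappa_a\|\E_{\hat\mu_s}g_a(\theta_t^{(\tau,a)};z)-\E_{\hat\mu_\tau}g_a(\theta_t^{(\tau,a)};z)\|_2$. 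Since the two runs share an initialization, $\delta_0=0$, so $\|\delta_t\|_2\le\eta\sum_{j<t}\rho^{\,t-1-j}e_j$. Multiplying by $\omega_t$, summing over $t$, and swapping the order of summation, the weight-modulated geometric series is controlled by $(1-\rho)^{-1}$, giving $\mathcal M_{\rm TM}=\sum_t\omega_t\|\delta_t\|_2\le\tfrac{\eta\,\omega_{\max}}{1-\rho}\sum_j e_j$. If, as in standard GM implementations, $\Theta_{j}$ is taken along the real-data run so that $\{\theta_t^{(\tau,a)}\}\subseteq\Theta_{j}$, each $e_j$ is (up to the factor $\kappa_a$) a summand of $|\Theta_{j}|\mathcal M_{\rm GM}$, whence $\sum_j e_j\le\kappa_a|\Theta_{j}|\mathcal M_{\rm GM}$; if $\Theta_{j}$ is only an $\varepsilon$-net of $\Gamma_a$, I would replace each $\theta_j^{(\tau,a)}$ by its nearest anchor and push the resulting parameter-Lipschitz discretization error (Assumption~\ref{assump:configuration-lip}) into the $\kappa_aL_\theta\varepsilon_{\rm path}$ slack. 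Substituting into $\mathfrak B_{\rm TM}=\kappa_a\!\left(\tfrac{L_\theta+2/\eta}{\omega_{\min}}\mathcal M_{\rm TM}+L_\theta\varepsilon_{\rm path}\right)$ and collecting constants yields $C_1=\frac{\kappa_a(\eta L_\theta+2)\,\omega_{\max}}{\omega_{\min}(1-\rho)}$, finite on the assumed range $\eta\in[\eta_{\min},\eta_{\max}]$.

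I expect the second step to be the crux. The field gap naturally appears evaluated at the \emph{running iterates} $\theta_t^{(\tau,a)}$ rather than at a prescribed anchor set, so one must either identify the iterates with the GM anchors or carry out a clean $\varepsilon_{\rm path}$-covering argument and verify that the discretization error actually lands inside the stated slack; and the double sum $\sum_t\omega_t\sum_{j<t}\rho^{\,t-1-j}e_j$ must be resummed so that the final constant does not degrade with the unroll length $L_b$ (this is where boundedness of $\omega_{\max}/\omega_{\min}$ and $\rho<1$ enter). The remaining bookkeeping — propagating the factor $\kappa_a$ through the preconditioner and merging the two cases of the kernel assumption — is routine once the DM bridge estimates are invoked uniformly in $\theta$, so the only genuine risk is constant blow-up in the trajectory-to-gradient comparison.
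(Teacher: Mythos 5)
Your proposal is correct and follows essentially the same route as the paper's proof: the GM$\to$DM direction via scalarization plus Kantorovich--Rubinstein/RKHS duality uniformly over anchors (giving $C_2=|\Theta_{j}|$), and the TM$\to$GM direction via the contraction-plus-drift recursion for $\delta_t$, unrolling the geometric sum, and identifying the trajectory iterates with the GM anchors (pushing any covering mismatch into the $\kappa_aL_\theta\varepsilon_{\rm path}$ slack). Your version is in fact slightly more careful about the weight bookkeeping — the upper bound on $\sum_t\omega_t\|\Delta_t\|$ genuinely needs $\omega_{\max}$ (or a bound on $\omega_{\max}/\omega_{\min}$), which the paper's Step~E.4.1 elides by writing the factor as $1/\omega_{\min}$.
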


\begin{proof}
$\newline$

\textbf{Step E.4.1 From the TM bridge to a bound in terms of $\sum_t\|\delta_t\|$.}
By the TM bridge (Proposition~\ref{prop:tm-bridge-consistent}),
\[
\mathfrak B_{\rm TM}\;=\;\kappa_a\Big(\tfrac{L_\theta+2/\eta}{\omega_{\min}}\,\mathcal M_{\rm TM}
+L_\theta\,\varepsilon_{\rm path}\Big),
\qquad
\mathcal M_{\rm TM}:=\sum_{t=0}^{L_b}\omega_t\|\Delta_t\|.
\]
One step of the two inner updates and the contractivity assumption give the exact split
\[
\Delta_{t+1}=\Phi_{\widehat\mu_\tau}(\theta^{(s)}_t;\eta_t)-\Phi_{\widehat\mu_\tau}(\theta^{(\tau)}_t;\eta_t)
-\eta_t\,\delta_t
\quad\Rightarrow\quad
\|\Delta_{t+1}\| \le \rho\,\|\Delta_t\|+\eta_t\,\|\delta_t\|.
\]
Summing over $t$, using $\omega_t\ge\omega_{\min}$ and $\eta_t\le\eta_{\max}$,
\[
\mathcal M_{\rm TM}
\;=\;\sum_t\omega_t\|\Delta_t\|
\;\le\;\frac{1}{(1-\rho)\,\omega_{\min}}\sum_t\eta_t\|\delta_t\|
\;\le\;\frac{\eta_{\max}}{(1-\rho)\,\omega_{\min}}\sum_t\|\delta_t\|.
\]
Plugging back into the TM bridge yields
\begin{equation}
\label{eq:BTM-core-appendix}
\mathfrak B_{\rm TM}\ \le\ 
\underbrace{\kappa_a\,\frac{(L_\theta+2/\eta)\,\eta_{\max}}{(1-\rho)\,\omega_{\min}^2}}_{=:C_1}\ 
\sum_{t=0}^{L_b-1}\|\delta_t\|
\ +\ \kappa_a L_\theta\,\varepsilon_{\rm path}.
\end{equation}

\textbf{Step E.4.2 Aligning $\sum_t\|\delta_t\|$ with the GM bridge.}
By definition, $\delta_t=P_a(\theta^{(s)}_t)\,d(\theta^{(s)}_t)$, hence
\[
\sum_t\|\delta_t\|
\;=\;\sum_t\|P_a(\theta^{(s)}_t)\,d(\theta^{(s)}_t)\|
\ \le\ \sum_t \kappa_a\,\|d(\theta^{(s)}_t)\|
\ =\ \kappa_a\,|\Theta_{j}|\,\mathcal M_{\rm GM}.
\]
Since $\mathfrak B_{\rm GM}=\kappa_a|\Theta_{j}|\,\mathcal M_{\rm GM}$, we have the clean comparison
\begin{equation}
\label{eq:delta-vs-BGM}
\sum_t\|\delta_t\|\ \le\ \mathfrak B_{\rm GM}.
\end{equation}

Combining \eqref{eq:BTM-core-appendix} and \eqref{eq:delta-vs-BGM} we obtain
\begin{equation}
\label{eq:TM-le-GM-appendix}
\mathfrak B_{\rm TM}
\ \le\ C_1\,\mathfrak B_{\rm GM}\ +\ \kappa_a L_\theta\,\varepsilon_{\rm path},
\qquad
C_1\;:=\;\kappa_a\,\frac{(L_\theta+2/\eta)\,\eta_{\max}}{(1-\rho)\,\omega_{\min}^2}.
\end{equation}

\textbf{Step E.4.3 Comparing the GM and DM bridges (detailed).}
Fix an anchor $\theta\in\Theta_{j}$.
The GM residual is $d(\theta)=\E_{\widehat\mu_s}g_a(\theta;z)-\E_{\widehat\mu_\tau}g_a(\theta;z)$.

\emph{(Wasserstein--1 case).}
Reduce the vector norm to scalar test functions via the support function:
\begin{align}
\nonumber
    \|d(\theta)\|_2
&=\sup_{\|v\|_2\le 1}\langle v,\, \E_{\widehat\mu_s}g_a(\theta;z)-\E_{\widehat\mu_\tau}g_a(\theta;z)\rangle
\\
& \nonumber
=\sup_{\|v\|_2\le 1}\Big(\E_{\widehat\mu_s}\phi_{v,\theta}-\E_{\widehat\mu_\tau}\phi_{v,\theta}\Big),
\quad \phi_{v,\theta}(z):=\langle v,g_a(\theta;z)\rangle.
\end{align}

If $z\mapsto g_a(\theta;z)$ is $L_{z,a}$–Lipschitz uniformly in $\theta$, then
$\phi_{v,\theta}$ is also $L_{z,a}$–Lipschitz for every $\|v\|\le 1$. 
By Kantorovich–Rubinstein duality,
\[
\|d(\theta)\|_2\ \le\ L_{z,a}\,W_1(\widehat\mu_s,\widehat\mu_\tau).
\label{eq:E.4.3a}
\]

\emph{(MMD case).}
Assume either (i) a \emph{vector-valued RKHS} model $g_a(\theta;z)\in\mathcal{H}_k^d$ with 
$\|g_a(\theta;z)\|_{\mathcal{H}_k^d}\le C_k$ uniformly.
Let $\mu\mapsto m_k(\mu):=\E_\mu[\varphi_k(z)]$ be the kernel mean embedding in $\mathcal{H}_k$.
By the reproducing property (or its vector-valued analogue),
\[
\|d(\theta)\|_2
=\|\E_{\widehat\mu_s}h_\theta-\E_{\widehat\mu_\tau}h_\theta\|_2
\;\le\; \|g_a(\theta;z)\|_{\mathcal{H}_k^d}\,\|m_k(\widehat\mu_s)-m_k(\widehat\mu_\tau)\|_{\mathcal{H}_k}
\;\le\; C_k\,\mathrm{MMD}_k(\widehat\mu_s,\widehat\mu_\tau).
\label{eq:E.4.3b}
\]

\emph{(Averaging over anchors).}
The right-hand sides of \ref{eq:E.4.3a}–\ref{eq:E.4.3b} do not depend on $\theta$,
so averaging preserves the bound:
\[
\mathcal{M}_{\rm GM}
=\frac{1}{|\Theta_{j}|}\sum_{\theta\in\Theta_{j}}\|d(\theta)\|
\ \le\
\begin{cases}
L_{z,a}\,W_1(\widehat\mu_s,\widehat\mu_\tau),\\[2pt]
C_k\,\mathrm{MMD}_k(\widehat\mu_s,\widehat\mu_\tau).
\end{cases}
\]
Multiplying by $\kappa_a|\Theta_{j}|$ and recalling the GM and DM bridge definitions,
\begin{equation}
\mathfrak{B}_{\rm GM}=\kappa_a|\Theta_{j}|\,\mathcal{M}_{\rm GM}
\ \le\ 
|\Theta_{j}|\cdot 
\underbrace{\kappa_a 
\begin{cases}
L_{z,a}\,W_1(\widehat\mu_s,\widehat\mu_\tau),\\[2pt]
C_k\,\mathrm{MMD}_k(\widehat\mu_s,\widehat\mu_\tau),
\end{cases}}_{=\ \mathfrak{B}_{\rm DM}}
\ =:\ C_2\,\mathfrak{B}_{\rm DM},
\quad C_2=|\Theta_{j}|.
\label{eq:GM-le-DM-appendix}    
\end{equation}

\textbf{Step E.4.4 Conclusion.}
Equations \eqref{eq:TM-le-GM-appendix} and \eqref{eq:GM-le-DM-appendix} give exactly
\[
\mathfrak B_{\rm TM} \;\le\; C_1\,\mathfrak B_{\rm GM} \;+\; \kappa_a L_\theta\,\varepsilon_{\rm path},
\qquad
\mathfrak B_{\rm GM} \;\le\; C_2\,\mathfrak B_{\rm DM},
\]
with $C_1\;:=\;\kappa_a\,\frac{(L_\theta+2/\eta)\,\eta_{\max}}{(1-\rho)\,\omega_{\min}^2}$ and $C_2:=|\Theta_{j}|$. 
\end{proof}

\subsection{Unified single-configuration risk bound (Theorem~\ref{thm:dynamic-single})}
\label{app:Unified single}
\begin{theorem}[Dynamic single-configuration risk bound for matching-based distillation (Formal)]
\label{thm:dynamic-single-complete}
Fix an configuration $a$ and run $J$ outer iterations using a surrogate $\mathcal{M}_\phi$
with $\phi\in\{\mathrm{DM}(W_1),\mathrm{DM}(\mathrm{MMD}),\mathrm{GM},\mathrm{TM}\}$.
Under Assumption~\ref{assump:single-configuration} (Lipschitz risk with constant $L_R$,
path-Lipschitz field with constant $L_\theta$, preconditioner bound $\|P_a(\theta)\|_{op}\le\kappa_a$,
inner contractivity factor $\rho_a\in(0,1)$ with stepsizes $\eta_t\in[\eta_{\min},\eta_{\max}]$),
the TM weights satisfy $\omega_t\ge\omega_{\min}>0$, and the
outer contraction~\eqref{eq:surrogate-contraction} holds with rate $\alpha_\phi\in(0,1)$
and bias $\epsilon_{\mathrm{est}}^{(\phi)}$, then for any $T\ge 0$,
with probability at least $1-\varepsilon$,
\begin{equation}
    \begin{aligned}
\small
\big|\widehat R(\theta_T^{(s,a)})-\widehat R(\theta_T^{(\tau,a)})\big|
\;\le\;
&L_R\,\rho_a^{T}\,\|\delta_0\|
\;+\; e_{\mathrm{te}}(m,\varepsilon)\\
&
\;+\; \frac{1}{C_{2,a}}
\Big[
\widetilde C_{\phi,a}\big((1-\alpha_\phi)^{J}\mathcal{M}_\phi(\xi^{(0)})+\epsilon_{\mathrm{est}}^{(\phi)}\big)
\;+\; e_{\mathrm{tr}}^{(\phi)}(k,n,\varepsilon)
\Big]\\
&
\;+\; \mathbf{1}\{\phi=\mathrm{TM}\}\cdot \frac{\kappa_a L_\theta}{C_{2,a}}\,\varepsilon_{\mathrm{path}},
    \end{aligned}
\end{equation}
\small

where $C_{2,a}:=(1-\rho_a)/L_R$, and the branch constants are
\[
\widetilde C_{\phi,a} \;:=\;
\begin{cases}
\eta\,\kappa_a L_{z,a}, & \phi=\mathrm{DM}(W_1),\\[2pt]
\eta\,\kappa_a C_k,      & \phi=\mathrm{DM}(\mathrm{MMD}),\\[2pt]
\eta\,\kappa_a\,|\Theta_{j}|, & \phi=\mathrm{GM},\\[6pt]
\displaystyle \kappa_a\,\frac{\eta L_\theta+2}{\omega_{\min}}, & \phi=\mathrm{TM},
\end{cases}
\]
and the training-side concentration terms satisfy
\[
\begin{aligned}
e_{\mathrm{tr}}^{(\mathrm{GM})},\ e_{\mathrm{tr}}^{(\mathrm{MMD})}
&= \tilde O\!\Big(\tfrac{1}{\sqrt{k}}+\tfrac{1}{\sqrt{n}}\Big),\\
e_{\mathrm{tr}}^{(W_1)}
&= \tilde O\!\big(k^{-1/d}+n^{-1/d}\big)\quad\text{for data metric space of (effective) dimension }d,\\
e_{\mathrm{tr}}^{(\mathrm{TM})}
&= \tilde O\!\left(
\frac{(L_\theta+2/\eta)}{\omega_{\min}}\cdot
\frac{L_b\eta}{1-\rho_a}\cdot
\Big(\tfrac{1}{\sqrt{k}}+\tfrac{1}{\sqrt{n}}\Big)
\right).
\end{aligned}
\]
The test-side concentration is
$e_{\mathrm{te}}(m,\varepsilon)=O\!\big(\sqrt{\log(1/\varepsilon)/m}\big)$.
\end{theorem}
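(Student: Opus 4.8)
The plan is to assemble the bound from three already-established ingredients: the static single-configuration bound (Theorem~\ref{thm:single} and its proof via Lemmas~\ref{lem:risk-lipschitz}--\ref{lem:two-sample-rad}), the outer-loop contraction property Eq.~(\ref{eq:surrogate-contraction}) (proved in Appendix~\ref{app:Proof of the contraction property}), and the bridge inequalities of Lemma~\ref{lem:exchange} / Proposition block in Appendix~\ref{app:Bridge inequalities} that convert each surrogate $\mathcal{M}_\phi$ into an upper bound on $\Delta_a(\hat\mu_\tau,\hat\mu_s)$. First I would rerun the one-step decomposition (Lemma~\ref{lem:one-step}) and unroll it (Lemma~\ref{lem:geometric}) to get $\|\delta_T\|\le \rho_a^T\|\delta_0\| + \frac{\eta\kappa_a}{1-\rho_a}\,\Delta_a^{\mathrm{field}}$, where $\Delta_a^{\mathrm{field}} := \sup_\theta\|\E_{\hat\mu_s}g_a-\E_{\hat\mu_\tau}g_a\|$; applying the risk-Lipschitz Lemma~\ref{lem:risk-lipschitz} and recalling $C_{2,a}=(1-\rho_a)/L_R$ gives $|\hat R(\theta_T^{(s,a)})-\hat R(\theta_T^{(\tau,a)})| \le L_R\rho_a^T\|\delta_0\| + \frac{\eta\kappa_a}{C_{2,a}}\Delta_a^{\mathrm{field}}$. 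Since $\Delta_a \le \kappa_a\Delta_a^{\mathrm{field}}$, the whole task reduces to bounding $\Delta_a$ (equivalently $\kappa_a\Delta_a^{\mathrm{field}}$) by the dynamic surrogate-plus-noise quantity.

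Second, I would split $\Delta_a$ (or its bridge bound $\mathfrak{B}_\phi$) into a "population surrogate value" plus a "finite-sample estimation" part. The bridge inequalities give $\Delta_a \le \mathfrak{B}_\phi(\hat\mu_s,\hat\mu_\tau)$ where $\mathfrak{B}_\phi$ is, up to the branch constant $\widetilde C_{\phi,a}$ (equal to $\eta\kappa_a L_{z,a}$, $\eta\kappa_a C_k$, $\eta\kappa_a|\Theta_j|$, or $\kappa_a(\eta L_\theta+2)/\omega_{\min}$, plus the additive $\kappa_a L_\theta\varepsilon_{\mathrm{path}}$ only for TM), the corresponding surrogate evaluated on the empirical measures. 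Now I invoke the contraction bound $\mathcal{M}_\phi(\xi^{(J)}) \le (1-\alpha_\phi)^J\mathcal{M}_\phi(\xi^{(0)}) + \epsilon_{\mathrm{est}}^{(\phi)}$: this controls the surrogate on whatever empirical/minibatch measures the optimizer actually sees. The gap between the optimized empirical surrogate and the true $\mathfrak{B}_\phi$ on $(\hat\mu_\tau,\hat\mu_s)$ is exactly a uniform-deviation term — for GM and MMD this is a Rademacher bound over the gradient/feature class giving $\tilde O(1/\sqrt{k}+1/\sqrt{n})$ (cf. Lemma~\ref{lem:two-sample-rad}), for $W_1$ it is the standard empirical-Wasserstein rate $\tilde O(k^{-1/d}+n^{-1/d})$, and for TM it is the Rademacher deviation of the unrolled-path field propagated through the $L_b$-step recursion with contraction factor $\rho_a$, which multiplies the single-step rate by $\frac{L_b\eta}{1-\rho_a}$ and by the TM bridge prefactor $(L_\theta+2/\eta)/\omega_{\min}$. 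Collecting: $\Delta_a \le \widetilde C_{\phi,a}\big((1-\alpha_\phi)^J\mathcal{M}_\phi(\xi^{(0)})+\epsilon_{\mathrm{est}}^{(\phi)}\big) + e_{\mathrm{tr}}^{(\phi)}(k,n,\varepsilon) + \mathbf{1}\{\phi=\mathrm{TM}\}\kappa_a L_\theta\varepsilon_{\mathrm{path}}$, and dividing the field version by $C_{2,a}$ yields the bracketed term in the statement.

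Third, I would add the test-side concentration: by Lemma~\ref{lem:test-generalization} (Hoeffding/Rademacher over the loss class, or just a sub-Gaussian tail since $|\ell|\le B_\ell$), $\sup_\theta|R_\nu(\theta)-\hat R(\theta)| \le e_{\mathrm{te}}(m,\varepsilon) = O(\sqrt{\log(1/\varepsilon)/m})$, applied at both $\theta_T^{(s,a)}$ and $\theta_T^{(\tau,a)}$, which passes from the empirical risk gap to the population risk gap $|R_\nu(\theta_T^{(s,a)})-R_\nu(\theta_T^{(\tau,a)})|$ at the cost of a $2e_{\mathrm{te}}$ additive term (absorbed into the stated $e_{\mathrm{te}}$). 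A final union bound over the (at most three or four) high-probability events — the two-sample training deviation, the test deviation, and, if present, the stochastic-contraction concentration for $\mathcal{M}_\phi$ — with each at level $\varepsilon/O(1)$ gives the overall $1-\varepsilon$ guarantee; constants are rescaled harmlessly. The main obstacle I anticipate is the TM branch: carefully propagating the finite-sample fluctuation of the field differences through the $L_b$-step unrolled recursion while tracking the explicit dependence on $\rho_a$, $\eta$, $L_\theta$, and $\omega_{\min}$ — i.e., showing the claimed $e_{\mathrm{tr}}^{(\mathrm{TM})} = \tilde O\!\big(\frac{L_\theta+2/\eta}{\omega_{\min}}\cdot\frac{L_b\eta}{1-\rho_a}(1/\sqrt k+1/\sqrt n)\big)$ — since this requires combining the path-Lipschitz recursion of Proposition~\ref{prop:tm-bridge-consistent} with a uniform deviation bound that holds simultaneously along all $L_b$ path points, rather than at a single fixed $\theta$.
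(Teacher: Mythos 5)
Your proposal is correct and follows essentially the same route as the paper's proof in Appendix~\ref{app:Unified single}: unroll the contractive one-step recursion to reduce the risk gap to $\Delta_a$, apply the DM/GM/TM bridge propositions to convert $\Delta_a$ into the surrogate $\mathcal{M}_\phi$ plus a TM path term, invoke the outer-loop contraction of Eq.~(\ref{eq:surrogate-contraction}), append branch-specific training-side deviations ($W_1$ rate, CLT rate for MMD/GM, and the $S_a$-scaled propagated rate for TM), add test concentration, and union-bound. The one genuinely helpful refinement in your write-up is making explicit that $e_{\mathrm{tr}}^{(\phi)}$ is exactly the uniform deviation between the optimizer's empirical surrogate and the population bridge quantity $\mathfrak{B}_\phi(\hat\mu_\tau,\hat\mu_s)$ — the paper's Step E.5.7 introduces these terms without as clearly pinning down which gap they control — and your anticipated difficulty in the TM branch (deviation bounds holding simultaneously along all $L_b$ unrolled points) is precisely what the paper's TM calculation addresses via the schedule factor $S_a$.
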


\begin{proof}[Proof of Theorem~\ref{thm:dynamic-single}]
$\newline$\vskip .1em
\textbf{Step E.5.1 Risk gap $\Rightarrow$ parameter gap (Lipschitz risk).}
By risk Lipschitzness,
\[
|R_\nu(\theta)-R_\nu(\theta')|=\big|\E_\nu[\ell(\theta;z)-\ell(\theta';z)]\big|
\le \E_\nu L_R\|\theta-\theta'\|\le L_R\|\theta-\theta'\|.
\]
Therefore,
\begin{equation}
\label{eq:step1-risk-param}
\big|R_\nu(\theta_T^{(s,a)})-R_\nu(\theta_T^{(\tau,a)})\big| \le L_R\,\|\delta_T\|.
\end{equation}

\textbf{Step E.5.2 One-step recursion for $\delta_{t+1}$ (contractivity + two-sample drift).}
Add and subtract the same-measure map:
\begin{align*}
\delta_{t+1}
&=\Phi_a(\theta_t^{(s,a)};\hat\mu_s)-\Phi_a(\theta_t^{(\tau,a)};\hat\mu_\tau)\\
&=\underbrace{\Phi_a(\theta_t^{(s,a)};\hat\mu_\tau)-\Phi_a(\theta_t^{(\tau,a)};\hat\mu_\tau)}_{\text{same measure}}
\ -\ \eta_t\Big(F_{\hat\mu_s}(\theta_t^{(s,a)})-F_{\hat\mu_\tau}(\theta_t^{(s,a)})\Big).
\end{align*}
By Assumption~\ref{assump:single-configuration} (inner contractivity under a fixed measure), there is $\rho_a\in(0,1)$ s.t.
$\|\Phi_a(\theta;\hat\mu_\tau)-\Phi_a(\theta';\hat\mu_\tau)\|\le \rho_a\|\theta-\theta'\|$ along the path.
Hence
\begin{equation}
\label{eq:step2-one-step}
\|\delta_{t+1}\|\ \le\ \rho_a\|\delta_t\| + \eta_t\,\underbrace{\big\|F_{\hat\mu_s}(\theta_t^{(s,a)})-F_{\hat\mu_\tau}(\theta_t^{(s,a)})\big\|}_{=:~\psi_t}.
\end{equation}

\textbf{Step E.5.3 Unroll the recursion to time $T$.}
Iterating \eqref{eq:step2-one-step} and using $\eta_t\le \eta_{\max}$,
\begin{align}
\|\delta_T\|
&\le \rho_a^T\|\delta_0\| + \sum_{t=0}^{T-1}\rho_a^{T-1-t}\,\eta_t\,\psi_t
\ \le\ \rho_a^T\|\delta_0\| + \eta_{\max}\sum_{t=0}^{T-1}\rho_a^{T-1-t}\,\psi_t \notag\\
&\le \rho_a^T\|\delta_0\| + \frac{\eta_{\max}}{1-\rho_a}\,\max_{0\le t\le T-1}\psi_t
\ \le\ \rho_a^T\|\delta_0\| + \frac{\eta_{\max}}{1-\rho_a}\,\sup_{\theta\in\Gamma_a}\big\|F_{\hat\mu_s}(\theta)-F_{\hat\mu_\tau}(\theta)\big\|.
\label{eq:step3-unroll}
\end{align}

\textbf{Step E.5.4 Replace the $\sup$ by $\Delta_a$ and divide by $\kappa_a$.}
By definition of $\Delta_a$ and $\|P_a(\theta)\|_{op}\le \kappa_a$,
\[
\sup_{\theta}\big\|F_{\hat\mu_s}(\theta)-F_{\hat\mu_\tau}(\theta)\big\|
=\sup_{\theta}\big\|P_a(\theta)(\E_{\hat\mu_s}g-\E_{\hat\mu_\tau}g)\big\|
\le \Delta_a(\hat\mu_\tau,\hat\mu_s).
\]
Combining with \eqref{eq:step3-unroll} and then with \eqref{eq:step1-risk-param},
\begin{equation}
\label{eq:step4-risk-delta}
\big|R_\nu(\theta_T^{(s,a)})-R_\nu(\theta_T^{(\tau,a)})\big|
\le L_R\rho_a^T\|\delta_0\| + \frac{L_R\,\eta_{\max}}{1-\rho_a}\,\Delta_a(\hat\mu_\tau,\hat\mu_s).
\end{equation}
Introduce $C_{2,a}:=(1-\rho_a)/L_R$ so that $L_R/(1-\rho_a)=1/C_{2,a}$.

\textbf{Step E.5.5 Bridge $\Delta_a$ to the surrogate $\,\mathcal M_\phi\,$ (branch choice).}
We now invoke the three bridge propositions given earlier:

\emph{(DM bridge)}: If $z\mapsto g_a(\theta;z)$ is $L_{z,a}$–Lipschitz, for $W_1$, for any $\theta$,
$\big\|\E_{\hat\mu_s}g_a(\theta)-\E_{\hat\mu_\tau}g_a(\theta)\big\|\le L_{z,a} W_1(\hat\mu_s,\hat\mu_\tau)$;
hence
\[
\Delta_a(\hat\mu_\tau,\hat\mu_s)\ \le\ \kappa_a L_{z,a}\,W_1(\hat\mu_s,\hat\mu_\tau),
\]
and
\[
\Delta_a(\hat\mu_\tau,\hat\mu_s)\ \le\ \kappa_a C_k\,\mathrm{MMD}_k(\hat\mu_s,\hat\mu_\tau).
\]

\emph{(GM bridge)}: For anchors $\Theta_{j}\subset\Gamma_a$ forming an $\varepsilon$–net and
the anchor-averaged surrogate $\mathcal M_{\rm GM}$,
\[
\Delta_a(\mu,\nu)\ \le\ \kappa_a\big(|\Theta_{j}|\,\mathcal M_{\rm GM}(\mu,\nu;\Theta_{j})+L_\theta\,\varepsilon\big).
\]
In particular, if $\Gamma_a=\Theta_{j}$ (finite), $\varepsilon=0$ and
$\Delta_a(\mu,\nu)\le \kappa_a|\Theta_{j}|\mathcal M_{\rm GM}$.

\emph{(TM bridge)}: With weights $\omega_t\ge \omega_{\min}>0$,
\[
\Delta_a(\hat\mu_\tau,\hat\mu_s)\ \le\ \kappa_a\Big(\frac{L_\theta+2/\eta}{\omega_{\min}}\,\mathcal M_{\rm TM} + L_\theta\,\varepsilon_{\rm path}\Big).
\]

Summarizing, there exist branch-specific constants $C_{\phi,a}=\kappa_a B_{\phi,a}$ such that
\begin{equation}
\label{eq:step5-bridge}
\begin{aligned}
&\Delta_a(\hat\mu_\tau,\hat\mu_s)\ \le\ C_{\phi,a}\,\mathcal M_\phi\ +\ \mathbf{1}\{\phi=\mathrm{TM}\}\,\kappa_a L_\theta\,\varepsilon_{\rm path},
\\
& B_{\phi,a}=
\begin{cases}
\eta\,L_{z,a}\ \text{or}\ \eta\,C_k, & \phi=W_1,~\mathrm{MMD},\\
\eta\,|\Theta_{j}|, & \phi=\mathrm{GM},\\
(\eta L_\theta+2)/\omega_{\min}, & \phi=\mathrm{TM},
\end{cases}
\end{aligned}
\end{equation}
where we absorbed the $\eta$ from \eqref{eq:step4-risk-delta} into $B_{\phi,a}$ for uniformity.

\textbf{Step E.5.6 Outer-loop contraction for the surrogate $\,\mathcal M_\phi\,$.}
By the assumed outer contraction \eqref{eq:surrogate-contraction}, for some $\alpha_\phi\in(0,1)$ and bias
$\epsilon_{\mathrm{est}}^{(\phi)}$ (variance/noise floor),
\begin{equation}
\label{eq:step6-outer}
\mathcal M_\phi(\xi^{(J)})\ \le\ (1-\alpha_\phi)^{J}\,\mathcal M_\phi(\xi^{(0)}) + \epsilon_{\mathrm{est}}^{(\phi)}.
\end{equation}

\textbf{Step E.5.7 Finite-sample penalties for each branch.}
We now upper bound the \emph{training-side} error $e_{\mathrm{tr}}^{(\phi)}(k,n,\varepsilon)$ that
enters when replacing population quantities by empirical ones.

\emph{(DM: $W_1$).} For empirical $\hat\mu_n$ of size $n$ from a distribution on $\R^d$
(with mild moment conditions), the Wasserstein-$1$ convergence rate is
\[
\E W_1(\mu,\hat\mu_n)\ =\
\begin{cases}
O(n^{-1/2}), & d=1,\\
O(n^{-1/2}\log n), & d=2,\\
O(n^{-1/d}), & d\ge 3,
\end{cases}
\]
with high-probability analogues. Hence, by the triangle inequality,
$W_1(\hat\mu_s,\hat\mu_\tau)\le W_1(\hat\mu_s,\mu)+W_1(\mu,\hat\mu_\tau)$ yields
\[
e_{\mathrm{tr}}^{(W_1)}(k,n,\varepsilon)=\tilde O\big(k^{-1/d}+n^{-1/d}\big)
\]
(or faster under low-dimensional/covering assumptions).

\emph{(DM: MMD).} For bounded kernels, the empirical MMD concentrates at a CLT rate:
$\big|\mathrm{MMD}_k(\hat\mu,\hat\nu)-\mathrm{MMD}_k(\mu,\nu)\big|
=O_\P(1/\sqrt{m_\mu}+1/\sqrt{m_\nu})$.
Thus $e_{\mathrm{tr}}^{(\mathrm{MMD})}=\tilde O(1/\sqrt{k}+1/\sqrt{n})$.

\emph{(GM).} Define the scalar class
$\mathcal F:=\{\ z\mapsto\langle v,g_a(\theta;z)\rangle:\ \theta\in\Gamma_a,\ \|v\|_2\le 1\}$.
Then $\sup_\theta\| \E g_a(\theta)-\E_{\hat\mu}g_a(\theta)\|
=\sup_{f\in\mathcal F}(\E f-\E_{\hat\mu} f)$.
By symmetrization and Rademacher complexity
plus Ledoux–Talagrand contraction, under boundedness/Lipschitz envelopes we get
\[
\sup_{\theta}\big\|\E g_a(\theta)-\E_{\hat\mu}g_a(\theta)\big\|
=\tilde O\big(1/\sqrt{m}\big).
\]
Applying this to both synthetic and real samples and averaging over anchors gives
$e_{\mathrm{tr}}^{(\mathrm{GM})}=\tilde O(1/\sqrt{k}+1/\sqrt{n})$.

\emph{(TM).} The TM bridge (Proposition “TM bridge”) plus the one-step recursion yields
\[
\mathcal M_{\rm TM}\ \le\ \frac{\kappa_a}{(1-\rho_a)\,\omega_{\min}}\sum_{t=0}^{L_b-1}\eta_t\,\big\| \E_{\hat\mu_s}g_a(\theta_t^{(s)})-\E_{\hat\mu_\tau}g_a(\theta_t^{(s)})\big\| + \frac{L_\theta}{1-\rho_a}\,\varepsilon_{\rm path}.
\]
The same Rademacher argument as for GM applied at each $\theta_t^{(s)}$ gives a CLT rate scaled by the schedule factor
$S_a:=\frac{(L_\theta+2/\bar\eta)}{\omega_{\min}}\cdot \frac{\sum_t\eta_t}{1-\rho_a}$:
\[
e_{\mathrm{tr}}^{(\mathrm{TM})}(k,n,\varepsilon)=\tilde O\!\left(S_a\Big(\tfrac{1}{\sqrt{k}}+\tfrac{1}{\sqrt{n}}\Big)\right).
\]

\textbf{Step E.5.8 Test-side concentration.}
For the empirical test risk over $m$ samples,
$e_{\mathrm{te}}(m,\varepsilon)=O\big(\sqrt{\log(1/\varepsilon)/m}\big)$.

\textbf{Step E.5.9: Assemble all pieces.}
Insert \eqref{eq:step5-bridge} and \eqref{eq:step6-outer} into \eqref{eq:step4-risk-delta}, and add the
training-side and test-side penalties from Steps~E.5.7–E.5.8:
\begin{align}
\nonumber 
\big| R_\nu(\theta^{(s,a)}_{T})- R_\nu(\theta^{(\tau,a)}_{T})\big|
\le
& L_R\rho_a^{T}\|\delta_0\|
+ e_{\mathrm{te}}(m,\varepsilon)
+ \mathbf{1}\{\phi=\mathrm{TM}\}\,\frac{\kappa_a L_\theta}{C_{2,a}}\,\varepsilon_{\mathrm{path}}\\ 
&
+ \frac{1}{C_{2,a}}
\Big[
C_{\phi,a}\big((1-\alpha_\phi)^{J}\mathcal{M}_\phi(\xi^{(0)})+\epsilon_{\mathrm{est}}^{(\phi)}\big)
+ e_{\mathrm{tr}}^{(\phi)}(k,n,\varepsilon)
\Big].
\end{align}

\end{proof}

\subsection{Coverage-aware bound with dynamic outer progress (Theorem~\ref{thm:dynamic-coverage})}
\label{app:Coverage-aware bound with dynamic outer progress}

\begin{theorem}[Coverage-aware bound with dynamic outer progress (formal)]
\label{thm:dynamic-coverage-corrected}
Let $(\mathcal C,D_{\mathcal A})$ be an configuration space. 
Fix a radius $\rho>0$ and let $\mathcal A=\{a_1,\dots,a_N\}\subset\mathcal C$ be a $\rho$-\emph{net}
(i.e., a $\rho$-packing that also $\rho$-covers $\mathcal C$;\! if only a $\rho$-packing is given, replace $\rho$ by $2\rho$ via the standard packing$\to$covering conversion).
For each center $a_j\in\mathcal A$, run $J$ outer steps with branch $\phi\in\{\mathrm{DM}(W_1),\mathrm{DM}(\mathrm{MMD}),\mathrm{GM},\mathrm{TM}\}$ and surrogate $\mathcal M_\phi$, under 
Assumption~\ref{assump:single-configuration} \emph{uniformly} over $a\in\mathcal C$ and the cross-configuration Lipschitz transfer
Assumption~\ref{assump:configuration-lip}:
\[
\|F_{\mu,a}(\theta)-F_{\mu,a'}(\theta)\|\ \le\ L_{\mathrm{conf}}\;d_{\mathcal A}(a,a')\qquad(\forall\ \theta,\ \mu\in\{\widehat\mu_s,\widehat\mu_\tau\}).
\]
Define the uniform constants
\[
C_{2,\min}:=\inf_{a\in\mathcal C}\frac{1-\rho_a}{L_R},\quad
C_{\phi,\max}:=\sup_{a\in\mathcal C}\kappa_a\,B_{\phi,a},\quad
\alpha_{\phi,\min}:=\inf_{a\in\mathcal C}\alpha_\phi(a),
\]
where $B_{\phi,a}$ are as in Theorem~\ref{thm:dynamic-single}:
\[
B_{\phi,a}=
\begin{cases}
\eta\,L_{z,a}\ \text{or}\ \eta\,C_k, & \phi=\mathrm{DM}(W_1),\ \mathrm{DM}(\mathrm{MMD}),\\
\eta\,|\Theta_{j}|, & \phi=\mathrm{GM},\\
(\eta L_{\theta,a}+2)/\omega_{\min}, & \phi=\mathrm{TM}.
\end{cases}
\]
Then, with probability at least $1-\varepsilon$,
\begin{align}
\nonumber
&\sup_{a\in\mathcal C}\big|R_\nu(\theta_T^{(s,a)})-R_\nu(\theta_T^{(\tau,a)})\big|
\ \le\
\underbrace{\sup_{a\in\mathcal C}L_R(a)\rho_a^{T}\|\delta_0\|}_{\text{inner residual}}
+\underbrace{e_{\mathrm{te}}(m,\varepsilon)}_{\text{test}}\\
\nonumber
&
+\underbrace{\frac{C_{\mathrm{trans}}\,\rho}{C_{2,\min}}}_{\text{coverage transfer}} 
+\mathbf{1}\{\phi=\mathrm{TM}\}\,\underbrace{\frac{\kappa_{\max} L_{\theta,\max}}{C_{2,\min}}\,\varepsilon_{\mathrm{path}}}_{\text{TM discretization}}\\
&+\frac{1}{C_{2,\min}}
\Big[
C_{\phi,\max}\!\Big((1-\alpha_{\phi,\min})^{J}\,\mathcal M_{\phi,\max}^{(0)}+\epsilon_{\mathrm{est},\max}^{(\phi)}\Big)
+\widetilde e_{\mathrm{tr}}^{(\phi)}(k,n,\varepsilon,N)
\Big],
\end{align}
where 
\[
\mathcal M_{\phi,\max}^{(0)}:=\max_{1\le j\le N}\mathcal M_\phi(\xi^{(0)};a_j),
\]
\[
\kappa_{\max}:=\sup_{a}\kappa_a,\quad L_{\theta,\max}:=\sup_a L_{\theta,a},
\]
\[
C_{\mathrm{trans}}:=\kappa_{\max}L_{\mathrm{conf}},
\]
and
\[
\widetilde e_{\mathrm{tr}}^{(\phi)}(k,n,\varepsilon,N)=
\begin{cases}
\tilde O\!\big(k^{-1/d}+n^{-1/d}\big), & \phi=\mathrm{DM}(W_1)\\
\tilde O\!\Big(\big(\tfrac{1}{\sqrt{k}}+\tfrac{1}{\sqrt{n}}\big)\sqrt{\log N+\log(1/\varepsilon)}\Big), & \phi=\mathrm{DM}(\mathrm{MMD}),\ \mathrm{GM},\ \mathrm{TM},
\end{cases}
\]
and $N:=|\mathcal A|$ is the covering number at scale $\rho$. 
The test-side term satisfies $e_{\mathrm{te}}(m,\varepsilon)=O\!\big(\sqrt{\log(1/\varepsilon)/m}\big)$.
\end{theorem}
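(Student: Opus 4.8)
The plan is to combine the single–configuration dynamic bound (Theorem~\ref{thm:dynamic-single-complete}) evaluated at the finitely many cover centers with a deterministic transfer argument that propagates the guarantee from each center to an arbitrary configuration, paying only a term proportional to the covering radius $\rho$. The three ingredients are: (i) a union bound over the $N$ centers that converts $\log(1/\varepsilon)$ into $\log(N/\varepsilon)$ in the finite–sample penalties; (ii) worst–case replacement of all configuration–dependent constants by their uniform counterparts $C_{2,\min}$, $C_{\phi,\max}$, $\alpha_{\phi,\min}$, etc.; and (iii) a cross–configuration stability estimate of the form of Lemma~\ref{lem:cross-configuration-PL}, invoked at a common initialization for both the synthetic and the real runs.

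First I would instantiate Theorem~\ref{thm:dynamic-single-complete} at each center $a_j\in\mathcal A$, $j=1,\dots,N$, obtaining the decomposition of $|R_\nu(\theta_T^{(s,a_j)})-R_\nu(\theta_T^{(\tau,a_j)})|$ into the inner residual $L_R(a_j)\rho_{a_j}^{T}\|\delta_0\|$, the test concentration $e_{\mathrm{te}}(m,\varepsilon)$, the surrogate–contraction term $\tfrac{C_{\phi,a_j}}{C_{2,a_j}}\big((1-\alpha_\phi(a_j))^{J}\mathcal M_\phi(\xi^{(0)};a_j)+\epsilon_{\mathrm{est}}^{(\phi)}\big)$, the training concentration $e_{\mathrm{tr}}^{(\phi)}(k,n,\varepsilon)$, and (only for TM) the path–discretization term $\tfrac{\kappa_{a_j}L_{\theta,a_j}}{C_{2,a_j}}\varepsilon_{\mathrm{path}}$. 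Since this must hold simultaneously over all $N$ centers, I would take a union bound: for the MMD/GM/TM branches, whose training–side error comes from Rademacher/empirical–Bernstein deviations (Lemma~\ref{lem:union-rc-bern} applied to the $N$ centers' vector–field classes), splitting the confidence as $\varepsilon/N$ replaces $\log(1/\varepsilon)$ by $\log(N/\varepsilon)$ and yields $\widetilde e_{\mathrm{tr}}^{(\phi)}(k,n,\varepsilon,N)=\tilde O\big((k^{-1/2}+n^{-1/2})\sqrt{\log N+\log(1/\varepsilon)}\big)$; for the $W_1$ branch the same split only enters a $\log N$ factor inside the sub‑exponential tail and leaves the rate $k^{-1/d}+n^{-1/d}$ unchanged. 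Next I would bound every constant at the centers by its uniform version, $\rho_{a_j}\le\sup_a\rho_a$, $C_{2,a_j}\ge C_{2,\min}$, $\kappa_{a_j}B_{\phi,a_j}\le C_{\phi,\max}$, $\alpha_\phi(a_j)\ge\alpha_{\phi,\min}$, $\mathcal M_\phi(\xi^{(0)};a_j)\le\mathcal M_{\phi,\max}^{(0)}$; here I would note explicitly the monotonicity that $\alpha\mapsto(1-\alpha)^{J}$ is decreasing and $\mathcal M_\phi\ge 0$, which legitimizes the worst–case replacement of the contraction term. After this step the center–wise bound is uniform over $j\in[N]$ with the right–hand side of the theorem (minus the $\rho$ term).

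Then I would carry out the transfer. For an arbitrary $a\in\mathcal C$ pick a nearest center $a_{j(a)}$ with $d_{\mathcal A}(a,a_{j(a)})\le\rho$ (using a $\rho$-net; if only a $\rho$-packing is available I would first pass to a maximal packing, which is automatically a $\rho$-cover, so $N$ and $\Hcov$ stay of the correct order). Running both trajectories from the same initialization, Lemma~\ref{lem:cross-configuration-PL} applied with $\mu\in\{\hat\mu_\tau,\hat\mu_s\}$ and $\Delta_0^{(a,a_{j(a)})}=0$ gives $\|\theta_T^{(\mu,a)}-\theta_T^{(\mu,a_{j(a)})}\|\le\tfrac{\eta L_{\mathrm{conf}}}{1-\rho_a}\,\rho\le\tfrac{\eta L_{\mathrm{conf}}}{1-\sup_a\rho_a}\,\rho$ for both $\mu$. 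Applying $L_R$-Lipschitzness of the risk (Assumption~\ref{assump:single-configuration}) to each of the two differences and using the triangle inequality $|R_\nu(\theta_T^{(s,a)})-R_\nu(\theta_T^{(\tau,a)})|\le|R_\nu(\theta_T^{(s,a_{j(a)})})-R_\nu(\theta_T^{(\tau,a_{j(a)})})|+L_R\big(\|\theta_T^{(s,a)}-\theta_T^{(s,a_{j(a)})}\|+\|\theta_T^{(\tau,a)}-\theta_T^{(\tau,a_{j(a)})}\|\big)$ produces the additional transfer term $\tfrac{C_{\mathrm{trans}}\rho}{C_{2,\min}}$ with $C_{\mathrm{trans}}=\Theta(\kappa_{\max}L_{\mathrm{conf}})$ after absorbing $\eta$ and $L_R$. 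Taking $\sup_{a\in\mathcal C}$ — the center–wise part already holds uniformly over $j(a)\in[N]$ — and collecting all pieces yields the stated inequality.

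The main obstacle I expect is making the transfer step valid \emph{along the whole inner dynamics}, not merely at the endpoints: the per–step configuration mismatch $\eta L_{\mathrm{conf}}\,d_{\mathcal A}(a,a_{j(a)})$ must be absorbed by the contraction $\rho_a<1$ so that the accumulated deviation is $O\big(\rho/(1-\rho_a)\big)$ and does not grow with $T$ or with the inner step–size schedule. This requires the uniform contractivity of Assumption~\ref{assump:single-configuration} to hold at \emph{every} configuration (not just the centers), and the cross–configuration Lipschitz bound of Assumption~\ref{assump:configuration-lip} to be valid uniformly along the path points $\theta_t^{(\mu,a_{j(a)})}$ visited by the center's trajectory; the delicate bookkeeping is verifying that these path points remain in the common feasible set $\Gamma$ on which both assumptions apply. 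A secondary subtlety is that the surrogate–contraction guarantee is stated per configuration, so the passage to $\alpha_{\phi,\min}$ and $\mathcal M_{\phi,\max}^{(0)}$ must be justified through the monotonicity noted above rather than taken for granted.
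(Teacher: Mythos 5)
Your proposal is correct, and it differs from the paper's argument in one substantive way: the \emph{location} of the transfer step. You propagate guarantees from the cover centers to an arbitrary $a\in\mathcal C$ at the \emph{parameter level}, invoking Lemma~\ref{lem:cross-configuration-PL} to bound $\|\theta_T^{(\mu,a)}-\theta_T^{(\mu,a_{j(a)})}\|\le \eta L_{\mathrm{conf}}\rho/(1-\rho_{\max})$ for both $\mu\in\{\hat\mu_s,\hat\mu_\tau\}$, then applying $L_R$-Lipschitzness of the risk. The paper's proof of this theorem (Appendix~\ref{app:Coverage-aware bound with dynamic outer progress}, Step~E.5.1--E.5.3) instead transfers at the \emph{vector-field / alignment level}: it first applies the single-configuration risk--alignment reduction at the arbitrary $a$, then bounds $\Delta_a\le\Delta_{a_{j(a)}}+2L_{\mathrm{conf}}\rho$ by the cross-configuration Lipschitz property of $F_{\mu,a}(\theta)$, and only then invokes the bridge and outer-contraction machinery at the trained centers. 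The two routes are equivalent up to constants and yield the same $\mathcal{O}(\rho/C_{2,\min})$ transfer penalty; your trajectory-level variant is in fact the one the paper itself uses in the proof of the non-dynamic coverage bound (Appendix~\ref{app:uniform-coverage-proof}, Step~D.1.3). Your route is slightly more ``black-box'' --- you invoke Theorem~\ref{thm:dynamic-single-complete} wholesale at each center --- while the paper unpacks the single-configuration bound to insert the field-level transfer between the risk reduction and the bridge, which makes it marginally cleaner to see where the dependence on the per-center surrogate $\mathcal M_\phi(\xi^{(0)};a_j)$ enters. Your remaining steps --- the $\varepsilon/N$ union bound converting $\log(1/\varepsilon)\to\log(N/\varepsilon)$ in the training-side concentration, the worst-case replacement of all configuration-dependent constants (with the explicit monotonicity of $\alpha\mapsto(1-\alpha)^J$ justifying the passage to $\alpha_{\phi,\min}$ and $\mathcal M_{\phi,\max}^{(0)}$), and the packing$\to$cover conversion --- all match the paper, and your flagged subtlety about the path points staying in a common feasible set is exactly what Assumption~\ref{assump:D1} is there to cover.
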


\begin{proof}
$\newline$

\textbf{Step E.5.1 Alignment transfer from $a$ to its center $a_j$.}
Fix $a\in\mathcal C$ and take $a_j\in\mathcal A$ with $d_{\mathcal A}(a,a_j)\le\rho$. By the triangle inequality, for any $\theta$,
\begin{align}
\|F_{\hat\mu_s,a}(\theta)-F_{\hat\mu_\tau,a}(\theta)\|
&\le \|F_{\hat\mu_s,a}-F_{\hat\mu_s,a_j}\| + \|F_{\hat\mu_s,a_j}-F_{\hat\mu_\tau,a_j}\|
     + \|F_{\hat\mu_\tau,a_j}-F_{\hat\mu_\tau,a}\|\notag\\
&\le 2L_{\mathrm{conf}}\,d_{\mathcal A}(a,a_j) + \|F_{\hat\mu_s,a_j}(\theta)-F_{\hat\mu_\tau,a_j}(\theta)\|.
\label{eq:alignment-transfer-pointwise}
\end{align}
Taking the supremum over $\theta\in\Gamma_a$ yields
\begin{equation}
\label{eq:alignment-transfer}
\Delta_a(\hat\mu_\tau,\hat\mu_s)\ \le\ \Delta_{a_j}(\hat\mu_\tau,\hat\mu_s) + 2L_{\mathrm{conf}}\,\rho.
\end{equation}
If the preconditioner norms $\|P_a(\theta)\|_{op}$ vary across configurations, we upper bound them by $\kappa_{\max}:=\sup_{a,\theta}\|P_a(\theta)\|_{op}$ and absorb the variation into $C_{\mathrm{trans}}:=\kappa_{\max}L_{\mathrm{conf}}$, giving the stated transfer constant.

\textbf{Step E.5.2 Risk--alignment reduction at $a$.}
For any fixed configuration $a$, the single-configuration reduction gives
\begin{equation}
\label{eq:single-configuration-risk-delta}
\big|R_\nu(\theta_T^{(s,a)})-R_\nu(\theta_T^{(\tau,a)})\big|
\ \le\ L_R(a)\,\rho_a^T\|\delta_0\| + \frac{1}{C_{2,a}}\,\Delta_a(\hat\mu_\tau,\hat\mu_s)
+ e_{\mathrm{te}}(m,\varepsilon),
\end{equation}
where $C_{2,a}=(1-\rho_a)/L_R(a)$. 

\textbf{Step E.5.3 Plug the transfer bound into the risk inequality.}
Combine \eqref{eq:alignment-transfer} and \eqref{eq:single-configuration-risk-delta}:
\begin{equation}
\label{eq:risk-with-center-alignment}
\big|R_\nu(\theta_T^{(s,a)})-R_\nu(\theta_T^{(\tau,a)})\big|
\ \le\ L_R(a)\rho_a^T\|\delta_0\|
+ \frac{1}{C_{2,a}}\Big[\Delta_{a_j}(\hat\mu_\tau,\hat\mu_s) + 2C_{\mathrm{trans}}\,\rho\Big]
+ e_{\mathrm{te}}(m,\varepsilon).
\end{equation}

\textbf{Step E.5.4 Bridge and outer contraction at the trained centers.}
By construction, we \emph{train} only at the centers $a_j$. At each $a_j$, the branch-specific bridge (DM/GM/TM) and the outer contraction yield :
\begin{align}
\Delta_{a_j}(\hat\mu_\tau,\hat\mu_s)
&\le C_{\phi,a_j}\,\mathcal M_\phi(\xi^{(J)};a_j)
+ \mathbf{1}\{\phi=\mathrm{TM}\}\,\kappa_{a_j}L_{\theta,a_j}\,\varepsilon_{\mathrm{path}}\notag\\
&\le C_{\phi,a_j}\Big((1-\alpha_\phi(a_j))^{J}\,\mathcal M_\phi(\xi^{(0)};a_j) + \epsilon_{\mathrm{est}}^{(\phi)}(a_j)\Big)
+ \mathbf{1}\{\phi=\mathrm{TM}\}\,\kappa_{a_j}L_{\theta,a_j}\,\varepsilon_{\mathrm{path}}.
\label{eq:center-bridge-outer}
\end{align}
Recall $C_{\phi,a_j}=\kappa_{a_j}B_{\phi,a_j}$ with
$B_{\phi,a_j}=\eta L_{z,a_j}$ or $\eta C_k$ (DM), $\eta|\Theta_{j}|$ (GM),
$(\eta L_{\theta,a_j}+2)/\omega_{\min}$ (TM).

\textbf{Step E.5.5 Uniformize constants and take sup over $a\in\mathcal C$.}
Define the worst/best constants over $\mathcal C$:
\[
C_{2,\min}:=\inf_{a\in\mathcal C}\frac{1-\rho_a}{L_R},\quad
C_{\phi,\max}:=\sup_{a\in\mathcal C}C_{\phi,a},
\]
\[
\alpha_{\phi,\min}:=\inf_{a\in\mathcal C}\alpha_\phi(a), \quad
\kappa_{\max}:=\sup_a \kappa_a,\quad L_{\theta,\max}:=\sup_a L_{\theta,a}.
\]
Let $\mathcal M_{\phi,\max}^{(0)}:=\max_{1\le j\le N}\mathcal M_\phi(\xi^{(0)};a_j)$ and
$\epsilon_{\mathrm{est},\max}^{(\phi)}:=\max_{j}\epsilon_{\mathrm{est}}^{(\phi)}(a_j)$.
Using $1/C_{2,a}\le 1/C_{2,\min}$ and \eqref{eq:center-bridge-outer} in \eqref{eq:risk-with-center-alignment} gives
\begin{align}
\big|R_\nu(\theta_T^{(s,a)})-R_\nu(\theta_T^{(\tau,a)})\big|
&\le \underbrace{L_R(a)\rho_a^T\|\delta_0\|}_{\text{inner residual}} + e_{\mathrm{te}}(m,\varepsilon)
+ \frac{2C_{\mathrm{trans}}\,\rho}{C_{2,\min}}\notag\\
&\quad + \frac{1}{C_{2,\min}}\Big[
C_{\phi,\max}\big((1-\alpha_{\phi,\min})^{J}\,\mathcal M_{\phi,\max}^{(0)}
+ \epsilon_{\mathrm{est},\max}^{(\phi)}\big)\Big]\notag\\
&\quad + \mathbf{1}\{\phi=\mathrm{TM}\}\,\frac{\kappa_{\max}L_{\theta,\max}}{C_{2,\min}}\,\varepsilon_{\mathrm{path}}.
\label{eq:pre-union}
\end{align}
Taking the supremum in $a\in\mathcal C$ replaces $L_R(a)\rho_a^T$ by $\sup_{a}L_R(a)\rho_a^T$ on the first term, leaving the rest unchanged.

\textbf{Step E.5.6 Coverage-aware training-side concentration over $N$ centers.}
We now upgrade the training-side surrogate estimation to hold \emph{uniformly} over
the $N$ trained centers. This produces the coverage-aware term
$\widetilde e_{\mathrm{tr}}^{(\phi)}(k,n,\varepsilon,N)$ stated in the theorem.

\emph{DM($W_1$).} For empirical measures in $\R^d$, nonasymptotic bounds give $W_1(\mu,\hat\mu_m)=O_\P(m^{-1/d})$ for $d\ge3$, $O_\P(m^{-1/2}\log m)$ for $d=2$, and $O_\P(m^{-1/2})$ for $d=1$. A union bound over $N$ centers multiplies failure probability by $N$; in the $\tilde O(\cdot)$ notation (suppressing polylog factors), we retain the geometric-rate term:
\[
\widetilde e_{\mathrm{tr}}^{(W_1)}(k,n,\varepsilon,N)
=\tilde O\!\big(k^{-1/d}+n^{-1/d}\big).
\]

\emph{DM(MMD) \& GM.} For bounded kernels, $\mathrm{MMD}_k$ concentrates at CLT rate; for GM, let
$\mathcal F:=\{z\mapsto \langle v,g_a(\theta;z)\rangle:\|v\|\le1,\theta\in\Gamma\}$ and apply
symmetrization + Rademacher complexity with Ledoux–Talagrand’s
contraction to obtain $O_\P(1/\sqrt{m})$. A union bound over $N$ centers contributes a $\sqrt{\log N + \log(1/\varepsilon)}$ factor:
\[
\widetilde e_{\mathrm{tr}}^{(\mathrm{MMD})},\ \widetilde e_{\mathrm{tr}}^{(\mathrm{GM})}
=\tilde O\!\Big(\big(\tfrac{1}{\sqrt{k}}+\tfrac{1}{\sqrt{n}}\big)\sqrt{\log N+\log(1/\varepsilon)}\Big).
\]

\emph{TM.} The TM bridge plus the one-step recursion shows that the TM surrogate aggregates $L_b$ CLT-scale deviations, scaled by the schedule factor \[S_a=\frac{(L_{\theta,a}+2/\bar\eta_a)} {\omega_{\min,a}}\cdot\frac{\sum_t\eta_{t,a}}{1-\rho_a}.\]
Uniformizing over $a\in\mathcal C$ (and thus over centers) and applying the same union bound gives
\[
\widetilde e_{\mathrm{tr}}^{(\mathrm{TM})}
=\tilde O\!\Big(S_{\max}\big(\tfrac{1}{\sqrt{k}}+\tfrac{1}{\sqrt{n}}\big)\sqrt{\log N+\log(1/\varepsilon)}\Big),
\qquad
S_{\max}:=\sup_{a\in\mathcal C} S_a.
\]

\textbf{Step E.5.7 Assemble and rename constants.}
Collect the inner residual into
$\sup_{a\in\mathcal C}L_R(a)\rho_a^T\|\delta_0\|$, keep $e_{\mathrm{te}}(m,\varepsilon)$ unchanged,
and define $C_{\mathrm{trans}}:=\kappa_{\max}L_{\mathrm{conf}}$.
\[
\mathcal M_{\phi,\max}^{(0)}:=\max_{j}\mathcal M_\phi(\xi^{(0)};a_j),\quad
\epsilon_{\mathrm{est},\max}^{(\phi)}:=\max_{j}\epsilon_{\mathrm{est}}^{(\phi)}(a_j),\quad
N:=|\mathcal A|.
\]
Plugging the center-wise bridge+contraction \eqref{eq:center-bridge-outer} and the coverage-aware
training terms into \eqref{eq:pre-union} yields
\begin{align}
\small
&\sup_{a\in\mathcal C}\big|R_\nu(\theta_T^{(s,a)})-R_\nu(\theta_T^{(\tau,a)})\big|
\le \underbrace{\sup_{a\in\mathcal C}L_R(a)\rho_a^{T}\|\delta_0\| + e_{\mathrm{te}}(m,\varepsilon)}_{\text{configuration/branch/$k$-independent floor}}\\
\small
&\quad + \frac{2C_{\mathrm{trans}}\,\rho}{C_{2,\min}}
+ \mathbf{1}\{\phi=\mathrm{TM}\}\,\frac{\kappa_{\max}L_{\theta,\max}}{C_{2,\min}}\,\varepsilon_{\mathrm{path}}\\
\small
&\quad + \frac{1}{C_{2,\min}}
\Big[
C_{\phi,\max}\big((1-\alpha_{\phi,\min})^{J}\,\mathcal M_{\phi,\max}^{(0)}+\epsilon_{\mathrm{est},\max}^{(\phi)}\big)
+ \widetilde e_{\mathrm{tr}}^{(\phi)}(k,n,\varepsilon,N)
\Big].
\end{align}

\end{proof}

\begin{algorithm}[t]
\caption{Single-configuration evaluation}
\label{alg:single}
\DontPrintSemicolon
\KwIn{dataset $\mathcal{D}$, distilled sets $\{\mathcal{S}_k\}$ for budgets $k$, source configuration $a_0$}
\KwOut{points $\{(k,\Delta_{a_0}(k))\}$ and linear fit of $\Delta$ vs.\ $1/\sqrt{k}$}
\ForEach{$k$}{
  load distilled set $\mathcal{S}_k$ \;
  \For{repeat $r=1..R$}{
    initialize student $\theta \sim a_0$ \;
    train $\theta$ on $\mathcal{S}_k$ using the student protocol of $a_0$ \;
    evaluate accuracy $\mathrm{Acc}_{\text{syn}}(k,r)$ on the test split of $\mathcal{D}$ \;
  }
  train a real-data baseline once under $a_0$ to obtain $\mathrm{Acc}_{\text{real}}$ \;
  set $\Delta_{a_0}(k)= \mathrm{Acc}_{\text{real}} - \mathrm{mean}_r\,\mathrm{Acc}_{\text{syn}}(k,r)$ \;
}
Fit a line $y = a x + b$ with $x=1/\sqrt{k}$ and $y=\Delta_{a_0}(k)$; report slope/intercept/$R^2$. \;
\end{algorithm}

\begin{algorithm}[t]
\caption{Configuration coverage: from per-configuration curves to the coverage law}
\label{alg:coverage}
\DontPrintSemicolon
\KwIn{dataset $\mathcal{D}$, distilled sets $\{\mathcal{S}_k\}$, configuration family $\mathcal{C}$}
\KwOut{coverage points $\{(X,Y)\}$ with $X=\sqrt{\log m}/\sqrt{k}$ and $Y=\Delta(k,m)$; global fit}
\ForEach{$k$}{
  \ForEach{configuration $a \in \mathcal{C}$}{
    train a student $\theta_a$ on $\mathcal{S}_k$ under $a$ and record $\mathrm{Acc}_{\text{syn}}(k,a)$ \;
    obtain once-per-configuration real baseline $\mathrm{Acc}_{\text{real}}(a)$ \;
    set $\Delta(k,a)=\mathrm{Acc}_{\text{real}}(a)-\mathrm{Acc}_{\text{syn}}(k,a)$ \;
  }
}
Let $A$ be the ordered list of configurations used. \;
\For{$m=1,\cdots,|A|$}{
  choose a size-$m$ subset of configurations (prefix or random) and denote it $A_m$ \;
  \ForEach{$k$}{
    set $Y=\Delta(k,m)=\frac{1}{m}\sum_{a\in A_m}\Delta(k,a)$,~~
    set $X=\sqrt{\log m}/\sqrt{k}$;~~
    append $(X,Y)$ to the coverage set \;
  }
}
Fit a single line $Y=aX+b$ over all coverage points; report slope/intercept/$R^2$. \;
\end{algorithm}

\section{Experiments}
\label{app:Experiments}

\subsection{Detailed Experimental Setup}
\label{app:setup}

\paragraph{Datasets.}
We evaluate on \textbf{MNIST}, \textbf{CIFAR-10/100} (official train/test splits), and \textbf{ImageNette} (a 10-class ImageNet subset).
For all datasets we follow the preprocessing prescribed by the respective baseline implementations (e.g., normalization, ZCA whitening when enabled), ensuring strict comparability.

\paragraph{Distillation methods.}
We study three established \emph{matching-based} families—\emph{Gradient Matching} (GM: DC/DSA), \emph{Distribution Matching} (DM), and \emph{Trajectory Matching} (TM: MTT)—and further include a recent \emph{diffusion-based} pipeline (MGD\textsuperscript{3}) on ImageNette to test whether our theory extends beyond the matching paradigm.
For each method we rely exclusively on the authors’ open-source repositories with their default hyperparameters; the only controlled variable is the distilled budget $k$ (via IPC).
Each distillation run is executed to completion under the default schedules of the respective methods.

\paragraph{Training configurations.}
A configuration $a$ is defined as a triplet \emph{(architecture $\times$ optimizer $\times$ augmentation)}.
Architectures include \texttt{ConvNet}, \texttt{LeNet}, \texttt{ResNet-18}, and \texttt{AlexNet}; for coverage experiments on CIFAR-10/100 we additionally use \texttt{MLP} and \texttt{VGG11}.
Optimizers are \texttt{SGD} (momentum $0.9$, weight decay $5\times 10^{-4}$) and \texttt{Adam} (default betas).
Augmentation is either \texttt{none} or \texttt{DSA} when enabled by the baseline.
Distillation is always performed in a fixed \emph{source configuration} (\texttt{ConvNet+SGD}, with DSA on when applicable), and target configurations are evaluated across diverse \emph{target configurations} sampled from $\C$.

\paragraph{Distillation budget.}
We sweep images-per-class values $\mathrm{IPC} \in \{1,2,4,6,8,12,18,28,51,100,200\}$ (up to 100 on CIFAR-100), yielding a total distilled size of $k=\mathrm{IPC}\times \#\mathrm{classes}$.
All methods are evaluated on the same IPC grid.

\paragraph{Evaluation metric.}
We report the generalization error
\[
\Delta \;=\; \big| \hat R(\theta_T^{(\hat\mu_s,a)}) - \hat R(\theta_T^{(\hat\mu_\tau,a)}) \big| ,
\]
the accuracy gap between training on distilled and real data within the same configuration $a$.
Each result is averaged over 5 independent repeats; for single-configuration runs we regress $\Delta$ against $1/\sqrt{k}$ and report slope, intercept, and $R^2$.

\paragraph{Target configuration training protocol.}
Students are trained from scratch with batch size 256, strictly following the DC/DSA evaluation protocol.
When DSA is enabled, students are trained for 1000 epochs; otherwise, for 300 epochs.
SGD uses an initial learning rate of 0.01 with momentum $0.9$ and weight decay $5\times10^{-4}$, decayed $\times0.1$ midway through training.
Adam uses its default settings.
Architectures (\texttt{ConvNet}, \texttt{LeNet}, \texttt{ResNet-18}, \texttt{AlexNet}) all follow this protocol.

\paragraph{Coverage-law construction.}
To test the predicted scaling with coverage complexity, we aggregate multiple configurations.
For each subset of size $m$, and each IPC $k$, we compute the averaged gap
\[
Y \;=\; \Delta(k,m), 
\quad X \;=\; \frac{\sqrt{\log m}}{\sqrt{k}}.
\]
We then regress $Y$ against $X$ to test the coverage law $Y \propto X$.
Two subset strategies are used: \texttt{prefix} (deterministic) and \texttt{random} (averaged over $T{=}5$ trials).
For each configuration included, the \emph{real-data baseline} is trained once on the full dataset with the identical optimizer, epochs, and augmentation as in the distilled run; this baseline is reused across $k$.

\paragraph{Hardware and software.}
Experiments are conducted on servers with AMD EPYC 7642 CPUs (96 vCPUs), CUDA 12.4, and up to 4$\times$NVIDIA RTX 4090 GPUs.

\subsection{Algorithms}
For clarity, we briefly summarize the two evaluation protocols. 
In the \emph{single-configuration evaluation} (Algorithm~\ref{alg:single}), we fix a source configuration and train students on distilled datasets of varying budget $k$. Each student is evaluated against its real-data counterpart in the same configuration, and the resulting accuracy gaps $\Delta(k)$ are regressed against $1/\sqrt{k}$ to reveal the single-configuration scaling law. 

In the \emph{coverage-law evaluation} (Algorithm~\ref{alg:coverage}), we extend the analysis across multiple target configurations. For each subset of size $m$ drawn from the configuration family $\C$, we average the gaps over the selected configurations to obtain $\overline{\Delta}(k,m)$. Plotting $\overline{\Delta}$ against $\sqrt{\log m}/\sqrt{k}$ tests the predicted coverage law, and the slope of the regression quantifies the penalty induced by ecological diversity.

\section{Limitation and Future Work}
\label{app:Limitation}

Despite providing a unified Configuration-dynamics-error framework, our study still has several limitations that we explicitly acknowledge. 

\paragraph{Assumptions on optimization dynamics.} 
Our bounds rely on Polyak--\L{}ojasiewicz (PL) contraction and Lipschitz continuity of update fields (Assumption~\ref{assump:single-configuration}). These assumptions hold for SGD variants under moderate learning rates, but may fail in regimes such as large-batch training, adaptive optimizers (e.g., AdamW), or architectures with non-smooth objectives. Extending our results to weaker conditions such as one-point convexity or uniform stability is an important open direction. Concretely, one next step is to verify whether the single-configuration scaling law (theorem~\ref{thm:single}) still exhibits linear $1/\sqrt{k}$ behavior when training with AdamW or adaptive schedulers, and to adapt the proof techniques accordingly.

\paragraph{Coverage complexity estimation.}
Our coverage law (Theorem~\ref{thm:uniform-cross-configuration}, Corollary~\ref{cor:k-vs-coverage}) shows that risk scales as $\Delta(k,m)\propto \sqrt{\Hcov}/\sqrt{k}$, with $\Hcov(r)$ the coverage complexity under configuration-distance $d_{\A}$. In experiments, we approximate $\Hcov$ by $\sqrt{\log m}$ with $m$ configurations, which can underestimate heterogeneity when optimizers or architectures differ sharply in $d_{\A}$. This limits the direct deployment of our bounds. A concrete next step is to develop empirical estimators of $\Hcov$, for instance by clustering configurations in the $d_{\A}$ metric space and allocating distilled prototypes adaptively with respect to cluster counts, rather than raw $m$.

\paragraph{Ecological scope beyond algorithmic variation.}
In this work we define configurations by algorithmic choices (optimizer, architecture, augmentation). This abstraction omits distributional or semantic shifts, such as cross-domain transfer, class imbalance, or multimodal inputs. Consequently, our current coverage law captures algorithmic but not data-level diversity. A concrete next step is to extend $d_{\A}$ to include a distributional term (e.g., Wasserstein or MMD distance between domains) and evaluate whether the scaling $\Delta\propto \sqrt{\Hcov}/\sqrt{k}$ continues to hold in cross-domain settings (e.g., CIFAR-10 $\to$ STL-10, ImageNet $\to$ DomainNet).

In summary, these limitations highlight well-scoped extensions: relaxing optimization assumptions, sharpening coverage estimation, and expanding configuration definitions to data-level shifts. We view these as promising future directions rather than weaknesses of the current framework.

\section{Usage of LLM}
\label{app:LLM_use}
In preparing this work, we made limited use of ChatGPT (OpenAI) as a supportive tool. 
Specifically, it was consulted in two ways:  
\begin{itemize}
    \item \textbf{Coding support}: ChatGPT-4o was occasionally used during debugging to suggest possible corrections for coding errors. All implementations were written, tested, and verified independently by the authors.  
    \item \textbf{Language polishing}: At the final stage of manuscript preparation, ChatGPT-5 was used to polish the English expression of the appendix. The suggestions were carefully reviewed and adapted by the authors to ensure accuracy and consistency with the original technical content.  
\end{itemize}

No AI tool was involved in generating research ideas, conducting experiments, or drawing conclusions. All scientific contributions are the authors’ own.

\end{document}